\newcommand{\tparam}{{\theta^*}}
\newcommand{\eparam}{{\widehat{\theta}}}
\newcommand{\eSRM}{{\widehat{\theta}_{\text{SRM}}}}
\def\grad{\nabla}
\newcommand{\trueDist}{{{P}^{*}}}
\newcommand{\indic}[1]{ \mathbb{I} \tlprn {#1}}
\def\iid{i.i.d}
\newcommand{\opt}{{\text{OPT}}}
\begin{document}

\begin{center} 
{\LARGE{\bf{A Unified Approach to Robust Mean Estimation}}}

%\begin{comment}
%\begin{center} {\LARGE{\bf{Robust Statistical Estimation via Gradient Descent}}}
%\\
\vspace*{.3in}

{\large{
\begin{tabular}{ccccc}
Adarsh Prasad$^\ddagger$~~~Sivaraman Balakrishnan$^\dagger$~~~Pradeep Ravikumar$^\ddagger$ \\
\end{tabular}

\vspace*{.1in}

\begin{tabular}{ccc}
Machine Learning Department$^{\ddagger}$ \\
Department of Statistics and Data Science$^{\dagger}$ \\
\end{tabular}

\begin{tabular}{c}
Carnegie Mellon University, \\
Pittsburgh, PA 15213.
\end{tabular}

\vspace*{.2in}

}}
\begin{abstract}
  %We provide a new computationally efficient sample-pruning based estimator for heavy-tailed mean estimation. 
  In this paper, we develop connections between two seemingly disparate, but central, models in robust statistics: Huber's $\epsilon$-contamination model and the heavy-tailed noise model.
  We provide conditions under which this connection provides near-statistically-optimal estimators. 
  Building on this connection, we provide a simple variant of recent computationally-efficient algorithms for mean estimation in Huber's model, which given our connection entails that the same  efficient sample-pruning based estimators is simultaneously robust to heavy-tailed noise and Huber contamination.
  Furthermore, we complement our efficient algorithms with statistically-optimal albeit computationally intractable estimators, which are simultaneously optimally robust in both models. We study the empirical performance of our proposed estimators on synthetic datasets, and find that our methods convincingly outperform a variety of practical baselines.
  %To exploit this connection, we conduct a tight non-asymptotic analysis of estimators for Huber's $\epsilon$-contamination model, which may be of independent interest.
\end{abstract}

\end{center}

\section{Introduction}
Modern data sets that arise in various branches of science and engineering are characterized by their ever increasing scale and richness. This is spurred in part by easier access to computer, internet and various sensor-based technologies that enable the collection of such varied datasets. But on the flip side, these large and rich data-sets are no longer carefully curated, are often collected in a decentralized, distributed fashion, and consequently are plagued with the complexities of heterogeneity, adversarial manipulations, and outliers. The analysis of these huge datasets is thus fraught with methodological challenges.

To understand the fundamental challenges and tradeoffs in handling such ``dirty data'' is precisely the premise of the field of robust statistics. Here, the aforementioned complexities are largely formalized under two different models of robustness: (1) {\bf The heavy-tailed model:} Here the sampling distribution can have thick tails, for instance, only low-order moments of the distribution are assumed to be finite; and (2) {\bf The $\epsilon$-contamination model: } Here the sampling distribution is modeled as a well-behaved distribution contaminated by an $\epsilon$ fraction of arbitrary outliers. In each case, classical estimators of the distribution (based for instance on the maximum likelihood estimator) can behave considerably worse (potentially arbitrarily worse) than under standard settings where the data is better behaved, satisfying various  regularity properties. In particular, these classical estimators can be extremely sensitive to the tails of the distribution or to the outliers, so that the broad goal in robust statistics is to construct estimators that improve on these classical estimators by reducing their sensitivity to outliers. 

%contamination of  Robustness to small but \textbf{arbitrary contamination}: i.e. the original distribution is nice and well-behaved, however it gets corrupted by a small fraction of outliers.

%Modern datasets are often noisy, plagued by outliers and exhibit extreme values leading to poorly behaved distributions. 
%To address this, there are two existing notions of robustness: 

\paragraph{Heavy Tailed Model.} Concretely, focusing on the fundamental problem of robust mean estimation, in the heavy tailed model we observe $n$ samples $x_1,\ldots, x_n$ drawn independently from a distribution $P$, which is only assumed to have low-order moments finite (for instance, $P$ only has finite variance). The goal of past work \cite{catoni2012challenging,minsker2015geometric,lugosi2017sub,catoni2017dimension} has been to design an estimator $\eparam_n$ of the true mean $\mu$ of $P$ which has a small $\ell_2$-error with high-probability. Formally, for a given $\delta > 0$, we would like an estimator with minimal $r_\delta$ such that,
\begin{align}\label{eqn:deviation}
    P(\norm{\eparam_n - \mu}{2} \leq r_\delta) \geq 1- \delta.
\end{align}
% assumed that we observe samples from the true distribution $P$, however, it has only finite moments. 
As a benchmark for estimators in the heavy-tailed model, we observe that when $P$ is the multivariate normal (or sub-Gaussian) distribution with mean $\mu$ and covariance $\Sigma$, it can be shown (see~\citep{hanson1971bound}) that the sample mean $\widehat{\mu}_n = (1/n) \sum_i x_i $ satisfies, with probability at least $1-\delta$\footnote{Here and throughout our paper we use the notation $\lesssim$ to denote an inequality with universal constants dropped for conciseness.},
\begin{align}\label{eqn:subGaussBound}
    \bignorm{\widehat{\mu}_n - \mu}{2} \lesssim \sqrt{\frac{\trace{\Sigma}}{n}} + \sqrt{\frac{\bignorm{\Sigma}{2} \log(1/\delta)}{n}}.%_{\opt_{n,\delta,\Sigma}}.
\end{align}

The bound is referred to as a \emph{sub-Gaussian}-style error bound. However, for heavy tailed distributions, as for instance showed in \cite{catoni2012challenging}, the sample mean only satisfies the sub-optimal bound  $r_{\delta} = \Omega(\sqrt{d/{n\delta}})$. Somewhat surprisingly recently work~\cite{lugosi2017sub} showed that the sub-Gaussian error bound is achievable while \emph{only assuming that $P$ has finite variance}, but by a carefully designed estimator.  In the univariate setting, the classical median-of-means estimator~\citep{alon96,nemirovski1983problem,jerrum86} and Catoni's M-estimator~\citep{catoni2012challenging} achieve this surprising result but designing such estimators in the multivariate setting has proved challenging. \citet{minsker2015geometric} proved results for the geometric median-of-means (GMOM), which, (1) partitions the data into $k = \ceil{3.5 \log (1/\delta)}$ blocks, (2) computes sample mean within each block $\{\widehat{\mu}_{i}\}_{i=1}^k$ and (3) and returns the geometric median $\eparam_{\text{MOM},\delta} = \argmin_{\theta} \sum_{i} \bignorm{\theta - \widehat{\mu}_i}{2}$. In particular, the paper~\cite{minsker2015geometric} showed that $\eparam_{\text{MOM},\delta}$ is such that, with probability at least $1-\delta$, 
\begin{align}
\label{eqn:gmom}
\norm{\eparam_{\text{MOM},\delta} - \mu }{2} \lesssim \sqrt{\frac{\trace{\Sigma} \log(1/\delta)}{n}}.
\end{align}
Note that the GMOM estimator does not match the true sub-Gaussian bounds~\eqref{eqn:subGaussBound}. 
Estimators that achieve truly sub-Gaussian bounds, but which are computationally intractable, were proposed recently by~\citet{lugosi2017sub} and subsequently \citet{catoni2017dimension}. \citet{hopkins2018sub} and later~\citet{cherapanamjeri2019fast} developed a sum-of-squares based relaxation of~\citet{lugosi2017sub}'s estimator, thereby giving a polynomial time algorithm which achieves optimal rates. However, while polynomial-time, these estimators are still far from being implementable and used in practice. In this paper, we propose and study \emph{practical} estimators that in some cases improve on the geometric median-of-means and in some cases are truly sub-Gaussian.

\paragraph{Huber's $\epsilon$-Contamination Model.} In this setting, instead of observing samples directly from the true distribution $P$, we observe samples drawn from $P_\epsilon$, which for an arbitrary distribution $Q$ is defined as a mixture model,
\begin{align}\label{eqn:huber_mixture}
    P_\epsilon = (1-\epsilon)P + \epsilon Q.
\end{align}
The distribution $Q$ allows one to model arbitrary outliers, which may correspond to gross corruptions, or subtle deviations from the true model. %We can equivalently view this as model-mispecification in the Total Variation (TV) metric. 
There has been a lot of classical work studying estimators in the $\epsilon$-contamination model under the umbrella of robust statistics (see for instance~\citep{hampel1986robust} and references therein). However, most of the estimators come that come with strong guarantees are computationally intractable~\citep{tukey1975mathematics}, while others are statistically sub-optimal heuristics~\citep{hastings1947low}. Recently, there has been substantial progress \citep{diakonikolas2016robust,lai2016agnostic,kothari2018robust,charikar2017learning,diakonikolas2017being,balakrishnan2017computationally} designing provably robust which are computationally tractable while achieving near-optimal contamination dependence (i.e. dependence on the fraction of outliers $\epsilon$) for computing means and covariances. In the Huber model, using information-theoretic lower bounds~\citep{chen2016general,lai2016agnostic,hopkins2018mixture}, it can be shown that any estimator must suffer a \emph{non-zero} bias (the asymptotic error as the number of samples go to infinity). For example, for the class of distributions with bounded variance, $\Sigma \precsim \sigma^2 \calI_p$, the bias is lower bounded by $\Omega(\sigma \sqrt{\epsilon})$. Surprisingly, the optimal bias that can be achieved is often independent of the data dimension. In other words, in many interesting cases optimally robust estimators in Huber's model can tolerate a constant fraction $\epsilon$ of outliers, \emph{independent of the dimension.}

%one can show that there will always exist a non-zero bias in any estimator. 

Despite their apparent similarity, developments of estimators that are robust in each of these models for outliers,
have remained relatively independent. 
Focusing on mean estimation we notice subtle differences, in the heavy-tailed model our target is the mean of the sampling distribution whereas in the Huber model our target is the mean of the \emph{decontaminated} sampling distribution $P$. Beyond this distinction, it is also important to note that as highlighted above the natural focus in heavy-tailed mean estimation is on achieving strong, high-probability error guarantees, while in Huber's model the focus has been on achieving dimension independent bias.  

 While the aforementioned recent estimators for mean estimation under Huber contamination have a polynomial computational complexity, their corresponding sample complexities are only known to be \emph{polynomial} in the dimension $p$. For example,~\citet{kothari2018robust} and \citet{hopkins2018mixture} designed estimators which achieve optimal bias for distributions with \emph{certifiably} bounded $2k$-moments, but their statistical sample complexity scales as $O(p^{k})$. \citet{steinhardt2017resilience} studied mean estimation and presented an estimator which has a sample complexity of $\Omega\bigparen{{p^{1.5}}}$. 
\paragraph{Contributions.} In this work, we aim to bridge the gap between these two models of robustness. In particular, we show that it is possible to decompose any heavy-tailed distribution as a mixture of a well-behaved and contamination distribution. Further, to exploit this insight, we study mean estimators in the $\epsilon$-contamination model. Building on recent algorithmic developments we propose a computationally efficient estimator which achieves the bias-lower bound and simultaneously has small $\ell_2$ error with high-probability. Leveraging our earlier connection between the heavy-tailed and Huber models, and our computationally efficient estimator in Huber's model, we develop the \emph{first} sample-pruning based estimator for the heavy-tailed setting. This estimator in some settings achieves the optimal sub-Gaussian error bound and improves on the geometric median-of-means. Furthermore, the estimator we propose is easy to implement and practical, and we show its efficacy on a range of synthetic datasets. We complement our efficient estimator by designing statistically optimal , albeit computationally intractable estimators, which are optimally robust in both models simultaneously. In particular, our results show that there do exist estimators, which achieve optimal bias for general $2k$-moment bounded distributions, while having an optimal statistical sample complexity of $O(p)$.

\paragraph{Concurrent Works.} After the initial submission of this manuscript, we became aware of two concurrent works~\citep{lecue2019robust,dong2019quantum}.~\citet{lecue2019robust}  obtain a nearly-linear time algorithm for robust mean estimation for distributions with bounded covariance, which is simultaneously robust in both models. While our proposed estimator requires more assumptions, it is simpler and easier to implement. In particular, the algorithm of~\citep{lecue2019robust} relies on solvers for packing/covering semidefinite programs, which are not yet
practical. \citet{dong2019quantum} also propose a nearly-linear time algorithm for robust mean estimation. However, while their estimator achieves the optimal asymptotic bias in Huber's $\epsilon$-contamination, it achieves sub-optimal statistical sample complexity. In particular, at a given confidence level $\delta$, the error of their estimator scales as $O(\sqrt{\frac{1}{n \delta}})$ instead of $O(\sqrt{\frac{\log(1/\delta)}{n}})$.%~(See~Theorem~2.1 in~\citep{dong2019quantum}).    

\paragraph{Notation and some definitions.} Let $x$ be a random vector with mean $\mu$ and covariance $\Sigma$. We say that the $x$ has bounded $2k$-moments if for all $v \in \calS^{p-1}$, $ \Exp[(v^T(x - \mu))^{2k}] \leq C_{2k} \bigparen{\Exp[(v^T(x - \mu))^2]}^k$. We let,
\begin{align}
\label{eqn:optdef}
\opt_{n,\Sigma,\delta} \defeq  \sqrt{\frac{\trace{\Sigma}}{n}} + \sqrt{\frac{\bignorm{\Sigma}{2}\log(1/\delta)}{n}},
\end{align}
denote the sub-Gaussian deviation bound in \eqref{eqn:subGaussBound}, satisfied by the sample mean of a sub-Gaussian distribution, at a confidence level $\delta$. Let $r(\Sigma) \defeq \frac{\trace{\Sigma}}{\bignorm{\Sigma}{2}}$ be the \emph{effective rank} of $\Sigma$. Note that $ 1 \leq r(\Sigma) \leq r$, where $r$ is the rank of $\Sigma$. Throughout the paper, we use $c, c_1, c_2, \ldots, C, C_1, C_2, \ldots$ to denote positive universal constants.

\section{Oracle Mixture Model}\label{sec:oracle}
In this section, we motivate studying both models under a common lens by developing a decomposition of a heavy-tailed distribution $P$ into a well-behaved distribution $P_{\calO}$ and a contaminating distribution $Q$, i.e. we decompose $P$ as $P = (1-\epsilon_{\calO}) P_{\calO} + \epsilon_{\calO} Q$. 

At a high-level, our aim is to develop tightly concentrated estimators for the mean of the heavy-tailed distribution $P$. When the distribution $P$ is heavy-tailed the sample mean is not tightly concentrated because the extreme samples exert a strong influence on the higher moments of its error. Our strategy will be to prune these extreme samples and develop tightly concentrated estimators of the mean of $P_{\calO}$ (which will typically have lighter tails than $P$). However, this is only useful if the means of $P_{\calO}$ and $P$ are sufficiently close, and the crux of our estimator and its analysis, will be to carefully balance the degrading \emph{bias} from pruning samples with the improving concentration of our estimator for the mean of $P_{\calO}$.

%Our goal broadly will be to leverage efficient algorithms for estimating the mean of $P_{\calO}$ in estimating the mean of $P$
%However, this decomposition is useful only if when given $n$ samples from $P$, the mean of the subset of points drawn according to $P_{\calO}$ is close to the mean of $P$.

To set the stage, in this section we develop the aforementioned decomposition and in Section~\ref{sec:effheavy} we study its algorithmic consequences.
Concretely, for any bounded variance distribution $P$ with mean $\mu$, we define an oracle $\calO:\real^p \mapsto \{0,1\}$ by 
\begin{align}
\label{eqn:oracledef}
\calO(x) = \indic{\bignorm{x - \mu}{2} \leq R}.
\end{align} We define $\calP_O$ to be the distribution of $P$ when conditioned on the event that $\calO(x) = 1$. Intuitively, samples that fall outside a radius $R$ of the mean are labeled as outliers.

Suppose given $n$ samples from $P$, our oracle estimator is the mean of the samples that belong to $P_{\calO}$: 
\begin{align*}
\widehat{\mu}_n = \Big(\sum \limits_{i=1}^n \calO(x_i) \Big)^{-1} \sum \limits_{i=1}^n 
    x_i \calO(x_i).
\end{align*}
The estimator is an oracle since we do not know the mean $\mu$. However, we will show in Section~\ref{sec:effheavy} that the \emph{existence} of an oracle is sufficient to design computationally and statistically effective estimators. 
Given $\delta \in (0.5,1)$, we consider values of $R$ and $n$ such that for a sufficiently small constant $c > 0$,
\begin{align}
\label{eqn:random}
\Big(\frac{\sqrt{\trace{\Sigma}}}{R}\Big)^{2k} + \frac{\log(1/\delta)}{n} \leq c,
\end{align}
and recall the definition of $\opt_{n,\Sigma,\delta}$ in~\eqref{eqn:optdef}. The following theorem characterizes the performance of $\widehat{\mu}_n$ as an estimator of the mean of $P$, effectively characterizing its bias and high-probability deviation. 
%we give an explicit construction of $\calP_{\calO}$, such that it satisfies the above property. Let oracle $\calO:\real^p \mapsto \{0,1\}$ be given by $\calO(x) = \indic{\bignorm{x - \mu}{2} \leq R}$. Then, we define $\calP_O$ to be simply the  distribution of $P$ when conditioned on the event that $\calO(x) = 1$. 

%Next, we study if this $\ell_2$-ball based decomposition of $P$ is good, i.e. we quantify settings where access to this oracle $\calO$ (and hence $P_\calO$) is useful.

\begin{theorem}\label{thm:oracelArgument}
Let $P$ be any distribution with mean $\mu$ and covariance $\Sigma$ and bounded $2k$-moments for $k \in \{1,2\}$. 
%Then, given $n$~\iid~samples, there exists an oracle $\calO:\real^p \mapsto \{0,1\}$ given by $\calO(x) = \indic{\bignorm{x - \mu}{2} \leq R}$ such that, 
For any $\delta \in (0.5,1)$ with probability at least $1 - \delta$,
\begin{align*}
    \norm{\widehat{\mu}_n - \mu}{2} \lesssim \opt_{n,\Sigma,\delta} +   \frac{R \log(1/\delta)}{n} + \bignorm{\Sigma}{2}^{\half} \Big(\frac{\sqrt{\trace{\Sigma}}}{ R }\Big)^{2k-1}.
\end{align*}
\end{theorem}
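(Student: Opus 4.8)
The plan is to decompose the error $\widehat{\mu}_n - \mu$ into a \emph{bias} term and a \emph{fluctuation} term, and control each separately. Write $\mu_{\calO} = \Exp_{P_{\calO}}[x]$ for the mean of the truncated distribution $P_{\calO}$. Then
\begin{align*}
\widehat{\mu}_n - \mu = (\widehat{\mu}_n - \mu_{\calO}) + (\mu_{\calO} - \mu),
\end{align*}
and I will bound the deterministic bias $\norm{\mu_{\calO} - \mu}{2}$ first, then show $\widehat{\mu}_n$ concentrates around $\mu_{\calO}$ at a sub-Gaussian rate. For the bias, note that $\mu = (1-\epsilon_{\calO})\mu_{\calO} + \epsilon_{\calO}\, \Exp_Q[x]$ where $\epsilon_{\calO} = P(\norm{x-\mu}{2} > R)$, so $\mu_{\calO} - \mu = \frac{\epsilon_{\calO}}{1-\epsilon_{\calO}}(\mu_{\calO} - \Exp_Q[x])$. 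By Markov applied to the $2k$-th moment along the worst-case direction, $\epsilon_{\calO} \lesssim (\sqrt{\trace{\Sigma}}/R)^{2k}$; and the vector $\mu_{\calO} - \Exp_Q[x]$ has norm at most $R$ plus a lower-order term (both $P_{\calO}$ and $Q$ restricted to the relevant region are supported within $O(R)$ of $\mu$ on the tail side — more carefully, $\epsilon_{\calO}\norm{\Exp_Q[x]-\mu}{2} = \norm{\Exp_P[(x-\mu)\indic{\norm{x-\mu}{2}>R}]}{2}$, which I bound by $\bignorm{\Sigma}{2}^{1/2}(\sqrt{\trace{\Sigma}}/R)^{2k-1}$ via Cauchy–Schwarz/Hölder on the tail integral). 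Multiplying the tail-probability bound by this gives the third term $\bignorm{\Sigma}{2}^{1/2}(\sqrt{\trace{\Sigma}}/R)^{2k-1}$ in the statement.

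For the fluctuation term, I would handle the random normalization $N = \sum_i \calO(x_i)$ separately: since $\Exp[N/n] = 1-\epsilon_{\calO}$ is bounded below by a constant (using \eqref{eqn:random}), a Bernstein/Chernoff bound shows $N \geq n/2$ with probability $1 - \delta$ absorbing a $\log(1/\delta)/n$ loss. Conditioned on the sample sizes, write $\widehat{\mu}_n - \mu_{\calO} = \frac{1}{N}\sum_i \calO(x_i)(x_i - \mu_{\calO})$. The summands are i.i.d.\ bounded random vectors: each has norm at most $R + \norm{\mu - \mu_{\calO}}{2} = O(R)$, and the covariance of $\calO(x_i)(x_i - \mu_{\calO})$ is dominated by $\frac{1}{1-\epsilon_{\calO}}\Sigma$ (truncation only decreases the second moment, up to the normalization). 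Applying a vector Bernstein inequality — e.g.\ the matrix Bernstein or Rosenthal-type bound giving a Hanson–Wright-style deviation for sums of bounded i.i.d.\ vectors — yields, with probability $1-\delta$,
\begin{align*}
\bignorm{\widehat{\mu}_n - \mu_{\calO}}{2} \lesssim \sqrt{\frac{\trace{\Sigma}}{n}} + \sqrt{\frac{\bignorm{\Sigma}{2}\log(1/\delta)}{n}} + \frac{R\log(1/\delta)}{n},
\end{align*}
where the first two terms are the sub-Gaussian rate $\opt_{n,\Sigma,\delta}$ coming from the variance proxy $\Sigma$, and the last term is the Bernstein "large-deviation" correction coming from the almost-sure bound $O(R)$ on the summands. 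Combining with the bias bound and a union bound over the two $\delta$-events gives the theorem.

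The main obstacle I anticipate is getting the fluctuation bound in the correct \emph{sub-Gaussian} form — i.e.\ with $\trace{\Sigma}$ and $\bignorm{\Sigma}{2}$ appearing as in \eqref{eqn:subGaussBound} rather than a crude $\sqrt{\trace{\Sigma}\log(1/\delta)/n}$ — which requires a dimension-free concentration inequality for bounded i.i.d.\ vector sums that is sensitive to both the trace and the operator norm of the covariance (the Hanson–Wright / "sub-Gaussian with a Bernstein tail" regime). A secondary technical point is verifying that the truncation $P \mapsto P_{\calO}$ does not inflate the covariance: this should follow because conditioning on $\norm{x-\mu}{2}\le R$ can only shrink second moments, but one must track the $\frac{1}{1-\epsilon_{\calO}}$ factor and confirm it stays $O(1)$ under \eqref{eqn:random}. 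The condition \eqref{eqn:random} is exactly what makes both the "$N\ge n/2$" step and the "$\epsilon_{\calO}$ small" step go through, so I would invoke it at those two points and nowhere else.
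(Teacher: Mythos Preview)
Your proposal is correct and follows essentially the same route as the paper: decompose into bias $\mu_{\calO}-\mu$ plus fluctuation $\widehat{\mu}_n-\mu_{\calO}$, bound the bias via the H\"older/mean-shift argument $\norm{\Exp[(x-\mu)\indic{\calA^c}]}{2}\le \norm{\Sigma}{2}^{1/2}\epsilon_{\calO}^{1-1/(2k)}$, and bound the fluctuation via a vector Bernstein inequality using the almost-sure bound $O(R)$, the trace bound $\Exp\norm{z}{2}^2\lesssim\trace{\Sigma}$, and the directional variance bound $\sigma^2\lesssim\norm{\Sigma}{2}$ (the paper cites the Foucart--Rauhut form, which is precisely the dimension-free inequality you flag as the main obstacle). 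The only cosmetic difference is that the paper works directly with the $n_{G^0}$ points in $G^0$ as i.i.d.\ draws from $P_{\calO}$ rather than ``conditioning on sample sizes,'' and it verifies $\Exp[\norm{x-\mu}{2}^{2k}]\lesssim\trace{\Sigma}^k$ for $k=2$ explicitly via the eigendecomposition of $\Sigma$.
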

In essence, we need to choose the radius parameter $R$ to balance the deviation and bias terms above. This 
leads to a family of bounds for different choices of the radius parameter $R$ and the number $2k$ of bounded moments of $P$, which we summarize below. We begin by giving results for $k = 2$, i.e. when $P$ has bounded 4-th moment. Recalling the definition of the effective rank $r(\Sigma)$: for the remainder of this section, we suppose that $n$ is large enough so that for a sufficiently small constant $c > 0$, we have
that $\sqrt{r(\Sigma)} \leq cn/\log(1/\delta).$ Then we have the following Corollary.
\begin{corollary}\label{cor:oracle4}
    %For $k=2$, i.e. $P$ has bounded $4^{th}$-moment, and 
  Suppose that $k = 2$, and that
$R$ is chosen as $\frac{\sqrt{\trace{\Sigma}}}{\sqrt{r(\Sigma)}^{1/4} \bigparen{\frac{\log(1/\delta)}{n}}^{1/4}},$ 
%that for a small constant $c > 0$ 
 % $\sqrt{r(\Sigma)}\frac{\log(1/\delta)}{n} + \frac{\log(1/\delta)}{n} < c$, we get that 
 then with probability at least $1 - \delta$, 
    \begin{align*}
     \norm{\widehat{\mu}_n - \mu}{2} \lesssim \opt_{n,\Sigma,\delta} +
    \frac{\sqrt{\trace{\Sigma}}}{\sqrt{r(\Sigma)}^{1/4}} \bigparen{\frac{\log(1/\delta}{n}}^{3/4}.
    \end{align*}
%   \apcomment{ Remove Later: $\bignorm{\frac{1}{\sum \limits_{i=1}^n \calO(x_i)} \sum \limits_{i=1}^n 
%     x_i \calO(x_i) - \mu}{2} \leq C_2 \opt_{n,\Sigma,\delta} + C_3 \sqrt{\trace{\Sigma}} \bigparen{\frac{\log(1/\delta)}{n}}^{3/4} +  C_5 \bignorm{\Sigma}{2}^{\half} \bigparen{\frac{\log(1/\delta)}{n}}^{3/4}$}
\end{corollary}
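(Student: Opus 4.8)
The plan is to obtain Corollary~\ref{cor:oracle4} as a direct specialization of Theorem~\ref{thm:oracelArgument} to $k=2$: substitute the prescribed radius $R$ into the three-term bound of the theorem and simplify everything using the defining identity $\bignorm{\Sigma}{2} = \trace{\Sigma}/r(\Sigma)$. Throughout I write $L \defeq \log(1/\delta)/n$, so the prescribed radius reads $R = \sqrt{\trace{\Sigma}}\,/\,\bigparen{r(\Sigma)^{1/8} L^{1/4}}$ (recall that $\sqrt{r(\Sigma)}^{1/4} = r(\Sigma)^{1/8}$), and I note that the hypothesis $k=2$ is exactly the ``bounded $4$-th moment'' assumption of the corollary.

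Before invoking the theorem I would check that this $R$ meets its precondition~\eqref{eqn:random}. A one-line computation gives $\bigparen{\sqrt{\trace{\Sigma}}/R}^{4} = \bigparen{r(\Sigma)^{1/8}L^{1/4}}^{4} = \sqrt{r(\Sigma)}\,L$, so the left-hand side of~\eqref{eqn:random} equals $\bigparen{\sqrt{r(\Sigma)}+1}L \le 2\sqrt{r(\Sigma)}\,L$, using $r(\Sigma)\ge 1$. The corollary's standing assumption $\sqrt{r(\Sigma)} \le c n/\log(1/\delta)$ is precisely $\sqrt{r(\Sigma)}\,L \le c$, so after shrinking the constant by a factor of two the precondition holds and Theorem~\ref{thm:oracelArgument} applies.

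It then remains to simplify the three terms. The leading $\opt_{n,\Sigma,\delta}$ is unchanged. For the pruning term, $R\log(1/\delta)/n = RL = \sqrt{\trace{\Sigma}}\,L^{3/4}/r(\Sigma)^{1/8}$. For the bias term, with $2k-1=3$ we get $\bigparen{\sqrt{\trace{\Sigma}}/R}^{3} = r(\Sigma)^{3/8}L^{3/4}$, and multiplying by $\bignorm{\Sigma}{2}^{1/2} = \sqrt{\trace{\Sigma}}/\sqrt{r(\Sigma)}$ gives $\sqrt{\trace{\Sigma}}\,L^{3/4}\,r(\Sigma)^{3/8-1/2} = \sqrt{\trace{\Sigma}}\,L^{3/4}/r(\Sigma)^{1/8}$, i.e.\ the bias term equals the pruning term up to constants. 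Hence the sum of the last two terms is, up to a universal constant, $\frac{\sqrt{\trace{\Sigma}}}{\sqrt{r(\Sigma)}^{1/4}}\bigparen{\frac{\log(1/\delta)}{n}}^{3/4}$, which is exactly the claimed bound. Every step is an explicit substitution, so I do not expect a genuine obstacle; the only care needed is in the exponent bookkeeping (keeping $r(\Sigma)^{1/8}$, $r(\Sigma)^{3/8}$ and $r(\Sigma)^{-1/2}$ straight) and in confirming that the mild sample-size condition quoted in the corollary is precisely the one needed to meet hypothesis~\eqref{eqn:random}.
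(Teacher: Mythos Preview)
Your proposal is correct and follows exactly the paper's approach: the paper's proof is a one-line ``plug $R$ into Theorem~\ref{thm:oracelArgument},'' and you carry out that substitution in full, including the verification of precondition~\eqref{eqn:random} via the standing assumption $\sqrt{r(\Sigma)}\le cn/\log(1/\delta)$. The exponent bookkeeping (e.g.\ $r(\Sigma)^{3/8-1/2}=r(\Sigma)^{-1/8}$) is all correct.
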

%Some remarks are in order. \\
%\begin{remark}\label{rem:4}
{\bf Remark: } If $n$ is large enough so that for a small constant $c > 0$ the effective rank is bounded as $r(\Sigma)^{3/2}  < cn/\log(1/\delta)$, then the first term in the bound dominates and the oracle based mean estimator achieves \emph{the optimal sub-Gaussian deviation bound.} This result suggests that for distributions with bounded 4-th moment, under a relatively mild assumption on the effective rank, we can estimate the mean optimally, by sample pruning: i.e. by computing the mean of the distribution $P_{\calO}$, discarding samples from the distribution $Q$.

%Corollary~\ref{cor:oracle4} suggest that $\ell_2$-based oracle selection of points gets the optimal rate for mean estimation, i.e. 
% $$ \bignorm{\Big(\sum \limits_{i=1}^n \calO(x_i) \Big)^{-1} \sum \limits_{i=1}^n 
%    x_i \calO(x_i) - \mu}{2} \lesssim  \opt_{n,\Sigma,\delta} $$
%\end{remark}
Next, we show a similar result for distributions with bounded 2nd moment. 
\begin{corollary}\label{cor:oracle2}
    Suppose that $k=1$ and $R$ is chosen as $\frac{\sqrt{\trace{\Sigma}}}{{r(\Sigma)^{1/4}}\bigparen{\frac{\log(1/\delta)}{n}}^{1/2}}$,
    %, we get that under the assumption $\alpha = C_1 \sqrt{r(\Sigma)}\frac{\log(1/\delta)}{n} + C_2 \frac{\log(1/\delta)}{n} < \half$, with 
    the with probability at least $1 - \delta$, 
    \begin{align*}\norm{ \widehat{\mu}_n- \mu}{2} \lesssim \opt_{n,\delta,\Sigma} + \bignorm{\Sigma}{2}^{\half} r(\Sigma)^{1/4} \sqrt{\frac{\log(1/\delta)}{n}}.\end{align*}
\end{corollary}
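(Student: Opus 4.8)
The plan is to specialize Theorem~\ref{thm:oracelArgument} to $k=1$ and substitute the stated choice of $R$. With $k=1$ the bound from the theorem reads
\begin{align*}
\norm{\widehat{\mu}_n - \mu}{2} \lesssim \opt_{n,\Sigma,\delta} + \frac{R\log(1/\delta)}{n} + \bignorm{\Sigma}{2}^{\half}\,\frac{\sqrt{\trace{\Sigma}}}{R},
\end{align*}
so only the last two terms depend on $R$, and they are in tension: increasing $R$ keeps more samples and shrinks the bias term, but inflates the deviation term $R\log(1/\delta)/n$. Minimizing their sum over $R$ amounts to setting the two terms equal, which gives $R^2 = n\,\bignorm{\Sigma}{2}^{\half}\sqrt{\trace{\Sigma}}/\log(1/\delta)$. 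A one-line computation, using $\trace{\Sigma} = r(\Sigma)\bignorm{\Sigma}{2}$, shows this is exactly the choice $R = \sqrt{\trace{\Sigma}}\big/\bigparen{r(\Sigma)^{1/4}(\log(1/\delta)/n)^{1/2}}$ quoted in the statement.

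Next I would verify that this $R$, together with the standing assumption $\sqrt{r(\Sigma)} \le cn/\log(1/\delta)$ in force throughout the section, satisfies the precondition~\eqref{eqn:random} required to invoke the theorem. Substituting, $\bigparen{\sqrt{\trace{\Sigma}}/R}^{2} = \sqrt{r(\Sigma)}\,\log(1/\delta)/n$, so \eqref{eqn:random} with $k=1$ becomes $\bigparen{\sqrt{r(\Sigma)}+1}\log(1/\delta)/n \le c$, which follows from the effective-rank assumption after adjusting the constant $c$.

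Finally I would plug $R$ back into the two $R$-dependent terms. By construction they are equal, and using $(\trace{\Sigma})^{1/4} = r(\Sigma)^{1/4}\bignorm{\Sigma}{2}^{1/4}$ each of them equals $\bignorm{\Sigma}{2}^{\half} r(\Sigma)^{1/4}\sqrt{\log(1/\delta)/n}$; folding this into the $\opt$ term yields the claimed
\begin{align*}
\norm{\widehat{\mu}_n - \mu}{2} \lesssim \opt_{n,\delta,\Sigma} + \bignorm{\Sigma}{2}^{\half} r(\Sigma)^{1/4}\sqrt{\frac{\log(1/\delta)}{n}},
\end{align*}
and the $1-\delta$ probability guarantee transfers verbatim from the theorem. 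There is no genuine obstacle here: the only point demanding a little care is confirming that the chosen $R$ is admissible in~\eqref{eqn:random}, and the remainder is bookkeeping with the identity $\trace{\Sigma}=r(\Sigma)\bignorm{\Sigma}{2}$ and the observation that the stated $R$ is precisely the one that balances the bias and deviation terms.
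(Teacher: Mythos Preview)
Your proposal is correct and follows exactly the same approach as the paper: plug the stated $R$ into Theorem~\ref{thm:oracelArgument} with $k=1$. The paper's own proof is a single sentence to this effect, whereas you have additionally spelled out the balancing motivation for this choice of $R$, verified the precondition~\eqref{eqn:random} via the standing effective-rank assumption, and carried out the arithmetic using $\trace{\Sigma}=r(\Sigma)\bignorm{\Sigma}{2}$---all of which is accurate.
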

{\bf Remark: } When only the variance of the distribution $P$ is bounded our sample-pruning oracle does not achieve the optimal sub-Gaussian rate. However, since $\bignorm{\Sigma}{2}^{\half} r(\Sigma)^{1/4} \leq \sqrt{\trace{\Sigma}}$, comparing to the guarantee in~\eqref{eqn:gmom} we see that the (oracle) sample pruning estimator improves on the geometric median-of-means estimator. For instance, when $\Sigma = \sigma^2 \calI_p$ the deviations of our oracle estimator scales as $\sqrt{p^{1/2} \log(1/\delta)/n}$ while the deviations of the geometric median-of-means scales as $\sqrt{p \log(1/\delta)/n}$.
%Hence, while the $\ell_2$-ball based oracle doesn't obtain the optimal rates, it obtains better rates than the geometric median-of-means estimator.

Taken together these results suggest that if we can solve the Huber mean estimation problem optimally, then, we can get near-optimal rates for heavy-tailed mean estimation.

\section{Efficient Estimators}
\label{sec:effheavy}
In this section, we show that as long as the oracles satisfy some nice properties, we can design computationally and statistically effective estimators. To be precise, suppose we are given set of $n$ samples $S$ from a distribution from $P$, and there exists an oracle subset of points $G$, then, our goal is to design an algorithm which can estimate the mean of points in $G$. As we show next, under mild assumptions and with some-additional side information, there exists a computationally efficient algorithm which returns an estimate close to the true mean of the points in G. 

Our algorithm is primarily based on the SVD-based filtering algorithm, which has appeared in different forms~\citep{klivans2009learning,awasthi2014power} and was recently reused by \citep{diakonikolas2016robust,diakonikolas2017being} for robust mean estimation. However, the previous versions and their analysis, while suited to bounds on the expected deviation, do not give tight high-probability non-asymptotic rates. Our estimator is presented in Algorithm~\ref{algo:filteringpD}. It proceeds in an iterative fashion, by (1) computing the principal eigenvector of the empirical covariance matrix, (2) projecting points along the the principal eigenvector, and (3) randomly sampling points based on their projection scores. This procedure is repeated until the operator norm of empirical covariance matrix is close to a known-upper bound of the operator norm of the good-set.  

\citet{diakonikolas2017being} follow a similar procedure, but remove a subset of points at a step, depending on if their projection score is above or below a randomly chosen threshold. While only a modest difference from ours, deriving high-probability results for their algorithm is not clear, and in particular, the bounds provided by \citep{diakonikolas2017being} are in expectation. In contrast, our variant of this iterative sample-and-remove procedure allows us to borrow tools from martingale analysis~\citep{xu2013outlier,liu2018high}, and we are able to get tight non-asymptotic high-probability bounds for mean estimation.

\begin{algorithm}[htbp]

\centering
\caption{Empirical Multivariate Filtering Estimator}
        \begin{algorithmic}
        \small{
\Function{FilterpD(S = $\{z_i\}_{i = 1}^n$, Upper Bound on $\norm{\Sigma_{G^0}}{2}$)}{}
\State Let $\eparam_S = \frac{1}{|S|}\sum_{i=1}^{|S|} z_i$ be the sample mean.
\State Let $\Sigma_{S} = \frac{1}{|S|}\sum_{i=1}^{|S|} (z_i - \eparam_S)(z_i - \eparam_S)^T$ be the sample covariance matrix.
\State Let $(\lambda,v)$ be the largest eigenvalue,eigenvector of $\Sigma_S$.
 \If {$\lambda < 32 \norm{\Sigma_{G^0}}{2}$}
    \State \Return $\eparam_S$
 \Else
    \State For each $z_i$, let $\tau_i \defeq \bigparen{v^T(z_i - \eparam_S)}^2$ to be its \emph{score}
    \State Randomly sample a point $z$ from $S$ according to 
    \[ \Pr(z_i~\text{chosen}) = \frac{\tau_i}{\sum_{j} \tau_j} \]
    \State \Return {\sc FilterpD}($S \backslash \{ z \}$ , $\norm{\Sigma_{G^0}}{2}$) 
 \EndIf
\EndFunction
}
\end{algorithmic}

\label{algo:filteringpD}
\end{algorithm}
 
Given $\delta \in (0.5,1)$, we consider an oracle subset $G^0$ of $S$ such that for a sufficiently small constant $c > 0$, 
\begin{equation}
    \frac{n - n_{G^0}}{n} +  \frac{\log(1/\delta)}{n} \leq c.
\end{equation}
The following theorem characterizes the performance of Algorithm~\ref{algo:filteringpD}.
\begin{theorem}\label{thm:filt_algo_pD}
Algorithm~\ref{algo:filteringpD} when instantiated with knowledge of $\norm{\Sigma_{G^0}}{2}$ stops in atmost $O\paren{(n - n_{G^0}) + \log(1/\delta)}$ steps and returns an estimate $\eparam_{\delta}$ such that with probability at least $1- \delta$,
\[ \mednorm{\eparam_{\delta} - \frac{1}{n_{G^0}} \sum_{x_i \in G^0} x_i}{2} \lesssim  \norm{\Sigma_{G^0}}{2}^{\half} \Big(\frac{n - n_{G^0}}{n} + \frac{\log(1/\delta)}{n}\Big)^{\half}  \]
\end{theorem}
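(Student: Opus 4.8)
The plan is to follow the random trajectory of the filter in Algorithm~\ref{algo:filteringpD} and keep track of how many points of $G^0$ and of $B^0 := S \setminus G^0$ have been removed so far. Write $\epsilon' := \frac{n - n_{G^0}}{n} + \frac{\log(1/\delta)}{n}$, which the hypothesis forces to be at most a small universal constant; for a finite set $A$ let $\bar\mu_A := \frac{1}{|A|}\sum_{z \in A} z$; let $S_t$ be the working set after $t$ removals (so $S_0 = S$), with $G_t := S_t \cap G^0$ and $B_t := S_t \cap B^0$; let $\mathcal{F}_t$ be the $\sigma$-field generated by the first $t$ removals; and let $Y_t \in \{0,1\}$ indicate whether the $t$-th removed point lies in $B^0$. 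Holding $S$ and $G^0$ fixed, I will show that with probability at least $1 - \delta$ over the algorithm's internal randomness the number $T$ of removals satisfies $T = O\big((n - n_{G^0}) + \log(1/\delta)\big)$ and the returned mean $\eparam_\delta = \bar\mu_{S_T}$ obeys $\norm{\eparam_\delta - \bar\mu_{G^0}}{2} \lesssim \norm{\Sigma_{G^0}}{2}^{1/2}\sqrt{\epsilon'}$, which is the claimed bound since $n_{G^0} \geq n/2$.

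The engine is a deterministic per-step fact: \emph{whenever the algorithm does not return at step $t+1$ and $|G_t| \geq n/2$, the point removed at step $t+1$ lies in $B_t$ with probability at least $p_0 := 15/32$.} Note that $|G_t| \geq n/2$ entails $|B_t| \leq n - n_{G^0} < n/2 \leq |G_t|$ and $n_{G^0}/|G_t| \leq 2$. Let $(\lambda, v)$ be the largest eigenpair of $\Sigma_{S_t}$, so that with scores $\tau_i = \big(v^\top(z_i - \bar\mu_{S_t})\big)^2$ we have $\sum_{i \in S_t}\tau_i = |S_t|\lambda$, and $\lambda \geq 32\norm{\Sigma_{G^0}}{2}$ since the algorithm did not stop. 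Split $\sum_{i \in S_t}\tau_i = \sum_{i \in G_t}\tau_i + \sum_{i \in B_t}\tau_i$. Centering the good scores at $\bar\mu_{G_t}$ gives $\sum_{i \in G_t}\tau_i = |G_t|\,v^\top\Sigma_{G_t}v + |G_t|\big(v^\top(\bar\mu_{G_t} - \bar\mu_{S_t})\big)^2$; the first summand is at most $n_{G^0}\norm{\Sigma_{G^0}}{2} \leq 2|G_t|\norm{\Sigma_{G^0}}{2}$ because $|G_t|\Sigma_{G_t} \preceq n_{G^0}\Sigma_{G^0}$, and the identities $\bar\mu_{G_t} - \bar\mu_{S_t} = \tfrac{|B_t|}{|S_t|}(\bar\mu_{G_t} - \bar\mu_{B_t})$ and $\bar\mu_{B_t} - \bar\mu_{S_t} = \tfrac{|G_t|}{|S_t|}(\bar\mu_{B_t} - \bar\mu_{G_t})$, together with $\sum_{i \in B_t}\tau_i \geq |B_t|\big(v^\top(\bar\mu_{B_t} - \bar\mu_{S_t})\big)^2$, show the second summand is at most $\tfrac{|B_t|}{|G_t|}\sum_{i \in B_t}\tau_i \leq \sum_{i \in B_t}\tau_i$ (using $|B_t| \leq |G_t|$). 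Hence $|S_t|\lambda \leq 2|G_t|\norm{\Sigma_{G^0}}{2} + 2\sum_{i \in B_t}\tau_i$; combined with $|S_t|\lambda \geq |G_t|\cdot 32\norm{\Sigma_{G^0}}{2}$ this forces $\sum_{i \in B_t}\tau_i \geq 15|G_t|\norm{\Sigma_{G^0}}{2}$ and then $\sum_{i \in B_t}\tau_i \geq \tfrac{15}{32}\sum_{i \in S_t}\tau_i$, establishing the fact.

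To globalize, set $T^\star := \tfrac{C}{p_0^2}\big((n - n_{G^0}) + \log(1/\delta)\big)$ for a large universal constant $C$; since $\epsilon'$ is at most a small constant, $T^\star \leq \min\!\big(n/4,\, n_{G^0} - n/2\big)$. Let $\nu := \min(T, T^\star)$, a stopping time for $\{\mathcal{F}_t\}$. For every $t < \nu$, at most $t \leq T^\star$ good points have been removed, so $|G_t| \geq n_{G^0} - T^\star \geq n/2$ and the previous paragraph gives $\Exp[Y_{t+1} \mid \mathcal{F}_t] \geq p_0$; hence $M_t := \sum_{s=1}^{t \wedge \nu}(Y_s - p_0)$ is a submartingale with increments bounded by $1$, and Azuma's inequality gives $\Pr\big[M_{T^\star} \leq -p_0 T^\star/2\big] \leq \exp(-p_0^2 T^\star/8) \leq \delta$. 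On the complementary event, if $T \geq T^\star$ then $M_{T^\star} = \#\{\text{bad points removed in the first } T^\star \text{ steps}\} - p_0 T^\star \leq (n - n_{G^0}) - p_0 T^\star \leq -p_0 T^\star/2$ (the last inequality for $C$ large), a contradiction; thus $T < T^\star = O\big((n - n_{G^0}) + \log(1/\delta)\big)$, which is the stated iteration bound, and the number $g := T$ of good points ever removed satisfies $g \leq T^\star = O(\epsilon' n)$.

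On this event I bound $\norm{\bar\mu_{S_T} - \bar\mu_{G^0}}{2}$ via the triangle inequality through $\bar\mu_{G_T}$. For the ``good drift'' term, $\bar\mu_{G_T} - \bar\mu_{G^0} = \tfrac{g}{n_{G^0}}\big(\bar\mu_{G_T} - \bar\mu_{G^0 \setminus G_T}\big)$; and for any $A \subseteq G^0$ of size $m$ and unit vector $u$, Cauchy--Schwarz gives $\big(u^\top(\bar\mu_A - \bar\mu_{G^0})\big)^2 \leq \tfrac1m \sum_{i \in A}\big(u^\top(z_i - \bar\mu_{G^0})\big)^2 \leq \tfrac{n_{G^0}}{m}\norm{\Sigma_{G^0}}{2}$, so applying this to $A = G_T$ and $A = G^0 \setminus G_T$ (the latter dominating, as $n_{G^0}/g \geq 2$) and the triangle inequality yields $\norm{\bar\mu_{G_T} - \bar\mu_{G^0}}{2} \lesssim \tfrac{g}{n_{G^0}}\sqrt{\tfrac{n_{G^0}}{g}}\,\norm{\Sigma_{G^0}}{2}^{1/2} = \sqrt{\tfrac{g}{n_{G^0}}}\,\norm{\Sigma_{G^0}}{2}^{1/2} \lesssim \sqrt{\epsilon'}\,\norm{\Sigma_{G^0}}{2}^{1/2}$. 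For the ``bad pull'' term, $\bar\mu_{S_T} - \bar\mu_{G_T} = \tfrac{|B_T|}{|S_T|}(\bar\mu_{B_T} - \bar\mu_{G_T})$, while the pooled-covariance identity gives $|S_T|\Sigma_{S_T} \succeq \tfrac{|G_T||B_T|}{|S_T|}(\bar\mu_{G_T} - \bar\mu_{B_T})(\bar\mu_{G_T} - \bar\mu_{B_T})^\top$ and the algorithm returned so $\norm{\Sigma_{S_T}}{2} < 32\norm{\Sigma_{G^0}}{2}$; combining, $\norm{\bar\mu_{S_T} - \bar\mu_{G_T}}{2} \lesssim \sqrt{\tfrac{|B_T|}{|G_T|}}\,\norm{\Sigma_{G^0}}{2}^{1/2} \lesssim \sqrt{\epsilon'}\,\norm{\Sigma_{G^0}}{2}^{1/2}$. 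Adding the two bounds and recalling $\eparam_\delta = \bar\mu_{S_T}$ proves the theorem. The main obstacle is the martingale step: one must break the apparent circularity between ``the filter preferentially removes bad points'', which needs $|G_t| \geq n/2$, and ``few good points get removed'', which is what supplies $|G_t| \geq n/2$, and one must extract a \emph{high-probability} --- not merely in-expectation --- bound on both the iteration count and the final error from a procedure that deletes a single random point per step. This is exactly why Algorithm~\ref{algo:filteringpD} uses randomized removal together with martingale concentration rather than a deterministic threshold, and why the constant $32$ in the halting test must comfortably dominate the factor-$2$ slack coming from $n_{G^0}/|G_t| \leq 2$.
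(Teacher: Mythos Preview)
Your proof is correct and complete (modulo the harmless typo ``$g:=T$'' where you clearly mean $g\le T$), but it takes a genuinely different route from the paper's in two places, and both of your choices are cleaner.

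First, for the per-step removal probability, the paper introduces an auxiliary event $\calE^l$ (``the good scores are at least a $1/\gamma$-fraction of the total'') and proves separately that (i) when $\calE^l$ fails the removed point is bad with probability $\ge 2/3$, and (ii) when $\calE^l$ holds the halting test $\lambda<32\norm{\Sigma_{G^0}}{2}$ triggers. You bypass this intermediate event and argue directly from the halting condition itself: if $\lambda\ge 32\norm{\Sigma_{G^0}}{2}$ and $|G_t|\ge n/2$, then the bad scores already carry at least a $15/32$-fraction of the total. This is more direct and explains exactly why the constant $32$ in the algorithm is there.

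Second, for the final error, the paper invokes a TV-coupling inequality (their Lemma~\ref{lem:filt_meanControl}, essentially from \citet{kothari2018robust}) to bound $\norm{\eparam_{S^m}-\eparam_{G^0}}{2}$ in terms of $\sqrt{TV}\big(\norm{\Sigma_{S^m}}{2}^{1/2}+\norm{\Sigma_{G^0}}{2}^{1/2}\big)$, and then separately controls $TV(P_1,P_2)$ and $\norm{\Sigma_{S^m}}{2}$ via two further claims. Your argument instead splits through $\bar\mu_{G_T}$ and handles the ``good drift'' by Cauchy--Schwarz against $\Sigma_{G^0}$ and the ``bad pull'' by the pooled-covariance identity together with the halting bound $\norm{\Sigma_{S_T}}{2}<32\norm{\Sigma_{G^0}}{2}$. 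This is entirely elementary, avoids the coupling machinery, and yields the same $\sqrt{\epsilon'}\,\norm{\Sigma_{G^0}}{2}^{1/2}$ rate. The paper's coupling lemma is more portable (it applies to arbitrary pairs of discrete distributions), but for this specific theorem your tailored argument is shorter and more transparent. The martingale step is essentially the same in both proofs, just phrased as a submartingale on $\sum(Y_s-p_0)$ rather than a supermartingale on $|B^l|+\tfrac{\gamma-1}{\gamma}l$.
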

The above result shows that if we are provided with additional side-information about the covariance of a large enough subset of points, then with high-probability it is algorithmically possible to come close to the mean of that subset. From our analysis in Section~\ref{sec:oracle}, we could view both robustness models via the lens of an oracle mixture model. Thus remarkably, the above result entails that the same algorithm provides a robust mean estimator for both models of robustness, given an upper bound on the spectral norm of the population covariance of the good component. For heavy tailed distributions, the good component is implicitly specified via an $\ell_2$ oracle, while for Huber contamination, the good component is the true uncontaminated distribution, but with an additional implicit $\ell_2$ oracle subset thereof. 

Hence, in order to instantiate the above theorem for the $\ell_2$ oracles $\calO(x) = \indic{\norm{x - \mu}{2} \leq R}$ discussed in Section~\ref{sec:oracle}, we need to bound the operator norm of covariance of the samples that belong to $P_\calO$:
\[ \widehat{\Sigma}^{\calO}_n = \paren{\sum \limits_{i=1}^n \calO(x_i)}^{-1} \sum\limits_{i=1}^n (x_i - \widehat{\mu}_n)(x_i - \widehat{\mu}_n)^T \calO(x_i) \]

As before, given $\delta \in (0.5,1)$, we consider values of $R$ and $n$ such that for a sufficiently small constant $c > 0$,
\begin{align}
\label{eqn:random2}
\Big(\frac{\sqrt{\trace{\Sigma}}}{R}\Big)^{2k} + \frac{\log(1/\delta)}{n} \leq c,
\end{align}
The following theorem characterizes the operator norm of $\widehat{\Sigma}^{\calO}_n$.

\begin{theorem}\label{thm:oracle_cov}
Let $P$ be any distribution with mean $\mu$ and covariance $\Sigma$ and bounded $2k$-moments for $k \in \{1,2\}$. 
%Then, given $n$~\iid~samples, there exists an oracle $\calO:\real^p \mapsto \{0,1\}$ given by $\calO(x) = \indic{\bignorm{x - \mu}{2} \leq R}$ such that, 
For any $\delta \in (0.5,1)$ with probability at least $1 - \delta$,
\begin{align*}
    \norm{\widehat{\Sigma}^{\calO}_n}{2} \lesssim  \norm{\Sigma}{2} + R \norm{\Sigma}{2}^{\half} \sqrt{\frac{\log(p/\delta)}{n}} +  \frac{R^2\log(p/\delta)}{n}.
\end{align*}
\end{theorem}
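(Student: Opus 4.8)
The plan is to control $\norm{\widehat{\Sigma}^{\calO}_n}{2}$ by relating the empirical conditional covariance to the true conditional covariance $\Sigma_{\calO} \defeq \text{Cov}(x \mid \calO(x) = 1)$ and then bounding $\norm{\Sigma_{\calO}}{2}$ in terms of $\norm{\Sigma}{2}$. First I would decompose the error using the (random) number of inliers $N_{\calO} \defeq \sum_i \calO(x_i)$; since the defining inequality~\eqref{eqn:random2} forces $n - N_{\calO}$ to be small with probability $1 - \delta/3$ (the expected fraction of discarded points is at most $(\sqrt{\trace{\Sigma}}/R)^{2k}$ by Markov, and a Bernstein/Chernoff bound on the Bernoulli sum $N_{\calO}$ handles the deviation), I may treat $N_{\calO} \asymp n$ throughout. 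Conditioned on $N_{\calO}$, the inlier samples are i.i.d.\ draws from $P_{\calO}$, which is a \emph{bounded} distribution: every such $x$ satisfies $\norm{x - \mu}{2} \le R$, hence $\norm{x - \mu_{\calO}}{2} \le 2R$ where $\mu_{\calO} = \Exp_{P_{\calO}}[x]$.

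The key step is a matrix concentration bound for the empirical second-moment matrix of these bounded i.i.d.\ vectors. Writing $M_i = (x_i - \mu_{\calO})(x_i - \mu_{\calO})^T$ for $x_i \in P_{\calO}$, each $M_i$ is PSD with $\norm{M_i}{2} \le 4R^2$ and $\norm{\Exp[M_i]}{2} = \norm{\Sigma_{\calO}}{2}$. A matrix Bernstein inequality then gives, with probability $1 - \delta/3$,
\begin{align*}
\mednorm{\frac{1}{N_{\calO}}\sum_{x_i \in P_{\calO}} M_i - \Sigma_{\calO}}{2} \lesssim \norm{\Sigma_{\calO}}{2}^{\half} R \sqrt{\frac{\log(p/\delta)}{n}} + \frac{R^2 \log(p/\delta)}{n}.
\end{align*}
To pass from this centered-at-$\mu_{\calO}$ statement to $\widehat{\Sigma}^{\calO}_n$, which is centered at the empirical mean $\widehat{\mu}_n$, I subtract off the rank-one correction $(\widehat{\mu}_n - \mu_{\calO})(\widehat{\mu}_n - \mu_{\calO})^T$; its operator norm is $\norm{\widehat{\mu}_n - \mu_{\calO}}{2}^2$, which is $O(R^2 \log(1/\delta)/n)$ by the same bounded-vector vector-Bernstein argument (or by a direct specialization of the oracle analysis in Theorem~\ref{thm:oracelArgument}), and so is absorbed into the last term. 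It then remains to bound $\norm{\Sigma_{\calO}}{2}$: since conditioning on a high-probability event cannot inflate the covariance by more than a constant when that event has probability $\ge 1 - c$, one shows $\Sigma_{\calO} \preceq \frac{1}{1-\epsilon_{\calO}}\Sigma \preceq 2\Sigma$ for $\epsilon_{\calO} \le 1/2$ — more carefully, $\Exp_{P_{\calO}}[(x-\mu_{\calO})(x-\mu_{\calO})^T] \preceq \Exp_{P_{\calO}}[(x-\mu)(x-\mu)^T] \preceq \frac{1}{1-\epsilon_{\calO}} \Exp_P[(x-\mu)(x-\mu)^T]$. Substituting $\norm{\Sigma_{\calO}}{2} \lesssim \norm{\Sigma}{2}$ and using $\norm{\Sigma}{2}^{\half} \le \norm{\Sigma}{2}^{\half}$ in the cross term yields exactly the claimed bound, after a union bound over the three $\delta/3$ events.

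The main obstacle I anticipate is the subtlety of the random truncation: the event $\{\calO(x_i) = 1\}$ correlates with the value of $x_i$, so $\widehat{\Sigma}^{\calO}_n$ is not literally a sum of i.i.d.\ terms but a sum over a random-size subsample. The clean way around this is to condition on the index set $\{i : \calO(x_i) = 1\}$ and invoke the fact that, given this set, the corresponding $x_i$ are i.i.d.\ from $P_{\calO}$ — this is where care is needed to make the conditioning argument rigorous, and where the constants in~\eqref{eqn:random2} are used to guarantee $N_{\calO}$ is large enough for the matrix Bernstein bound to be in its "sub-Gaussian" regime. A secondary technical point is tracking the $\log(p/\delta)$ (rather than $\log(1/\delta)$) factor, which is the price of the matrix — as opposed to scalar — concentration; this is unavoidable for a bound on the full operator norm and matches the statement.
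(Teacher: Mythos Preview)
Your proposal is correct and follows essentially the same route as the paper: bound the conditional second-moment operator norm by $\lesssim \norm{\Sigma}{2}$, then apply a matrix-Bernstein/bounded-covariance concentration result (the paper cites Theorem~5.44 of Vershynin) to the oracle-accepted samples, using that each $x_i - \mu$ has norm at most $R$. The one simplification in the paper worth noting is that it recenters at the true mean $\mu$ rather than at $\mu_{\calO}$: since $\widehat{\Sigma}^{\calO}_n = \frac{1}{n_{G^0}}\sum_{i\in G^0}(x_i-\mu)(x_i-\mu)^T - (\widehat{\mu}_n-\mu)(\widehat{\mu}_n-\mu)^T$ and the subtracted rank-one term is PSD, one immediately gets $\norm{\widehat{\Sigma}^{\calO}_n}{2} \le \norm{A}{2}$ without ever needing to bound $\norm{\widehat{\mu}_n - \mu_{\calO}}{2}^2$ separately, and the boundedness $\norm{x_i-\mu}{2}\le R$ is read off directly from the oracle definition rather than via $2R$.
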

We now have all the tools needed study the performance of Algorithm~\ref{algo:filteringpD} for both models of robustness. In particular, we combine Theorems~\ref{thm:filt_algo_pD} and~\ref{thm:oracle_cov} to characterize how well Algorithm~\ref{algo:filteringpD} can approximate the mean of the oracle set. When combined with Theorem~\ref{thm:oracelArgument}, we can then derive results on how well Algorithm~\ref{algo:filteringpD} can approximate the true mean. We follow this strategy and derive tight non-asymptotic results next.

\subsection{Heavy-Tailed Estimation}

We present our first result for heavy-tailed mean estimation for the distributions with bounded 4-moments. Given $\delta \in (0.5,1)$, we consider values of $n$ such that there is a small constant $c > 0$
\[{r^2(\Sigma) \frac{\log^2(p/\delta)}{n \log(1/\delta)}} \leq C  .\] Then, we have the following corollary.
\begin{corollary}\label{lem:heavy_mean4}
    Suppose $P$ has bounded 4th moment. Then, Algorithm~\ref{algo:filteringpD} when instantiated with $C\norm{\Sigma}{2}$ on $n$-\iid~samples from $P$ returns an estimate $\eparam_\delta$ such that, with probability at least $1 - \delta$,
\begin{align*}
\norm{\eparam_\delta - \mu}{2} \lesssim \opt_{n,\Sigma,\delta}
\end{align*}
\end{corollary}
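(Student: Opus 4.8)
The plan is to apply the machinery of Sections~\ref{sec:oracle}--\ref{sec:effheavy} with a carefully chosen $\ell_2$-oracle. Note that Algorithm~\ref{algo:filteringpD} is fed only the samples and the scalar $C\norm{\Sigma}{2}$: the oracle radius $R$ and the good set $G^0$ are purely analytical objects we are free to design. The subtle point is that the radius used in Corollary~\ref{cor:oracle4} is \emph{too aggressive} here --- it prunes enough mass that the filtering error $\norm{\Sigma_{G^0}}{2}^{1/2}\bigparen{(n-n_{G^0})/n}^{1/2}$ would only give the suboptimal rate of Corollary~\ref{cor:oracle2}. Instead I would take
\[
R \;\asymp\; \norm{\Sigma}{2}^{1/2}\sqrt{n/\log(p/\delta)},
\]
which up to constants is the \emph{largest} radius for which Theorem~\ref{thm:oracle_cov} still certifies $\norm{\widehat{\Sigma}^{\calO}_n}{2} = O(\norm{\Sigma}{2})$, and set $G^0 = \{i : \calO(x_i) = 1\}$ with $\calO(x) = \indic{\norm{x-\mu}{2}\le R}$.

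\emph{Controlling the good set.} First I would bound $\Pr(\norm{x-\mu}{2} > R) \le \Exp\norm{x-\mu}{2}^{2k}/R^{2k} \lesssim (\sqrt{\trace{\Sigma}}/R)^{2k}$ --- for $k=2$ one expands $\norm{x-\mu}{2}^4 = \sum_{j,\ell}(x_j-\mu_j)^2(x_\ell-\mu_\ell)^2$ and applies Cauchy--Schwarz together with the coordinatewise $4$-th moment bound to get $\Exp\norm{x-\mu}{2}^4 \lesssim (\trace{\Sigma})^2$. A multiplicative Chernoff bound then gives, with probability $1-\delta/4$, $n - n_{G^0} \lesssim n(\sqrt{\trace{\Sigma}}/R)^{2k} + \log(1/\delta) \asymp r^2(\Sigma)\log^2(p/\delta)/n + \log(1/\delta)$, which by the effective-rank hypothesis is $\lesssim \log(1/\delta)$; in particular conditions~\eqref{eqn:random} and~\eqref{eqn:random2} and the hypothesis of Theorem~\ref{thm:filt_algo_pD} all hold. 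Theorem~\ref{thm:oracle_cov} then gives, with probability $1-\delta/4$, $\norm{\widehat{\Sigma}^{\calO}_n}{2} \lesssim \norm{\Sigma}{2} + R\norm{\Sigma}{2}^{1/2}\sqrt{\log(p/\delta)/n} + R^2\log(p/\delta)/n$, and our choice of $R$ makes each of the last two terms $O(\norm{\Sigma}{2})$; taking $C$ large enough, $C\norm{\Sigma}{2}$ is a valid upper bound on $\norm{\Sigma_{G^0}}{2}$.

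\emph{Assembling the bound.} By the previous step Theorem~\ref{thm:filt_algo_pD} applies, so with probability $1-\delta/4$ the output satisfies $\norm{\eparam_\delta - \widehat{\mu}_n}{2} \lesssim \norm{\Sigma}{2}^{1/2}\bigparen{(n-n_{G^0})/n + \log(1/\delta)/n}^{1/2}$, where $\widehat{\mu}_n = n_{G^0}^{-1}\sum_{i\in G^0} x_i$ is exactly the oracle mean of Section~\ref{sec:oracle}; and Theorem~\ref{thm:oracelArgument} (with $k=2$) gives, with probability $1-\delta/4$, $\norm{\widehat{\mu}_n - \mu}{2} \lesssim \opt_{n,\Sigma,\delta} + R\log(1/\delta)/n + \norm{\Sigma}{2}^{1/2}(\sqrt{\trace{\Sigma}}/R)^3$. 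A union bound and the triangle inequality yield, with probability $1-\delta$,
\begin{align*}
\norm{\eparam_\delta - \mu}{2} \;\lesssim\; \opt_{n,\Sigma,\delta} + \frac{R\log(1/\delta)}{n} + \norm{\Sigma}{2}^{1/2}\Bigl(\tfrac{\sqrt{\trace{\Sigma}}}{R}\Bigr)^{3} + \norm{\Sigma}{2}^{1/2}\Bigl(\tfrac{n-n_{G^0}}{n} + \tfrac{\log(1/\delta)}{n}\Bigr)^{1/2}.
\end{align*}
Plugging in $R$ and using $\log(1/\delta)\le\log(p/\delta)$, $(\sqrt{\trace{\Sigma}}/R)^2\asymp r(\Sigma)\log(p/\delta)/n$, $n-n_{G^0}\lesssim\log(1/\delta)$, and the effective-rank condition $r^2(\Sigma)\log^2(p/\delta)\lesssim n\log(1/\delta)$, a short calculation shows each term after $\opt_{n,\Sigma,\delta}$ is $\lesssim \norm{\Sigma}{2}^{1/2}\sqrt{\log(1/\delta)/n}\le\opt_{n,\Sigma,\delta}$, which is the claim.

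\emph{Main obstacle.} The crux is the choice of $R$: it must be large enough that Theorem~\ref{thm:oracle_cov} still yields $\norm{\widehat{\Sigma}^{\calO}_n}{2}=O(\norm{\Sigma}{2})$ (so that the filtering guarantee can be invoked with the upper bound $C\norm{\Sigma}{2}$), yet small enough --- i.e.\ retaining enough mass, hence little pruning bias --- that both the oracle error of Theorem~\ref{thm:oracelArgument} and the filtering error of Theorem~\ref{thm:filt_algo_pD} stay below $\opt_{n,\Sigma,\delta}$. These requirements push $R$ in opposite directions, and the window in which all of them hold is precisely what the hypothesis $r^2(\Sigma)\log^2(p/\delta)/(n\log(1/\delta))\le C$ guarantees; everything else is union bounds and bookkeeping (with some care tracking $\log(1/\delta)$ versus $\log(p/\delta)$).
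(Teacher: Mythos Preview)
Your proposal is correct and follows the paper's overall scheme---combine Theorems~\ref{thm:oracelArgument}, \ref{thm:filt_algo_pD}, and \ref{thm:oracle_cov} for a suitably chosen oracle radius $R$, then absorb every residual term into $\opt_{n,\Sigma,\delta}$ via the effective-rank hypothesis---but you instantiate $R$ differently. The paper takes $R=\sqrt{\trace{\Sigma}}\,(n/\log(1/\delta))^{1/4}$, which makes the pruned mass exactly $\alpha=\log(1/\delta)/n$ and then invokes the effective-rank condition to force the Theorem~\ref{thm:oracle_cov} bound down to $O(\norm{\Sigma}{2})$. You instead pick $R\asymp\norm{\Sigma}{2}^{1/2}\sqrt{n/\log(p/\delta)}$, the largest radius that keeps $\norm{\widehat{\Sigma}^{\calO}_n}{2}=O(\norm{\Sigma}{2})$ \emph{unconditionally}, and then use the effective-rank condition only to show the pruned mass is still $\lesssim\log(1/\delta)/n$. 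Your route makes it more transparent why the algorithm can be run with the fixed input $C\norm{\Sigma}{2}$; the paper's route makes the size of $G^0$ immediate. The remaining bookkeeping (the $R\log(1/\delta)/n$ and $\norm{\Sigma}{2}^{1/2}(\sqrt{\trace{\Sigma}}/R)^3$ terms) differs in detail, but both reductions go through under the stated hypothesis, with your bias term requiring the mild extra observation $\log(1/\delta)\lesssim n$ already built into~\eqref{eqn:random2}.
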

{\bf Remark: } If $n$ is large enough so that for a small constant $c > 0$ the effective rank is bounded as $r^2(\Sigma) \leq c n \frac{\log(1/\delta)}{\log^2(p/\delta)} $, then Algorithm~\ref{algo:filteringpD} achieves the \emph{the optimal sub-Gaussian deviation bound.} The above presented result shows that it is possible to \emph{prune} samples to get high-probability bounds for the heavy-tailed problem. In comparison to the SDP based algorithms of~\citep{hopkins2018sub,cherapanamjeri2019fast}, our algorithm is easy to implement and practical. In particular, our estimator can also be computed in linear-time, requiring an overall runtime of $O(n p \log(1/\delta))$ compared to $O(n^4 + np)$ runtime of \citep{cherapanamjeri2019fast}.

Next, we show a somewhat weaker result for distributions with bounded 2nd moment.
\begin{corollary}\label{lem:heavy_mean2}
    Suppose $P$ has bounded 2nd moment. Then, Algorithm~\ref{algo:filteringpD} when instantiated with $C\norm{\Sigma}{2} + {\trace{\Sigma}\frac{\log(p/\delta)}{\log(1/\delta)}}$ on $n$-\iid~samples from $P$ returns an estimate $\eparam_\delta$ such that, with probability at least $1 - \delta$,
\begin{align*}
\norm{\eparam_\delta - \mu}{2} \lesssim \sqrt{\frac{\trace{\Sigma}\log(p/\delta)}{n}}
\end{align*}
\end{corollary}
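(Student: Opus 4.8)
The plan is to run the pipeline of Section~\ref{sec:effheavy} exactly as in the bounded-fourth-moment case (Corollary~\ref{lem:heavy_mean4}), only with the radius of the $\ell_2$-oracle retuned for the weaker second-moment assumption, and then to chain Theorems~\ref{thm:oracle_cov}, \ref{thm:filt_algo_pD} and~\ref{thm:oracelArgument} together with the triangle inequality. Concretely, take the oracle $\calO(x) = \indic{\bignorm{x-\mu}{2} \leq R}$ of Section~\ref{sec:oracle} with $R \asymp \sqrt{\trace{\Sigma}\, n/\log(1/\delta)}$, set $G^0 = \{i : \calO(x_i) = 1\}$, and note that the quantity $\tfrac{1}{n_{G^0}}\sum_{x_i \in G^0} x_i$ that Algorithm~\ref{algo:filteringpD} aims to estimate is precisely the oracle sample mean $\widehat{\mu}^{\calO}_n$. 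Since $\trace{\Sigma}/R^2 = \log(1/\delta)/n$ for this $R$, the standing sample-size condition $r^2(\Sigma)\log^2(p/\delta) \lesssim n\log(1/\delta)$ (which forces $\log(1/\delta)\lesssim n$) verifies the hypotheses~\eqref{eqn:random}, \eqref{eqn:random2} of Theorems~\ref{thm:oracelArgument} and~\ref{thm:oracle_cov} and the $(n-n_{G^0})/n + \log(1/\delta)/n \leq c$ requirement of Theorem~\ref{thm:filt_algo_pD}. By the triangle inequality $\norm{\eparam_\delta - \mu}{2} \leq \norm{\eparam_\delta - \widehat{\mu}^{\calO}_n}{2} + \norm{\widehat{\mu}^{\calO}_n - \mu}{2}$, so it suffices to bound each of these by $\lesssim \sqrt{\trace{\Sigma}\log(p/\delta)/n}$.

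For the algorithmic term I would first certify that Algorithm~\ref{algo:filteringpD} is run with a legitimate upper bound on $\norm{\widehat{\Sigma}^{\calO}_n}{2}$. Theorem~\ref{thm:oracle_cov} with $k=1$ gives, with probability at least $1-\delta$, $\norm{\widehat{\Sigma}^{\calO}_n}{2} \lesssim \norm{\Sigma}{2} + R\norm{\Sigma}{2}^{\half}\sqrt{\log(p/\delta)/n} + R^2\log(p/\delta)/n$, and substituting the chosen $R$ together with the AM--GM bound $R\norm{\Sigma}{2}^{\half}\sqrt{\log(p/\delta)/n} \leq \tfrac{1}{2}\big(\norm{\Sigma}{2} + R^2\log(p/\delta)/n\big)$ collapses the right-hand side to $\lesssim \norm{\Sigma}{2} + \trace{\Sigma}\tfrac{\log(p/\delta)}{\log(1/\delta)}$, i.e.\ a constant multiple of the value used to instantiate the algorithm in the statement (so Theorem~\ref{thm:filt_algo_pD} applies with $\norm{\Sigma_{G^0}}{2}$ replaced by this valid upper bound). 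Next I would control the pruning fraction: since $\Exp[\bignorm{x-\mu}{2}^2] = \trace{\Sigma}$, Markov's inequality gives $\Pr(\calO(x)=0) \leq \trace{\Sigma}/R^2$, and a Bernstein bound on this binomial count gives, with probability at least $1-\delta$, $(n-n_{G^0})/n \lesssim \trace{\Sigma}/R^2 + \log(1/\delta)/n \lesssim \log(1/\delta)/n$ for the chosen $R$. Feeding these two facts into Theorem~\ref{thm:filt_algo_pD},
\[
  \norm{\eparam_\delta - \widehat{\mu}^{\calO}_n}{2} \;\lesssim\; \Big(\norm{\Sigma}{2} + \trace{\Sigma}\tfrac{\log(p/\delta)}{\log(1/\delta)}\Big)^{\half}\Big(\tfrac{\log(1/\delta)}{n}\Big)^{\half} \;\lesssim\; \sqrt{\tfrac{\norm{\Sigma}{2}\log(1/\delta)}{n}} + \sqrt{\tfrac{\trace{\Sigma}\log(p/\delta)}{n}},
\]
and the first summand is at most $\opt_{n,\Sigma,\delta} \lesssim \sqrt{\trace{\Sigma}\log(p/\delta)/n}$ (using $\norm{\Sigma}{2}\leq\trace{\Sigma}$, $\log(1/\delta)\leq\log(p/\delta)$, and $\log(p/\delta)\geq 1$).

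For the oracle term, Theorem~\ref{thm:oracelArgument} with $k=1$ gives $\norm{\widehat{\mu}^{\calO}_n - \mu}{2} \lesssim \opt_{n,\Sigma,\delta} + \tfrac{R\log(1/\delta)}{n} + \norm{\Sigma}{2}^{\half}\tfrac{\sqrt{\trace{\Sigma}}}{R}$. With the chosen $R$ the middle term equals $\sqrt{\trace{\Sigma}\log(1/\delta)/n}$ and the last equals $\sqrt{\norm{\Sigma}{2}\log(1/\delta)/n}$, and both, like $\opt_{n,\Sigma,\delta}$ itself, are $\lesssim \sqrt{\trace{\Sigma}\log(p/\delta)/n}$ by the inequalities just used. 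A union bound over the constantly many high-probability events invoked (absorbing the resulting constant into the universal constants) together with the triangle inequality then yields $\norm{\eparam_\delta - \mu}{2} \lesssim \sqrt{\trace{\Sigma}\log(p/\delta)/n}$, as claimed.

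The delicate point is the choice of the oracle radius $R$, which has to thread a narrow window formed by terms coming from all three theorems. Decreasing $R$ inflates the oracle bias $\norm{\Sigma}{2}^{\half}\sqrt{\trace{\Sigma}}/R$ (Theorem~\ref{thm:oracelArgument}) and the pruning fraction $\trace{\Sigma}/R^2$, hence — through the $\sqrt{(n-n_{G^0})/n}$ factor of Theorem~\ref{thm:filt_algo_pD} — the algorithmic error; increasing $R$ inflates the term $R^2\log(p/\delta)/n$ in $\norm{\widehat{\Sigma}^{\calO}_n}{2}$ (which re-enters the final bound after multiplication by $\sqrt{\log(1/\delta)/n}$ in Theorem~\ref{thm:filt_algo_pD}) and the term $R\log(1/\delta)/n$ in the oracle error. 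The value $R\asymp\sqrt{\trace{\Sigma}\, n/\log(1/\delta)}$ balances these and, conveniently, makes $\norm{\widehat{\Sigma}^{\calO}_n}{2}$ coincide up to constants with the stated instantiation quantity $C\norm{\Sigma}{2} + \trace{\Sigma}\log(p/\delta)/\log(1/\delta)$. Beyond this balancing act, the only genuinely new ingredient relative to the earlier theorems is the high-probability control of $(n-n_{G^0})/n$, which is a routine Bernstein estimate.
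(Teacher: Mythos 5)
Your proposal is correct and follows essentially the same route as the paper's proof: the same oracle radius $R \asymp \sqrt{\trace{\Sigma}\, n/\log(1/\delta)}$, the same chaining of Theorems~\ref{thm:oracle_cov}, \ref{thm:filt_algo_pD} and~\ref{thm:oracelArgument} via the triangle inequality, and the same Chebyshev--Bernstein control of the pruning fraction. The only cosmetic difference is that you collapse the cross term $R\norm{\Sigma}{2}^{1/2}\sqrt{\log(p/\delta)/n}$ by AM--GM up front, whereas the paper carries it along and absorbs it at the end.
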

{\bf Remark: } In the univariate setting, Corollary~\ref{lem:heavy_mean2} shows that Algorithm~\ref{algo:filteringpD} achieves the optimal sub-Gaussian deviation bound. As discussed in the introduction, even for the univariate setting, Catoni's M-estimation~\citep{catoni2012challenging} and Median-of-Means~\citep{alon96,nemirovski1983problem,jerrum86} are the only known estimators to achieve these rates for any 2nd moment bounded distribution. Algorithm~\ref{algo:filteringpD} is the \emph{first sample-pruning} based estimator, which achieves optimal these optimal bounds, without any further assumptions.

In the multivariate setting, while our theoretical upper bounds are weaker than the guarantees of GMOM, we conduct extensive simulations in Section~\ref{sec:experiments} which suggest otherwise.

\subsection{Huber's $\epsilon$-contamination}
In this section, we present results for robust mean estimation in the $\epsilon$-contamination model. Recall that in this setting, we observe samples drawn from $P_\epsilon$, which for an arbitrary distribution $Q$ is defined as a mixture model, $P_{\epsilon} = (1-\epsilon)P + \epsilon Q$.

When we observe $n$ samples from $P_{\epsilon}$, we know that there will roughly be $n(1-\epsilon)$ points drawn from the true distribution $P$. This implies that we can again rely on the presence of $\ell_2$ oracles, $\calO(x) = \indic{\norm{x - \mu}{2} \leq R}$. We first present results for distributions with bounded 2nd moment. 

Given $\delta \in (0.5,1)$, we consider values of $\epsilon$ and $n$ such that for a sufficiently small constant $c > 0$,
\begin{align}
\label{eqn:random}
\epsilon + \sqrt{\epsilon \frac{\log(1/\delta)}{n}} + \frac{\log(1/\delta)}{n} \leq c,
\end{align}

\begin{corollary}\label{lem:filt_algo_pD_2}
Suppose $P$ has bounded 2nd moments. Then, given $n$~\iid samples from the mixture distribution~\eqref{eqn:huber_mixture}, Algorithm~\ref{algo:filteringpD} when instantiated with $C \norm{\Sigma}{2} + \frac{\trace{\Sigma} \log(p/\delta)}{n \epsilon + \log(1/\delta)}$ returns an estimate $\eparam_\delta$ such that with probability at least $1 - \delta$, 
\begin{align*}
\norm{\eparam - \mu}{2} \lesssim \norm{\Sigma}{2}^{\half} \sqrt{\epsilon} + \sqrt{\frac{\trace{\Sigma}\log(p/\delta)}{n}}
\end{align*}
\end{corollary}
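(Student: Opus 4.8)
The plan is to view the $n$ contaminated observations as an instance of the oracle mixture model of Section~\ref{sec:oracle} and then combine the three tools already established: Theorem~\ref{thm:filt_algo_pD} (Algorithm~\ref{algo:filteringpD} approximates the mean of a good set given an upper bound on its covariance norm), Theorem~\ref{thm:oracle_cov} (the operator norm of an $\ell_2$-oracle subset's covariance), and Theorem~\ref{thm:oracelArgument} with $k=1$ (the bias of the $\ell_2$-oracle sample mean relative to $\mu$). Concretely, I would fix a radius $R$, take the good set to be the \emph{uncorrupted} samples that also fall within distance $R$ of $\mu$, use Theorems~\ref{thm:oracle_cov} and~\ref{thm:oracelArgument} to bound its covariance norm and its mean's bias, run the algorithm, and finish with a triangle inequality among $\eparam_\delta$, the good set's sample mean, and $\mu$.

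First I would set up the good set. Condition on which of the $n$ indices were drawn from $P$: call that sub-sample $S_P$ (of size $n_P$) and the rest $S_Q$, so that given $n_P$ the points of $S_P$ are i.i.d.\ $\sim P$ while $S_Q$ is arbitrary. Define $G^0 := \{\, x_i \in S_P : \norm{x_i - \mu}{2} \le R \,\}$, the $\ell_2$-oracle subset of the clean points, so $n - n_{G^0} \le |S_Q| + |\{x_i \in S_P : \norm{x_i-\mu}{2} > R\}|$. A Chernoff bound on $\mathrm{Binomial}(n,\epsilon)$ gives $|S_Q| \lesssim n\epsilon + \log(1/\delta)$ with probability $1-\delta$; since $P$ has bounded $2$nd moment, Markov gives $\Pr_{x\sim P}(\norm{x-\mu}{2} > R) \le \trace{\Sigma}/R^2$, so a multiplicative Chernoff bound makes the second count $\lesssim n\trace{\Sigma}/R^2 + \log(1/\delta)$. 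Now choose
\[ R^2 = \frac{n\,\trace{\Sigma}}{n\epsilon + \log(1/\delta)}, \]
so that $n\trace{\Sigma}/R^2 = n\epsilon + \log(1/\delta)$ and hence $\frac{n - n_{G^0}}{n} + \frac{\log(1/\delta)}{n} \lesssim \epsilon + \frac{\log(1/\delta)}{n}$, which is below the required small constant by the corollary's hypothesis; this simultaneously verifies the size hypothesis of Theorem~\ref{thm:filt_algo_pD} and, since $\trace{\Sigma}/R^2 = \epsilon + \log(1/\delta)/n$ is small, hypothesis~\eqref{eqn:random2} of Theorem~\ref{thm:oracle_cov}. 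It also shows $n_P \asymp n$, so $n$ and $n_P$ are interchangeable up to constants below.

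Next, Theorem~\ref{thm:oracle_cov} with $k=1$ applied to $S_P$ gives $\norm{\Sigma_{G^0}}{2} \lesssim \norm{\Sigma}{2} + R\norm{\Sigma}{2}^{\half}\sqrt{\log(p/\delta)/n} + R^2\log(p/\delta)/n$; AM--GM absorbs the cross term into the other two, and with the above $R$ the last term is exactly $\frac{\trace{\Sigma}\log(p/\delta)}{n\epsilon+\log(1/\delta)}$, so $\norm{\Sigma_{G^0}}{2} \lesssim \norm{\Sigma}{2} + \frac{\trace{\Sigma}\log(p/\delta)}{n\epsilon+\log(1/\delta)}$, i.e.\ the quantity with which the algorithm is instantiated is a valid upper bound on $\norm{\Sigma_{G^0}}{2}$ (folding universal constants into $C$ and into the constant defining $R$). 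Theorem~\ref{thm:filt_algo_pD} then returns $\eparam_\delta$ with
\[ \mednorm{\eparam_\delta - \tfrac{1}{n_{G^0}}\textstyle\sum_{x_i\in G^0}x_i}{2} \lesssim \norm{\Sigma_{G^0}}{2}^{\half}\Big(\tfrac{n-n_{G^0}}{n}+\tfrac{\log(1/\delta)}{n}\Big)^{\half}, \]
and substituting Steps 1--2, writing $\epsilon + \log(1/\delta)/n = (n\epsilon+\log(1/\delta))/n$, expanding the product, and using $\norm{\Sigma}{2}\le\trace{\Sigma}$ and $\log(1/\delta)\le\log(p/\delta)$ to drop the slack yields $\mednorm{\eparam_\delta - \tfrac{1}{n_{G^0}}\sum_{x_i\in G^0}x_i}{2} \lesssim \norm{\Sigma}{2}^{\half}\sqrt{\epsilon} + \sqrt{\trace{\Sigma}\log(p/\delta)/n}$. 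For the bias, the sample mean of $G^0$ is precisely the oracle estimator of Theorem~\ref{thm:oracelArgument} on the i.i.d.\ sample $S_P$; with $k=1$ it satisfies $\mednorm{\tfrac{1}{n_{G^0}}\sum_{x_i\in G^0}x_i - \mu}{2} \lesssim \opt_{n,\Sigma,\delta} + \tfrac{R\log(1/\delta)}{n} + \norm{\Sigma}{2}^{\half}\sqrt{\trace{\Sigma}}/R$, and substituting the chosen $R$ shows every term is $\lesssim \norm{\Sigma}{2}^{\half}\sqrt{\epsilon} + \sqrt{\trace{\Sigma}\log(p/\delta)/n}$. A triangle inequality and a union bound over the $O(1)$ events finish the proof.

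The main obstacle is Step 1: one must be careful that $G^0$ is a genuine \emph{deterministic} subset of the observed sample once the corruption pattern is fixed, so that Theorems~\ref{thm:filt_algo_pD},~\ref{thm:oracle_cov},~\ref{thm:oracelArgument} apply verbatim conditionally on $n_P$, and that the two Chernoff-type tail bounds --- on the number of $Q$-points and on the number of $P$-points landing outside radius $R$ --- hold together so that the size hypothesis ``$\frac{n-n_{G^0}}{n}+\frac{\log(1/\delta)}{n}$ small'' is actually met. Once $R^2 = n\trace{\Sigma}/(n\epsilon+\log(1/\delta))$ is pinned down, the rest is routine algebraic bookkeeping among the three bounds.
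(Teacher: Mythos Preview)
Your proposal is correct and follows essentially the same approach as the paper: define the good set $G^0$ as the clean samples within $\ell_2$-radius $R$ of $\mu$, choose $R^2 = n\trace{\Sigma}/(n\epsilon+\log(1/\delta))$ (equivalently $R = \sqrt{\trace{\Sigma}}/(\epsilon+\log(1/\delta)/n)^{1/2}$, exactly the paper's choice), bound $n-n_{G^0}$ via Bernstein/Chernoff, invoke Theorem~\ref{thm:oracle_cov} for $\norm{\Sigma_{G^0}}{2}$, Theorem~\ref{thm:filt_algo_pD} for $\norm{\eparam_\delta - \widehat{\mu}_{G^0}}{2}$, Theorem~\ref{thm:oracelArgument} with $k=1$ for $\norm{\widehat{\mu}_{G^0}-\mu}{2}$, and finish by triangle inequality. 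The only cosmetic difference is that the paper bundles the events ``$x\sim P$'' and ``$\norm{x-\mu}{2}\le R$'' into a single event $E_1\cap E_2$ before applying Bernstein, whereas you bound $|S_Q|$ and the number of far clean points separately; both routes yield the same $\frac{n-n_{G^0}}{n}\lesssim \epsilon+\log(1/\delta)/n$.
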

Algorithm~\ref{algo:filteringpD} achieves the optimal asymptotic bias $O(\norm{\Sigma}{2}^{\half} \sqrt{\epsilon})$~\citep{lai2016agnostic}, but in higher dimensions, we don't get the sub-gaussian deviation term. We next present results for distributions with bounded 4th moment. 

\begin{corollary}\label{lem:filt_algo_pD_4}
Suppose $P$ has bounded 4th moments. Then, given $n$~\iid samples from the mixture distribution~\eqref{eqn:huber_mixture}, Algorithm~\ref{algo:filteringpD} when instantiated with  $C\norm{\Sigma}{2} + \frac{\trace{\Sigma} \log(p/\delta)}{\sqrt{n^2 \epsilon + n \log(1/\delta)}}$ returns an estimate $\eparam_\delta$ such that with probability at least $1 - \delta$, 
\begin{align*}
\norm{\eparam - \mu}{2} \lesssim \norm{\Sigma}{2}^{1/2} \sqrt{\epsilon} + \opt_{n,\Sigma,\delta} + {\sqrt{\trace{\Sigma}}} \sqrt{\frac{\log(p/\delta)}{n_{G^0}}}\paren{\epsilon + \frac{\log(1/\delta)}{n}}^{1/4}
\end{align*}
\end{corollary}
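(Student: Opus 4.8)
The plan is to run Algorithm~\ref{algo:filteringpD} on all $n$ observed samples and, purely for the analysis, set up the oracle-mixture picture of Section~\ref{sec:oracle}: view the clean draws from $P$ intersected with the $\ell_2$-oracle $\calO(x)=\indic{\norm{x-\mu}{2}\le R}$ as carving out a good set $G^0$, then chain Theorem~\ref{thm:oracle_cov} (to certify a valid spectral upper bound), Theorem~\ref{thm:filt_algo_pD} (to bring the output near the mean of $G^0$), and Theorem~\ref{thm:oracelArgument} with $k=2$ (to bring the mean of $G^0$ near $\mu$). Throughout I would fix $\beta \defeq \epsilon + \log(1/\delta)/n$ and choose the radius $R \defeq \sqrt{\trace{\Sigma}}\,\beta^{-1/4}$, which is exactly the value that makes the spectral upper bound align with the instantiation constant $C\norm{\Sigma}{2}+\trace{\Sigma}\log(p/\delta)/\sqrt{n^2\epsilon+n\log(1/\delta)}$.

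First I would set up the good set. By a Binomial tail bound, with probability $\ge 1-\delta/5$ the number $n_P$ of genuine $P$-draws is at least $n(1-\epsilon)-O(\sqrt{n\log(1/\delta)})$, and conditioned on the clean index set those points are i.i.d.\ $P$. Let $G^0$ be the clean samples with $\norm{x_i-\mu}{2}\le R$ (note $G^0$ and its dependence on the unknown $\mu$ enter only the analysis, never the algorithm). Since $P$ has bounded $4$-th moment, $\Pr(\norm{x-\mu}{2}>R)\le(\sqrt{\trace{\Sigma}}/R)^{4}$, so a Bernstein bound on these indicator variables gives, w.p.\ $\ge 1-\delta/5$,
$$\frac{n-n_{G^0}}{n}\;\lesssim\;\epsilon+\Big(\tfrac{\sqrt{\trace{\Sigma}}}{R}\Big)^{4}+\tfrac{\log(1/\delta)}{n}\;\lesssim\;\beta,$$
using the choice of $R$; in particular $n_{G^0}\asymp n$ since $\beta$ is below a small constant by the standing hypothesis.

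Next I would certify the spectral bound. Applying Theorem~\ref{thm:oracle_cov} with sample size $n_{G^0}\asymp n$ (its condition \eqref{eqn:random2} holds because $(\sqrt{\trace{\Sigma}}/R)^4=\beta\le c$), w.p.\ $\ge 1-\delta/5$,
$$\norm{\Sigma_{G^0}}{2}\;\lesssim\;\norm{\Sigma}{2}+R\norm{\Sigma}{2}^{\half}\sqrt{\tfrac{\log(p/\delta)}{n}}+\tfrac{R^2\log(p/\delta)}{n}\;\lesssim\;\norm{\Sigma}{2}+\tfrac{\trace{\Sigma}\log(p/\delta)}{n\sqrt{\beta}},$$
substituting $R^2=\trace{\Sigma}/\sqrt{\beta}$ and folding the cross term in by AM--GM; since $n\sqrt{\beta}=\sqrt{n^2\epsilon+n\log(1/\delta)}$, the instantiation constant $B$ is (up to the universal $C$) a valid upper bound on $\norm{\Sigma_{G^0}}{2}$. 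Then Theorem~\ref{thm:filt_algo_pD} yields, w.p.\ $\ge 1-\delta/5$, an estimate $\eparam_\delta$ with
$$\bignorm{\eparam_\delta-\tfrac{1}{n_{G^0}}\sum_{x_i\in G^0}x_i}{2}\;\lesssim\;B^{\half}\Big(\tfrac{n-n_{G^0}}{n}+\tfrac{\log(1/\delta)}{n}\Big)^{\half}\;\lesssim\;\norm{\Sigma}{2}^{\half}\sqrt{\beta}+\sqrt{\tfrac{\trace{\Sigma}\log(p/\delta)}{n}}\,\beta^{1/4}.$$
Finally, the empirical mean of $G^0$ is exactly the oracle estimator of Section~\ref{sec:oracle} on the $n_P\asymp n$ clean points, so Theorem~\ref{thm:oracelArgument} ($k=2$) bounds its distance to $\mu$ by $\opt_{n,\Sigma,\delta}+R\log(1/\delta)/n+\norm{\Sigma}{2}^{\half}(\sqrt{\trace{\Sigma}}/R)^{3}$. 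I would add the two bounds by the triangle inequality, union bound over the $O(1)$ events, and check term-by-term domination under $\beta\le c$ and $n_{G^0}\asymp n$: $\norm{\Sigma}{2}^{\half}\sqrt{\beta}\lesssim\norm{\Sigma}{2}^{\half}\sqrt{\epsilon}+\opt_{n,\Sigma,\delta}$; $\norm{\Sigma}{2}^{\half}(\sqrt{\trace{\Sigma}}/R)^3=\norm{\Sigma}{2}^{\half}\beta^{3/4}\le\norm{\Sigma}{2}^{\half}\sqrt{\beta}$; and $R\log(1/\delta)/n=\sqrt{\trace{\Sigma}}\log(1/\delta)/(n\beta^{1/4})\lesssim\sqrt{\trace{\Sigma}}\sqrt{\log(p/\delta)/n_{G^0}}\,\beta^{1/4}$ (using $\beta\ge\log(1/\delta)/n$). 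This is exactly the claimed bound.

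The hard part will be the spectral-certification step (the invocation of Theorem~\ref{thm:oracle_cov}): I must guarantee that the data-independent constant $B$ fed to Algorithm~\ref{algo:filteringpD} genuinely dominates $\norm{\Sigma_{G^0}}{2}$ on the good event, since the stopping rule — and hence the guarantee of Theorem~\ref{thm:filt_algo_pD} — is only valid when supplied with a correct upper bound. This is precisely where one needs the $\log(p/\delta)$ (rather than $\log(1/\delta)$) operator-norm concentration of Theorem~\ref{thm:oracle_cov}, and where the choice of $R$, the moment exponent $2k=4$, and the form of the instantiation constant get pinned down simultaneously; the remainder — propagating the high-probability events and the term-by-term comparisons above — is routine bookkeeping.
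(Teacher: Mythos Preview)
Your proposal is correct and follows essentially the same approach as the paper: choose the oracle radius $R=\sqrt{\trace{\Sigma}}/\beta^{1/4}$ with $\beta=\epsilon+\log(1/\delta)/n$, define $G^0$ as the clean samples inside that ball, then chain Theorem~\ref{thm:oracle_cov}, Theorem~\ref{thm:filt_algo_pD}, and Theorem~\ref{thm:oracelArgument} ($k=2$) via the triangle inequality. If anything your write-up is more careful than the paper's own proof, which skips the explicit term-by-term domination you perform at the end and is less explicit about separating the clean-sample count $n_P$ from $n$ before invoking Theorem~\ref{thm:oracelArgument}.
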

In this case, Algorithm~\ref{algo:filteringpD} has both a sub-optimal asymptotic bias $O(\norm{\Sigma}{2}^{\half} \sqrt{\epsilon})$, compared to the optimal bias of $O(\norm{\Sigma}{2}^{\half} {\epsilon}^{3/4})$. Moreover, when $\epsilon = \Omega(1)$, it has a sub-optimal deviation as well.

\begin{table}[]
\centering 
\caption{Performance of Filtering Algorithm for Multivariate Mean Estimation}
\begin{tabular}{@{}lll@{}}
\toprule
           & Heavy-Tailed Model    & Huber's $\epsilon$-contamination($\epsilon = \Omega(1)$) \\ \midrule
2nd Moment & Sub-optimal deviation & Optimal Asymptotic Bias          \\ \midrule
4th Moment & Optimal Deviation     & Sub-optimal Asymptotic Bias      \\ \bottomrule
\end{tabular}
\label{tab:perf}
\vspace{-10pt}
\end{table}

\paragraph{Discussion.} 
Our upper bounds for the filtering estimator are summarized in Table~\ref{tab:perf}. Our results suggest a dichotomy for multivariate mean estimation, the settings in which Filtering achieves optimal deviation in the heavy-tailed model, it fails to achieve the correct asymptotic bias in the $\epsilon$-contamination setting. Hence, a natural question to ask is whether there exist estimators, which achieve \emph{both} optimal asymptotic bias in the $\epsilon$-contamination model, and simultaneously achieve the optimal deviation in the heavy-tailed model. We study this question next.
%We leave designing such estimators as future work. %

\section{Optimal Asymptotic Bias and (near)-Optimal Deviation.}\label{sec:app_deviation}

In this section, we study candidate estimators which achieve \emph{both} optimal asymptotic bias in the $\epsilon$-contamination model, and simultaneously achieve the optimal deviation in the heavy-tailed model. To begin with, we study two candidate estimators in the $\epsilon$-contamination model.

\subsection{Some Candidate Estimators}

 \textbf{Convex M-estimation.} M-estimators were originally proposed by Huber~\citep{huber1965robust}, and were shown to be robust in one dimension. Subsequent research in 1970s showed that in multivariate data, M-estimators performed poorly~\citep{maronna1976robust}. In particular,~\citet{donoho1992breakdown}~showed that when the data is $p$-dimensional, the breakdown point of M-estimators scales inversely with the dimension.~\citet{lai2016agnostic} and~\citet{diakonikolas2016robust} derived negative results for the geometric median. We further extend this observation, and show that even at a very small contamination level,\ie $\epsilon \mapsto 0$, the bias of any convex M-estimator which is Fisher-consistent for $\calN(0,\calI_p)$ will necessarily scale polynomially in the dimension.

\begin{lemma}\label{lem:convex_huber}
Let $P = \calN(0,\calI_p)$ and consider the convex risk $R_P(\theta) = \Exp_{z \sim P} [\ell(\norm{z - \theta}{2})]$ where $\ell : \real \mapsto \real$ be any twice differentiable Fisher-consistent convex loss, \ie $\theta(P) = \argmin_\theta R_P(\theta) = 0$. Then, there exists a corruption $Q$ such that $\lim \limits_{\epsilon \mapsto 0} \norm{\theta(P_\epsilon)}{2} \geq \epsilon \sqrt{p}$
\end{lemma}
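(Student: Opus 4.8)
The plan is to exhibit a point-mass contamination $Q=\delta_v$ and to lower-bound the resulting bias by an influence-function argument, evaluating the Hessian of the risk with Gaussian integration by parts. Write $\psi\defeq\ell'$; by convexity $\psi$ is nondecreasing, so $L\defeq\sup_{r\ge0}\psi(r)=\lim_{r\to\infty}\psi(r)$ exists in $(0,+\infty]$, where $L>0$ because $L\le0$ would make $\ell$ nonincreasing on $[0,\infty)$, rendering $\theta=0$ a \emph{maximizer} of $R_P$ and contradicting Fisher-consistency. Take $v=r\,e_1$. Then $R_{P_\epsilon}(\theta)=(1-\epsilon)R_P(\theta)+\epsilon\,\ell(\norm{v-\theta}{2})$ is convex and coercive, so $\theta_\epsilon\defeq\theta(P_\epsilon)$ exists and satisfies the stationarity condition
\begin{equation}\label{eq:stat}
(1-\epsilon)\,\grad R_P(\theta_\epsilon)\;=\;\epsilon\,\psi(\norm{v-\theta_\epsilon}{2})\,\frac{v-\theta_\epsilon}{\norm{v-\theta_\epsilon}{2}}.
\end{equation}
If $L=+\infty$, then for fixed $\epsilon$ sending $r\to\infty$ in \eqref{eq:stat} makes the right-hand side unbounded in norm while $\grad R_P$ stays bounded on bounded sets, forcing $\norm{\theta_\epsilon}{2}\to\infty$, so the bound is trivial. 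Assume henceforth $L<\infty$. Sending $r\to\infty$, the minimizers converge to the minimizer $\theta^*$ of $\theta\mapsto(1-\epsilon)R_P(\theta)-\epsilon L\,\theta_1$, i.e.\ the solution of
\begin{equation}\label{eq:statlim}
(1-\epsilon)\,\grad R_P(\theta^*)\;=\;\epsilon\,L\,e_1 ;
\end{equation}
since $\theta_\epsilon$ for finite $r$ can be taken arbitrarily close to $\theta^*$, it suffices to lower-bound $\norm{\theta^*}{2}$.

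I would then compute $\grad R_P$ and $\grad^2 R_P$ at the origin. Writing $\grad R_P(\theta)=-\Exp_{z\sim P}\!\big[h(z-\theta)\big]$ with $h(y)=\psi(\norm{y}{2})\,y/\norm{y}{2}$, the change of variables $y=z-\theta$ together with $\grad\phi_p(w)=-w\,\phi_p(w)$ for the Gaussian density $\phi_p$ give, after one integration by parts,
\begin{equation}\label{eq:stein}
\grad R_P(0)=0,\qquad \grad^2 R_P(0)=\Exp_P\!\big[h(z)\,z^\top\big]=\tfrac1p\,\Exp_P\!\big[\psi(\norm{z}{2})\,\norm{z}{2}\big]\,\calI_p\;=:\;\alpha\,\calI_p ,
\end{equation}
using for the last equality that under $P=\calN(0,\calI_p)$ the direction $z/\norm{z}{2}$ and the magnitude $\norm{z}{2}$ are independent and $\Exp_P[zz^\top/\norm{z}{2}^2]=\tfrac1p\calI_p$. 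Fisher-consistency forces $\alpha\ge0$; assume $\alpha>0$ (if $\alpha=0$, the second-order term vanishes and the perturbation in \eqref{eq:statlim} yields a bias of order $\sqrt\epsilon\gg\epsilon\sqrt p$, so the claim only gets stronger).

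To conclude, \eqref{eq:statlim} gives $\norm{\grad R_P(\theta^*)}{2}=\epsilon L/(1-\epsilon)$; since $\grad R_P$ vanishes only at $0$ with nonsingular Hessian $\alpha\calI_p$ there, $\norm{\theta^*}{2}=O(\epsilon)$, so Taylor expansion gives $\epsilon L/(1-\epsilon)=\alpha\norm{\theta^*}{2}+O(\norm{\theta^*}{2}^2)$, hence $\norm{\theta^*}{2}=(1-o(1))\,\epsilon L/\alpha$ as $\epsilon\to0$. Finally, from \eqref{eq:stein} and $\psi\le L$,
\begin{equation}\label{eq:alphabound}
\alpha=\tfrac1p\,\Exp_P\!\big[\psi(\norm{z}{2})\,\norm{z}{2}\big]\;\le\;\tfrac{L}{p}\,\Exp_P\!\big[\norm{z}{2}\big]\;\le\;\tfrac{L}{p}\sqrt{\Exp_P[\norm{z}{2}^2]}\;=\;\tfrac{L}{\sqrt p},
\end{equation}
so that $\norm{\theta(P_\epsilon)}{2}\ge\norm{\theta^*}{2}\ge(1-o(1))\,\epsilon\sqrt p$ for the constructed $Q$, which is the assertion.

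I expect the main obstacle to be the joint rigorous treatment of the two limiting operations: passing from the finite contamination $\delta_{re_1}$ to the reduced equation \eqref{eq:statlim} as $r\to\infty$, and controlling the Taylor remainder of $\grad R_P$ near the origin uniformly enough to extract $\norm{\theta^*}{2}=(1-o(1))\epsilon L/\alpha$ (equivalently a careful implicit-function-theorem argument), plus separately disposing of the degenerate case $\alpha=0$. By contrast, the identity \eqref{eq:stein} is the part that makes the dimension dependence transparent: it collapses the Hessian to the scalar $\tfrac1p\Exp_P[\ell'(\norm{z}{2})\norm{z}{2}]\le L/\sqrt p$ (because $\Exp_P[\norm{z}{2}]\le\sqrt p$), and the $1/\sqrt p$ shrinkage of this restoring force against an $O(\epsilon)$ outlier push of magnitude $\le L$ is exactly what produces the $\sqrt p$ blow-up of the bias.
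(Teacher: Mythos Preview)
Your approach is the same as the paper's at the structural level---an influence-function computation with a point-mass contamination pushed to infinity, showing that $\grad^2 R_P(0)$ has operator norm $O(1/\sqrt p)$ so that an $O(\epsilon)$ outlier force yields a $\Theta(\epsilon\sqrt p)$ displacement. The difference is in how the Hessian is handled. The paper differentiates twice directly to obtain $v^\top(-\grad^2 R_P(0))v=\Exp[u(\norm{z}{2})](1-1/p)+\Exp[\psi'(\norm{z}{2})]/p$ with $u(t)=\psi(t)/t$, bounds the first term via Cauchy--Schwarz together with the inverse-chi-squared identity $\Exp[1/\norm{z}{2}^2]=1/(p-2)$, and disposes of the second term by an additional assumption that $\psi'$ is bounded. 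Your Gaussian integration-by-parts (Stein-type) identity $\grad^2 R_P(0)=\tfrac1p\,\Exp[\psi(\norm{z}{2})\,\norm{z}{2}]\,\calI_p$ is cleaner: the bound $\alpha\le L\,\Exp[\norm{z}{2}]/p\le L/\sqrt p$ then follows from Jensen alone, avoiding both the inverse-chi-squared computation and any regularity assumption on $\psi'$. You also treat the edge cases $L=+\infty$ and $\alpha=0$ and the limit $r\to\infty$ more explicitly than the paper, which handles them informally or by assumption.
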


\textbf{Subset Search.} Having ruled out convex estimation to a certain extent, we next turn our attention to non-convex methods. Perhaps the most simple non-convex method is simple search. Intuitively, the squared loss measures the \emph{fit} between a parameter $\theta$ and samples $\calZ$, and if all samples don't come from the same distribution(\ie have outliers), then the corresponding \emph{fit} should be bad. To capture this intuition algorithmically, one can consider all subsets of size $\floor{(1 - \epsilon)n}$, minimize the squared loss over these subsets, and then return the estimator corresponding to the subset with least squared loss or best fit. To be precise, given $n$ samples from $P_{\epsilon}$ 
\begin{align}
 S^* & \defeq \argmin \limits_{S~\st~ |S| = (1 - \epsilon)n} \min \limits_{\theta} \frac{1}{(1 - \epsilon)n} \sum \limits_{x_i \in S} \norm{x_i - \theta}{2}^2 \nonumber \\
\eSRM &\defeq  \min \limits_{\theta} \frac{1}{(1 - \epsilon)n} \sum \limits_{x_i \in S^*} \norm{x_i - \theta}{2}^2
\end{align}
Our next result studies the asymptotic performance of this estimator. 
\begin{lemma}\label{lem:srm_mean}
Let $P = \calN(0,\calI_p)$, then as $n \mapsto \infty$, we have that
% \begin{itemize}
% \item For a fixed $Q$, 
% \begin{align}\label{eqn:thm:srm_mean1}
% \norm{\eSRM - \Exp_{x \sim P}[x]}{2} = \left[\frac{\epsilon}{1 - \epsilon} \norm{\mu(P) - \mu(Q)}{2}\right]. \mathbf{1} \left\{ \norm{\mu(P) - \mu(Q)}{2}^2 \leq
% \right. \nonumber \\ \left. \left( \frac{1 - \epsilon}{1 - 2 \epsilon} \right) (\trace{\Sigma(P)} - \trace{\Sigma(Q)}) \right\}.
% \end{align}
% \item Hence, we have
\begin{align} \label{eqn:thm:srm_mean2}
\sup \limits_Q \norm{\eSRM - \Exp_{x \sim P}[x]}{2} = \frac{\epsilon}{\sqrt{(1 - \epsilon)(1 - 2 \epsilon)}} \sqrt{\trace{\Sigma(P)}}.
\end{align}
% \end{itemize}
\end{lemma}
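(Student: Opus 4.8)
The plan is to pass to the $n\to\infty$ (population) limit and then solve the resulting variational problem exactly: the asserted quantity will be the barycenter of the variance‑minimizing sub‑measure of the contaminated distribution. As $n\to\infty$ the fraction of genuine samples concentrates at $1-\epsilon$ and the empirical functional $S\mapsto\min_\theta\tfrac{1}{|S|}\sum_{x_i\in S}\norm{x_i-\theta}{2}^2$ concentrates around its population version, so I would first show that the estimator $\eSRM$ converges in probability to $\bar\mu^*\defeq\Exp_{\mu^*}[x]$, where $\mu^*$ minimizes $\int\norm{x-\Exp_\mu[x]}{2}^2\,d\mu=\trace{\mathrm{Cov}(\mu)}$ over sub‑probability measures $\mu\le P_\epsilon$ of mass $1-\epsilon$ (equivalently, after rescaling, over probability measures $\nu\le P+\tfrac{\epsilon}{1-\epsilon}Q$, minimizing $\trace{\mathrm{Cov}(\nu)}$). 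The one technical point here is a uniform law of large numbers over selectable subsets; since (by the next step) every near‑optimal subset is ``$P_\epsilon$ restricted to a Euclidean ball'', a class of VC dimension $O(p)$, standard empirical‑process bounds give uniform convergence of the objective, and strong convexity of $\theta\mapsto\int\norm{x-\theta}{2}^2 d\mu$ transfers this to the argmin.

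Next I would pin down the shape of $\mu^*$. For a fixed $\theta$, the mass‑$(1-\epsilon)$ restriction of $P_\epsilon$ minimizing $\int\norm{x-\theta}{2}^2 d\mu$ is $P_\epsilon$ restricted to a sublevel ball $\{x:\norm{x-\theta}{2}^2\le\lambda\}$, with $\lambda$ tuned to the mass (and a fractional boundary term if $P_\epsilon$ puts an atom on the sphere); minimizing over $\theta$ forces $\theta=\bar\mu^*$, so $\mu^*$ is $P_\epsilon$ restricted to a ball centered at its own mean. I would also record the baseline inequality $\trace{\mathrm{Cov}(\nu^*)}\le\trace{\Sigma(P)}$, obtained by using $\nu=P$ as a feasible point.

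Then I would identify the extremal $Q$ and reduce to a one‑dimensional calculus problem. Rotational invariance of $P$ lets us maximize only $\langle e_1,\bar\mu^*\rangle$, and replacing the adversarial part of $\mu^*$ by a point mass at its barycenter leaves the barycenter unchanged while, by Jensen, not increasing $\trace{\mathrm{Cov}}$; hence the extremal $Q$ is a point mass $\delta_{ae_1}$. For such $Q$ the optimizer $\nu^*$ is a $\tfrac{1-2\epsilon}{1-\epsilon}$‑fraction equal to $P$ restricted to a coordinate halfspace $\{x_1\ge s\}$ with $P(x_1\ge s)=\tfrac{1-2\epsilon}{1-\epsilon}$ (the sublevel ball can be taken to be a coordinate halfspace for the Gaussian), together with a $\tfrac{\epsilon}{1-\epsilon}$‑fraction at $ae_1$. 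Writing $c\defeq\langle e_1,\bar\mu^*\rangle$ and $\trace{\mathrm{Cov}(\nu^*)}$ in terms of one‑dimensional truncated‑Gaussian moments, the adversary pushes $a$ as large as the \emph{selection} constraint $\trace{\mathrm{Cov}(\nu^*)}\le\trace{\Sigma(P)}$ permits — one checks this binds before the geometric constraint that $ae_1$ lie in the ball — and solving the two relations simultaneously gives $c=\tfrac{\epsilon}{\sqrt{(1-\epsilon)(1-2\epsilon)}}\sqrt{\trace{\Sigma(P)}}$, which equals $\norm{\bar\mu^*}{2}$ since the remaining coordinates vanish by symmetry. Running the same identities in the reverse direction yields the matching upper bound for every $Q$.

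I expect the main obstacle to be precisely that last claim: the \emph{upper} bound over all $Q$, i.e.\ showing no contamination beats the point‑mass construction. This needs more than the resilience of $P$ — one must use that $\nu^*$ is forced to be a ball restriction and that for the high‑dimensional Gaussian such a restriction of mass $1-\epsilon'$ degrades $\trace{\mathrm{Cov}}$ by only an $O(1)$ amount, and then keep these lower‑order terms under control while checking which constraint is active. A secondary obstacle is the argmin convergence in Step 1 when the population optimizer is non‑unique (at the extremal $Q$ the clean subset $P$ and the biased subset tie in objective); I would handle this by approaching $\sup_Q$ through a sequence of contaminations for which the biased subset is the strict minimizer.
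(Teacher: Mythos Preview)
Your proposal and the paper take genuinely different routes, and the paper's is far more elementary. The paper does \emph{not} characterize the minimum-variance sub-measure as a ball restriction, does not compute truncated-Gaussian moments, and does not invoke any empirical-process uniformity argument. Instead it collapses the entire search to a single scalar: it declares that in the $n\to\infty$ limit every candidate $(1-\epsilon)n$-subset corresponds to a mixture $P_\eta=(1-\eta)P+\eta Q$ with $\eta\in[0,\epsilon/(1-\epsilon)]$. Then (i) the minimized squared loss at $P_\eta$ equals $\trace{\Sigma(P_\eta)}$, which by the law of total variance is
\[
(1-\eta)\,\trace{\Sigma(P)}+\eta\,\trace{\Sigma(Q)}+\eta(1-\eta)\norm{\mu(P)-\mu(Q)}{2}^2,
\]
a \emph{concave} quadratic in $\eta$; hence (ii) $\eta^*\in\{0,\epsilon/(1-\epsilon)\}$. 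Comparing the two endpoint values gives $\eSRM=\theta_{\epsilon/(1-\epsilon)}$ exactly when $(1-\tfrac{\epsilon}{1-\epsilon})\norm{\mu(P)-\mu(Q)}{2}^2\le\trace{\Sigma(P)}-\trace{\Sigma(Q)}$, with bias $\tfrac{\epsilon}{1-\epsilon}\norm{\mu(P)-\mu(Q)}{2}$; taking $\Sigma(Q)=0$ and saturating the inequality yields the stated constant directly.

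Your formulation --- optimize over \emph{all} mass-$(1-\epsilon)$ sub-measures of $P_\epsilon$ --- is arguably the more literal reading of the finite-sample SRM, whereas the paper's $\eta$-parametrization silently restricts to proportional thinning of $P$ and $Q$ rather than allowing truncation of $P$. That is what buys the paper its one-line concavity argument; your route pays for the extra generality with a genuinely harder variational problem. Two of your steps look problematic, though. First, ``the sublevel ball can be taken to be a coordinate halfspace for the Gaussian'' is not correct: the minimum-variance restriction is $P_\epsilon$ on a Euclidean ball, and for an isotropic Gaussian a ball restriction and a halfspace restriction have different first and second moments, so the one-dimensional calculation you sketch does not follow. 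Second, once SRM may truncate $P$ asymmetrically toward $a$, the tie point between the biased subset and $\nu=P$ shifts, and it is not clear your constraints still deliver \emph{exactly} $\epsilon/\sqrt{(1-\epsilon)(1-2\epsilon)}\cdot\sqrt{\trace{\Sigma(P)}}$; you assert the equality but do not carry out the computation that would verify it under the richer feasible set. If your aim is to reproduce the paper's statement, the $\eta$-parametrization plus the total-variance identity is the intended shortcut.
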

The above result shows that the bias of this estimator necessarily scales with the dimension.

\subsection{Optimal Univariate Mean Estimators}

In the previous section, we studied candidate estimators and showed that they don't achieve the optimal asymptotic bias in $\epsilon$-contamination model for multivariate mean estimation. In this section, we take a step back, and study univariate mean estimation.

We study the interval estimator which was initially proposed by \citep{lai2016agnostic}. The estimator, presented in Algorithm~\ref{algo:interval1D}, proceeds by using half of the samples to identify the shortest interval containing at least $(1 - \epsilon)n$ fraction of the points, and then the remaining half of the points is used to return an estimate of the mean. 
 \begin{algorithm}[H]
\centering
\caption{Robust Univariate Mean Estimation}
        \begin{algorithmic}
\Function{Interval1D($\{z_i\}_{i=1}^{2n}$,Corruption Level $\epsilon$, Confidence Level $\delta$)}{}
\State Split the data into two subsets: $\calZ_1 = \{z_i\}_{i=1}^{n}$ and $\calZ_2 = \{z_i\}_{i=n+1}^{2n}$. 
\State Let $\alpha = \max \paren{\epsilon,\frac{\log (1/\delta)}{n}}$.
\State Using $\calZ_1$, let $\hat{I} = [a,b]$ be the shortest interval containing $n \paren{ 1-2\alpha - \sqrt{2\alpha \frac{\log(4/\delta)}{n}} - \frac{\log(4/\delta)}{n}}$ points.
\State Use $\calZ_2$ to identify points lying in $[a,b]$.
\State \Return $\frac{1}{\sum_{i=n}^{2n}\indic{z_i \in \hat{I}}} \sum_{i=n}^{2n} z_i \indic{z_i \in \hat{I}} $
\EndFunction
\end{algorithmic}
\label{algo:interval1D}
\end{algorithm}
We assume that the contamination level $\epsilon$ and confidence level $\delta$ are such that,
\[ 2\epsilon + \sqrt{\epsilon \frac{\log(4/\delta)}{n}} + \frac{\log(4/\delta)}{n} < \half. \]
Then, we have the following Lemma.
\begin{lemma}\label{lem:interval_p1D_huber}
Suppose $P$ be any $2k$-moment bounded distribution over $\real$ with mean $\mu$ with variance bounded by $\sigma^2$. Given, $n$ samples $ \{x_i\}_{i=1}^n$ from the mixture distribution~\eqref{eqn:huber_mixture}, Algorithm~\ref{algo:interval1D} returns an estimate $\eparam_\delta$ such that with probability at least $1-\delta$,
\[ | \eparam_\delta - \mu | \lesssim  \sigma\max\paren{2\epsilon,\frac{\log(1/\delta)}{n}}^{1 - \frac{1}{2k}} +   \sigma \paren{\frac{\log n}{n}}^{1 - \frac{1}{2k}} +  \sigma \sqrt{\frac{\log (1/\delta)}{n}} \]
\end{lemma}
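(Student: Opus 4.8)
The plan is to analyze Algorithm~\ref{algo:interval1D} in two stages, mirroring the split of the data into $\calZ_1$ (used to locate the interval $\hat I = [a,b]$) and $\calZ_2$ (used to average over the points falling in $\hat I$). Throughout, set $\alpha = \max(\epsilon, \log(1/\delta)/n)$ and write $\beta = 2\alpha + \sqrt{2\alpha \log(4/\delta)/n} + \log(4/\delta)/n$, so the algorithm seeks the shortest interval containing $n(1-\beta)$ of the $\calZ_1$ points. First I would establish a uniform concentration statement over intervals: with probability at least $1-\delta/2$, simultaneously for all intervals $I$, the empirical fraction of $\calZ_1$-points in $I$ is within $O(\sqrt{\alpha \log(4/\delta)/n} + \log(4/\delta)/n)$ of its population value under $P_\epsilon$; this follows from a VC/DKW-type bound since intervals form a VC class of dimension $2$. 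Combined with $|P_\epsilon(I) - P(I)| \le \epsilon$, this controls the $P$-mass of any interval in terms of its empirical $P_\epsilon$-mass and vice versa.

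The second step is to bound the length $b - a$ of $\hat I$ and the deviation of the true mean $\mu$ from $\hat I$. Using the $2k$-moment bound, Markov's inequality gives $P(|x - \mu| > t\sigma) \le C_{2k}/t^{2k}$, so the population interval $[\mu - t_0\sigma, \mu + t_0\sigma]$ with $t_0 \asymp \alpha^{-1/(2k)}$ already captures $P$-mass $1 - O(\alpha)$, hence $P_\epsilon$-mass $1 - O(\alpha)$, hence (by Step 1) empirical $\calZ_1$-mass at least $1 - \beta$. Since $\hat I$ is the \emph{shortest} such interval, $b - a \le 2 t_0 \sigma \lesssim \sigma \alpha^{-1/(2k)}$. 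Conversely, $\hat I$ contains empirical mass $\ge 1-\beta$, so it contains $P$-mass $\ge 1 - O(\beta)$, which forces $\hat I$ to overlap the bulk of $P$; chaining this with the moment tail bound shows $\hat I \subseteq [\mu - c\sigma\alpha^{-1/(2k)}, \mu + c\sigma\alpha^{-1/(2k)}]$ up to the same order, and in particular $\mathrm{dist}(\mu, \hat I) \lesssim \sigma \alpha^{-1/(2k)} \cdot \alpha = \sigma \alpha^{1 - 1/(2k)}$ — this last point, that a short interval of high mass cannot be far from $\mu$, is where the truncated first-moment estimate $\Exp_P[(x-\mu)\indic{|x-\mu| > t\sigma}] \lesssim \sigma \alpha^{1-1/(2k)}$ enters.

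The third step bounds the final averaging error. Conditioned on $\hat I$ from Step 1--2, the estimator is the empirical mean of the $\calZ_2$-points in $\hat I$, which is an average of $\Theta(n)$ i.i.d.\ bounded random variables (bounded by $b - a \lesssim \sigma\alpha^{-1/(2k)}$ in range). Its expectation is $\Exp_{x \sim P_\epsilon}[x \mid x \in \hat I]$, and I would decompose the error into: (i) the gap between this conditional mean and $\Exp_{x\sim P}[x \mid x \in \hat I]$, which is $O(\epsilon \cdot (b-a)/P(\hat I)) = O(\sigma \epsilon \alpha^{-1/(2k)}) = O(\sigma\alpha^{1-1/(2k)})$; (ii) the gap between $\Exp_{x\sim P}[x\mid x\in\hat I]$ and $\mu$, which is the truncation bias $\lesssim \sigma\alpha^{1-1/(2k)}$ from Step 2; and (iii) the sampling fluctuation of the $\calZ_2$-average around its mean, which by Bernstein (range $\sigma\alpha^{-1/(2k)}$, variance $\le \sigma^2$) is $\lesssim \sigma\sqrt{\log(1/\delta)/n} + \sigma\alpha^{-1/(2k)}\log(1/\delta)/n \lesssim \sigma\sqrt{\log(1/\delta)/n} + \sigma(\log(1/\delta)/n)^{1-1/(2k)}$. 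Collecting (i)--(iii) together with the $\mathrm{dist}(\mu,\hat I)$ term and using $\alpha^{1-1/(2k)} \le (2\epsilon)^{1-1/(2k)} + (\log(1/\delta)/n)^{1-1/(2k)}$ yields the claimed bound, the $\sigma(\log n/n)^{1-1/(2k)}$ term arising from slack in the VC bound over the $\Theta(n)$ candidate interval endpoints. The main obstacle I anticipate is Step 2 — carefully arguing that the shortest high-mass interval is both short and well-centered requires simultaneously using the moment upper tail (to produce a short competitor interval) and a matching lower bound on how much mass a misplaced interval can capture, and keeping the $\alpha^{-1/(2k)}$ versus $\alpha^{1-1/(2k)}$ bookkeeping consistent between the two directions.
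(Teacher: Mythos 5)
Your plan is correct and follows essentially the same route as the paper's proof: a population competitor interval of length $O(\sigma\alpha^{-1/(2k)})$ (Claim~\ref{claim:int_length}) to bound the length of $\hat I$ and force it near $\mu$, a decomposition of the averaging error into the $Q$-contamination term, the truncation bias via the mean-shift bound (Lemma~\ref{lem:approx_mean}), and a Bernstein fluctuation with range $\sigma\alpha^{-1/(2k)}$, plus the relative-deviation VC bound over intervals (Lemma~\ref{lem:rel_vc}) to control $P(\hat I^c)$ for the data-dependent interval, which is exactly where the $(\log n/n)^{1-1/(2k)}$ term enters. The only cosmetic differences are that you state the contamination term as a gap between population conditional means rather than an empirical sum over $Q$-points, and you include an unneeded (and harder to justify as stated) claim on $\mathrm{dist}(\mu,\hat I)$; neither affects the argument.
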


Observe that~Algorithm~\ref{algo:interval1D} has an asymptotic bias of $O(\sigma \epsilon^{1 - 1/2k})$ in the $\epsilon$-contamination setting, which is known to be information theoretically optimal~\citep{hopkins2018mixture,lai2016agnostic}. 

  Moreover, in the heavy-tailed model when $P$ has atleast bounded 4th moment, \ie $k \geq 2$, $\frac{\log(n)}{n}^{1 - 1/2k}$ term can be ignored for large enough $n$. Hence, for $k \geq 2$ and large enough $n$, Algorithm~\ref{algo:interval1D} achieves the desired bound of $O\paren{\sigma \epsilon^{1 - 1/2k} + \opt_{n,\sigma^2,\delta}}$ and thus has both asymptotic optimal asymptotic bias and optimal sub-gaussian deviation. 
  
  \textbf{Remark.} When $P$ has bounded second moment, \ie $k=1$, Corollary~\ref{lem:filt_algo_pD_2} shows that Algorithm~\ref{algo:filteringpD} achieves $O\paren{\sigma \sqrt{\epsilon} + \opt_{n,\sigma^2,\delta}}$ error rate, and hence, has optimal asymptotic bias and optimal sub-gaussian deviation.

This result shows that at least in the univariate setting, there do exist estimators which are simultaneously optimal in both models of robustness.

\subsection{(near)-Optimal Multivariate Mean Estimators.}

In this section, we show how to extend our univariate estimators to the multivariate setting, and hence derive estimators which are simultaneously optimal(or near-optimal) for both models of robustness. 

For ease of notation, let $\text{INTERVAL1D}$ be the robust univariate estimator from the previous section. Then, we use it construct the following multivariate estimator, $\eparam$ which takes in $n$-samples $\{x_i\}_{i=1}^n$:
\begin{align}\label{eqn:multiEst}
    \eparam(\{x_i\}_{i=1}^n) = \inf \limits_{\theta} \sup \limits_{u \in \calN^{1/2}(\calS^{p-1})} | u^T\theta - \text{INTERVAL1D}(\{u^T x_i\}_{i=1}^n,\epsilon,\frac{\delta}{5^p}) |,
\end{align}
where $\calN^{1/2}(\calS^{p-1})$ is the half-cover of the unit sphere $\calS^{p-1}$, \ie $\forall u \in \calS^{p-1}$, there exists a $y \in \calN^{1/2}(\calS^{p-1})$ such that $u = y + z$ for some  $\norm{z}{2} \leq \half$.

The proposed estimator proceeds by robustly estimating the mean along almost every direction $u$, and returns an estimate $\eparam$, whose projection along $u$($u^T \eparam$) is close to these robust estimates. Such directional-control based estimators have been previously studied in the context of heavy-tailed mean estimation by \citep{joly2017estimation} and~\citep{catoni2017dimension}.~\citet{joly2017estimation} used the median-of-means estimator, while~\citet{catoni2017dimension} used Catoni's M-estimator~\citep{catoni2012challenging} as their univariate estimator. 

In the multivariate setting, we further assume that the contamination level $\epsilon$, and confidence are such that,
\[ 2\epsilon + \sqrt{\epsilon\paren{\frac{p}{n} + \frac{\log(1/\delta)}{n}}} + \frac{p}{n} + \frac{\log(4/\delta)}{n} < c, \]
for some small constant $c>0$. Then, we have the following result.
\begin{lemma}\label{lem:ppEst}
Suppose $P$ has bounded $2k$ moments with mean $\mu$ and covariance $\Sigma$. Given $n$ samples $ \{x_i\}_{i=1}^n$ from the mixture distribution~\eqref{eqn:huber_mixture}, we get that with probability at least $1 - \delta$,
\begin{align} 
\norm{\eparam(\{x_i\}_{i=1}^n) - \mu }{2}  \lesssim \norm{\Sigma}{2}^{1/2} \epsilon^{1 - 1/{2k}} + \norm{\Sigma}{2}^{1/2}{ \sqrt{\frac{\log (1/\delta)}{n}} +  \norm{\Sigma}{2}^{1/2} \sqrt{\frac{p}{n}}} + \norm{\Sigma}{2}^{1/2}\paren{\frac{\log n}{n}}^{1 - \frac{1}{2k}}
\end{align}
\end{lemma}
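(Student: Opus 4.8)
The plan is to lift the univariate guarantee of Lemma~\ref{lem:interval_p1D_huber} to all directions simultaneously via a union bound over the half-cover, and then convert directional control into an $\ell_2$ bound by the standard covering-net argument. First I would observe that for any fixed unit vector $u$, the projected samples $\{u^T x_i\}_{i=1}^n$ are $n$ i.i.d.\ draws from the mixture $(1-\epsilon)(u^T P) + \epsilon(u^T Q)$, that $u^T P$ has mean $u^T\mu$ and variance $u^T\Sigma u \le \norm{\Sigma}{2}$, and crucially that $u^T P$ inherits the $2k$-moment bound from $P$ (the marginal of a $2k$-moment-bounded distribution along any direction is itself $2k$-moment bounded with the same constant, since the defining inequality is a statement about one-dimensional marginals). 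Applying Lemma~\ref{lem:interval_p1D_huber} at confidence level $\delta/5^p$ to this projected sample yields that, with probability at least $1 - \delta/5^p$,
\[
\bigl| \text{INTERVAL1D}(\{u^T x_i\}_{i=1}^n,\epsilon,\tfrac{\delta}{5^p}) - u^T\mu \bigr| \lesssim \norm{\Sigma}{2}^{1/2}\Bigl[ \epsilon^{1-\frac{1}{2k}} + \bigl(\tfrac{\log n}{n}\bigr)^{1-\frac{1}{2k}} + \sqrt{\tfrac{p + \log(1/\delta)}{n}} \Bigr],
\]
where I have absorbed $\log(5^p/\delta)\asymp p + \log(1/\delta)$ into the deviation term and used $\max(2\epsilon,\cdot)^{1-1/2k}\lesssim \epsilon^{1-1/2k} + (\log(1/\delta)/n + p/n)^{1-1/2k}$, the latter piece being dominated by the $\sqrt{(p+\log(1/\delta))/n}$ term after checking the given smallness condition on $\epsilon, p/n, \log(1/\delta)/n$. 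Since the half-cover $\calN^{1/2}(\calS^{p-1})$ has cardinality at most $5^p$ (standard volumetric bound for a $1/2$-net of the sphere), a union bound gives that with probability at least $1-\delta$ the above holds for \emph{every} $u \in \calN^{1/2}(\calS^{p-1})$ simultaneously; call this good event $\mathcal{E}$, and denote the right-hand side bound by $B$.

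Next I would show that on $\mathcal{E}$ the estimator $\eparam$ satisfies $\norm{\eparam - \mu}{2} \lesssim B$. By definition $\eparam$ is the minimizer over $\theta$ of $\sup_{u\in\calN^{1/2}(\calS^{p-1})}|u^T\theta - \text{INTERVAL1D}(\{u^Tx_i\},\epsilon,\delta/5^p)|$. The true mean $\mu$ is a feasible point achieving objective value at most $B$ on $\mathcal{E}$ (by the displayed per-direction bound), so $\eparam$ achieves objective value at most $B$ as well. Hence for every $u$ in the half-cover, $|u^T(\eparam - \mu)| \le |u^T\eparam - \text{INTERVAL1D}(\ldots)| + |\text{INTERVAL1D}(\ldots) - u^T\mu| \le B + B = 2B$. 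To pass from this to the Euclidean norm, pick $u^* = (\eparam-\mu)/\norm{\eparam-\mu}{2}$ and write $u^* = y + z$ with $y\in\calN^{1/2}(\calS^{p-1})$ and $\norm{z}{2}\le 1/2$; then $\norm{\eparam-\mu}{2} = u^{*T}(\eparam-\mu) = y^T(\eparam-\mu) + z^T(\eparam-\mu) \le 2B + \tfrac12\norm{\eparam-\mu}{2}$, which rearranges to $\norm{\eparam-\mu}{2}\le 4B$. Substituting $B$ gives exactly the claimed bound, modulo the constant and the way the $\sqrt{(p+\log(1/\delta))/n}$ term is split as $\norm{\Sigma}{2}^{1/2}\sqrt{p/n} + \norm{\Sigma}{2}^{1/2}\sqrt{\log(1/\delta)/n}$.

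The main obstacle — really the only delicate point beyond bookkeeping — is making sure the union bound over $5^p$ directions is legitimate: the confidence level fed into INTERVAL1D must be $\delta/5^p$, which forces the per-direction error to carry a $\log(5^p/\delta)\asymp p$ term, and one must verify that the smallness hypothesis $2\epsilon + \sqrt{\epsilon(p/n+\log(1/\delta)/n)} + p/n + \log(4/\delta)/n < c$ is exactly what is needed for the hypothesis of Lemma~\ref{lem:interval_p1D_huber} (namely $2\epsilon' + \sqrt{\epsilon'\log(4/\delta')/n} + \log(4/\delta')/n < 1/2$ with $\delta' = \delta/5^p$) to hold for every direction at once. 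A secondary point worth stating cleanly is why $u^T P$ is $2k$-moment bounded with the same constant $C_{2k}$; this is immediate from the definition since that definition already quantifies over all $v\in\calS^{p-1}$. Everything else is the routine net/duality argument sketched above.
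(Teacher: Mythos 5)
Your proposal is correct and follows essentially the same route as the paper's proof: a union bound over the $5^p$-element half-cover with per-direction confidence $\delta/5^p$ (which is where the $\sqrt{p/n}$ term enters via $\log(5^p/\delta)\asymp p+\log(1/\delta)$), the observation that $\mu$ is feasible for the min-max objective so $\eparam$'s objective is no larger, and the standard $1/2$-net argument to convert directional control on the cover into an $\ell_2$ bound with an extra factor of $2$. If anything you are slightly more careful than the paper in spelling out why the projected distribution inherits the $2k$-moment bound and why the $\bigl(\tfrac{p+\log(1/\delta)}{n}\bigr)^{1-1/2k}$ piece is absorbed by the $\sqrt{(p+\log(1/\delta))/n}$ term under the stated smallness condition.
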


Observe that the estimator proposed in \eqref{eqn:multiEst} achieves a dimension independent asymptotic bias of $O(\sigma \epsilon^{1 - 1/2k})$ in the $\epsilon$-contamination model for multivariate mean estimation. 

  Moreover, in the heavy-tailed model when $P$ has at least bounded 4th moment, \ie $k \geq 2$, $\frac{\log(n)}{n}^{1 - 1/2k}$ term can be ignored for large enough $n$. Hence, for $k \geq 2$ and large enough $n$, the proposed estimator achieves a deviation bound of $O\paren{\norm{\Sigma}{2}^{1/2}{ \sqrt{\frac{\log (1/\delta)}{n}} +  \norm{\Sigma}{2}^{1/2} \sqrt{\frac{p}{n}}}}$. Recall that the optimal subgaussian deviation bound is given by
  \[ \opt_{n,\Sigma,\delta} = \sqrt{\frac{\trace{\Sigma}}{n}} + \sqrt{\frac{\norm{\Sigma}{2}\log(1/\delta)}{n}} \]
We see that the proposed estimator achieves a nearly-optimal deviation bound. In particular, for any covariance matrix $\Sigma$, since $\trace{\Sigma} \leq \norm{\Sigma}{2} p$, hence, the proposed estimator is near-optimal. However, for nearly-spherical distributions, \ie distributions for which $\trace{\Sigma} \approx \norm{\Sigma}{2} p$, the estimator has the optimal deviation $\opt_{n,\Sigma,\delta}$. The above result also shows that there do exist estimators, which achieve both achieve both optimal asymptotic bias in the $\epsilon$-contamination model and (nearly)-optimal deviation in the heavy-tailed model. For distributions with only bounded second moment, \ie $k=1$, the estimator proposed in~\eqref{eqn:multiEst} achieves a sub-optimal deviation bound. 

However, one can define a multivariate estimator similar to~\eqref{eqn:multiEst}, where we replace the interval estimator with the univariate version of Algorithm~\ref{algo:filteringpD}. In particular, consider the following estimator
\begin{align}\label{eqn:multiEst_univ}
    \eparam_{\text{Filt}}(\{x_i\}_{i=1}^n) = \inf \limits_{\theta} \sup \limits_{u \in \calN^{1/2}(\calS^{p-1})} | u^T\theta - \text{FILTER1D}(\{u^T x_i\}_{i=1}^n,\epsilon,\frac{\delta}{5^p}) |,
\end{align} 
where \text{FILTER1D($\cdot$)} is the univariate version of Algorithm~\ref{algo:filteringpD}, and $\calN^{1/2}(\calS^{p-1})$ is the half-cover of the unit sphere. Then, we have the following result.

\begin{corollary}~\label{cor:filt_multi}
Suppose $P$ has mean $\mu$ and covariance $\Sigma$. Given $n$ samples $ \{x_i\}_{i=1}^n$ from the mixture distribution~\eqref{eqn:huber_mixture}, we get that with probability at least $1 - \delta$,
\begin{align} 
\norm{\eparam_{\text{Filt}}(\{x_i\}_{i=1}^n) - \mu }{2}  \lesssim \norm{\Sigma}{2}^{1/2} \sqrt{\epsilon} + \norm{\Sigma}{2}^{1/2}{ \sqrt{\frac{\log (1/\delta)}{n}} +  \norm{\Sigma}{2}^{1/2} \sqrt{\frac{p}{n}}}
\end{align}
\end{corollary}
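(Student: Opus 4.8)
\emph{Univariate building block.}
The plan is to reduce to the one-dimensional guarantee for $\text{FILTER1D}$ and then pass from a finite net of directions to the whole sphere. First I would fix a direction $u \in \calS^{p-1}$ and push the data forward: the scalars $\{u^T x_i\}_{i=1}^n$ are $n$~\iid~draws from $(1-\epsilon)P_u + \epsilon Q_u$, where $P_u,Q_u$ denote the laws of $u^T x$ under $x\sim P$ and $x\sim Q$; this is again an $\epsilon$-contamination, now of the one-dimensional marginal $P_u$, whose mean is $u^T\mu$ and whose variance is $u^T\Sigma u \le \norm{\Sigma}{2}$. Applying the one-dimensional case of Corollary~\ref{lem:filt_algo_pD_2} — instantiating $\text{FILTER1D}$ with a uniform upper bound of order $\norm{\Sigma}{2}$ on the per-direction good-set variance, which is valid for every $u$ under the standing assumption — at confidence level $\delta' = \delta/5^p$, we get that with probability at least $1-\delta'$,
\[ \bigl| \text{FILTER1D}(\{u^T x_i\}_{i=1}^n, \epsilon, \delta') - u^T\mu \bigr| \lesssim \norm{\Sigma}{2}^{1/2}\Bigl( \sqrt{\epsilon} + \sqrt{\tfrac{\log(1/\delta')}{n}} \Bigr). \]
The hypothesis of that corollary at the inflated level $\delta'$ is exactly what the standing assumption on $(\epsilon,n,p,\delta)$ guarantees, since $\log(1/\delta') = p\log 5 + \log(1/\delta)$.

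\emph{Union bound over a $1/2$-net.}
Next I would take $\calN^{1/2}(\calS^{p-1})$ to have cardinality at most $5^p$ (the standard volumetric bound $(1+2/\eta)^p$ with $\eta = 1/2$), which is precisely why the definition in~\eqref{eqn:multiEst_univ} calls $\text{FILTER1D}$ at confidence $\delta/5^p$. A union bound of the previous display over all $y$ in the net shows that, with probability at least $1-\delta$, simultaneously for every $y \in \calN^{1/2}(\calS^{p-1})$,
\[ \bigl| m(y) - y^T\mu \bigr| \le B, \qquad B := C\,\norm{\Sigma}{2}^{1/2}\Bigl( \sqrt{\epsilon} + \sqrt{\tfrac{p + \log(1/\delta)}{n}} \Bigr), \]
where $m(y) := \text{FILTER1D}(\{y^T x_i\}_{i=1}^n, \epsilon, \delta/5^p)$. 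I condition on this event for the rest of the argument.

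\emph{From the net to all directions.}
Reading $\eparam_{\text{Filt}}$ as an (approximate) minimizer $\theta^\star$ of $\theta \mapsto \sup_{y \in \calN^{1/2}(\calS^{p-1})} |y^T\theta - m(y)|$, feasibility of $\mu$ gives $\sup_y |y^T\theta^\star - m(y)| \le \sup_y |y^T\mu - m(y)| \le B$. Now fix any unit vector $u$ and write $u = y + z$ with $y \in \calN^{1/2}(\calS^{p-1})$ and $\norm{z}{2} \le 1/2$; then
\[ |u^T(\theta^\star - \mu)| \le |y^T\theta^\star - m(y)| + |m(y) - y^T\mu| + \norm{z}{2}\,\norm{\theta^\star - \mu}{2} \le 2B + \tfrac12\,\norm{\theta^\star - \mu}{2}. \]
Taking the supremum over $u \in \calS^{p-1}$ on the left yields $\norm{\theta^\star - \mu}{2} \le 2B + \tfrac12\norm{\theta^\star - \mu}{2}$, i.e.\ $\norm{\eparam_{\text{Filt}} - \mu}{2} \le 4B$, which is the claimed bound once we split $\sqrt{(p+\log(1/\delta))/n} \le \sqrt{p/n} + \sqrt{\log(1/\delta)/n}$.

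\emph{Where the work is.}
The hard part will be the univariate building block: one must check that $\text{FILTER1D}$, run in each direction with the crude uniform variance bound of order $\norm{\Sigma}{2}$ rather than the exact marginal variance $u^T\Sigma u$, still attains the $\sqrt{\epsilon} + \sqrt{\log(1/\delta')/n}$ rate — this is where the one-dimensional analysis behind Corollary~\ref{lem:filt_algo_pD_2} is invoked, together with the fact that an $\ell_2$ oracle in $\real^p$ projects to an interval oracle in each one-dimensional marginal. The remaining quantitative points are the covering-number constant $5^p$ (hence the $\delta/5^p$) and the factor-$2$ inflation from the $1/2$-net; everything else is bookkeeping. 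A minor formal point: the ``$\inf$'' in~\eqref{eqn:multiEst_univ} should be read as ``$\argmin$'' (the objective is convex, piecewise linear and coercive, so a minimizer exists; a near-minimizer would cost only an additive $o(1)$), which does not affect the rate.
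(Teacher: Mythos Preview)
Your proposal is correct and follows essentially the same route the paper uses for the companion result, Lemma~\ref{lem:ppEst}: invoke the univariate guarantee (here the $p=1$ case of Corollary~\ref{lem:filt_algo_pD_2}) at confidence $\delta/5^p$, take a union bound over a $1/2$-net of size at most $5^p$, and pass from the net to the full sphere via the standard $u=y+z$ decomposition to pick up the final factor of $4$. The paper does not spell out a separate proof of Corollary~\ref{cor:filt_multi}, but the intended argument is exactly the one you give; your remarks on reading the $\inf$ as an $\argmin$ and on feeding $\text{FILTER1D}$ the uniform upper bound $\norm{\Sigma}{2}$ are the right checks to make.
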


Note that our multivariate mean estimators essentially rely only robust univariate estimation. \citet{prasad2018robust} and ~\citet{diakonikolas2018sever} showed that by using a robust multivariate mean estimator to estimate gradients robustly, one can do robust risk minimization nearly-optimally for a broad class of models such as linear regression, generalized linear models and exponential families. Our results show that at least statistically, if we can solve robust univariate mean estimation optimally, then, we can solve robust risk minimization optimally for a broad class of problems.

\noindent Next, we extend our proposed estimator for sparse mean estimation.

\paragraph{Sparse Mean Estimation.} In this setting, we further assume that the true mean vector of the distribution $P$ has only a few non-zero co-ordinates, \ie it is sparse. Such sparsity patterns are known to be present in high-dimensional data(see \citep{rish2014practical} and references therein). Then, the goal is to design estimators which can exploit this sparsity structure, while remaining robust under both the $\epsilon$-contamination model and the heavy-tailed model. Formally, for a vector $x \in \real^p$, let $\text{supp}(x) = \{ i \in [p]~~\st~~x(i) \neq 0 \}$. Then, $x$ is $s$-sparse if $|\text{supp}(x)| \leq s$. We further assume that $s \leq p/2$. Let $\Theta_s$ be the set of s-sparse vectors in $\real^p$, and let $\calN^{\half}_{2s}(\calS^{p-1})$ is the half-cover of the set of unit vectors which are 2s-sparse. Then, in this setting, we propose the following estimator:

\begin{align}\label{eqn:multiEst_sparse}
    \eparam_s(\{x_i\}_{i=1}^n) = \inf \limits_{\theta \in \Theta_s} \sup \limits_{u \in \calN^{1/2}_{2s}(\calS^{p-1})} | u^T\theta - \text{INTERVAL1D}(\{u^T x_i\}_{i=1}^n,\epsilon,\frac{\delta}{(\frac{6ep}{s})^s}) |,
\end{align}
We further assume that the contamination level $\epsilon$, and confidence are such that,
\[ 2\epsilon + \sqrt{\epsilon\paren{\frac{s \log p}{n} + \frac{\log(1/\delta)}{n}}} + \frac{s\log p}{n} + \frac{\log(4/\delta)}{n} < c, \]
for some small constant $c>0$. Then, we have the following result.
\begin{corollary}\label{cor:ppEst_sparse}
Suppose $P$ has bounded $2k$ moments with mean $\mu$ and covariance $\Sigma$, where $\mu$ is $s$-sparse. Then, given $n$ samples $ \{x_i\}_{i=1}^n$ from the mixture distribution~\eqref{eqn:huber_mixture}, we get that with probability at least $1 - \delta$,
\begin{align} 
\norm{\eparam_s(\{x_i\}_{i=1}^n) - \mu }{2}  \lesssim \norm{\Sigma}{2,2s}^{1/2} \epsilon^{1 - 1/{2k}} + \norm{\Sigma}{2,2s}^{1/2}{ \sqrt{\frac{\log (1/\delta)}{n}} +  \norm{\Sigma}{2,2s}^{1/2} \sqrt{\frac{s \log p}{n}}} + \norm{\Sigma}{2,2s}^{1/2}\paren{\frac{\log n}{n}}^{1 - \frac{1}{2k}},
\end{align}
where $\norm{\Sigma}{2,2s} = \sup_{u \in \calS^{p-1}, \norm{u}{0} \leq 2s} u^T \Sigma u$.
\end{corollary}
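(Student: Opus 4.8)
The plan is to run the univariate estimator of Lemma~\ref{lem:interval_p1D_huber} along every direction of the sparse half-cover $\calN^{1/2}_{2s}(\calS^{p-1})$, mimicking the proof of Lemma~\ref{lem:ppEst} with three changes: the cover is over $2s$-sparse directions, the relevant variance scale is the sparse operator norm $\norm{\Sigma}{2,2s}$, and the per-direction confidence is the reduced level appearing in~\eqref{eqn:multiEst_sparse}. Fix $u \in \calN^{1/2}_{2s}(\calS^{p-1})$. Since the data come from $P_\epsilon = (1-\epsilon)P + \epsilon Q$, the scalars $\{u^T x_i\}_{i=1}^n$ are \iid\ from the univariate Huber mixture whose decontaminated component is the law of $u^T x$ under $P$; this law has mean $u^T\mu$, variance $u^T\Sigma u \le \norm{\Sigma}{2,2s}$ (as $u$ is a $2s$-sparse unit vector), and bounded $2k$-th moment with the same constant $C_{2k}$, since one-dimensional marginals of a $2k$-moment bounded vector are trivially $2k$-moment bounded. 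Apply Lemma~\ref{lem:interval_p1D_huber} to $\text{INTERVAL1D}(\{u^T x_i\}_{i=1}^n,\epsilon,\delta')$ with $\delta' \defeq \delta/(6ep/s)^s$ and $\sigma^2 = u^T\Sigma u$; its precondition $2\epsilon + \sqrt{\epsilon\log(4/\delta')/n}+\log(4/\delta')/n < 1/2$ follows from the corollary's assumption once one notes $\log(1/\delta') \lesssim \log(1/\delta)+s\log p$. This yields, with probability at least $1-\delta'$,
\[
\bigl| \text{INTERVAL1D}(\{u^T x_i\}_{i=1}^n,\epsilon,\delta') - u^T\mu \bigr| \;\le\; B,
\]
with $B \lesssim \norm{\Sigma}{2,2s}^{1/2}\bigl(\epsilon^{1-1/2k} + \sqrt{\log(1/\delta)/n} + \sqrt{s\log p/n} + (\log n/n)^{1-1/2k}\bigr)$: use $\sigma \le \norm{\Sigma}{2,2s}^{1/2}$, the split $\max(2\epsilon,\cdot)^{1-1/2k}\lesssim \epsilon^{1-1/2k}+(\cdot)^{1-1/2k}$, and $(t/n)^{1-1/2k}\lesssim \sqrt{t/n}$ when $t/n\le 1$ and $k\ge 1$ (valid in the assumed range of $n$).

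Next I would take a union bound of this event over all $u \in \calN^{1/2}_{2s}(\calS^{p-1})$: covering support-by-support, there are $\binom{p}{2s}$ supports of size $2s$ and a $1/2$-net of the corresponding unit sphere has size at most $5^{2s}$, so $\log|\calN^{1/2}_{2s}(\calS^{p-1})| = O(s\log(p/s)) = O(s\log p)$, which is the overhead already absorbed into $B$ and which the confidence budget $\delta/(6ep/s)^s$ of~\eqref{eqn:multiEst_sparse} is chosen to accommodate. On the resulting probability-$(1-\delta)$ event, $B$ is a uniform bound over the net. Since $\mu$ is $s$-sparse it is feasible for the outer $\inf_{\theta\in\Theta_s}$ in~\eqref{eqn:multiEst_sparse}, so the inner supremum evaluated at $\theta=\mu$ is at most $B$; hence the returned $\eparam_s$ also satisfies $\sup_{u}\bigl|u^T\eparam_s - \text{INTERVAL1D}(\{u^T x_i\}_{i=1}^n,\epsilon,\delta')\bigr| \le B$, and the triangle inequality gives $|u^T(\eparam_s-\mu)| \le 2B$ for every $u$ in the net. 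Finally, $\eparam_s-\mu$ is a difference of two $s$-sparse vectors, hence $2s$-sparse, so $\norm{\eparam_s-\mu}{2} = \sup\{w^T(\eparam_s-\mu): w\in\calS^{p-1},\ \norm{w}{0}\le 2s\}$; the $1/2$-net property over $2s$-sparse directions then yields $\norm{\eparam_s-\mu}{2} \le 2\sup_{u\in\calN^{1/2}_{2s}(\calS^{p-1})}|u^T(\eparam_s-\mu)| \le 4B$, which is the claimed bound up to the universal constant hidden in $\lesssim$.

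Every step is routine given Lemmas~\ref{lem:interval_p1D_huber} and~\ref{lem:ppEst}; the only places that need care are the bookkeeping of the confidence level through the $2s$-sparse covering number (so that the dimension overhead is $s\log p$ and not $p$) and the two structural facts that make the sparse version go through — that $\mu\in\Theta_s$ is feasible for the inner infimum, and that $\eparam_s-\mu$ is $2s$-sparse, the latter being exactly why a net of $2s$-sparse directions and the sparse operator norm $\norm{\Sigma}{2,2s}$, rather than $\norm{\Sigma}{2}$ and a full-dimensional net, are the relevant objects. I do not expect any genuine obstacle beyond this bookkeeping.
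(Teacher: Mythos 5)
Your proposal is correct and follows essentially the same route as the paper: reduce to the univariate guarantee of Lemma~\ref{lem:interval_p1D_huber} along each direction of the $2s$-sparse half-cover, union bound using $\lvert\calN^{1/2}_{2s}(\calS^{p-1})\rvert \le (6ep/s)^s$ so the dimension overhead is $s\log p$, and exploit that $\mu\in\Theta_s$ is feasible and $\eparam_s-\mu$ is $2s$-sparse to pass from the net to the $\ell_2$ norm exactly as in Lemma~\ref{lem:ppEst}. Your write-up is in fact more explicit than the paper's (which simply defers to the proof of Lemma~\ref{lem:ppEst}), and your bookkeeping of $\log(1/\delta') \lesssim \log(1/\delta)+s\log p$ and the splitting of the $\max(2\epsilon,\cdot)^{1-1/2k}$ term is exactly what the deferred steps require.
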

The above result shows that the proposed estimator exploits the underlying sparsity structure, and achieves the near-optimal deviation rate of $O\paren{\norm{\Sigma}{2,2s}^{1/2} \sqrt{\frac{s \log p}{n}}}$, while simultaneously achieving the optimal asymptotic bias of $O(\norm{\Sigma}{2,2s}^{1/2} \epsilon^{1 - 1/{2k}})$.

As stated before, for the special case of $k=1$, \ie when the distribution $P$ has only bounded second moments, we can use the univariate version of Algorithm~\ref{algo:filteringpD} to construct a similar estimator for sparse mean estimation. In particular, consider the following estimator
\begin{align}\label{eqn:multiEst_sparse_filt}
    \eparam_{s,\text{Filt}}(\{x_i\}_{i=1}^n) = \inf \limits_{\theta \in \Theta_s} \sup \limits_{u \in \calN^{1/2}_{2s}(\calS^{p-1})} | u^T\theta - \text{FILTER1D}(\{u^T x_i\}_{i=1}^n,\epsilon,\frac{\delta}{(\frac{6ep}{s})^s}) |.
\end{align}
Then, we have the following result.

\begin{corollary}\label{cor:ppEst_sparse}
Suppose $P$ has mean $\mu$ and covariance $\Sigma$, where $\mu$ is $s$-sparse. Then, given $n$ samples $ \{x_i\}_{i=1}^n$ from the mixture distribution~\eqref{eqn:huber_mixture}, we get that with probability at least $1 - \delta$,
\begin{align} 
\norm{\eparam_{s,\text{Filt}}(\{x_i\}_{i=1}^n) - \mu }{2}  \lesssim \norm{\Sigma}{2,2s}^{1/2} \sqrt{\epsilon} + \norm{\Sigma}{2,2s}^{1/2}{ \sqrt{\frac{\log (1/\delta)}{n}} +  \norm{\Sigma}{2,2s}^{1/2} \sqrt{\frac{s \log p}{n}}}
\end{align}
where $\norm{\Sigma}{2,2s} = \sup_{u \in \calS^{p-1}, \norm{u}{0} \leq 2s} u^T \Sigma u$.
\end{corollary}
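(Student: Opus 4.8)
The plan is to reduce the sparse multivariate problem to many instances of robust \emph{univariate} mean estimation along $2s$-sparse directions, mirroring the dense-case arguments for Lemma~\ref{lem:ppEst} and Corollary~\ref{cor:filt_multi}, but with \text{INTERVAL1D} replaced by \text{FILTER1D} and the cover restricted to $2s$-sparse unit vectors. Throughout write $\eparam \defeq \eparam_{s,\text{Filt}}(\{x_i\}_{i=1}^n)$ and $\delta' \defeq \delta/(6ep/s)^s$, and let
\[ \gamma \defeq C\,\norm{\Sigma}{2,2s}^{1/2}\Big( \sqrt{\epsilon} + \sqrt{\tfrac{\log(1/\delta)}{n}} + \sqrt{\tfrac{s\log p}{n}}\,\Big) \]
for a suitable universal constant $C$; this is exactly the right-hand side of the claim.

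\emph{Step 1 (per-direction univariate guarantee).} Fix a unit vector $u \in \calN^{1/2}_{2s}(\calS^{p-1})$. Pushing the mixture~\eqref{eqn:huber_mixture} forward through $x\mapsto u^Tx$ yields a univariate mixture $(1-\epsilon)P^u+\epsilon Q^u$ in which the clean part $P^u$ has mean $u^T\mu$ and variance $u^T\Sigma u \le \norm{\Sigma}{2,2s}$. Applying the univariate specialization of Corollary~\ref{lem:filt_algo_pD_2} (i.e. $p=1$, so $\trace{\Sigma}=\norm{\Sigma}{2}$ is just this projected variance; the sample-size condition required by that corollary at confidence $\delta'$ follows from our assumption on $\epsilon,n$ since $\log(1/\delta') \lesssim \log(1/\delta)+s\log p$), we obtain that with probability at least $1-\delta'$,
\[ \bigl| \text{FILTER1D}(\{u^Tx_i\}_{i=1}^n,\epsilon,\delta') - u^T\mu \bigr| \;\lesssim\; (u^T\Sigma u)^{1/2}\Bigl(\sqrt{\epsilon}+\sqrt{\tfrac{\log(1/\delta')}{n}}\Bigr) \;\le\; \gamma , \]
using $u^T\Sigma u \le \norm{\Sigma}{2,2s}$ and $\log(1/\delta')\lesssim \log(1/\delta)+s\log p$.

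\emph{Step 2 (union bound over the cover).} Build $\calN^{1/2}_{2s}(\calS^{p-1})$ by taking, for each of the $\binom{p}{2s}$ coordinate subsets of size $2s$, a $(1/2)$-net of the corresponding $2s$-dimensional unit sphere, of cardinality at most $5^{2s}$; hence $\log\bigl|\calN^{1/2}_{2s}(\calS^{p-1})\bigr| \lesssim s\log(ep/s) \le s\log(6ep/s)$, so $\bigl|\calN^{1/2}_{2s}(\calS^{p-1})\bigr|\cdot\delta' \le \delta$. Union-bounding the event of Step 1 over the cover, with probability at least $1-\delta$ the bound $|\text{FILTER1D}(\{u^Tx_i\})-u^T\mu|\le\gamma$ holds simultaneously for every $u\in\calN^{1/2}_{2s}(\calS^{p-1})$.

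\emph{Step 3 (directional control $\Rightarrow$ $\ell_2$ bound).} Work on the good event. Since $\mu$ is $s$-sparse it is feasible in the outer infimum, and its objective value is $\sup_u|u^T\mu - \text{FILTER1D}(\{u^Tx_i\})|\le\gamma$; therefore the minimizer $\eparam$ attains objective value at most $\gamma$, i.e. $|u^T\eparam - \text{FILTER1D}(\{u^Tx_i\})|\le\gamma$ for all $u$ in the cover. Combining with Step 1 via the triangle inequality, $|u^T(\eparam-\mu)|\le 2\gamma$ for every $u\in\calN^{1/2}_{2s}(\calS^{p-1})$. Now $w\defeq\eparam-\mu$ is $2s$-sparse (difference of two $s$-sparse vectors), so $w/\norm{w}{2}$ is a $2s$-sparse unit vector; choosing $y\in\calN^{1/2}_{2s}(\calS^{p-1})$ with $\norm{w/\norm{w}{2}-y}{2}\le\tfrac12$ and using Cauchy--Schwarz,
\[ \norm{w}{2} = \Bigl\langle \tfrac{w}{\norm{w}{2}}, w\Bigr\rangle \le \langle y, w\rangle + \tfrac12\norm{w}{2} \le 2\gamma + \tfrac12\norm{w}{2}, \]
whence $\norm{w}{2}\le 4\gamma$, which is the claimed bound (absorbing constants into $\lesssim$). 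The case $k=1$ carries no $(\log n/n)$ heavy-tail term and a bias of $\sqrt{\epsilon}$ rather than $\epsilon^{3/4}$ precisely because the univariate \text{FILTER1D} bound has this form.

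\emph{Main obstacle.} The delicate step is Step 1: one needs a genuine \emph{high-probability} (rather than in-expectation) guarantee for the randomized univariate filter, uniform over the cover, so that each directional call of \text{FILTER1D} fails with probability at most $\delta'$ over both the sample draw and the algorithm's internal coins, and the union bound over the $\approx (p/s)^{s}$ directions costs only a factor $\delta$ overall. This is exactly the martingale-based refinement of the filtering analysis underlying Theorem~\ref{thm:filt_algo_pD} (and Corollary~\ref{lem:filt_algo_pD_2}); prior filtering analyses give only expected-deviation control and would not survive the union bound. The remaining ingredients---the covering-number estimate, feasibility of the $s$-sparse $\mu$, and the $2s$-sparse net argument---are routine and already appear in the dense-case proofs earlier in this section.
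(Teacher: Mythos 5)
Your proposal is correct and follows essentially the same route as the paper: a per-direction high-probability guarantee from the univariate filter (Corollary~\ref{lem:filt_algo_pD_2} with $p=1$), a union bound over the half-cover of $2s$-sparse unit vectors with cardinality at most $(6ep/s)^s$, and the standard feasibility-of-$\mu$ plus net argument to convert directional control into an $\ell_2$ bound on the $2s$-sparse error vector. The paper's proof is exactly this reduction to Lemma~\ref{lem:ppEst} combined with the sparse covering-number bound, so no further comparison is needed.
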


\section{Experiments}\label{sec:experiments}
In this section, we conduct synthetic experiments to study the performance of our proposed estimators for heavy-tailed mean estimation.

\begin{figure}[t]
        \centering
         \subfigure[\small{$CDF[\norm{\eparam - \tparam}{2}]$} \label{fig:log1}]{\includegraphics[width=0.235\textwidth]{./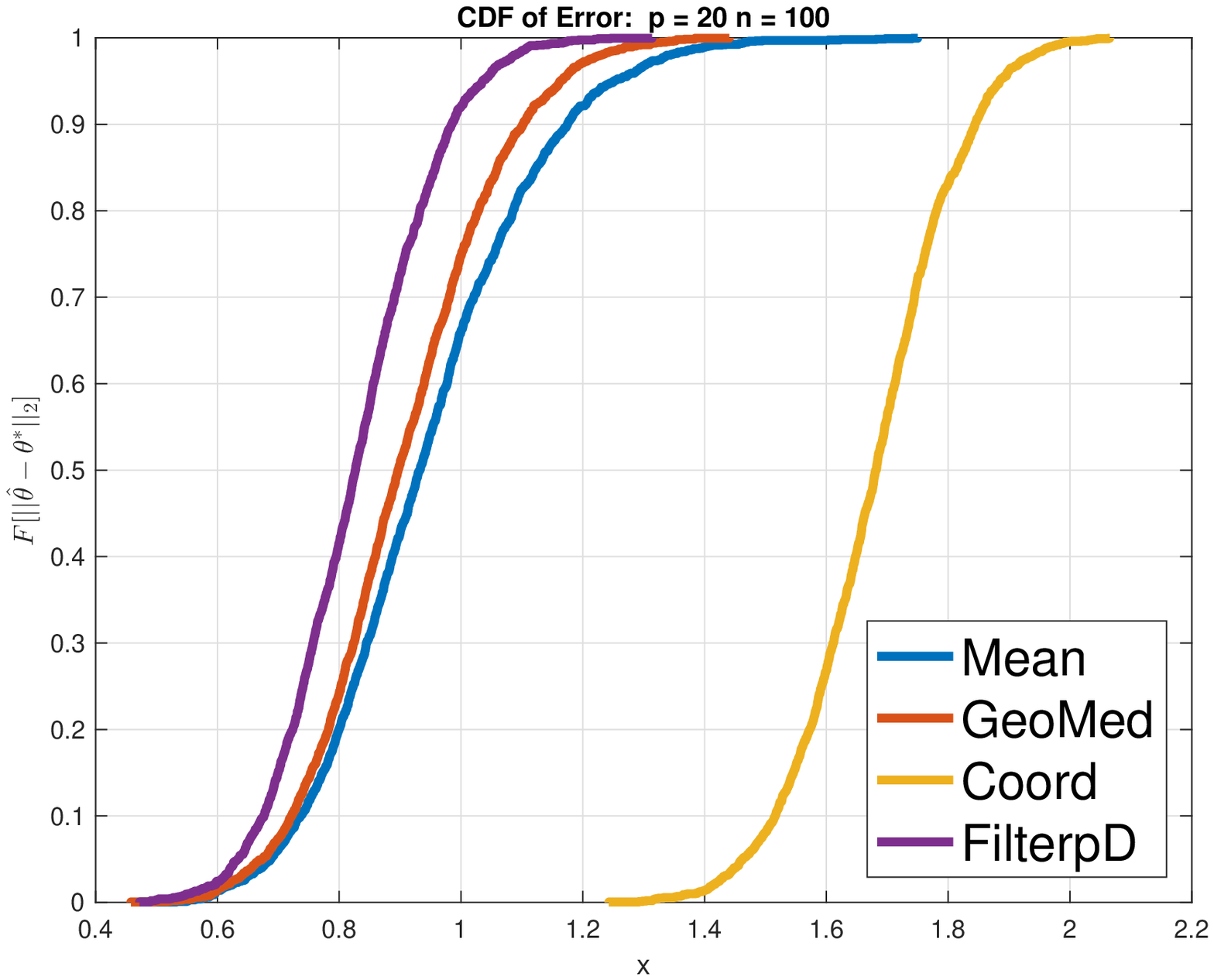}}
        \subfigure[ \small{$\log(Q_{\delta}(\ell_{\eparam}))$ vs $\delta$} \label{fig:log2}]{\includegraphics[width=0.235\textwidth]{./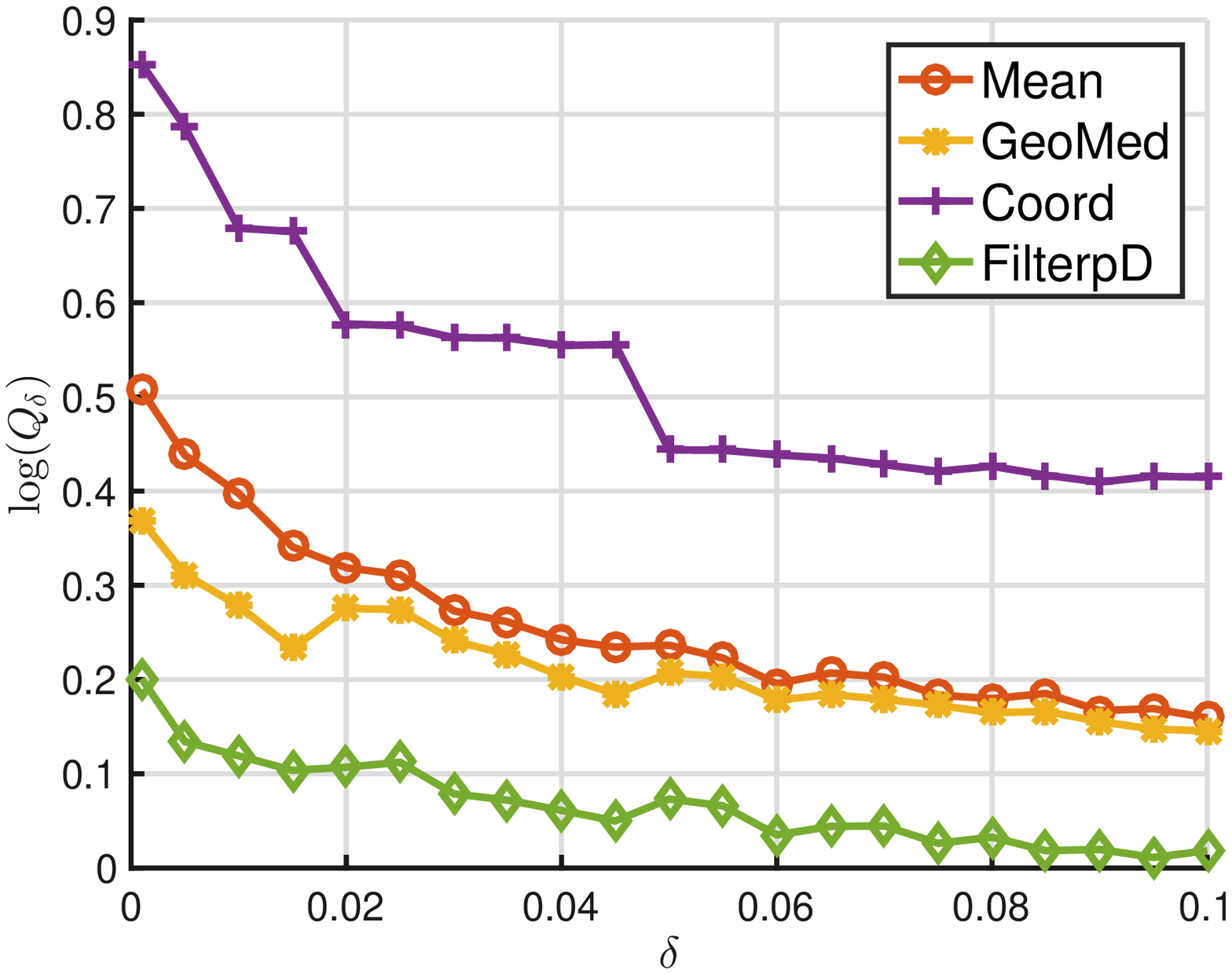}}
        \subfigure[ \small{$(Q_{\delta}(\ell_{\eparam}))$ vs $n$} \label{fig:log3}] {\includegraphics[width=0.235\textwidth]{./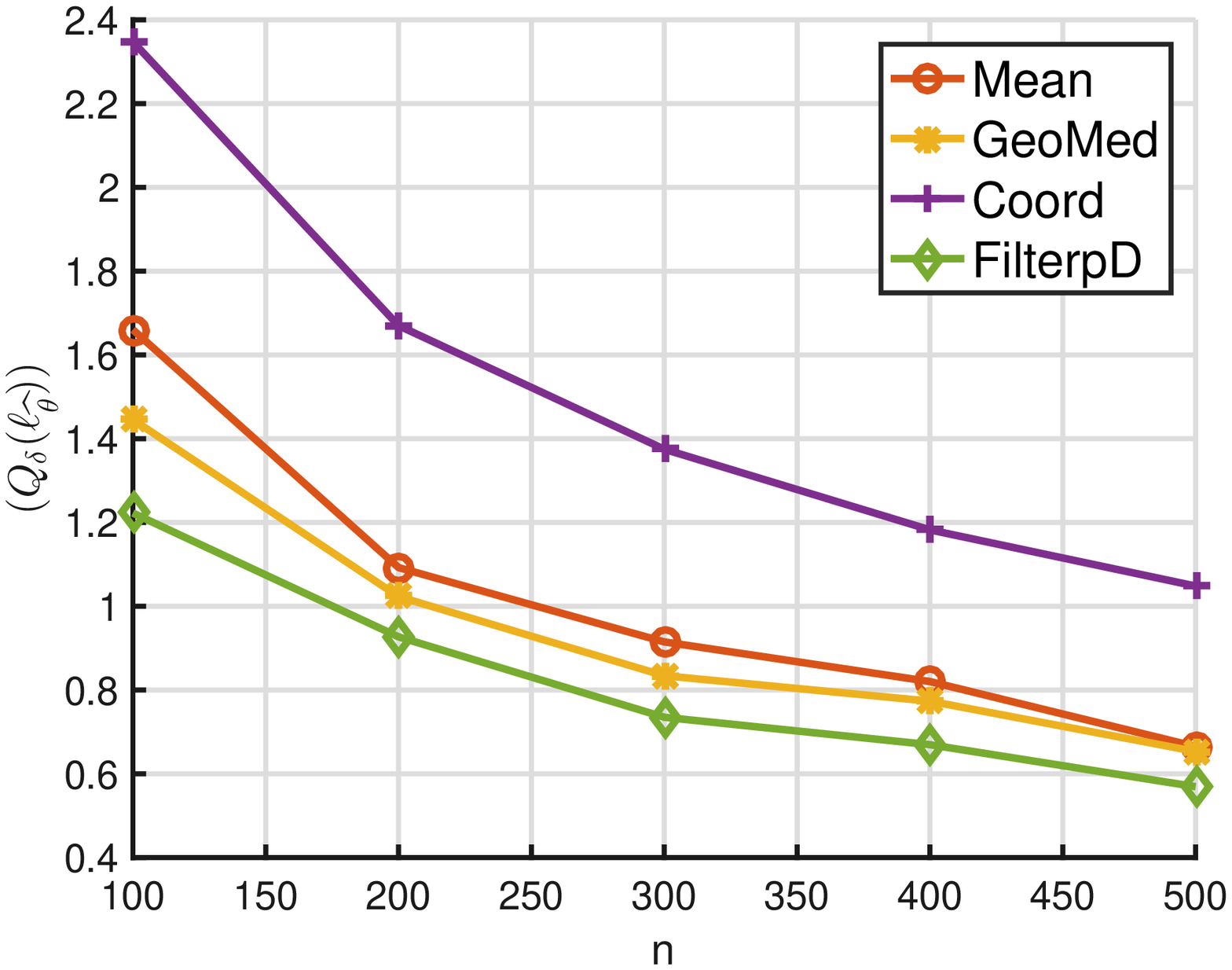}}
        \subfigure[\small{$(Q_{\delta}(\ell_{\eparam}))$ vs $p$}
        \label{fig:log4}]{\includegraphics[width=0.235\textwidth]{./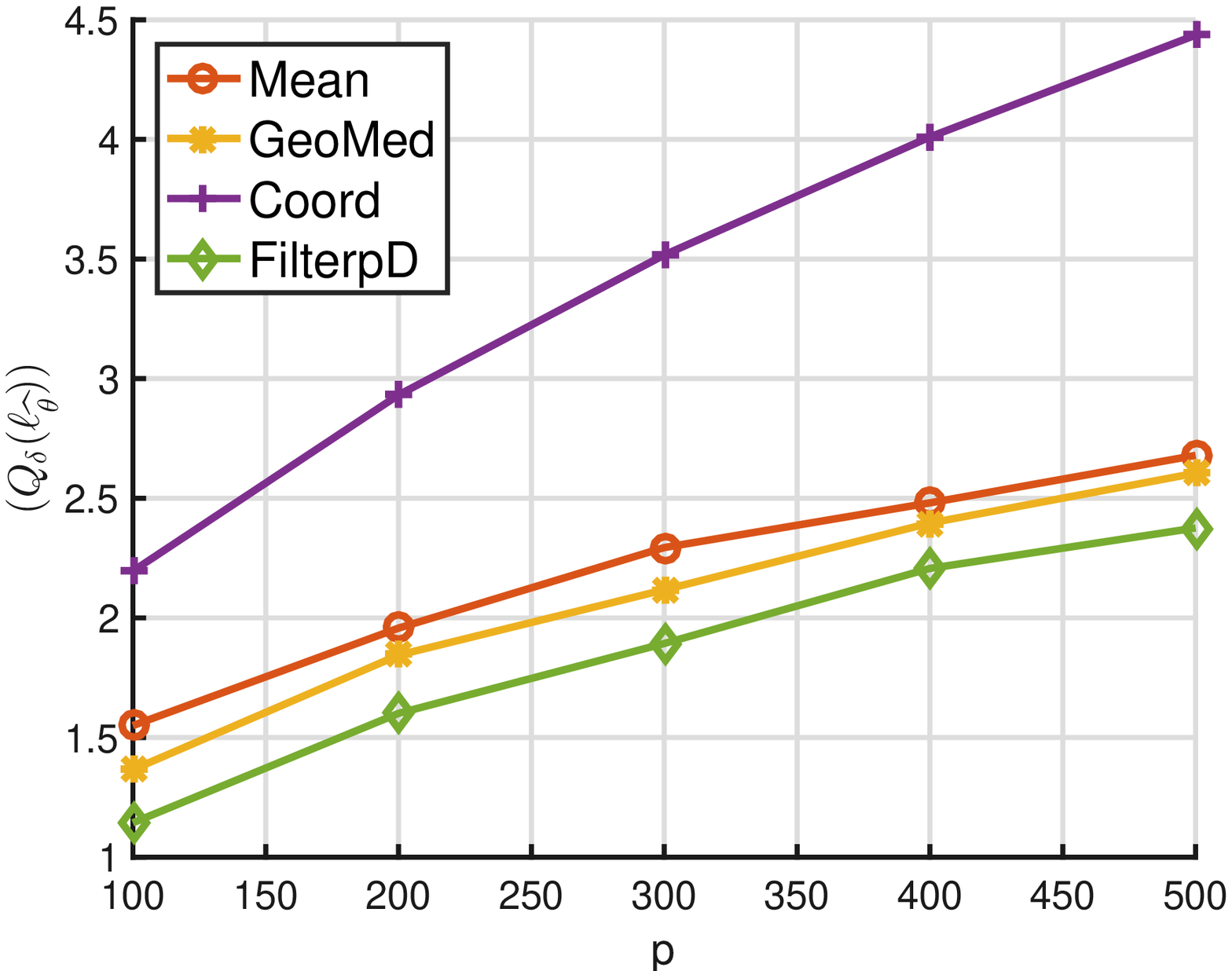}}
        \vspace{-10pt}
\caption{Mean Estimation for Multivariate LogNormal Noise}
\label{fig:log}
\end{figure}

\paragraph{Setup.} We generate $x \in \real^p$ from an isotropic zero-mean heavy-tailed distribution. We experiment with two different distributions: (1) Log-normal distribution and (2) Pareto Distribution. For Pareto-distribution with tail-parameter $\beta$, the $k^{th}$ order moments exists only if $k < \beta$, hence, smaller the $\beta$, the more heavy-tailed the distribution. We fix $k=3$. In this setup, we experiment with different $n,p$ and $\delta$. For each setting of $(n,p,\delta)$, cumulative metrics are reported over $2000$ trials. We vary $n$ from 100 to 500, and $p$ from 20 to 100.
\paragraph{Methods.} We compare the filtering estimator with several baselines: (1) Sample mean, (2) Geometric Median of Means~\citep{minsker2015geometric} which we refer to as GeoMed, and (3) Co-ordinate wise Robust Estimation, where we apply the univariate version of filtering on each co-ordinate independently, which we refer to as Coord.
\paragraph{Metric.} For any estimator($\eparam_{n,\delta}$), we use $\ell(\eparam_{n,\delta}) = \norm{\eparam - \mu(P)}{2}$ as our primary metric. For each setting of $(n,p,\delta)$, we run the experiment for 2000 trials, which gives us access to the distribution of $\ell_{\eparam_{n,\delta}}$. Since, we care about the deviation performance, measure the quantile error of the estimator, \ie $Q_{\delta}(\eparam) = \inf\{\alpha:\,\Pr(\ell(\eparam) > \alpha) \leq \delta\}$. This can also be thought of as the length of confidence interval for a confidence level of $1 - \delta$. 
\paragraph{Hyperparameter Tuning.} Apart from sample mean, all other estimators take into knowledge of $\delta$, which is the desired confidence level. For GeoMed, the number of blocks $k$ is set to $\ceil{2 \log(1/\delta)}$. Similarly, for FilterpD, we run the filtering procedure for $\ceil{2 \log(1/\delta)}$ steps, \ie at each step we sample a point based on its projection score, and throw it away.

\paragraph{Results.} In Figures~\ref{fig:log} and~\ref{fig:pareto}, we see that our filtering estimator clearly outperforms all baselines across several metrics. Figures~\ref{fig:log1} and \ref{fig:pareto1} show that CDF of the loss of our estimator is clearly to the left(and hence) better. Figures~\ref{fig:log2} and \ref{fig:pareto2} show that for any confidence level $1-\delta$, the length of the oracle confidence interval ($Q_{\delta}(\eparam)$) for our estimator is better than all baselines. We also see better sample dependence in Figures~\ref{fig:log3} and \ref{fig:pareto3}, and better dimension dependence in Figures~\ref{fig:log4} and  \ref{fig:pareto4}.

\begin{figure}[tbp]
        \centering
         \subfigure[\small{$CDF[\norm{\eparam - \tparam}{2}]$} \label{fig:pareto1}]{\includegraphics[width=0.235\textwidth]{./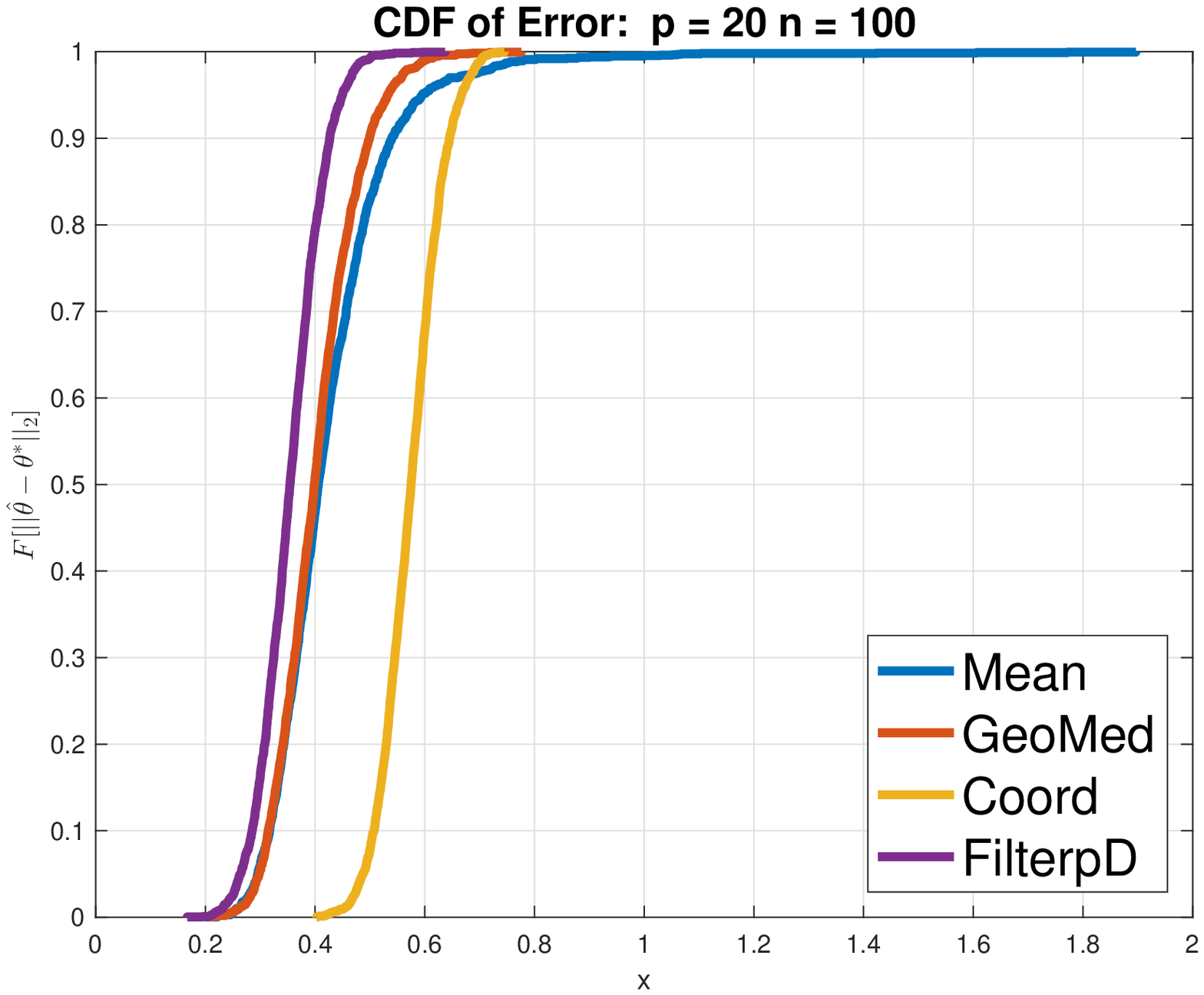}}
        \subfigure[ \small{$\log(Q_{\delta}(\ell_{\eparam}))$ vs $\delta$} \label{fig:pareto2}]{\includegraphics[width=0.235\textwidth]{./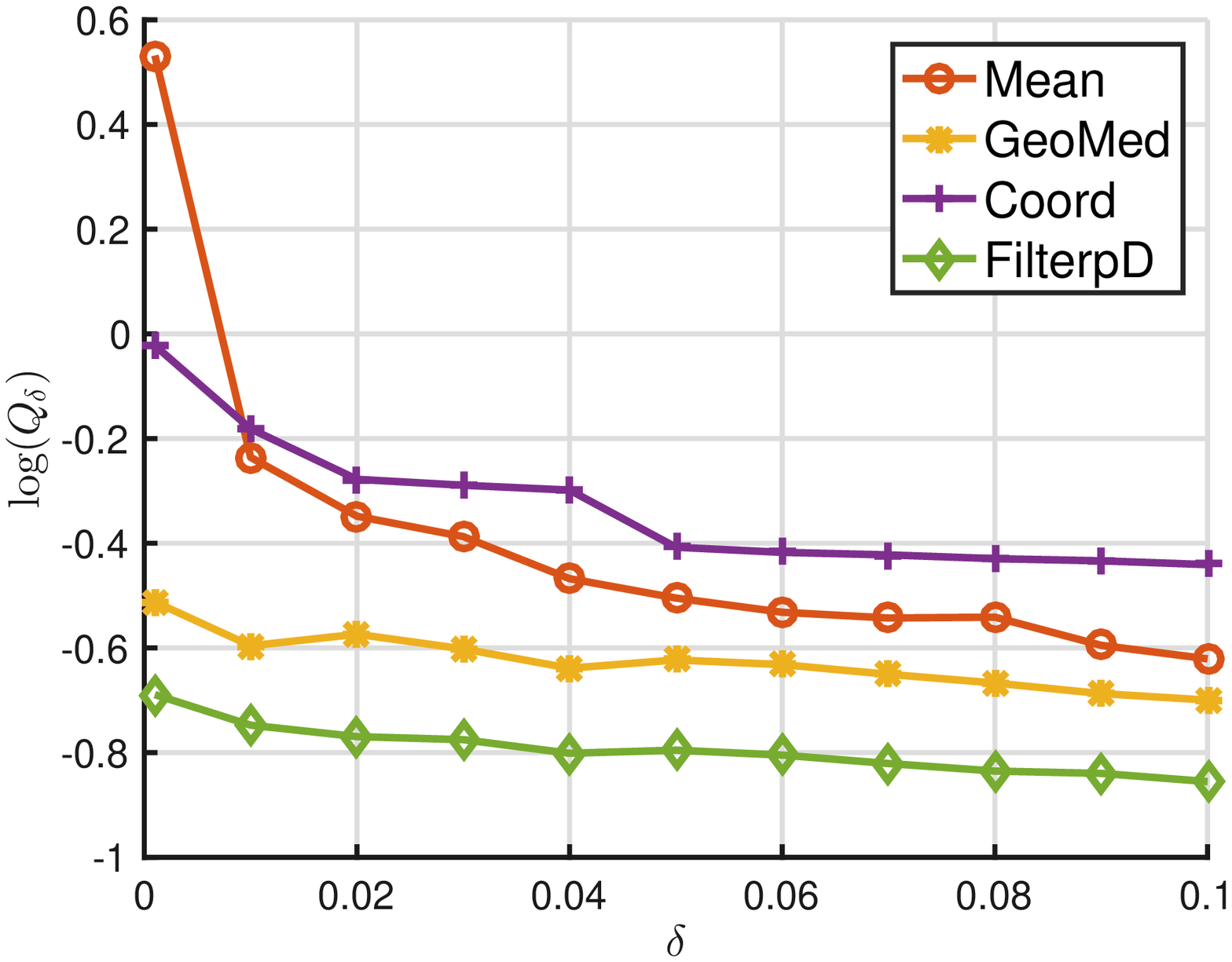}}
        \subfigure[ \small{$(Q_{\delta}(\ell_{\eparam}))$ vs $n$} \label{fig:pareto3}] {\includegraphics[width=0.235\textwidth]{./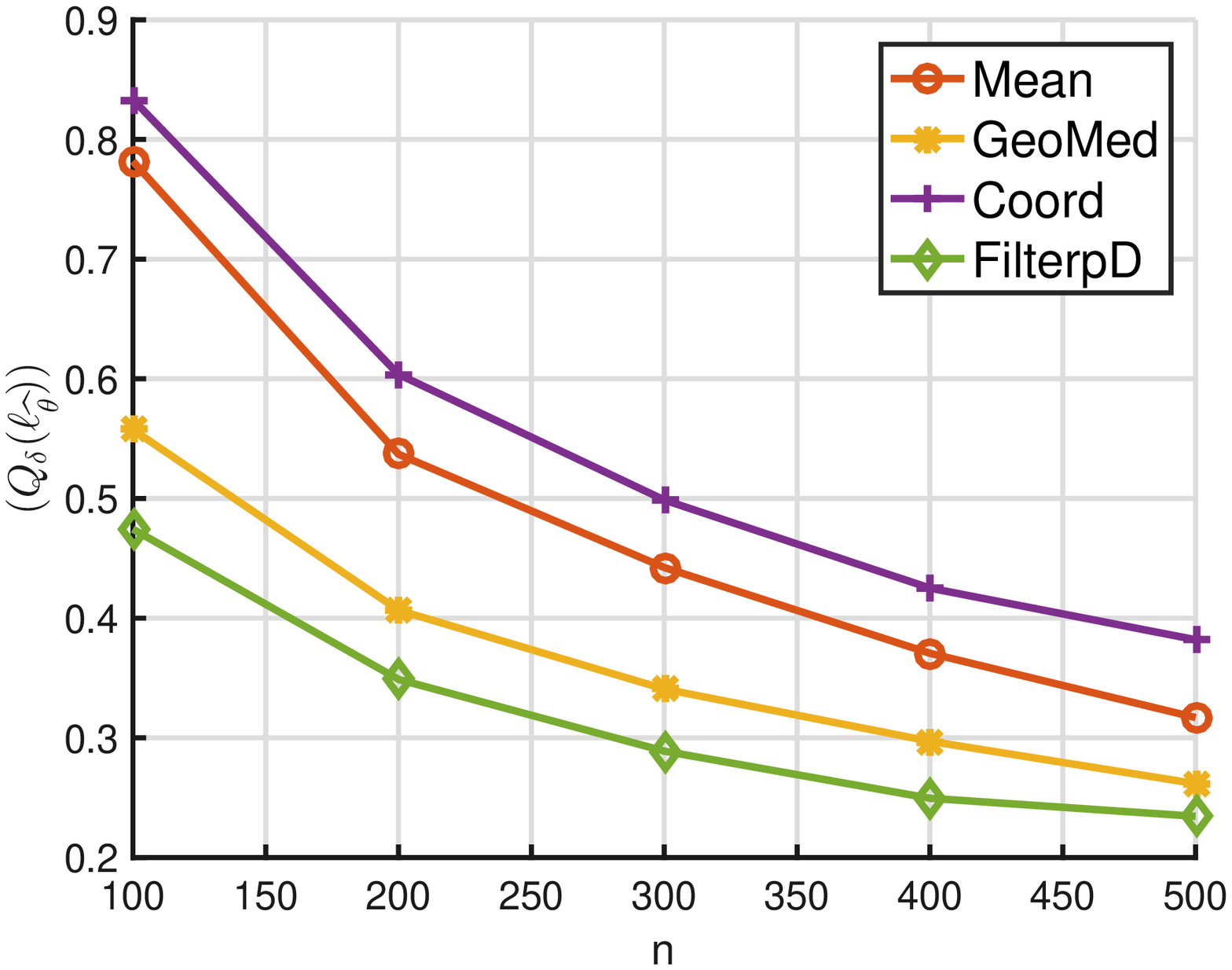}}
        \subfigure[\small{$(Q_{\delta}(\ell_{\eparam}))$ vs $p$}
        \label{fig:pareto4}]{\includegraphics[width=0.235\textwidth]{./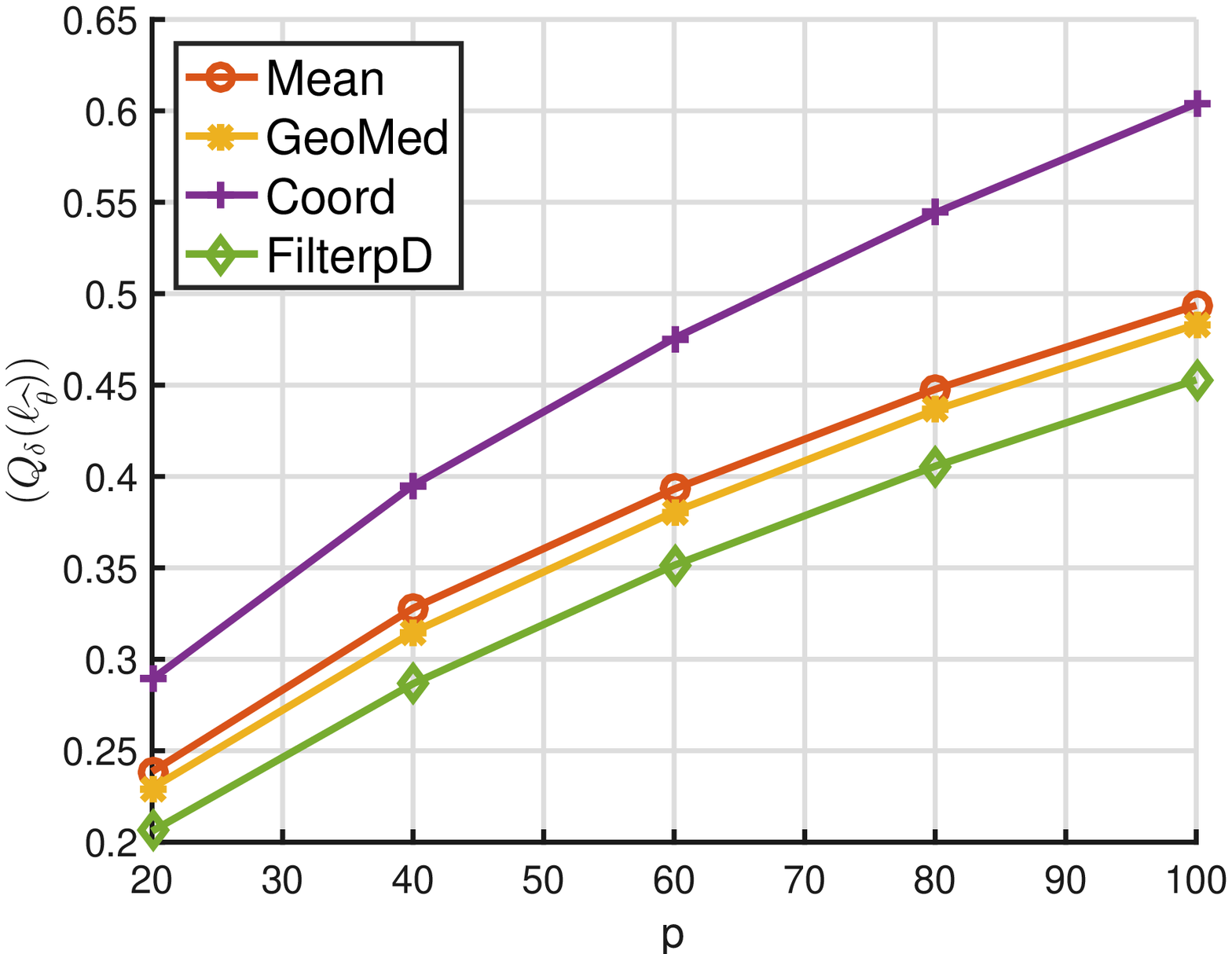}}
\caption{Mean Estimation for Multivariate Pareto Distribution}
\label{fig:pareto}
\end{figure}

\section{Conclusion}
In this work, we developed connections between Huber's $\epsilon$-contamination model and the heavy-tailed noise model. We studied conditions under which this connection leads to near-optimal procedures. Moreover, by building upon this connection, we leveraged recently proposed computationally-efficient algorithms for mean estimation in Huber's model to design new sample-pruning based efficient estimators for the heavy-tailed model. Our estimator is practical and easy to implement and we show that it performs better than other practical estimators such as Geometric Median of Means~\citep{minsker2015geometric}.

We also designed statistically-optimal estimators for general $2k$-moment bounded distributions, which simultaneously robust in both models. In particular, our results show that for $2k$-moment bounded distributions, one can achieve the optimal bias $O(\norm{\Sigma}{2}^{\half} \epsilon^{1 - 1/(2k)})$ at the optimal statistical sample complexity of $O(p)$. 

There are several avenues for future work. In particular, our computationally efficient estimators are such that when they achieve an optimal asymptotic bias in Huber's $\epsilon$-contamination they don't get the optimal deviation of $\opt_{n,\Sigma,\delta}$ in the Heavy-Tailed Model, and vice-versa. We leave designing computationally-efficient estimators which are simultaneously optimal as an open problem. 
% ~\footnote{\apcomment{should we cite lecue's new paper, which solves this for k=1}}. 
Finally, it would also be of interest to design efficient estimators for different $\ell_q$-norms, and under additional structural assumptions such as sparsity and symmetry.

\clearpage
\begin{small}\bibliography{local}

\begin{thebibliography}{42}
\providecommand{\natexlab}[1]{#1}
\providecommand{\url}[1]{\texttt{#1}}
\expandafter\ifx\csname urlstyle\endcsname\relax
  \providecommand{\doi}[1]{doi: #1}\else
  \providecommand{\doi}{doi: \begingroup \urlstyle{rm}\Url}\fi

\bibitem[Catoni(2012)]{catoni2012challenging}
Olivier Catoni.
\newblock Challenging the empirical mean and empirical variance: a deviation
  study.
\newblock In \emph{Annales de l'Institut Henri Poincar{\'e}, Probabilit{\'e}s
  et Statistiques}, volume~48, pages 1148--1185. Institut Henri Poincar{\'e},
  2012.

\bibitem[Minsker(2015)]{minsker2015geometric}
Stanislav Minsker.
\newblock Geometric median and robust estimation in banach spaces.
\newblock \emph{Bernoulli}, 21\penalty0 (4):\penalty0 2308--2335, 2015.

\bibitem[Lugosi and Mendelson(2017)]{lugosi2017sub}
G{\'a}bor Lugosi and Shahar Mendelson.
\newblock Sub-gaussian estimators of the mean of a random vector.
\newblock \emph{arXiv preprint arXiv:1702.00482}, 2017.

\bibitem[Catoni and Giulini(2017)]{catoni2017dimension}
Olivier Catoni and Ilaria Giulini.
\newblock Dimension-free pac-bayesian bounds for matrices, vectors, and linear
  least squares regression.
\newblock \emph{arXiv preprint arXiv:1712.02747}, 2017.

\bibitem[Hanson and Wright(1971)]{hanson1971bound}
David~Lee Hanson and Farroll~Tim Wright.
\newblock A bound on tail probabilities for quadratic forms in independent
  random variables.
\newblock \emph{The Annals of Mathematical Statistics}, 42\penalty0
  (3):\penalty0 1079--1083, 1971.

\bibitem[Alon et~al.(1996)Alon, Matias, and Szegedy]{alon96}
Noga Alon, Yossi Matias, and Mario Szegedy.
\newblock The space complexity of approximating the frequency moments.
\newblock In \emph{Proceedings of the Twenty-eighth Annual ACM Symposium on
  Theory of Computing}, STOC '96, pages 20--29, New York, NY, USA, 1996. ACM.

\bibitem[Nemirovski and Yudin(1983)]{nemirovski1983problem}
A.S. Nemirovski and D.B. Yudin.
\newblock \emph{Problem Complexity and Method Efficiency in Optimization}.
\newblock A Wiley-Interscience publication. Wiley, 1983.

\bibitem[Jerrum et~al.(1986)Jerrum, Valiant, and Vazirani]{jerrum86}
Mark~R. Jerrum, Leslie~G. Valiant, and Vijay~V. Vazirani.
\newblock Random generation of combinatorial structures from a uniform
  distribution.
\newblock \emph{Theoretical Computer Science}, 43:\penalty0 169 -- 188, 1986.

\bibitem[Hopkins(2018)]{hopkins2018sub}
Samuel~B Hopkins.
\newblock Sub-gaussian mean estimation in polynomial time.
\newblock \emph{arXiv preprint arXiv:1809.07425}, 2018.

\bibitem[Cherapanamjeri et~al.(2019)Cherapanamjeri, Flammarion, and
  Bartlett]{cherapanamjeri2019fast}
Yeshwanth Cherapanamjeri, Nicolas Flammarion, and Peter~L Bartlett.
\newblock Fast mean estimation with sub-gaussian rates.
\newblock \emph{arXiv preprint arXiv:1902.01998}, 2019.

\bibitem[Hampel et~al.(1986)Hampel, Ronchetti, Rousseeuw, and
  Stahel]{hampel1986robust}
Frank~R Hampel, Elvezio~M Ronchetti, Peter~J Rousseeuw, and Werner~A Stahel.
\newblock \emph{Robust statistics}.
\newblock Wiley Online Library, 1986.

\bibitem[Tukey(1975)]{tukey1975mathematics}
John~W Tukey.
\newblock Mathematics and the picturing of data.
\newblock In \emph{Proceedings of the International Congress of Mathematicians,
  Vancouver, 1975}, volume~2, pages 523--531, 1975.

\bibitem[Hastings et~al.(1947)Hastings, Mosteller, Tukey, Winsor,
  et~al.]{hastings1947low}
Cecil Hastings, Frederick Mosteller, John~W Tukey, Charles~P Winsor, et~al.
\newblock Low moments for small samples: a comparative study of order
  statistics.
\newblock \emph{The Annals of Mathematical Statistics}, 18\penalty0
  (3):\penalty0 413--426, 1947.

\bibitem[Diakonikolas et~al.(2016)Diakonikolas, Kamath, Kane, Li, Moitra, and
  Stewart]{diakonikolas2016robust}
Ilias Diakonikolas, Gautam Kamath, Daniel~M Kane, Jerry Li, Ankur Moitra, and
  Alistair Stewart.
\newblock Robust estimators in high dimensions without the computational
  intractability.
\newblock In \emph{Foundations of Computer Science (FOCS), 2016 IEEE 57th
  Annual Symposium on}, pages 655--664. IEEE, 2016.

\bibitem[Lai et~al.(2016)Lai, Rao, and Vempala]{lai2016agnostic}
Kevin~A Lai, Anup~B Rao, and Santosh Vempala.
\newblock Agnostic estimation of mean and covariance.
\newblock In \emph{Foundations of Computer Science (FOCS), 2016 IEEE 57th
  Annual Symposium on}, pages 665--674. IEEE, 2016.

\bibitem[Kothari et~al.(2018)Kothari, Steinhardt, and
  Steurer]{kothari2018robust}
Pravesh~K Kothari, Jacob Steinhardt, and David Steurer.
\newblock Robust moment estimation and improved clustering via sum of squares.
\newblock In \emph{Proceedings of the 50th Annual ACM SIGACT Symposium on
  Theory of Computing}, pages 1035--1046. ACM, 2018.

\bibitem[Charikar et~al.(2017)Charikar, Steinhardt, and
  Valiant]{charikar2017learning}
Moses Charikar, Jacob Steinhardt, and Gregory Valiant.
\newblock Learning from untrusted data.
\newblock In \emph{Proceedings of the 49th Annual ACM SIGACT Symposium on
  Theory of Computing}, pages 47--60. ACM, 2017.

\bibitem[Diakonikolas et~al.(2017)Diakonikolas, Kamath, Kane, Li, Moitra, and
  Stewart]{diakonikolas2017being}
Ilias Diakonikolas, Gautam Kamath, Daniel~M Kane, Jerry Li, Ankur Moitra, and
  Alistair Stewart.
\newblock Being robust (in high dimensions) can be practical.
\newblock In \emph{International Conference on Machine Learning}, pages
  999--1008, 2017.

\bibitem[Balakrishnan et~al.(2017)Balakrishnan, Du, Li, and
  Singh]{balakrishnan2017computationally}
Sivaraman Balakrishnan, Simon~S Du, Jerry Li, and Aarti Singh.
\newblock Computationally efficient robust sparse estimation in high
  dimensions.
\newblock In \emph{Conference on Learning Theory}, pages 169--212, 2017.

\bibitem[Chen et~al.(2016)Chen, Gao, Ren, et~al.]{chen2016general}
Mengjie Chen, Chao Gao, Zhao Ren, et~al.
\newblock A general decision theory for huber’s epsilon-contamination model.
\newblock \emph{Electronic Journal of Statistics}, 10\penalty0 (2):\penalty0
  3752--3774, 2016.

\bibitem[Hopkins and Li(2018)]{hopkins2018mixture}
Samuel~B Hopkins and Jerry Li.
\newblock Mixture models, robustness, and sum of squares proofs.
\newblock In \emph{Proceedings of the 50th Annual ACM SIGACT Symposium on
  Theory of Computing}, pages 1021--1034. ACM, 2018.

\bibitem[Steinhardt et~al.(2017)Steinhardt, Charikar, and
  Valiant]{steinhardt2017resilience}
Jacob Steinhardt, Moses Charikar, and Gregory Valiant.
\newblock Resilience: A criterion for learning in the presence of arbitrary
  outliers.
\newblock \emph{arXiv preprint arXiv:1703.04940}, 2017.

\bibitem[Lecu{\'e} and Depersin(2019)]{lecue2019robust}
Guillaume Lecu{\'e} and Jules Depersin.
\newblock Robust subgaussian estimation of a mean vector in nearly linear time.
\newblock \emph{arXiv preprint arXiv:1906.03058}, 2019.

\bibitem[Dong et~al.(2019)Dong, Hopkins, and Li]{dong2019quantum}
Yihe Dong, Samuel~B Hopkins, and Jerry Li.
\newblock Quantum entropy scoring for fast robust mean estimation and improved
  outlier detection dong.
\newblock \emph{arXiv preprint arXiv:1906.11366}, 2019.

\bibitem[Klivans et~al.(2009)Klivans, Long, and Servedio]{klivans2009learning}
Adam~R Klivans, Philip~M Long, and Rocco~A Servedio.
\newblock Learning halfspaces with malicious noise.
\newblock \emph{Journal of Machine Learning Research}, 10\penalty0
  (Dec):\penalty0 2715--2740, 2009.

\bibitem[Awasthi et~al.(2014)Awasthi, Balcan, and Long]{awasthi2014power}
Pranjal Awasthi, Maria~Florina Balcan, and Philip~M Long.
\newblock The power of localization for efficiently learning linear separators
  with noise.
\newblock In \emph{Proceedings of the forty-sixth annual ACM symposium on
  Theory of computing}, pages 449--458. ACM, 2014.

\bibitem[Xu et~al.(2013)Xu, Caramanis, and Mannor]{xu2013outlier}
Huan Xu, Constantine Caramanis, and Shie Mannor.
\newblock Outlier-robust pca: the high-dimensional case.
\newblock \emph{IEEE transactions on information theory}, 59\penalty0
  (1):\penalty0 546--572, 2013.

\bibitem[Liu et~al.(2018)Liu, Shen, Li, and Caramanis]{liu2018high}
Liu Liu, Yanyao Shen, Tianyang Li, and Constantine Caramanis.
\newblock High dimensional robust sparse regression.
\newblock \emph{arXiv preprint arXiv:1805.11643}, 2018.

\bibitem[Huber(1965)]{huber1965robust}
Peter~J Huber.
\newblock A robust version of the probability ratio test.
\newblock \emph{The Annals of Mathematical Statistics}, 36\penalty0
  (6):\penalty0 1753--1758, 1965.

\bibitem[Maronna(1976)]{maronna1976robust}
Ricardo~Antonio Maronna.
\newblock Robust m-estimators of multivariate location and scatter.
\newblock \emph{The annals of statistics}, pages 51--67, 1976.

\bibitem[Donoho et~al.(1992)Donoho, Gasko, et~al.]{donoho1992breakdown}
David~L Donoho, Miriam Gasko, et~al.
\newblock Breakdown properties of location estimates based on halfspace depth
  and projected outlyingness.
\newblock \emph{The Annals of Statistics}, 20\penalty0 (4):\penalty0
  1803--1827, 1992.

\bibitem[Joly et~al.(2017)Joly, Lugosi, Oliveira, et~al.]{joly2017estimation}
Emilien Joly, G{\'a}bor Lugosi, Roberto~Imbuzeiro Oliveira, et~al.
\newblock On the estimation of the mean of a random vector.
\newblock \emph{Electronic Journal of Statistics}, 11\penalty0 (1):\penalty0
  440--451, 2017.

\bibitem[Prasad et~al.(2018)Prasad, Suggala, Balakrishnan, and
  Ravikumar]{prasad2018robust}
Adarsh Prasad, Arun~Sai Suggala, Sivaraman Balakrishnan, and Pradeep Ravikumar.
\newblock Robust estimation via robust gradient estimation.
\newblock \emph{arXiv preprint arXiv:1802.06485}, 2018.

\bibitem[Diakonikolas et~al.(2018)Diakonikolas, Kamath, Kane, Li, Steinhardt,
  and Stewart]{diakonikolas2018sever}
Ilias Diakonikolas, Gautam Kamath, Daniel~M Kane, Jerry Li, Jacob Steinhardt,
  and Alistair Stewart.
\newblock Sever: A robust meta-algorithm for stochastic optimization.
\newblock \emph{arXiv preprint arXiv:1803.02815}, 2018.

\bibitem[Rish et~al.(2014)Rish, Cecchi, Lozano, and
  Niculescu-Mizil]{rish2014practical}
Irina Rish, Guillermo~A Cecchi, Aurelie Lozano, and Alexandru Niculescu-Mizil.
\newblock \emph{Practical applications of sparse modeling}.
\newblock MIT Press, 2014.

\bibitem[Steinhardt(2018)]{steinhardt2018robust}
Jacob Steinhardt.
\newblock \emph{Robust Learning: Information Theory and Algorithms}.
\newblock PhD thesis, Stanford University, 2018.

\bibitem[Foucart and Rauhut(2013)]{foucart2013mathematical}
Simon Foucart and Holger Rauhut.
\newblock \emph{A mathematical introduction to compressive sensing}, volume~1.
\newblock Birkh{\"a}user Basel, 2013.

\bibitem[Vershynin(2010)]{vershynin2010introduction}
Roman Vershynin.
\newblock Introduction to the non-asymptotic analysis of random matrices.
\newblock \emph{arXiv preprint arXiv:1011.3027}, 2010.

\bibitem[Bernardo and Smith(2009)]{bernardo2009bayesian}
Jos{\'e}~M Bernardo and Adrian~FM Smith.
\newblock \emph{Bayesian theory}, volume 405.
\newblock John Wiley \& Sons, 2009.

\bibitem[Vapnik and Chervonenkis(2015)]{vapnik2015uniform}
Vladimir~N Vapnik and A~Ya Chervonenkis.
\newblock On the uniform convergence of relative frequencies of events to their
  probabilities.
\newblock In \emph{Measures of complexity}, pages 11--30. Springer, 2015.

\bibitem[Wainwright(2019)]{wainwright2019high}
Martin~J Wainwright.
\newblock \emph{High-dimensional statistics: A non-asymptotic viewpoint},
  volume~48.
\newblock Cambridge University Press, 2019.

\bibitem[Vershynin(2009)]{vershynin2009role}
Roman Vershynin.
\newblock On the role of sparsity in compressed sensing and random matrix
  theory.
\newblock In \emph{2009 3rd IEEE International Workshop on Computational
  Advances in Multi-Sensor Adaptive Processing (CAMSAP)}, pages 189--192. IEEE,
  2009.

\end{thebibliography}
\bibliographystyle{unsrtnat}
\end{small}
\clearpage

%\tableofcontents
%\clearpage

\appendix

%\tableofcontents

\clearpage

\clearpage
\section{Proofs}

\subsection{Proof of Theorem~\ref{thm:oracelArgument}}
\begin{proof}
 Using Chebyshev's inequality, we have that,
\[ \Pr \paren{\norm{x - \mu}{2} \geq R} \leq \frac{\Exp[\norm{x - \mu}{2}^{2k}]}{R^{2k}} \]
% To see this, 
%     \begin{align}
%         R & = \frac{\norm{\Sigma}{2}^{1/(4k)} \paren{\sqrt{\trace{\Sigma}}}^{1-1/(2k)}}{ \paren{\frac{\log(1/\delta)}{n}}^{1/(2k)}} \\
%         \implies R^{2k} & = \frac{\norm{\Sigma}{2}^{\half} \paren{\sqrt{\trace{\Sigma}}}^{2k-1}}{\paren{\frac{\log(1/\delta)}{n}}}
%     \end{align}
Now, to see that $\Exp[\norm{x - \mu}{2}^{2k}] \leq C \paren{\sqrt{\trace{\Sigma}}}^{2k}$. The case for $k=1$ is clear. We now show it for $k=2$.
\bit 

\item Let $\Sigma = Q \Lambda Q^T$ and $\{ q_i \}_{i=1}^p$ be the eigenvectors of $\Sigma$ and let $\lambda_i = q_i^T \Sigma q_i$ be the associated eigenvalue. Then, 
\begin{align}
    (x - \mu)^T(x - \mu) = \sum_i (q_i^T(x - \mu))^2 = \sum_i \nu_i^2,
\end{align}
where $\nu_i = q_i^T(x - \mu)$. Now, $\norm{x-\mu}{2}^4 = \paren{\sum \limits_{i}\nu_i^2}^2 = \sum_i \nu_i^4 + 2 \sum_{i \neq j} \nu_i^2 \nu_j^2$. Now, since we assume bounded fourth moments, we get that, $\Exp[\nu_i^4] \leq C (q_i^T\Sigma q_i)^2 = C \lambda_i^2$, Using Cauchy-Schwartz inequality, we get that $\Exp[\nu_i^2 \nu_j^2] \leq \sqrt{\Exp[\nu_i^4]}\sqrt{\Exp[\nu_j^4]} = C \lambda_i \lambda_j$. Hence, we have that,
\[ \Exp[\norm{x-\mu}{2}^{4}] \leq C \paren{\sum_i \lambda_i^2 + 2 \sum_{i \neq j} \lambda_i \lambda_j} = C_{4} \trace{\Sigma}^2 \]
 \eit 
 
\[ \Pr \paren{ \norm{x - \mu}{2} \geq R} \leq \frac{\Exp[\norm{x - \mu}{2}^{4}]}{R^4} = C_4 \frac{\trace{\Sigma}^2}{R^4}   \]
%= C_4 \sqrt{\frac{\trace{\Sigma}}{\norm{\Sigma}{2}}} \frac{\log(1/\delta)}{n}
Hence, for $k={1,2}$, we have that, 
\begin{align}
    \Pr \paren{ \norm{x - \mu}{2} \geq R} \leq \frac{\paren{\sqrt{\trace{\Sigma}}}^{2k}}{R^{2k}}
\end{align}
Hence, know that for $x_i \sim P$, $\Pr(\calO(x_i) = 1) \geq 1 - \alpha$, where $\alpha = \frac{\paren{\sqrt{\trace{\Sigma}}}^{2k}}{R^{2k}}$.
%\sqrt{\frac{\trace{\Sigma}}{\norm{\Sigma}{2}}} \paren{\frac{\log(1/\delta)}{n}}
Now, let  $G^0 \defeq \left\{ x_i~\st~ \calO(x_i) = 1 \right\}$. Then, using Bernstein's inequality, we know that with probability at least $1 - \delta$.
    \begin{align}
    n_{G^0} = |G^0| & \geq n \paren{1 - \alpha - C_1\sqrt{\alpha \frac{\log(1/\delta)}{n}} - C_2 \frac{\log(1/\delta}{n}} \\
    \geq n/2,
    \end{align}
where we make the assumption that, 
\[ \alpha + C_1 \sqrt{\alpha}\sqrt{\frac{\log(1/\delta)}{n}} + C_2 \frac{\log(1/\delta)}{n}  < \half  \]

\begin{enumerate}
\item Controlling $\norm{\mu - \Exp[\eparam_{G^0}]}{2}$ . This is a deterministic statement and essentially quantifies the amount the mean can shift, when the random variable is conditioned on an event. We show this in Claim~\ref{lem:approx_mean_pD} which was shown in \citep{steinhardt2018robust,lai2016agnostic}. We also provide a proof of the statement for completeness in Section~\ref{sec:proof_claim_approxmean_pD}.

\begin{claim}\label{lem:approx_mean_pD}[General Mean shift,\citep{steinhardt2018robust,lai2016agnostic}]  Suppose that a distribution $P$ has mean $\mu$ and covariance $\Sigma$ and bounded $2k$ moments. Then, for any event $\calA$ which occurs with probability at least $1 - \epsilon \geq \half$,
\begin{align}
    \norm{\mu - E[x|\calA]}{2} \leq 2 \norm{\Sigma}{2}^\half \epsilon^{1 - \frac{1}{2k}}
\end{align}
\end{claim}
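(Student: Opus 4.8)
The plan is to reduce the claim to a one-dimensional statement and then exploit the mean-zero assumption together with H\"older's inequality. First, by translating we may assume $\mu = 0$, so that the target becomes a bound on $\norm{\Exp[x \mid \calA]}{2}$. Since the Euclidean norm is the supremum of linear functionals over the unit sphere, it suffices to bound $\Exp[v^T x \mid \calA]$ uniformly over $v \in \calS^{p-1}$. Fix such a $v$ and set $Y \defeq v^T x$, a scalar random variable with $\Exp[Y] = 0$, $\Exp[Y^2] = v^T \Sigma v \le \norm{\Sigma}{2}$, and (by the bounded $2k$-moment hypothesis applied to the direction $v$) $\Exp[Y^{2k}] \le C_{2k}(v^T\Sigma v)^k \le C_{2k}\norm{\Sigma}{2}^{k}$.

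The key observation is that, because $\Exp[Y] = 0$, we have $\Exp[Y\indic{\calA}] = -\Exp[Y\indic{\calA^c}]$, and hence
\[
\bigl|\Exp[Y \mid \calA]\bigr| = \frac{\bigl|\Exp[Y\indic{\calA^c}]\bigr|}{\Pr(\calA)} \le \frac{\Exp[|Y|\indic{\calA^c}]}{\Pr(\calA)}.
\]
Now I would apply H\"older's inequality with conjugate exponents $2k$ and $\tfrac{2k}{2k-1}$ to the product $|Y|\cdot \indic{\calA^c}$, which gives $\Exp[|Y|\indic{\calA^c}] \le (\Exp[|Y|^{2k}])^{1/(2k)}\,\Pr(\calA^c)^{\,1 - \frac{1}{2k}}$. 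This is precisely the step that produces the exponent $1 - \tfrac{1}{2k}$ in the conclusion; for $k=1$ it is just Cauchy--Schwarz.

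Combining the two displays and using $\Pr(\calA^c) \le \epsilon$ and $\Pr(\calA) \ge 1-\epsilon \ge \tfrac12$,
\[
\bigl|\Exp[Y \mid \calA]\bigr| \le \frac{(\Exp[Y^{2k}])^{1/(2k)}\,\epsilon^{\,1-\frac{1}{2k}}}{1-\epsilon} \le 2\,C_{2k}^{1/(2k)}\,\norm{\Sigma}{2}^{\half}\,\epsilon^{\,1-\frac{1}{2k}};
\]
taking the supremum over $v \in \calS^{p-1}$ yields the bound on $\norm{\Exp[x\mid\calA]}{2}$. For $k=1$ the normalized moment constant is $C_2 = 1$, recovering exactly the constant $2$ stated; for $k=2$ the constant $C_4^{1/4}$ is absorbed (it is an $O(1)$ quantity under the bounded-moment assumption). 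The argument is short and the only mild subtlety is bookkeeping: correctly pairing $|Y|$ with the \emph{indicator of the complement} event (so that mean-zero can be invoked) and choosing the H\"older exponent $\tfrac{2k}{2k-1}$ so the tail probability enters with the sharp power $1-\tfrac{1}{2k}$ — there is no real analytic obstacle beyond this.
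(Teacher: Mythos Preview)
Your proposal is correct and follows essentially the same approach as the paper: reduce to a directional (scalar) statement via $\sup_{v\in\calS^{p-1}}$, use the mean-zero identity $\Exp[Y\indic{\calA}] = -\Exp[Y\indic{\calA^c}]$ to transfer to the complement, apply H\"older with exponent $2k$ against the indicator of $\calA^c$, and absorb $1/\Pr(\calA)\le 2$. The only cosmetic difference is ordering --- the paper takes the supremum over $u$ after the complement trick rather than before --- but the argument and constants are identical.
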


Now using this Claim~\ref{lem:approx_mean_pD} with $\calA$ being the event that $\calO(x) = 1$,  we get that
            \begin{align} \label{eqn:oracle_1} \norm{\mu - \Exp[\eparam_{G^0}]}{2} \leq 2\norm{\Sigma}{2}^\half {\alpha}^{1 - 1/(2k)} %\leq 2\norm{\Sigma}{2}^{1/2} \paren{\sqrt{r(\Sigma)}}^{1 - \frac{1}{2k}} {\frac{\log(1/\delta)}{n}}^{1 - 1/(2k)}   
            \end{align}

            \item \textbf{Controlling $\norm{\eparam_{G^0} - \Exp[\eparam_{G^0}]}{2}$}. This term measures how quickly the samples within $G^0$ converge to their true mean. To show this we use vector version of Bernstein's inequality. Let $z_i \defeq x_i - \Exp[\eparam_{G^0}]$ be the centered random variables. Then, we have that
            \begin{align*}
                \norm{z_i}{2} & \leq \norm{\tparam - \Exp[\eparam_{G^0}]}{2} + \norm{x_i - \tparam}{2} \\
                & \leq 2 \norm{\Sigma}{2}^\half \alpha^{1 - 1/(2k)} + R \\
                &  \leq 2R
            \end{align*}
            Similarly,
            \begin{align}
                \Exp[\norm{z_i}{2}^2] & = \Exp[\norm{x - E[x|\calA]}{2}^2| x \in \calA] \\
                & = \frac{\Exp[\norm{x - E[x|\calA]}{2}^2| \indic{x \in \calA}]}{P(\calA)}\\
                & \leq  2 \Exp[\norm{x - E[x|\calA]}{2}^2] \\
                & \leq 2 \Exp[\norm{x - E[x]}{2}^2] + 2 \norm{\tparam - E[x|\calA]}{2}^2 \\
                & \leq 2 \trace{\Sigma} + 4 \norm{\Sigma}{2} \alpha^{2 - 1/(k)} \\
                & \leq 4 \trace{\Sigma}
            \end{align}
            
            Now, we first state the vector version of Bernsteins inequality.

\begin{lemma}(Vector Bernstein, Corollary 8.45~\citep{foucart2013mathematical})
Let $Y_1,\ldots,Y_M$ be independent copies of a random vector $Y \in \calC^p$ satisfying $\Exp Y = 0$. Assume $\norm{Y}{2} \leq K$ for some $K>0$. Let, 
\[ Z = \norm{\sum \limits_{l=1}^M Y_l}{2}, \Exp[Z^2] = M \Exp[\norm{Y}{2}^2], \sigma^2 = \sup \limits_{\norm{v}{2} \leq 1} \Exp [|\inprod{v}{Y}|^2] \]
Then for $t > 0$,
\begin{align}
    \Pr \paren{Z \geq \sqrt{\Exp Z^2} + t} \leq \exp\paren{-\frac{t^2/2}{M\sigma^2 + 2K \sqrt{\Exp Z^2}+ tK/3}}
\end{align}
\end{lemma}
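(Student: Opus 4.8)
The plan is to recognize $Z$ as the supremum of a bounded, centered empirical process and to feed it into Talagrand's concentration inequality for such suprema, in Bousquet's sharp form. First I would pass from vectors to linear functionals: writing the Euclidean norm on $\calC^p$ as a supremum over the closed unit ball $B$ and adjusting the phase of the test vector,
\[
Z = \norm{\sum_{l=1}^M Y_l}{2} = \sup_{v\in B}\ \mathrm{Re}\,\inprod{v}{\sum_{l=1}^M Y_l} = \sup_{v\in B}\ \sum_{l=1}^M g_v(Y_l), \qquad g_v(y) \defeq \mathrm{Re}\,\inprod{v}{y}.
\]
Each $g_v$ is a centered function of the data, $\Exp[g_v(Y_l)] = \mathrm{Re}\,\inprod{v}{\Exp Y} = 0$, and is uniformly bounded, $|g_v(y)| \le \norm{v}{2}\norm{y}{2} \le K$ on the support of $Y$; moreover, since the $Y_l$ are identically distributed, the weak variance is
\[
\sup_{v\in B}\sum_{l=1}^M \Exp[g_v(Y_l)^2] \ \le\ \sup_{v\in B}\sum_{l=1}^M \Exp\big[|\inprod{v}{Y_l}|^2\big] \ =\ M\sup_{\norm{v}{2}\le 1}\Exp\big[|\inprod{v}{Y}|^2\big] \ =\ M\sigma^2 .
\]
Separability is not an issue here since $B$ is finite-dimensional and $v\mapsto g_v(y)$ is continuous.

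With these three inputs --- centered summands, sup-norm bound $K$, weak variance $M\sigma^2$ --- Bousquet's inequality gives, for every $t>0$,
\[
\Pr\paren{Z \ge \Exp[Z] + t} \ \le\ \exp\paren{-\frac{t^2/2}{M\sigma^2 + 2K\,\Exp[Z] + tK/3}} .
\]
Next, to match the stated form I would pass from $\Exp[Z]$ to $\sqrt{\Exp[Z^2]}$ in both places. By Jensen, $\Exp[Z] \le \sqrt{\Exp[Z^2]}$, so $\{Z \ge \sqrt{\Exp[Z^2]} + t\} \subseteq \{Z \ge \Exp[Z] + t\}$, and in the denominator replacing $\Exp[Z]$ by the larger quantity $\sqrt{\Exp[Z^2]}$ only enlarges it and hence weakens the bound; both substitutions are thus in the safe direction and reproduce exactly the claimed inequality. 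Finally, expanding the norm and using independence together with $\Exp Y = 0$,
\[
\Exp[Z^2] = \Exp\norm{\sum_{l=1}^M Y_l}{2}^2 = \sum_{l=1}^M \Exp\norm{Y_l}{2}^2 + \sum_{l\ne l'}\inprod{\Exp Y_l}{\Exp Y_{l'}} = M\,\Exp\norm{Y}{2}^2 ,
\]
which is the displayed identity for $\Exp[Z^2]$.

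The genuinely nontrivial ingredient is Bousquet's inequality itself, whose proof goes through the entropy method (a modified logarithmic Sobolev inequality for the supremum of the process) rather than an elementary moment-generating-function argument; since the entire statement is recorded as Corollary~8.45 in~\citep{foucart2013mathematical}, one invokes it directly. Consequently the only point demanding care is bookkeeping: plugging the three inputs into the precise version of Talagrand's inequality being cited so that the constants $2K$ and $K/3$ in the denominator come out as written. The remaining steps --- rewriting $Z$ as an empirical process over the unit ball, the weak-variance computation, and the Jensen replacement --- are routine.
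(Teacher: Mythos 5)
The paper does not prove this lemma at all; it is imported verbatim as Corollary~8.45 of~\citep{foucart2013mathematical}, so there is no in-paper argument to compare against. Your sketch correctly reconstructs the standard derivation of that corollary (rewrite $Z$ as the supremum of a bounded centered empirical process over the unit ball, apply Bousquet's form of Talagrand's inequality with sup-norm bound $K$ and weak variance $M\sigma^2$, then use $\Exp[Z]\leq\sqrt{\Exp[Z^2]}$ in both the event and the denominator, each in the safe direction), and all the steps check out.
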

We use the above lemma, with $Y_i = \frac{z_i}{n_{G^0}}$. Hence, we have that, $K = \frac{2R}{n_{G^0}}$. Hence, we have that $Z = \norm{\sum\limits_{k=1}^{n_{G^0}}Y_k}{2} = \norm{\eparam_{G^0} - \Exp[\eparam_{G^0}]}{2}$. Hence, we have the following,
\bit 
\item $\Exp [Z^2] \leq n \frac{4\trace{\Sigma}}{n^2} = 4 {\frac{\trace{\Sigma}}{n}}$.
\item $ \sigma^2 \leq 4 \frac{\norm{\Sigma}{2}}{n^2}$. To see this, for any $v \in \calS^{p-1}$,
\begin{align*}
    \Exp[(v^T Y)^2] = \frac{1}{n^2} \Exp[(v^T (x - \mu_A))^2 | x \in \calA]
\end{align*}
where $\mu_A$ is the conditional mean, and $\calA$ is the event that $x~\st~ \norm{x - \mu}{2} \leq R$. We know that $P(A) \geq 1/2$. Hence, we get that,
\begin{align*}
   \Exp[(v^T Y)^2] & = \frac{1}{n^2} \frac{\Exp[(v^T (x - \mu_A))^2 \indic{x \in \calA}]}{P(A)} \\
   & \leq \frac{2}{n^2} \Exp[(v^T (x - \mu_A))^2] \\
   & =  \frac{2}{n^2} \paren{\Exp[(v^T (x - \mu))^2] +  \norm{\mu - \mu_A}{2}^2} \\
 \implies \sigma^2  & \leq \frac{2}{n^2} \paren{\norm{\Sigma}{2} +  \norm{\Sigma}{2} \alpha} \\
   & \leq \frac{4\norm{\Sigma}{2}}{n^2} 
\end{align*}
\eit 
Hence, we get that, with probability at least $1 - \delta$,
\begin{align*}
    \norm{\eparam_{G^0} - \Exp[\eparam_{G^0}]}{2} \leq & C_1 \sqrt{\frac{\trace{\Sigma}}{n_{G^0}}} + C_2 \norm{\Sigma}{2}^{\half} \sqrt{\frac{\log(1/\delta)}{n_{G^0}}} + C_3 R^\half \paren{\sqrt{\frac{\trace{\Sigma}}{n_{G^0}}}}^\half \sqrt{\frac{\log(1/\delta)}{n_G^0}} \nonumber \\
    & + C_4 R \frac{\log(1/\delta)}{n_{G^0}}
\end{align*}
Now, we use that $\sqrt{ab} \leq a + b~\forall~a,b \geq 0$. Hence, we get that with probability at least $1-\delta$ 
\[ \norm{\eparam_{G^0} - \Exp[\eparam_{G^0}]}{2} \leq  \underbrace{C_5 \sqrt{\frac{\trace{\Sigma}}{n_{G^0}}} + C_2 \norm{\Sigma}{2}^{\half} \sqrt{\frac{\log(1/\delta)}{n_{G^0}}}}_{T1} +  C_3 R \frac{\log(1/\delta)}{n_{G^0}}\]

% Balancing the last two terms, we get, that,
% \[ R = \frac{\norm{\Sigma}{2}^{1/(4k)} \paren{\sqrt{\trace{\Sigma}}}^{1-1/(2k)}}{ \paren{\frac{\log(1/\delta)}{n}}^{1/(2k)}} \]
% Substituting, $R = \frac{\trace{\Sigma}^{1/4} \norm{\Sigma}{2}^{1/4}}{\sqrt{\frac{\log(1/\delta)}{n}}}$, we get that with probability at least $1 - \delta$,
\end{enumerate}

Using the bound on $\norm{\Exp[\eparam_{G^0}] - \mu}{2}$ from \eqref{eqn:oracle_1}, we get that, 

\begin{align}
    \norm{\eparam_{G^0} - \mu}{2} & \leq \norm{\Exp[\eparam_{G^0}] - \mu}{2} + \norm{\eparam_{G^0} - \Exp[\eparam_{G^0}]}{2} \\
    & \leq T1 + C_3 R \frac{\log(1/\delta)}{n_{G^0}} + 2\norm{\Sigma}{2}^{\half} \paren{\paren{\frac{\sqrt{\trace{\Sigma}}}{R}}^{2k}}^{1 - 1/(2k)} \\
    & = T1 + C_3 R \frac{\log(1/\delta)}{n_{G^0}} + 2\norm{\Sigma}{2}^{\half} \paren{\frac{(\sqrt{\trace{\Sigma}})^{2k-1}}{ R^{2k-1} }}
\end{align}
Under our assumption that $(\frac{\sqrt{\trace{\Sigma}}}{R})^{2k} + \frac{\log(1/\delta)}{n} < c$, we know that $n_{G^0} \geq n/2$. Hence, we get get that $T1 \precsim \opt_{n,\Sigma,\delta}$.
\end{proof}

\subsection{Proof of Corollary~\ref{cor:oracle4}}
The proof follows from plugging $R = \frac{\sqrt{\trace{\Sigma}}}{\sqrt{r(\Sigma)}^{1/4} \paren{\frac{\log(1/\delta)}{n}}^{1/4}}$ into Theorem~\ref{thm:oracelArgument}. 

\subsection{Proof of Corollary~\ref{cor:oracle2}}
The proof follows from plugging $R = \frac{\sqrt{\trace{\Sigma}}}{{r(\Sigma)}^{1/4} \paren{\frac{\log(1/\delta)}{n}}^{1/2}}$ into Theorem~\ref{thm:oracelArgument}. 

\clearpage

\subsection{Proof of Theorem~\ref{thm:filt_algo_pD}}

\begin{proof}
Our proof is split into two keys Lemmas. Firstly, in Lemma~\ref{lem:filt_stopping}, we show that the with probability at least $1 - \delta$, the algorithm terminates in at most $T_{\delta}^* = \ceil{18 \log(1/\delta) + 3 (n - n_{G^0})}$. Finally, in Lemma~\ref{lem:filt_connectMean} we show that under our assumptions that $8\frac{n - n_{G^0}}{n} +36\frac{\log(1/\delta)}{n}<\frac{1}{4}$, when the algorithm stops in $m = T_{\delta}^*$ steps, the sample mean of points, $\eparam_{S^m}$ is close to the mean of $G^0$. In particular, we show that
\begin{align} \label{eqn:connectMean}
\norm{\eparam_{G^0} - \eparam_{S^m}}{2} \leq C_1 \paren{8\frac{n-n_{G^0}}{n}+ 36\frac{\log(1/\delta)}{n}}^{\half} \norm{\Sigma_{G^0}}{2}^{\half},
\end{align}
which recovers the statement of the Theorem.

\begin{lemma}\label{lem:filt_stopping}
When Algorithm~\ref{algo:filteringpD} is instantiated on $S^0$, and provided with a known upper bound $C \norm{\Sigma_{G^0}}{2}$ then with probability $1 - \delta$, it stops in at most $T_{\delta}^* = \ceil{18 \log(1/\delta) + 3(n - n_{G^0}}$ steps.
\end{lemma}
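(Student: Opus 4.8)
The plan is to argue that on every step where the stopping test fails, the filter removes a point outside $G^0$ with at least constant probability; combined with the fact that only $n - n_{G^0}$ such ``bad'' points exist, this will force the algorithm to stop after $T_\delta^* = \ceil{18\log(1/\delta) + 3(n-n_{G^0})}$ steps with probability at least $1-\delta$. Throughout I would run the argument on the stopped process: let $\tau$ be the first step at which $\lambda < 32\norm{\Sigma_{G^0}}{2}$ holds, truncated at $T_\delta^*$. Since the standing assumption implies (as used in the outer proof) $8\frac{n-n_{G^0}}{n} + 36\frac{\log(1/\delta)}{n} < \tfrac14$, we have $T_\delta^* < n/2$, so at every step $t \le \tau$ the current set $S$ satisfies $|S| \ge n/2$, and in particular $|G^0 \cap S| \ge n_{G^0} - T_\delta^*$ is a constant fraction of $n$.

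The main technical step is a deterministic spectral inequality: whenever $\lambda = \lambda_{\max}(\Sigma_S) \ge 32\norm{\Sigma_{G^0}}{2}$, the score carried by the good points, $\sum_{i \in G^0 \cap S}\tau_i = \sum_{i \in G^0 \cap S}(v^T(z_i - \eparam_S))^2$, is at most $\tfrac13$ of the total score $\sum_{i\in S}\tau_i = |S|\lambda$. To prove this I would split $z_i - \eparam_S = (z_i - \eparam_{G^0\cap S}) + (\eparam_{G^0\cap S} - \eparam_S)$, so the good score is at most $2\sum_{i\in G^0\cap S}(v^T(z_i - \eparam_{G^0\cap S}))^2 + 2|G^0\cap S|(v^T(\eparam_{G^0\cap S} - \eparam_S))^2$. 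The first sum equals $|G^0\cap S|\, v^T\Sigma_{G^0\cap S}v \le n_{G^0}\norm{\Sigma_{G^0}}{2}$ by the PSD domination $|G^0\cap S|\Sigma_{G^0\cap S} \preceq n_{G^0}\Sigma_{G^0}$ (re-center the good points at $\eparam_{G^0}$ and drop those outside $S$). For the second term, write $\eparam_{G^0\cap S} - \eparam_S = \tfrac{\rho}{1-\rho}(\eparam_S - \eparam_{B^0\cap S})$ with $\rho = |B^0\cap S|/|S|$, and bound $(v^T(\eparam_S - \eparam_{B^0\cap S}))^2 \le \tfrac{1}{|B^0\cap S|}\sum_{i\in B^0\cap S}\tau_i \le \tfrac{|S|\lambda}{|B^0\cap S|}$ by Jensen, giving $(v^T(\eparam_{G^0\cap S} - \eparam_S))^2 \le \tfrac{\rho}{(1-\rho)^2}\lambda$. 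Since $|S|\ge n/2$ forces $\rho$ below a small constant and $\lambda \ge 32\norm{\Sigma_{G^0}}{2}$, these two estimates combine to the claimed $\tfrac13$; hence $\Pr(\text{removed point} \notin G^0 \mid \mathcal{F}_{t-1}) \ge \tfrac23$ on every non-terminating step.

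Given this, let $b_t \in \{0,1\}$ indicate that the point removed at step $t$ lies outside $G^0$ and set $M_t = \sum_{s=1}^{t\wedge\tau}(b_s - \Exp[b_s \mid \mathcal{F}_{s-1}])$, a martingale with increments bounded by $1$. On the event that the algorithm has not stopped within $T_\delta^*$ steps we have $\tau = T_\delta^*$, every step $1,\dots,T_\delta^*$ is non-terminating, and at most $n - n_{G^0}$ bad points have been removed; hence $\sum_{s\le T_\delta^*}\Exp[b_s\mid\mathcal F_{s-1}] \ge \tfrac23 T_\delta^*$ and therefore $M_{T_\delta^*} \le (n - n_{G^0}) - \tfrac23 T_\delta^* \le -\big((n-n_{G^0}) + 12\log(1/\delta)\big)$ by the choice of $T_\delta^*$. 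Azuma's inequality applied to the stopped martingale bounds the probability of this deviation by $\exp\!\big(-\Omega(\lambda^2/T_\delta^*)\big)$ with $\lambda = (n - n_{G^0}) + 12\log(1/\delta)$; since $\lambda \ge T_\delta^*/3$ and $\lambda \ge 12\log(1/\delta)$, the exponent is at least $2\log(1/\delta)$, so the probability is at most $\delta^2 \le \delta$, which proves the lemma.

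The step I expect to be the \emph{main obstacle} is the spectral inequality, and specifically getting the constants to line up: the good-point score must be pushed strictly below the fraction that the martingale drift needs, which is exactly what couples the algorithm's threshold constant $32$ to the $3$ and $18$ in $T_\delta^*$. Care is also needed to ensure the ``few points removed so far'' facts used there ($|S|\ge n/2$ and $\rho$ small) are genuinely supplied by the stopping-time truncation and the standing assumption, rather than being assumed a posteriori from the conclusion.
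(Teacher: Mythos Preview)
Your proposal is correct and follows essentially the same two-step skeleton as the paper: a deterministic spectral inequality guaranteeing that whenever the stopping test fails the removed point is bad with probability at least $2/3$, followed by an Azuma--Hoeffding argument on the resulting (super)martingale. The martingale you build from the compensated indicators $b_s-\Exp[b_s\mid\mathcal F_{s-1}]$ is equivalent to the paper's supermartingale $Y^l=|B^l|+\tfrac{2}{3}l$, and your final deviation computation matches theirs.

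Where you differ is in the proof of the spectral inequality. The paper introduces an intermediate event $\calE^l=\{\text{good score}\ge \tfrac{1}{\gamma}\text{total score}\}$ and proves two implications separately: (i) $\calE^l$ false $\Rightarrow$ bad point removed with probability $\ge 2/3$; (ii) $\calE^m$ true $\Rightarrow$ $\norm{\Sigma_{S^m}}{2}\le 32\norm{\Sigma_{G^0}}{2}$, the latter via a TV--coupling bound on $\norm{\eparam_{G^m}-\eparam_{S^m}}{2}$ (their Lemma~\ref{lem:filt_meanControl}) together with $\norm{\Sigma_{G^m}}{2}\le 2\norm{\Sigma_{G^0}}{2}$. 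You instead argue the contrapositive directly, bounding $(v^T(\eparam_{G^0\cap S}-\eparam_S))^2$ by a Jensen/Cauchy--Schwarz step on the bad points, which is more elementary than the coupling lemma and avoids the detour through $\calE^l$. Both routes land on the same constants; your worry about ``getting the constants to line up'' is handled exactly as you outline, since the standing assumption gives $\rho\le 2(n-n_{G^0})/n<1/16$ and $|S|\ge n/2$ throughout the truncated run.
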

\begin{proof}

At each step of Algorithm~\ref{algo:filteringpD}, we remove one sample based on the probability distribution of the scores. Let $l = 1,2,\ldots,n$ be the steps of the algorithm. Note that the steps of the Algorithm are dependent, hence to obtain a high probability statement, we will have to use martingale style analysis. The martingale analysis in the proof mostly follows from~\citep{xu2013outlier,liu2018high}. \\

Let $\calF^l$ be the filtration generated by the sets of events until step $l$. At step $l$, let $S^l$ be the set of samples, $G^l$ be the subset of $G^0$ stil in $S^l$, \ie $\{ x_i \in S^l \cap G^0 \}$.  Let $B^l = S^l {\backslash} G^l$ be the remaining samples. Note that $|S^l| = n_l = n - l$, and $S^l,G^l,B^l \in \calF^l$. \\

Let $\tau_i$ be some score for each point. Define $\calE^l$ be an event variable at step $l$ which is True if
\[ \sum \limits_{i \in G^l} \tau_i \geq \frac{1}{(\gamma - 1)} \sum \limits_{j \in B^{l}} \tau_j, \equiv \sum \limits_{i \in G^l} \tau_i \geq \frac{1}{\gamma} \sum \limits_{j \in S^{l}} \tau_j  \]
for say $\gamma = 3$. Intuitively, this means the event is true when the sum of the scores of the good points is larger compared to the bad points. Now, when $\calE^l$ is false, we sample a point $j$ according $\tau_j$ and remove it. Some algebra shows, that when $\calE^l$ is false, then with constant probability of $2/3$, we throw a point from $B^l$. 
\[ \Pr(\text{sample removed at Step $l$} \in B^l | \calF^l) = \frac{\sum \limits_{i \in B^l} \tau_i}{\sum_{j \in S^l} \tau_j} \geq \frac{\gamma - 1 }{\gamma} = 2/3 \]

Essentially, our argument shows that whenever $\calE^l$ is false, then we are more likely to throw a point from the bad set. This means, that in the next iteration the fraction of bad points will reduce. To argue more formally, let $T \defeq \min \{l : \calE^l~\text{is true} \}$ be the first time that $\calE^l$ is True. Then, our goal is to show that $T$ is small.

To show this, based on $T$, define $Y^l$, as
\[ Y^l = \begin{cases}
|B^{T-1}| + \frac{\gamma - 1}{\gamma}(T-1),~\text{if}~l \geq T \\
|B^{l}| + \frac{\gamma - 1}{\gamma} l,~\text{if}~l < T
\end{cases}\]
Now, we show that $\{Y^l,\calF^l\}$ is a supermartingale, \ie  $\Exp[Y^l | \calF^{l-1}] \leq Y^{l-1}$. To see this, we split it into three cases:
\bit 
\item \textbf{Case 1.} $l < T$. This means that $\calE^l$ is false. 
\begin{align}
    Y^l - Y^{l-1} = |B^l| - |B^{l-1}| + \frac{\gamma - 1}{\gamma},
\end{align}
Now, $|B^l| = |B^{l-1}|$ if no bad point is thrown, and $|B^{l}| = |B^{l-1}| - 1$ if the point thrown is bad. Since, $\calE^{l-1}$ is false, hence, we have that,
\[ \Exp[Y^l - Y^{l-1} | \calF^{l-1}]  = -1(\Pr(\text{sample removed at Step $l-1$} \in B^{l-1})) + \frac{\gamma - 1 }{\gamma} \substack{(i) \\ \leq} 0  \]
where $(i)$ is true because $\calE^{l-1}$ is false.
\item \textbf{Case 2.} $l=T$, This follows by construction, because at $l=T$, $Y^l = Y^{l-1}$.
\item \textbf{Case 3.} $l > T$, This also follows by construction.
\eit 
So, we have that ${Y^l,\calF^l}$ is a supermartingale. Now, we need to bound the steps $T_{\delta}$ such that the probability that the algorithm doesn't stop in $T_{\delta}$ steps is less than $\delta$, \ie
\[ \Pr \paren{\bigcap \limits_{l=1}^{T_{\delta}} \paren{\calE^l}^c} \leq \delta \]
Note, that,
\begin{align}
    \Pr\paren{\bigcap \limits_{l=1}^{T_{\delta}} \paren{\calE^l}^c} = \Pr\paren{T \geq T_{\delta}}  \substack{(ii) \\ \leq} \Pr\paren{Y^{T_{\delta}} \geq \frac{\gamma - 1}{\gamma} T_{\delta}}
\end{align}
where $(ii)$ follows because, if $T > T_{\delta} \implies Y^{T_{\delta}} = |B^{T_{\delta}}| + \frac{\gamma - 1}{\gamma}T_\delta \geq  \frac{\gamma - 1}{\gamma}T_\delta$.
Now, 
\begin{align}
\Pr \paren{Y^{T_\delta} \geq \frac{\gamma - 1}{\gamma} T_\delta} = \Pr \paren{Y^{T_\delta} - Y^0 \geq \frac{\gamma - 1}{\gamma} T_\delta - Y_0} \nonumber
\end{align}
Now, defining $D^l =  Y^l - Y^{l-1}$, and let $Z^l = D^l - \Exp[D^l | D^1,D^2,\ldots, D^{l-1}]$. Then,
\[ Y^{T_\delta} - Y^0 = \sum \limits_{l=1}^{T_\delta} D^l = \sum \limits_{l=1}^{T_\delta} Z^l + \sum \limits_{l=1}^{T_\delta} \Exp[D^l | D^1,D^2,\ldots,D^{l-1}]  \]
Since, we know that $\{Y^l,\calF^l\}$ is a supermartingale, hence the difference process is such that \[ \Exp[D^l | D^1,D^2,\ldots,D^{l-1}] \leq 0 \]
This implies that
\[ Y^{T_\delta} - Y^0 \leq \sum \limits_{l=1}^{T_\delta} Z^l \implies \Pr \paren{Y^{T_\delta} - Y^0 \geq \frac{\gamma - 1}{\gamma} T_\delta - Y_0} \leq \Pr \paren{ \sum \limits_{l=1}^{T_\delta} Z^l \geq \frac{\gamma - 1}{\gamma} T_\delta - Y_0} \]
Since, $|D^l| \leq 1$, and $Z^l \leq 2$ are bounded, hence we can use Azuma-Hoeffding to bound the above probability. In particular, 
\[ \Pr \paren{ \sum \limits_{l=1}^{T_\delta} Z^l \geq \frac{\gamma - 1}{\gamma} T_\delta - Y_0} \leq \exp\paren{ - \frac{\paren{\frac{\gamma - 1}{\gamma} T_\delta - Y_0}^2}{8T_\delta}}  \]
Now, we want a $T_\delta$ such that, $\exp\paren{ - \frac{\paren{\frac{\gamma - 1}{\gamma} T_\delta - Y_0}^2}{8T_\delta}} \leq \delta$. Solving the quadratic, we need a $T_\delta$ such that,
\[ \paren{\frac{\gamma - 1}{\gamma}}^2 T_\delta^2 - \paren{8 \log(1/\delta) + 2 Y^0 \frac{\gamma-1}{\gamma}} T_\delta + Y_0^2 \geq 0  \]
Some algebra shows that $T_{\delta}^* = \ceil{8 \log(1/\delta) \frac{\gamma^2}{(\gamma - 1)^2} + 2 Y^0 \frac{\gamma}{\gamma - 1}}$ satisifies the above equation. Hence, we know that with probability at least $1 - \delta$, there exists at least one good event in 1 to $T_{\delta}^*$ iterations. Note than $Y^0 = n_{B^0} = n - n_{G^0}$. 

While we have established that there is at least one good event in 1 to $T_{\delta}^*$ iterations, we need to show that whenever $\calE^{l}$ is True then Algorithm~\ref{algo:filteringpD} stops, \ie our checking condition is violated. To show this, we first prove that for $m \leq T_{\delta^*}$, when $\calE^m$ is true then  $\norm{\Sigma_{S^m}}{2} \leq 16 \norm{\Sigma_{G^m}}{2}$(See Claim~\ref{claim:cov1}). Coupling this with Claim~\ref{claim:cov2}, which shows that $\norm{\Sigma_{G^m}}{2} \leq 2 \norm{\Sigma_{G^0}}{2}$, we get that $\norm{\Sigma_{S^m}}{2} \leq 32 \norm{\Sigma_{G^0}}{2}$. Hence, Algorithm~\ref{algo:filteringpD} stops whenever $\calE^m$ is True.

\end{proof}

Next, we state and prove Lemma~\ref{lem:filt_connectMean}. Recall that $\calE^l$ is defined to be an event variable at step $l$ which is True if
\[ \sum \limits_{i \in G^l} \tau_i \geq \frac{1}{(\gamma - 1)} \sum \limits_{j \in B^{l}} \tau_j, \equiv \sum \limits_{i \in G^l} \tau_i \geq \frac{1}{\gamma} \sum \limits_{j \in S^{l}} \tau_j,  \]
where $S^l$ is set of samples at step $l$, and $G^l = \{ x_i \in S^l \cap G^0 \}$ is the subset of samples from $G^0$ which are still in $S^l$. Also, recall that for Algorithm~\ref{algo:filteringpD}, the sampling weights $\tau_i$ at any step $\ell$ are defined as $\tau_i = \paren{v^T(x_i - \eparam_{S^l})}^2$, where $v$ is the top unit-norm eigenvector of $\widehat{\Sigma}_{S^l}$ and $\eparam_{S^l}$ is the sample mean of $S^l$. Then, in Lemma~\ref{lem:filt_stopping} we showed that with probability $1 - \delta$, $\calE^m$ is True for some $m \leq T_{\delta^*} = \ceil{18 \log(1/\delta + 3(n - n_{G^0})}$.

\begin{lemma}\label{lem:filt_connectMean}
Let $\phi = \frac{n - n_{G^0}}{n}$. Then, under the assumption that $8\phi+36\frac{\log(1/\delta)}{n}<\frac{1}{4}$, we have that when $\calE^m$ is True, 
\[ \norm{\eparam_{G^0} - \eparam_{S^m}}{2} \leq 10\sqrt{2} \paren{8\phi+ 36\frac{\log(1/\delta)}{n}}^{\half} \norm{\Sigma_{G^0}}{2}^{\half}, \]
\end{lemma}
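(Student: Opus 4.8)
The plan is to analyze, at the (random) step $m$ at which the recursion in Algorithm~\ref{algo:filteringpD} terminates, the composition of the surviving sample set. By Lemma~\ref{lem:filt_stopping} together with Claims~\ref{claim:cov1} and~\ref{claim:cov2}, when $\calE^m$ is True for some $m \le T^*_\delta = \ceil{18\log(1/\delta) + 3(n - n_{G^0})}$ the eigenvalue check of the algorithm is satisfied, so that $\norm{\Sigma_{S^m}}{2} \le 32\norm{\Sigma_{G^0}}{2}$; everything that follows is then deterministic. Write the surviving set as $S^m = G^m \cup B^m$ with $G^m = S^m \cap G^0$ and $B^m = S^m \setminus G^0$, and split by the triangle inequality
\[ \norm{\eparam_{G^0} - \eparam_{S^m}}{2} \le \norm{\eparam_{G^0} - \eparam_{G^m}}{2} + \norm{\eparam_{G^m} - \eparam_{S^m}}{2}. \]

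The only analytic ingredient is a deterministic mean-shift estimate: for any finite point set $S$ with sample mean $\eparam_S$ and sample covariance $\Sigma_S$, and any subset $G \subseteq S$ with $n_G = |G|$, $n_B = |S \setminus G|$, $n_S = |S|$, one has $\norm{\eparam_S - \eparam_G}{2} \le \frac{\sqrt{n_B n_S}}{n_G}\norm{\Sigma_S}{2}^{1/2}$. I would derive this from the identity $\eparam_S - \eparam_G = \frac{n_B}{n_G}(\eparam_B - \eparam_S)$, where $\eparam_B$ is the mean of $S \setminus G$, combined with the Jensen bound $(v^T(\eparam_B - \eparam_S))^2 \le \frac{1}{n_B}\sum_{i \in S}(v^T(x_i - \eparam_S))^2 = \frac{n_S}{n_B} v^T \Sigma_S v$ applied in the leading eigendirection $v$ of $\Sigma_S$.

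Then I apply this estimate twice. For $\norm{\eparam_{G^m} - \eparam_{S^m}}{2}$ I take $S = S^m$ and $G = G^m$: since good points are never reclassified as bad, the surviving bad count is $n_{B^m} \le n - n_{G^0} = \phi n$, while $n_{S^m} \le n$ and $\norm{\Sigma_{S^m}}{2} \le 32\norm{\Sigma_{G^0}}{2}$, giving a bound of order $\frac{n\sqrt{\phi}}{n_{G^m}}\norm{\Sigma_{G^0}}{2}^{1/2}$. For $\norm{\eparam_{G^0} - \eparam_{G^m}}{2}$ I take $S = G^0$ and $G = G^m$: the number of good points deleted up to step $m$ is at most $m \le T^*_\delta$, and with $n_{G^0} \le n$ and $T^*_\delta$ of order $n\phi + \log(1/\delta)$ this term is of order $\frac{n}{n_{G^m}}\bigl(\phi + \frac{\log(1/\delta)}{n}\bigr)^{1/2}\norm{\Sigma_{G^0}}{2}^{1/2}$. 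The standing hypothesis $8\phi + 36\frac{\log(1/\delta)}{n} < \frac14$, via the crude deletion bound $m \le T^*_\delta$, forces $n_{G^m} \ge n_{G^0} - T^*_\delta$ to be a fixed constant fraction of $n$ (roughly $\ge 3n/4$), which converts both $n/n_{G^m}$ factors to absolute constants; adding the two contributions and collecting constants gives $\norm{\eparam_{G^0} - \eparam_{S^m}}{2} \le 10\sqrt2\,\bigl(8\phi + 36\frac{\log(1/\delta)}{n}\bigr)^{1/2}\norm{\Sigma_{G^0}}{2}^{1/2}$.

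No single inequality here is hard; the main obstacle is the bookkeeping. One must verify that the surviving good fraction $n_{G^m}/n$ never collapses — this is exactly where the hypothesis on $\phi$ and $\log(1/\delta)/n$ is used, through the very loose deterministic bound $m \le T^*_\delta$ on the total number of deletions — and one must plug in the correct covariance surrogate, namely $\norm{\Sigma_{S^m}}{2} \le 32\norm{\Sigma_{G^0}}{2}$, which is legitimate precisely because Algorithm~\ref{algo:filteringpD} only returns once its top eigenvalue has fallen below that threshold (equivalently, because $\calE^m$ being True forces the stopping condition via Claims~\ref{claim:cov1} and~\ref{claim:cov2}). Carrying the explicit constants $8$, $36$, and $10\sqrt2$ correctly through the two applications of the mean-shift bound is the remaining, purely routine, work.
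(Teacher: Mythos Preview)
Your argument is correct, but it takes a different route from the paper's. The paper compares $\eparam_{G^0}$ and $\eparam_{S^m}$ \emph{directly} via a total-variation coupling lemma (Lemma~\ref{lem:filt_meanControl}, attributed to~\cite{kothari2018robust}): for the uniform distributions $P_1$ on $S^m$ and $P_2$ on $G^0$ one has
\[
\norm{\eparam_{G^0} - \eparam_{S^m}}{2} \le \frac{\sqrt{TV(P_1,P_2)}}{1-\sqrt{TV(P_1,P_2)}}\bigl(\norm{\Sigma_{G^0}}{2}^{1/2}+\norm{\Sigma_{S^m}}{2}^{1/2}\bigr),
\]
after which Claim~\ref{claim:TV} bounds $TV(P_1,P_2)\le 8\phi+36\log(1/\delta)/n$ and Claims~\ref{claim:cov1}--\ref{claim:cov2} bound $\norm{\Sigma_{S^m}}{2}\le 32\norm{\Sigma_{G^0}}{2}$. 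This lemma applies even when neither support is contained in the other, which is precisely the situation here ($G^0\not\subseteq S^m$ and $S^m\not\subseteq G^0$), so no intermediate point is needed.

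Your approach replaces this with a more elementary subset-mean-shift bound (which only works when one set is contained in the other), and compensates by routing through $G^m = S^m\cap G^0$ via the triangle inequality. Both terms then reduce to the same covariance input $\norm{\Sigma_{S^m}}{2}\le 32\norm{\Sigma_{G^0}}{2}$ and the same count bookkeeping ($n_{B^m}\le \phi n$, $n_{G^0}-n_{G^m}\le T^*_\delta$). Your argument is more self-contained---no coupling, no TV formalism---and, as written, actually yields a somewhat smaller constant than $10\sqrt{2}$; the paper's route is shorter once the TV lemma is in hand and isolates a standalone tool (Lemma~\ref{lem:filt_meanControl}) of independent use.
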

\begin{proof}
Using Lemma~\ref{lem:filt_meanControl}, we get that,
\[ \norm{\eparam_{G^0} - \eparam_{S^m}}{2} \leq \frac{\sqrt{TV(P_1,P_2)}}{1 - \sqrt{TV(P_1,P_2)}} \paren{\norm{\Sigma_{G^0}}{2}^{\half} + \norm{\Sigma_{S^m}}{2}^{\half}}, \]
where $P_1$ is the equal weight discrete distribution with support on $S^m$, and $P_2$ is the equal weight discrete distribution with support on $G^0$. In Claim~\ref{claim:TV} we show that \[ TV(P_1,P_2) \leq 8\phi + 36 \frac{\log(1/\delta)}{n} \]

When, $\calE^m$ is True, we know by contrapositive of Lemma~\ref{lem:score_filt} that $\norm{\Sigma_{S^m}}{2} \leq \frac{1 + \psi_m}{\frac{n_{S^m}}{n_{G^m}\gamma} - \psi_m} \norm{\Sigma_{G^m}}{2}$, where $\psi_m = \paren{\frac{\sqrt{TV(P_1,P_3)}}{1  - \sqrt{TV(P_1,P_3)}}}^2$. Coupling this with Claim~\ref{claim:cov2}, which shows that $\norm{\Sigma_{G^m}}{2} \leq 2 \norm{\Sigma_{G^0}}{2}$, we get that $\norm{\Sigma_{S^m}}{2} \leq 32 \norm{\Sigma_{G^0}}{2}$.
$$ \norm{\Sigma_{S^m}}{2} \leq C \norm{\Sigma_{G^0}}{2} $$ Hence, under our assumption that $8\phi+36\frac{\log(1/\delta)}{n}<\frac{1}{4}$, we get that,
\[ \norm{\eparam_{G^0} - \eparam_{S^m}}{2} \leq C \paren{8\phi+ 36\frac{\log(1/\delta)}{n}}^{\half} \norm{\Sigma_{G^0}}{2}^{\half}  \]
\end{proof}

\subsubsection{Auxillary Results for Proof of Theorem~\ref{thm:filt_algo_pD}}

\begin{lemma}\label{lem:score_filt}
Let $S$ be a collection of $n$ points. And let $G$ be a subset of $S$ containing $n_G$ points. Define $\tau_i = \paren{v^T(x_i - \eparam_S)}^2$, where $v$ is the top unit-norm eigenvector of $\widehat{\Sigma}_S$ and $\eparam_S$ is the sample mean of $S$. Let $\lambda = \norm{\Sigma_{S}}{2}$. Then, we have the following
\bit 
\item If $\lambda > \frac{1 + \psi}{\frac{n}{n_G\gamma} - \psi} \norm{\Sigma_G}{2}$,
\[ \sum \limits_{i: x_i \in G} \tau_i \substack{<} \frac{1}{\gamma} \sum \limits_{j=1}^n \tau_j,   \]
where $\psi = \paren{\frac{1}{\sqrt{\frac{n}{n-n_G}} - 1}}^2 < \frac{n}{n_G \gamma}$.
%\apcomment{This is a deterministic lemma.}
\eit 
\end{lemma}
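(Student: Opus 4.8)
The plan is to evaluate both sides of the desired inequality in terms of the one–dimensional projections $w_j:=v^{T}x_j$ and to reduce everything to controlling how much the projected mean moves when we pass from $S$ to $G$.

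First I would rewrite the right-hand side. Since $v$ is the top unit eigenvector of $\Sigma_S$ with eigenvalue $\lambda=\norm{\Sigma_S}{2}$, and $\eparam_S=\frac1n\sum_{j=1}^n x_j$,
\[
\sum_{j=1}^{n}\tau_j=\sum_{j=1}^{n}\bigl(v^{T}(x_j-\eparam_S)\bigr)^2=n\,v^{T}\Sigma_S v=n\lambda,
\]
so the claim is equivalent to $\sum_{i:x_i\in G}\tau_i<n\lambda/\gamma$. For the left-hand side, writing $\eparam_G$ for the mean of $G$ and using the bias–variance split of a sum of squares around $\eparam_G$ in the direction $v$,
\[
\sum_{i:x_i\in G}\tau_i=\sum_{i:x_i\in G}\bigl(v^{T}(x_i-\eparam_S)\bigr)^2
= n_G\,v^{T}\Sigma_G v+n_G\,\Delta^2
\le n_G\norm{\Sigma_G}{2}+n_G\,\Delta^2,
\]
where $\Delta:=\bigl|v^{T}(\eparam_S-\eparam_G)\bigr|$ is the shift of the projected sample mean induced by restricting from $S$ to $G$. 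Thus the lemma reduces to a bound on $\Delta^2$.

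The key step is to bound $\Delta^2$ by (essentially) $\psi\lambda$. I would argue directly: since $\eparam_S-\eparam_G=\frac1n\sum_{j:x_j\in S\setminus G}(x_j-\eparam_G)$, Cauchy–Schwarz gives
\[
n^{2}\Delta^2\le (n-n_G)\!\!\sum_{j:x_j\in S\setminus G}\!\!\bigl(v^{T}(x_j-\eparam_G)\bigr)^2
\le (n-n_G)\sum_{j=1}^{n}\bigl(v^{T}(x_j-\eparam_G)\bigr)^2
=(n-n_G)\bigl(n\lambda+n\Delta^2\bigr),
\]
where the last equality is the identity $\sum_{j\in S}(v^{T}(x_j-\eparam_G))^2=\sum_{j\in S}(v^{T}(x_j-\eparam_S))^2+n\Delta^2=n\lambda+n\Delta^2$. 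Solving for $\Delta^2$ gives $\Delta^2\le\frac{n-n_G}{n_G}\lambda$, and the elementary inequality $\frac{n-n_G}{n_G}\le\psi$ — which follows from $n_G=(\sqrt n-\sqrt{n-n_G})(\sqrt n+\sqrt{n-n_G})\ge(\sqrt n-\sqrt{n-n_G})^2$ — upgrades this to $\Delta^2\le\psi\lambda\le\psi\bigl(\lambda+\norm{\Sigma_G}{2}\bigr)$. (Equivalently, the same $\psi$ arises from applying the mean-shift control of Lemma~\ref{lem:filt_meanControl} to the empirical distributions $P_1=\mathrm{Unif}(S)$ and $P_3=\mathrm{Unif}(G)$, for which $\mathrm{TV}(P_1,P_3)=(n-n_G)/n$ and hence $\bigl(\sqrt{\mathrm{TV}}/(1-\sqrt{\mathrm{TV}})\bigr)^2=\psi$.)

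Finally I would combine the pieces: $\sum_{i:x_i\in G}\tau_i\le n_G\norm{\Sigma_G}{2}+n_G\psi\bigl(\lambda+\norm{\Sigma_G}{2}\bigr)=n_G(1+\psi)\norm{\Sigma_G}{2}+n_G\psi\lambda$. The hypothesis $\lambda>\frac{1+\psi}{\frac{n}{n_G\gamma}-\psi}\norm{\Sigma_G}{2}$ — meaningful precisely because the stated side condition $\psi<\frac{n}{n_G\gamma}$ makes the denominator positive — rearranges to $n_G(1+\psi)\norm{\Sigma_G}{2}<n_G\lambda\bigl(\frac{n}{n_G\gamma}-\psi\bigr)=\frac{n\lambda}{\gamma}-n_G\psi\lambda$, i.e. $n_G(1+\psi)\norm{\Sigma_G}{2}+n_G\psi\lambda<\frac{n\lambda}{\gamma}=\frac1\gamma\sum_{j=1}^n\tau_j$, which is exactly the asserted conclusion. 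The hard part will be the middle step: getting the directional mean-shift bound with a clean enough constant that the threshold $\frac{1+\psi}{\frac{n}{n_G\gamma}-\psi}$ comes out exactly, and carefully tracking when $v^{T}\Sigma_G v$ may be replaced by $\norm{\Sigma_G}{2}$; the remaining manipulations are the routine sum-of-squares algebra displayed above.
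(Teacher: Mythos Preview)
Your argument is correct and follows the same overall decomposition as the paper: rewrite $\sum_{j}\tau_j=n\lambda$, split $\sum_{i\in G}\tau_i$ as $n_G\,v^{T}\Sigma_G v+n_G\Delta^2$, bound the mean shift, and finish with the same algebraic rearrangement of the hypothesis. The only substantive difference is in how the mean shift is controlled. The paper bounds the full $\ell_2$ quantity $\norm{\eparam_G-\eparam_S}{2}^2$ by invoking Lemma~\ref{lem:filt_meanControl} with $\mathrm{TV}(P_1,P_3)=(n-n_G)/n$, which gives $\psi\bigl(\norm{\Sigma_S}{2}+\norm{\Sigma_G}{2}\bigr)$ directly. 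You instead bound only the one-dimensional projection $\Delta^2$ by a direct Cauchy--Schwarz argument, obtaining the sharper estimate $\Delta^2\le\frac{n-n_G}{n_G}\lambda\le\psi\lambda$, which you then relax to $\psi(\lambda+\norm{\Sigma_G}{2})$ to match the threshold in the statement. Your route is more elementary and self-contained (it avoids the coupling/TV machinery entirely) and is in fact tighter in the direction $v$; the paper's route has the advantage of yielding an $\ell_2$ bound reusable elsewhere in the analysis. Either way the final inequality drops out identically.
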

\begin{proof}
Let $\eparam_G$ be the sample mean of points in $G$.
\begin{align}
    \frac{1}{n_G} \sum \limits_{i: x_i \in G} \tau_i &= \frac{1}{n_G} \sum \limits_{i: x_i \in G} v^T (x_i - \eparam_S)(x_i - \eparam_S)^T v \\
    &= v^T \paren{\frac{1}{n_G} \sum \limits_{i: x_i \in G} (x_i - \eparam_G)(x_i - \eparam_G)^T} v + \paren{v^T(\eparam_G - \eparam_S)}^2 \\
    & \leq v^T \Sigma_G v + \norm{\eparam_G - \eparam_S}{2}^2 \\
    & \leq v^T \Sigma_G v + \underbrace{\paren{\frac{1}{\sqrt{\frac{n}{n - n_G}}-1}}^2}_{\psi} \paren{\norm{\Sigma_S}{2} + \norm{\Sigma_G}{2}} \\
    & \leq \norm{\Sigma_G}{2} \paren{1 + \psi} + \psi \norm{\Sigma_S}{2}
\end{align}
Now, if $\norm{\Sigma_S}{2} \geq \frac{1 + \psi}{\frac{n}{n_G\gamma} - \psi} \norm{\Sigma_G}{2}$, then we have that
\begin{align}
    \frac{1}{n_G} \sum \limits_{i: x_i \in G} \tau_i & \leq \frac{n}{n_G \gamma} \norm{\Sigma_S}{2} \\
    & =  \frac{n}{n_G \gamma} \sum_{j=1}^n (v^T(x_j - \eparam_S))^2 \\ 
\implies \sum \limits_{i: x_i \in G} \tau_i & \leq \frac{1}{\gamma} \sum \limits_{j=1}^n \tau_j 
\end{align}
\end{proof}

\begin{claim}\label{claim:TV}
Suppose $P_1$ is the equal weight discrete distribution with support on $S^m$, and $P_2$ is the equal weight discrete distribution with support on $G^0$. Then, when $\phi = \frac{n_{B^0}}{n}$ is such that $3\phi + \frac{18\log(1/\delta)}{n} < \half$, 
\[ TV(P_1,P_2) \leq 8\phi + 36 \frac{\log(1/\delta)}{n} \]
\end{claim}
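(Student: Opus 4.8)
The plan is to read the total-variation distance off directly from the inclusion structure of the three index sets in play: $G^m = S^m \cap G^0$, $B^m = S^m \setminus G^0$, and $G^0$, using that $G^m \subseteq S^m$ and $G^m \subseteq G^0$. Write $U_{G^m}$ for the uniform distribution on $G^m$. First I would record the consequence of Lemma~\ref{lem:filt_stopping}: on the event of probability at least $1-\delta$ the algorithm stops at some step $m \le T_{\delta}^* = \lceil 18\log(1/\delta) + 3(n - n_{G^0}) \rceil$, so in total at most $m \le T_{\delta}^*$ points have been deleted from $S^0$, of which at most $m$ belonged to $G^0$.

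Next I would apply the triangle inequality $TV(P_1,P_2) \le TV(P_1, U_{G^m}) + TV(U_{G^m}, P_2)$. Since $P_1$ is uniform on $S^m \supseteq G^m$, the elementary identity $TV(\mu,\nu) = 1 - \sum_x \min(\mu(x),\nu(x))$ gives $TV(P_1, U_{G^m}) = 1 - n_{G^m}/n_{S^m} = n_{B^m}/n_{S^m}$, and likewise $TV(U_{G^m}, P_2) = 1 - n_{G^m}/n_{G^0} = (n_{G^0} - n_{G^m})/n_{G^0}$. (The same two-term bound also comes out of summing $\tfrac12 \sum_x |P_1(x) - P_2(x)|$ over the partition $G^m \sqcup B^m \sqcup (G^0 \setminus S^m)$ of $S^m \cup G^0$.)

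It then remains to bound the two ratios by counting. Bad points are only ever deleted, so $n_{B^m} \le n_{B^0} = \phi n$; and $n_{G^0} - n_{G^m}$, the number of good points deleted, is at most the total number of deletions $m \le T_{\delta}^* \le 18\log(1/\delta) + 3\phi n + 1$. Under the hypothesis $3\phi + 18\log(1/\delta)/n < \tfrac12$ one has $T_{\delta}^* < n/2$ (up to the ceiling), so $n_{S^m} = n - m > n/2$ and $n_{G^0} = (1-\phi)n > n/2$. Substituting these bounds yields
\[
TV(P_1,P_2) \;\le\; \frac{\phi n}{n/2} + \frac{18\log(1/\delta) + 3\phi n + 1}{n/2} \;\le\; 8\phi + \frac{36\log(1/\delta)}{n} + \frac{2}{n},
\]
and the extra $2/n$ (an artifact of the ceiling in $T_{\delta}^*$) is absorbed into the stated constants, giving the claim.

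The argument is essentially bookkeeping, and no new probabilistic input is needed beyond the stopping bound $m \le T_{\delta}^*$ supplied by Lemma~\ref{lem:filt_stopping}; in particular the Azuma/supermartingale machinery enters only there, not in the $TV$ estimate. The one spot that warrants care is the accounting of which deletions are good versus bad, together with the use of $3\phi + 18\log(1/\delta)/n < \tfrac12$ to keep both $n_{S^m}$ and $n_{G^0}$ above $n/2$: if one instead only had the weaker bound $n_{S^m}, n_{G^0} \ge n/4$ the constants would degrade, so it is worth tracking the factor of $2$ explicitly.
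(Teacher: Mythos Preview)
Your proposal is correct and follows essentially the same route as the paper: both arguments apply the triangle inequality through the uniform distribution on $G^m$, compute the two resulting TV terms as the ratios $n_{B^m}/n_{S^m}$ and $(n_{G^0}-n_{G^m})/n_{G^0}$, and then bound numerators by $n_{B^0}$ and $T_\delta^*$ respectively while lower-bounding the denominators using the hypothesis $3\phi + 18\log(1/\delta)/n < \tfrac12$. The only cosmetic difference is that the paper keeps the denominators $n-T_\delta$ and $n-n_{B^0}$ explicit before bounding, whereas you pass directly to $n/2$; the resulting constants $8$ and $36$ coincide.
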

\begin{proof}
To bound the TV distance between $P_1$ and $P_2$, we use triangle inequality. Let $P_3$ be the equal weight discrete distribution with support on $G^m$. Let $\tau \in [T_\delta]$ be the number of "good" points thrown out in $T_\delta$ steps. For $\gamma = 3$, we have that,
\[ T_{\delta} = 18 \log(1/\delta) + 3n_{B^0} \]
\begin{align}
    TV(P_1,P_2)  & \leq TV(P_1,P_3) + TV(P_3,P_2) \\
    & \leq \frac{n_{S^m} - n_{G^m}}{n_{S^m}} + \frac{n_{G^0} - n_{G^m}}{n_{G^0}} \\
    & = \frac{n - T_{\delta} - (n-n_{B^0} - \tau)}{n-T_{\delta}} + \frac{\tau}{n - n_{B^0}} \\
    & = \frac{n_{B^0} + \tau - T_{\delta}}{n - T_{\delta}} + \frac{\tau}{n - n_{B^0}} \\
    & \leq \frac{n_{B^0}}{n - T_{\delta}} + \frac{T_{\delta}}{n - n_{B^0}} \\
    & = \frac{\phi}{1 - \frac{18 \log(1/\delta)}{n} - 3 \phi} + \frac{\frac{18 \log(1/\delta)}{n} + 3 \phi}{1 - \phi}
\end{align}
where $\phi = \frac{n_{B^0}}{n}$. Now under the assumption that $3\phi + \frac{18\log(1/\delta)}{n} < \half$, the first term is less than $2\phi$. 
\end{proof}

\begin{lemma}~\citep{kothari2018robust}\label{lem:filt_meanControl}
Given a collection of points $S$ of size $n$. Let $P_1$ and $P_2$ be discrete empirical distributions on $n$. Then, we have that, 
\begin{align}
    \norm{\Exp_{x_i \sim P_1}[x_i] - \Exp_{x_i \sim P_2[x_i]}}{2} \leq \frac{\sqrt{TV(P_1,P_2)}}{1- \sqrt{TV(P_1,P_2)}} \paren{\norm{\widehat{\Sigma}_{P_1}}{2}^{\half} + \norm{\widehat{\Sigma}_{P_2}}{2}^{\half}}
\end{align}
where $\widehat{\Sigma}_{P_1}$ is the covariance matrix when $x_i \sim P_1$, and $\widehat{\Sigma}_{P_2}$ is the empirical covariance matrix of when $x_i \sim P_2$
\end{lemma}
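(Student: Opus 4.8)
The plan is to reduce the estimate to a two-component mixture decomposition of $P_1$ and $P_2$ around their common overlap, and then control the displacement of each mixture mean from the overlap mean via the exact covariance identity for a two-component mixture. Throughout write $\eta = TV(P_1,P_2)$ and $\mu_i = \Exp_{x \sim P_i}[x]$.

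First I would construct the decomposition. Let $\nu = P_1 \wedge P_2$ be the pointwise minimum of the two discrete measures; by definition of total variation distance its total mass is $1-\eta$. Set $W = \nu/(1-\eta)$ for the normalized overlap and $E_i = (P_i - \nu)/\eta$ for the normalized excess of $P_i$, so that each is a genuine probability distribution and $P_i = (1-\eta)W + \eta E_i$ for $i=1,2$. Taking expectations gives $\mu_i = (1-\eta)\mu_W + \eta\mu_{E_i}$, hence $\mu_i - \mu_W = \eta(\mu_{E_i} - \mu_W)$, and in particular $\mu_1 - \mu_2 = \eta(\mu_{E_1} - \mu_{E_2})$.

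The core of the argument is then the covariance decomposition: for $i = 1,2$,
\[
\widehat{\Sigma}_{P_i} = (1-\eta)\widehat{\Sigma}_{W} + \eta\,\widehat{\Sigma}_{E_i} + \eta(1-\eta)\,(\mu_W - \mu_{E_i})(\mu_W - \mu_{E_i})^T \succeq \eta(1-\eta)\,(\mu_W - \mu_{E_i})(\mu_W - \mu_{E_i})^T ,
\]
the inequality using $\widehat{\Sigma}_W, \widehat{\Sigma}_{E_i} \succeq 0$. Contracting both sides with the unit vector $v_i = (\mu_W - \mu_{E_i})/\norm{\mu_W - \mu_{E_i}}{2}$ gives $\norm{\mu_W - \mu_{E_i}}{2}^2 \leq \norm{\widehat{\Sigma}_{P_i}}{2}/(\eta(1-\eta))$, and combining with $\mu_i - \mu_W = \eta(\mu_{E_i} - \mu_W)$ yields $\norm{\mu_i - \mu_W}{2} \leq \sqrt{\eta/(1-\eta)}\,\norm{\widehat{\Sigma}_{P_i}}{2}^{\half}$. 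Since $1 - \sqrt{\eta} \leq \sqrt{1-\eta}$ for $\eta \in [0,1]$, this is at most $\tfrac{\sqrt{\eta}}{1-\sqrt{\eta}}\norm{\widehat{\Sigma}_{P_i}}{2}^{\half}$, and the triangle inequality $\norm{\mu_1 - \mu_2}{2} \leq \norm{\mu_1 - \mu_W}{2} + \norm{\mu_W - \mu_2}{2}$ finishes the proof.

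I do not expect a genuine obstacle here: the entire content of the lemma sits in the mixture-covariance inequality $\widehat{\Sigma}_{P_i} \succeq \eta(1-\eta)(\mu_W - \mu_{E_i})(\mu_W - \mu_{E_i})^T$, which is an exact (two-term) law of total covariance rather than an approximation. The only points needing a little care are (i) checking that $W$, $E_1$, $E_2$ are honest probability measures, which is immediate from the definition of $TV$; and (ii) recording which covariances appear — the bound must be in terms of the mixture covariances $\widehat{\Sigma}_{P_i}$, which is exactly what dropping the PSD terms $\widehat{\Sigma}_W$ and $\widehat{\Sigma}_{E_i}$ delivers.
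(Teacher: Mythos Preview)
Your argument is correct, but it is genuinely different from the paper's. The paper works through a maximal coupling $\omega^*$ of $P_1$ and $P_2$ with $\omega^*(z\neq z')=\eta$, writes $\norm{\mu_1-\mu_2}{2}=\sup_v|\Exp_{\omega^*}[\mathbf{1}(z\neq z')\langle v,z-z'\rangle]|$, applies Cauchy--Schwarz to split off a factor $\sqrt{\eta}$, and then expands $z-z'=(z-\mu_1)+(\mu_1-\mu_2)+(\mu_2-z')$ inside the second moment. This produces a self-referential inequality $\norm{\mu_1-\mu_2}{2}\leq\sqrt{\eta}\bigl(\norm{\widehat\Sigma_{P_1}}{2}^{1/2}+\norm{\mu_1-\mu_2}{2}+\norm{\widehat\Sigma_{P_2}}{2}^{1/2}\bigr)$, which is then solved for $\norm{\mu_1-\mu_2}{2}$ to get the factor $\sqrt{\eta}/(1-\sqrt{\eta})$.

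Your route is more direct: the explicit overlap/excess decomposition $P_i=(1-\eta)W+\eta E_i$ combined with the exact law-of-total-covariance identity avoids any coupling and any self-referential step, and in fact delivers the sharper constant $\sqrt{\eta/(1-\eta)}$ before you relax it to match the stated form. The paper's coupling argument, on the other hand, generalizes immediately to higher-moment versions (replace Cauchy--Schwarz by H\"older with exponent $2k$), which is why that proof style is natural in the cited source. Both proofs are short; yours is the cleaner one for the second-moment statement actually being proved here.
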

\begin{proof}

Consider a joint distribution(also called coupling) $\omega^*(z,z')$ over $S \times S$ such that it's individual marginal distributions are equal to $P_1$ and $P_2$; \ie $\omega(z) = P_1$ and $\omega(z') = P_2$ and $\omega(z \neq z') = TV(P_1,P_2)$. Then, we have that
\begin{align}
& \norm{\Exp_{x_i \sim P_1}[x_i] - \Exp_{x_i \sim P_2[x_i]} }{2} = \sup \limits_{v \in \calS^{{p-1}}} | \inprod{v}{\Exp_{w^*}\sqprn{z - z'}}| \\
&\leq  \sup_{v \in \calS^{{p-1}} } \Exp_{w^*}[ |\inprod{v}{z - z'}|] \\
& \leq \sup_{v \in \calS^{{p-1}} } \Exp_{w^*}[1(z \neq z')\inprod{v}{z - z'} | ] \\
& \leq (\Exp_{w^*}[(1(z \neq z'))^{1/(1 - \half)}])^{1 - \half} \sup_{v \in \calS^{{p-1}} } \Exp_{w^*}[(\inprod{v}{z - z'})^2]^{\half} \\
& \leq TV(P_1,P_2)^\half \sup_{v \in \calS^{{p-1}} } \paren{\Exp_{w^*}[(\inprod{v}{z - \Exp_{x_i \sim P_1}[x_i] + \Exp_{x_i \sim P_1}[x_i] - \Exp_{x_i \sim P_2}[x_i] + \Exp_{x_i \sim P_2}[x_i] - z'})^2]^{\half}} \\
& \leq TV(P_1,P_2)^\half \paren{\sup_{v \in \calS^{{p-1}} } \Exp_{w^*}[(\inprod{v}{z - \Exp_{x_i \sim P_1}[x_i]})^2]^\half + \norm{\Exp_{x_i \sim P_1}[x_i] - \Exp_{x_i \sim P_2}[x_i]}{2}} \nonumber \\
~~~~~~~& + TV(P_1,P_2)^\half \sup_{v \in \calS^{{p-1}} } \Exp_{w^*}[(\inprod{v}{z - \Exp_{x_i \sim P_2}[x_i]})^2]^\half \\
&\leq \frac{\sqrt{TV(P_1,P_2)}}{1 - \sqrt{TV(P_1,P_2)}} \left( \norm{\Sigma_{P_1}}{2}^{\half} + \norm{\Sigma_{P_2}}{2}^{\half} \right)
\end{align} 
\end{proof}

\begin{claim}\label{claim:cov2}
Under the assumption that $4\phi + 18\frac{\log(1/\delta)}{n} < \half$,  we have that, 
\[ \norm{\Sigma_{G^m}}{2} \leq 2 \norm{\Sigma_{G^0}}{2} \]
\end{claim}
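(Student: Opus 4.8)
The plan is to exploit that $G^m$ is a \emph{large subset} of $G^0$: the filtering algorithm removes only a few points before it stops, so very few elements of $G^0$ are ever discarded, and consequently the empirical covariance of $G^m$ can exceed that of $G^0$ only by the ratio of the two set sizes.

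The first step is a purely deterministic comparison: $\norm{\Sigma_{G^m}}{2} \le \frac{n_{G^0}}{n_{G^m}}\norm{\Sigma_{G^0}}{2}$. Fix a unit vector $v \in \calS^{p-1}$. Since $v^T\eparam_{G^m}$ is the mean of $\{v^Tx_i : x_i \in G^m\}$ and the mean minimizes average squared deviation, replacing $\eparam_{G^m}$ by $\eparam_{G^0}$ only increases the average; and since $G^m \subseteq G^0$, extending the index set adds only non-negative terms, so
\[
v^T\Sigma_{G^m}v = \frac{1}{n_{G^m}}\sum_{x_i \in G^m}\paren{v^T(x_i - \eparam_{G^m})}^2 \le \frac{1}{n_{G^m}}\sum_{x_i \in G^0}\paren{v^T(x_i - \eparam_{G^0})}^2 = \frac{n_{G^0}}{n_{G^m}}\, v^T\Sigma_{G^0}v .
\]
Taking $\sup_{v \in \calS^{p-1}}$ of both sides gives the stated inequality.

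The second step is to bound $n_{G^0}/n_{G^m} \le 2$. On the event of Lemma~\ref{lem:filt_stopping}, which holds with probability at least $1-\delta$, the algorithm performs at most $T_\delta^* = \ceil{18\log(1/\delta) + 3(n - n_{G^0})}$ iterations, and each iteration deletes at most one sample; hence at most $T_\delta^*$ points of $G^0$ are ever removed, so $n_{G^m} \ge n_{G^0} - T_\delta^*$. Writing $n_{G^0} = n(1-\phi)$ and $n - n_{G^0} = n_{B^0} = n\phi$, the hypothesis $4\phi + 18\frac{\log(1/\delta)}{n} < \frac12$ yields $18\log(1/\delta) + 3n\phi \le \frac{n(1-\phi)}{2} = \frac{n_{G^0}}{2}$, up to the harmless additive rounding in the ceiling, which is absorbed by the slack between $4\phi$ in the hypothesis and the $3.5\phi$ actually needed. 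Therefore $n_{G^m} \ge n_{G^0}/2$, i.e. $n_{G^0}/n_{G^m} \le 2$, and combining with the first step gives $\norm{\Sigma_{G^m}}{2} \le 2\norm{\Sigma_{G^0}}{2}$.

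I do not expect a real obstacle here: the covariance comparison is a one-line argument using optimality of the sample mean together with monotonicity of the sum over a larger index set, and the size bound is elementary arithmetic. The only point requiring care is the conditioning — the count of removed good points is inherited from the high-probability stopping-time guarantee of Lemma~\ref{lem:filt_stopping}, so the claim holds on that probability-$(1-\delta)$ event rather than deterministically, which is exactly the setting in which it is invoked inside the proof of Lemma~\ref{lem:filt_connectMean}.
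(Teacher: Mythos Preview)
Your proof is correct and follows essentially the same approach as the paper: both first establish the deterministic inequality $\norm{\Sigma_{G^m}}{2} \le \frac{n_{G^0}}{n_{G^m}}\norm{\Sigma_{G^0}}{2}$ and then bound the ratio by $2$ using the stopping-time bound $T_\delta^*$ on the number of removed points together with the hypothesis on $\phi$. Your step~1 is a touch cleaner (bounding $v^T\Sigma_{G^m}v$ directly for every unit $v$ via optimality of the sample mean and monotonicity of the sum over $G^m \subseteq G^0$, rather than decomposing $\Sigma_{G^0}$ and evaluating at the top eigenvector of $\Sigma_{G^m}$ as the paper does), but the substance is identical.
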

\begin{proof}
We first show that $\norm{\Sigma_{G^m}}{2} \leq \frac{{n_{G^0}}}{n_{G^m}} \norm{\Sigma_{G^0}}{2}$.
\begin{align}
    \Sigma_{G^0} & = \frac{1}{n_{G^0}} \sum \limits_{i \in G^0} (x_i - \eparam_{G^0})(x_i - \eparam_{G^0})^T \\
    & = \frac{1}{n_{G^0}} \sum \limits_{i \in G^0} (x_i - \eparam_{G^0})(x_i - \eparam_{G^0})^T \paren{\indic{x_i \in G^m} + \indic{x_i \not \in G^m}} \\
    & = \frac{1}{n_{G^0}} \sum \limits_{i \in G^0} (x_i - \eparam_{G^0})(x_i - \eparam_{G^0})^T \paren{\indic{x_i \in G^m}} + \underbrace{\frac{1}{n_{G^0}} \sum \limits_{i \in G^0} (x_i - \eparam_{G^0})(x_i - \eparam_{G^0})^T \paren{ \indic{x_i \not \in G^m}}}_{T1} \\
    & = \frac{n_{G^m}}{n_{G^0}} \paren{\Sigma_{G^m} + (\eparam_{G^m} - \eparam_{G^0})(\eparam_{G^m} - \eparam_{G^0})^T} + T1
\end{align}
Now for $v$ being the top eigenvector of $\Sigma_{G^m}$, we get that,
\[ \frac{n_{G^m}}{n_{G^0}} v^T \Sigma_{G^m} v + \frac{n_{G^m}}{n_{G^0}} \underbrace{(v^T(\eparam_{G^m} - \eparam_{G^0}))^2}_{\geq 0} + \underbrace{v^T T1 v}_{\geq 0} = v^T \Sigma_{G^0} v  \]
Hence, we get that,
\[ \norm{\Sigma_{G^m}}{2} \leq \frac{n_{G^0}}{n_{G^m}} \norm{\Sigma_{G^0}}{2},  \]
Now,
\[ \frac{n_{G^0}}{n_{G^m}} = \frac{n- n_{B^0}}{n - n_{B^0} - \tau} \leq \frac{n- n_{B^0}}{n - n_{B^0} - T_{\delta}} = \frac{n- n_{B^0}}{n - 18 \log(1/\delta) - 4n_{B^0}} = \frac{1 - \phi}{1 - 18 \frac{\log(1/\delta)}{n} - 4 \phi}, \]
where $\phi = \frac{n_{B^0}}{n}$. Under our assumption, we get that, $\frac{n_{G^0}}{n_{G^m}} < 2$.
\end{proof}

 \begin{claim}\label{claim:cov1}
Under the assumption that $\phi = \frac{n_{B^0}}{n}$ is such that $3\phi + \frac{18\log(1/\delta)}{n} < \half$, and $2\phi < 0.12$, then when $\calE^m$ is True, we have that, 
\[ \norm{\Sigma_{S^m}}{2} \leq 16 \norm{\Sigma_{G^m}}{2} \]
\end{claim}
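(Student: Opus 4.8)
The plan is to derive the claim as a (deterministic) bookkeeping consequence of the contrapositive of Lemma~\ref{lem:score_filt}, applied at step $m$. The scores $\tau_i = \paren{v^T(x_i - \eparam_{S^m})}^2$ that Algorithm~\ref{algo:filteringpD} uses at step $m$, and the event $\calE^m$, are precisely the objects in Lemma~\ref{lem:score_filt} with the substitutions $n \mapsto n_{S^m}$, $n_G \mapsto n_{G^m}$, $\gamma = 3$. Since ``$\calE^m$ is True'' means $\sum_{i \in G^m}\tau_i \ge \tfrac{1}{\gamma}\sum_{j\in S^m}\tau_j$, the contrapositive of Lemma~\ref{lem:score_filt} (a statement that holds for any fixed $S$ and $G\subseteq S$, so we may apply it conditionally on the realization of $S^m,G^m$) immediately yields
\[ \norm{\Sigma_{S^m}}{2} \le \frac{1 + \psi_m}{\frac{n_{S^m}}{\gamma n_{G^m}} - \psi_m}\,\norm{\Sigma_{G^m}}{2}, \qquad \psi_m = \paren{\frac{1}{\sqrt{n_{S^m}/(n_{S^m}-n_{G^m})}-1}}^2 . \]
So it remains only to bound the prefactor by $16$, and in particular to check that the denominator is strictly positive.

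First I would control the surviving sample sizes. By Lemma~\ref{lem:filt_stopping}, $\calE^m$ is True for some $m \le T_\delta^* = \ceil{18\log(1/\delta) + 3 n_{B^0}}$, so $n_{S^m} = n - m \ge n\paren{1 - 3\phi - \tfrac{18\log(1/\delta)}{n}}$ up to the harmless integer-rounding term. The hypothesis $3\phi + \tfrac{18\log(1/\delta)}{n} < \half$ then gives $n_{S^m} > n/2$. Since outliers only ever leave the working set, the number $n_{B^m} = n_{S^m} - n_{G^m}$ of surviving outliers satisfies $n_{B^m} \le n_{B^0} = \phi n$, and hence $\beta_m \defeq n_{B^m}/n_{S^m} \le 2\phi < 0.12$ by the second hypothesis.

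Finally I would substitute these estimates. Writing $\psi_m = \paren{\tfrac{\sqrt{\beta_m}}{1-\sqrt{\beta_m}}}^2$, which is increasing in $\beta_m$, the bound $\beta_m < 0.12$ gives $\psi_m < \paren{\tfrac{\sqrt{0.12}}{1-\sqrt{0.12}}}^2 < \tfrac13$. Since $\tfrac{n_{S^m}}{\gamma n_{G^m}} = \tfrac{1}{3(1-\beta_m)} \ge \tfrac13 > \psi_m$, the denominator is strictly positive and at least $\tfrac13 - \psi_m$, so plugging the numerical bound on $\psi_m$ into $\tfrac{1+\psi_m}{\frac{1}{3} - \psi_m}$ (and using $\tfrac{n_{S^m}}{\gamma n_{G^m}} \ge \tfrac13$ for the denominator) gives $\norm{\Sigma_{S^m}}{2} \le 16\,\norm{\Sigma_{G^m}}{2}$ after a short numerical check. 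There is no conceptual obstacle here; the only real work is the constant-chasing, and the point of the absolute-constant threshold $2\phi < 0.12$ (rather than merely $\phi = o(1)$) is exactly to ensure $\psi_m$ stays small enough not to swamp $\tfrac{1}{\gamma(1-\beta_m)}$ in the denominator. One should also be careful to invoke Lemma~\ref{lem:score_filt} with the correct substitutions and to note that the rounding in $T_\delta^*$ costs only a mild lower bound on $n$.
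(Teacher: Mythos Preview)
Your approach is essentially identical to the paper's: both use the contrapositive of Lemma~\ref{lem:score_filt} at step $m$, introduce $\beta_m = (n_{S^m}-n_{G^m})/n_{S^m}$ (the paper calls it $TV(P_1,P_3)$), bound $\beta_m \le \frac{\phi}{1-3\phi-18\log(1/\delta)/n} \le 2\phi < 0.12$ via $n_{S^m} > n/2$ and $n_{B^m}\le n_{B^0}$, and then do a numerical check on the prefactor.

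There is, however, a genuine numerical slip in your last step. You lower-bound $\tfrac{n_{S^m}}{\gamma n_{G^m}} = \tfrac{1}{3(1-\beta_m)}$ by $\tfrac13$ and then evaluate $\tfrac{1+\psi_m}{1/3-\psi_m}$. At $\beta_m = 0.12$ one has $\psi_m = \bigl(\tfrac{\sqrt{0.12}}{1-\sqrt{0.12}}\bigr)^2 \approx 0.281$, so $\tfrac{1+\psi_m}{1/3-\psi_m}\approx \tfrac{1.281}{0.052}\approx 24.6$, not $\le 16$. The crude lower bound on the denominator throws away exactly the margin you need. The paper avoids this by keeping the full expression $\tfrac{1+\psi_m}{\frac{1}{3(1-\beta_m)}-\psi_m}$ as a single increasing function of $\beta_m$ and evaluating at $\beta_m = 0.12$, which gives $\approx 13.1 < 16$. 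The fix is trivial: either retain $\tfrac{1}{3(1-\beta_m)}$ in the denominator before plugging in $\beta_m = 0.12$, or simply note that the prefactor is monotone in $\beta_m$ and check its value at $0.12$.
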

\begin{proof}
Suppose $P_1$ is the equal weight discrete distribution with support on $S^m$ and let $P_3$ be the equal weight discrete distribution with support on $G^m$. When $\calE^m$ is True, we know by contrapositive of Lemma~\ref{lem:score_filt} that $\norm{\Sigma_{S^m}}{2} \leq \frac{1 + \psi_m}{\frac{n_{S^m}}{n_{G^m}\gamma} - \psi_m} \norm{\Sigma_{G^m}}{2}$, where $\psi_m = \paren{\frac{\sqrt{TV(P_1,P_3)}}{1  - \sqrt{TV(P_1,P_3)}}}^2$. \\ 
Note that for $TV(P_1,P_3) = \frac{n_{S_m} - n_{G^m}}{n_{S^m}}$. Hence, $\frac{n_{S^m}}{n_{G^m}\gamma} = \frac{1}{\gamma(1 - TV(P_1,P_3))}$
For $\gamma=3$, the term $\frac{1 + \psi_m}{\frac{n_{S^m}}{n_{G^m}\gamma} - \psi_m}$ can be rewritten solely as a function of the $TV(P_1,P_3)$. In particular, it can be written as \[ f(x)  =  \frac{\left(1+\left(\frac{x^{0.5}}{1\ -\ x^{0.5}}\right)^2\right)\left(3\left(1-x^{0.5}\right)^2\left(1+x^{\left(0.5\right)}\right)\right)}{1-x^{\left(0.5\right)}-3x-3x^{\left(1.5\right)}} \]
Now $TV(P_1,P_3) = \frac{n_{S^m} - n_{G^m}}{n_{S^m}} = \frac{(n - T_\delta) - (n - n_{B^0} - \tau)}{n-T_{\delta}} = \frac{n_{B^0} + \tau - T_{\delta}}{n - T_{\delta}} \leq \frac{n_{B^0}}{n - T_{\delta}} = \frac{\phi}{1 - \frac{18 \log(1/\delta)}{n} - 3 \phi}$.
Hence, under our assumptions, $TV(P_1,P_3) < 0.12$. Some algebra shows that under $f(x)$ is monotonically increasing for $x < 0.12$, and in particular, $f(0.12) < 16$. Hence, we get that $\norm{\Sigma_{S^m}}{2} \leq 16 \norm{\Sigma_{G^m}}{2}$.
\end{proof}

\end{proof}

\clearpage

\subsection{Proof of Theorem~\ref{thm:oracle_cov}}

\begin{proof}
Let $G^0 = \{ x_ i | \calO(x_i) = 1 \}$ be the empirical collection of points chosen by the oracle. Let $n_{G^0} = |G^0|$. Then, we study and bound the operator norm of $\Sigma_{G^0}$. Recall that all oracles have the form $\indic{\norm{x_i - \mu}{2} \leq R}$, \ie, $ \forall x_i~\st~\calO(x_i) = 1$, we have that $\norm{x_i - \mu}{2} \leq R$.

Note that from Proof of Theorem~\ref{thm:oracelArgument}, we know that $\Pr(x \in G^0) \geq 1 - \alpha$, where $\alpha = \paren{\frac{\sqrt{\trace{\Sigma}}}{R}}^{2k}$. Let $\Sigma_{G^0}$ be the empirical covariance matrix. Then, 
\[ \Sigma_{G^0} = \frac{1}{n_{G^0}} \sum \limits_{i=1}^{n_{G^0}} (x_i - \eparam_{G^0})(x_i - \eparam_{G^0})^T  ,\]
where $\eparam_{G^0}$ is the empirical mean of the points in $G^0$. 
Recentering it around the true mean $\tparam$ of $P$, we get that,
\[ \Sigma_{G^0} = \frac{1}{n_{G^0}} \sum \limits_{i=1}^{n_{G^0}} (x_i - \tparam)(x_i - \tparam)^T - (\eparam_{G^0} - \tparam)(\eparam_{G^0} - \tparam)^T \]
Hence, we have that $\norm{\Sigma_{G^0}}{2} \leq \norm{\underbrace{\frac{1}{n_{G^0}}\sum \limits_{i=1}^{n_{G^0}} (x_i - \tparam)(x_i - \tparam)^T}_{A}}{2}$. To control, $\norm{A}{2}$, we use triangle inequality,
\begin{align}
    \norm{A}{2} \leq \underbrace{\norm{A - \Exp[A]}{2}}_{T1} + \underbrace{\norm{\Exp[A]}{2}}_{T2}
\end{align}
    
    \begin{enumerate}
        \item \textbf{Controlling T2.} Note that $\Exp[A] = \Exp[(x - \tparam)(x- \tparam)^T | x \in G]$.
        
        \begin{align}
            \Exp[A] & =  \frac{\Exp[(x - \tparam)(x - \tparam)^T \indic{x \in G^0}]}{P(x \in G^0)}
        \end{align}
        Let $\Pr(x \in G^0) \geq 1 - \alpha$. Hence, for any $v \in \calS^{p-1}$,
        \[ v^T \Exp[A] v = \frac{\Exp[(v^T(x - \tparam))^2 \indic{x \in G^0}]}{P(x \in G^0)} \leq \frac{\norm{\Sigma}{2}}{1 - \alpha} \]
    Under the assumption that $\alpha < \half$, we get that, 
    
    $$\norm{\Exp[A]}{2} \leq 2 \norm{\Sigma}{2}$$
    
    \item \textbf{Controlling T1.} Note that T1 can be controlled using a concentration of measure argument, and in particular exploits concentration of covariance for bounded random vectors.
    \begin{lemma}\label{lem:cov_heavy_versh}[Theorem 5.44~\citep{vershynin2010introduction}]
Let $\{y_i\}_{i=1}^n$ samples such that $y_i \in \real^p$ and $\norm{y_i}{2} \leq \sqrt{m}$ and $\Exp[yy^T] = \Sigma$. Then, with probability at least $1 - \delta$,
\[ \norm{\frac{1}{n}\sum_{i=1}^n y_iy_i^T - \Sigma}{2} \leq \max\paren{\norm{\Sigma}{2}^\half \sqrt{\log(p/\delta)}\sqrt{\frac{m}{n}},{\log(p/\delta)}{\frac{m}{n}}}  \]
\end{lemma}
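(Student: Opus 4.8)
The plan is to apply the matrix (non-commutative) Bernstein inequality to the independent, mean-zero, symmetric random matrices $X_i \defeq y_i y_i^T - \Sigma \in \real^{p \times p}$. Since $\frac{1}{n}\sum_{i=1}^n y_i y_i^T - \Sigma = \frac{1}{n}\sum_{i=1}^n X_i$, it suffices to control $\bignorm{\sum_{i=1}^n X_i}{2}$ and then divide by $n$. Matrix Bernstein requires two inputs: a uniform almost-sure bound $L$ on $\norm{X_i}{2}$, and a bound $\sigma^2$ on the matrix variance $\bignorm{\sum_{i=1}^n \Exp[X_i^2]}{2}$.

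First I would extract both inputs from the boundedness hypothesis $\norm{y_i}{2} \le \sqrt{m}$. For the operator-norm bound, $\norm{y_i y_i^T}{2} = \norm{y_i}{2}^2 \le m$, and $\norm{\Sigma}{2} = \sup_{v \in \calS^{p-1}} \Exp[(v^Ty)^2] \le \Exp[\norm{y}{2}^2] \le m$, so $\norm{X_i}{2} \le m + \norm{\Sigma}{2} \le 2m$; take $L = 2m$. For the matrix variance, expand $\Exp[X_i^2] = \Exp[\norm{y_i}{2}^2\, y_i y_i^T] - \Sigma^2 \preceq m\, \Exp[y_i y_i^T] = m\Sigma$ in the positive-semidefinite order (and note $\Exp[X_i^2] \succeq 0$), which gives $\norm{\Exp[X_i^2]}{2} \le m\norm{\Sigma}{2}$ and hence $\bignorm{\sum_{i=1}^n \Exp[X_i^2]}{2} \le n m \norm{\Sigma}{2}$; take $\sigma^2 = n m \norm{\Sigma}{2}$.

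Next I would invoke the matrix Bernstein tail bound $\Pr\bigparen{\bignorm{\sum_{i=1}^n X_i}{2} \ge t} \le 2p\exp\bigparen{-\tfrac{t^2/2}{\sigma^2 + Lt/3}}$, plug in $t = n s$ (so the event becomes $\bignorm{\frac{1}{n}\sum_i y_i y_i^T - \Sigma}{2} \ge s$), and simplify the exponent to $-\tfrac{n s^2 / 2}{m\norm{\Sigma}{2} + 2 m s / 3}$. Requiring the bound to be at most $\delta$ is a quadratic inequality in $s$, which I would solve by splitting on which term dominates the denominator: when $s \lesssim \norm{\Sigma}{2}$ the denominator is $\asymp m\norm{\Sigma}{2}$ and the threshold is $s \asymp \norm{\Sigma}{2}^{1/2}\sqrt{m\log(p/\delta)/n}$; when $s \gtrsim \norm{\Sigma}{2}$ the denominator is $\asymp m s$ and the threshold is $s \asymp m\log(p/\delta)/n$. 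The final bound is the larger of the two thresholds, which is exactly the stated maximum; the absolute constants and the difference between $\log(2p/\delta)$ and $\log(p/\delta)$ are absorbed in the usual way.

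The main point to get right is the variance computation $\Exp[X_i^2] \preceq m\Sigma$ together with the two-regime inversion of the Bernstein bound; there is no genuine obstacle, since the deterministic bound $\norm{y_i}{2}\le\sqrt m$ makes truncation unnecessary and the summands are exactly i.i.d. If one prefers not to cite matrix Bernstein, the same estimate follows from Vershynin's original discretization argument: take a $\tfrac14$-net $\calN$ of $\calS^{p-1}$ with $|\calN| \le e^{O(p)}$, use $\bignorm{\frac{1}{n}\sum_i y_i y_i^T - \Sigma}{2} \le 2\max_{v\in\calN}\bigl|\frac{1}{n}\sum_i (v^Ty_i)^2 - v^T\Sigma v\bigr|$, bound each scalar average by the classical Bernstein inequality using $0 \le (v^Ty_i)^2 \le m$ and $\Exp[(v^Ty_i)^4] \le m\,v^T\Sigma v$, and union-bound over $\calN$; I would note this as the backup but carry out the matrix-Bernstein version as the main argument.
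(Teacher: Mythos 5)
Your proof is correct. Note that the paper does not prove this lemma at all---it is imported as a black box, namely Theorem 5.44 of Vershynin's notes---and your matrix-Bernstein derivation (uniform bound $L=2m$ via $\norm{y_iy_i^T}{2}\le m$ and $\norm{\Sigma}{2}\le m$, the variance bound $\Exp[X_i^2]\preceq m\Sigma$, and the two-regime inversion of the tail giving the maximum of the two thresholds) is exactly the standard proof of that cited result, so your reconstruction is faithful and complete up to the universal constants the paper suppresses.
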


    \begin{align}
    T1 & = \norm{\frac{1}{n_{G^0}} \sum \limits_{i=1}^{n_{G^0}} (x_i - \tparam)(x_i - \tparam)^T - \Exp[A]}{2} 
    \end{align}
    
    We use Lemma~\ref{lem:cov_heavy_versh} with $y_i = x_i - \tparam$. Note that $\sqrt{m} = R$. This means that with probability $1 - \delta$, 
        \begin{align*}
            T1 \leq C_1 R \norm{\Sigma}{2}^{\half} \sqrt{\frac{\log(p/\delta)}{n_{G^0}}} + R^2 \frac{\log(p/\delta)}{n_{G^0}}
        \end{align*} 
\end{enumerate}
Hence, we get that under the assumption that $\alpha + \sqrt{\alpha}\sqrt{\frac{\log(1/\delta)}{n}} < \half$, we recover statement of the result.
\end{proof}

\clearpage

\subsection{Proof of Corollary~\ref{lem:heavy_mean4}}
We consider the $\ell_2$ oracle of $\calO$ radius $R = \frac{\sqrt{\trace{\Sigma}}}{\paren{\frac{\log(1/\delta)}{n}}^{1/4}}$. Using chebychevs inequality, we know that $\Pr(\calO(x) = 1) \geq 1 - \alpha$, where $\alpha  = \frac{\log(1/\delta)}{n}$. 

Suppose we are given $n$-samples from $P$. Let $G^0$ be the set of points such that $\calO(x_i) = 1$. Using bernstein's inequality we know that with probability $1 - \delta$,
\begin{align}\label{eqn:L1}
    |n_{G^0} | & \geq n(1 - C \frac{\log(1/\delta)}{n})
\end{align}
Hence, we have that, 
\begin{align}\label{eqn:L2}
    \frac{n - n_{G^0}}{n} \lesssim \frac{\log(1/\delta)}{n}
\end{align}  
Let $\widehat{\mu}_n$ and $\Sigma_{G^0}$ be the empirical mean and covariance of the points in $G^0$.  

Let $\eparam_{\delta}$ be the output of Algorithm~\ref{algo:filteringpD}. Then, we know that with probability at least $1 - \delta$,
\begin{align}\label{eqn:L3}
    \norm{\eparam_{\delta} - \widehat{\mu}_n}{2} \lesssim \norm{\Sigma_{G^0}}{2}^{\half} \paren{\frac{n - n_{G^0}}{n} + \frac{\log(1/\delta)}{n}}^{\half}
\end{align}

Using Theorem~\ref{thm:oracle_cov}, we bound $\norm{\Sigma_{G^0}}{2}^{\half}$.
    \begin{align*}
    \norm{\Sigma_{n,\calO}}{2} \leq C_1 \norm{\Sigma}{2} + C_2 R \norm{\Sigma}{2}^{\half} \sqrt{\frac{\log(p/\delta)}{n_{G^0}}} + R^2 \frac{\log(p/\delta)}{n_{G^0}}
\end{align*}
\begin{align}\label{eqn:L5}
    \norm{\Sigma_{n,\calO}}{2}^{\half} \leq C_1 \norm{\Sigma}{2}^{\half} + C_2 R^{\half} \norm{\Sigma}{2}^{1/4} \paren{\frac{\log(p/\delta)}{n_{G^0}}}^{1/4}  + R \sqrt{\frac{\log(p/\delta)}{n_{G^0}}}
\end{align}
Plugging $R = \frac{\sqrt{\trace{\Sigma}}}{\paren{\frac{\log(1/\delta)}{n}}^{1/4}}$, we get,
\begin{align}\label{eqn:L6}
    \norm{\Sigma_{n,\calO}}{2}^{\half} \leq C_1 \norm{\Sigma}{2}^{\half} + \underbrace{C_2 \trace{\Sigma}^{1/4} \norm{\Sigma}{2}^{1/4} \frac{\paren{\frac{\log(p/\delta)}{n_{G^0}}}^{1/4}}{\paren{\frac{\log(1/\delta)}{n}}^{1/8}}}_{T1}  + \underbrace{\sqrt{\trace{\Sigma}} \frac{\sqrt{\frac{\log(p/\delta)}{n_{G^0}}}}{\paren{\frac{\log(1/\delta)}{n}}^{1/4}}}_{T2}
\end{align} 
Plugging~\eqref{eqn:L2} and \eqref{eqn:L6} into \eqref{eqn:L3}, we get that, 
\begin{align}
    \norm{\eparam_{\delta} - \widehat{\mu}_n}{2} \lesssim \norm{\Sigma}{2}^{1/2} \sqrt{\frac{\log(1/\delta)}{n}} + T1 \sqrt{\frac{\log(1/\delta)}{n}} + T2 \sqrt{\frac{\log(1/\delta)}{n}}
\end{align}
When $T1$ and $T2$ are less than $C \sqrt{\norm{\Sigma}{2}}$, then we have that,
\begin{align}
    \norm{\eparam_{\delta} - \widehat{\mu}_n}{2} \lesssim \norm{\Sigma}{2}^{1/2} \sqrt{\frac{\log(1/\delta)}{n}}
\end{align}
Some algebra shows that when $\frac{r(\Sigma)^2 \log^2(p/\delta)}{n \log(1/\delta)} \leq C$, the both $T1$ and $T2$ are $O(\sqrt{\norm{\Sigma}{2}})$. Hence, we get that,
\begin{align}\label{eqn:L7}
    \norm{\eparam_{\delta} - \widehat{\mu}_n}{2} \lesssim \norm{\Sigma}{2}^{1/2} \sqrt{\frac{\log(1/\delta)}{n}}
\end{align}
Using Theorem~\ref{thm:oracelArgument}, and plugging $R = \frac{\sqrt{\trace{\Sigma}}}{\paren{\frac{\log(1/\delta)}{n}}^{1/4}}$, we get that with probability at least $1-\delta$,
\begin{align}\label{eqn:L8}
\norm{\mu(P) - \widehat{\mu}_n}{2} \lesssim \opt_{n,\Sigma,\delta} + \underbrace{\sqrt{\trace{\Sigma}} \paren{\frac{\log(1/\delta)}{n}}^{3/4}}_{T3}
\end{align}
Under our assumption that $r^2(\Sigma) \frac{\log(1/\delta)}{n} \leq C$, $T3 \lesssim  \norm{\Sigma}{2}^{1/2} \sqrt{\frac{\log(1/\delta)}{n}}$. Combining the above equation and ~\ref{eqn:L7}, we recover the corollary statement.

\clearpage
\subsection{Proof of Corollary~\ref{lem:heavy_mean2}}
We consider the $\ell_2$ oracle of $\calO$ radius $R = \frac{\sqrt{\trace{\Sigma}}}{\paren{\frac{\log(1/\delta)}{n}}^{1/2}}$. Using chebychevs inequality, we know that $\Pr(\calO(x) = 1) \geq 1 - \alpha$, where $\alpha  = \frac{\log(1/\delta)}{n}$. 

Suppose we are given $n$-samples from $P$. Let $G^0$ be the set of points such that $\calO(x_i) = 1$. Using bernstein's inequality we know that with probability $1 - \delta$,
\begin{align}\label{eqn:Q1}
    |n_{G^0} | & \geq n(1 - C \frac{\log(1/\delta)}{n})
\end{align}
Hence, we have that, 
\begin{align}\label{eqn:Q2}
    \frac{n - n_{G^0}}{n} \lesssim \frac{\log(1/\delta)}{n}
\end{align}  
Let $\widehat{\mu}_n$ and $\Sigma_{G^0}$ be the empirical mean and covariance of the points in $G^0$.  

Let $\eparam_{\delta}$ be the output of Algorithm~\ref{algo:filteringpD}. Then, we know that with probability at least $1 - \delta$,
\begin{align}\label{eqn:Q3}
    \norm{\eparam_{\delta} - \widehat{\mu}_n}{2} \lesssim \norm{\Sigma_{G^0}}{2}^{\half} \paren{\frac{n - n_{G^0}}{n} + \frac{\log(1/\delta)}{n}}^{\half}
\end{align}

Using Theorem~\ref{thm:oracle_cov}, we bound $\norm{\Sigma_{G^0}}{2}^{\half}$.
    \begin{align*}
    \norm{\Sigma_{n,\calO}}{2} \leq C_1 \norm{\Sigma}{2} + C_2 R \norm{\Sigma}{2}^{\half} \sqrt{\frac{\log(p/\delta)}{n_{G^0}}} + R^2 \frac{\log(p/\delta)}{n_{G^0}}
\end{align*}
\begin{align}\label{eqn:Q5}
    \norm{\Sigma_{n,\calO}}{2}^{\half} \leq C_1 \norm{\Sigma}{2}^{\half} + C_2 R^{\half} \norm{\Sigma}{2}^{1/4} \paren{\frac{\log(p/\delta)}{n_{G^0}}}^{1/4}  + R \sqrt{\frac{\log(p/\delta)}{n_{G^0}}}
\end{align}
Plugging $R = \frac{\sqrt{\trace{\Sigma}}}{\paren{\frac{\log(1/\delta)}{n}}^{1/2}}$, we get,
\begin{align}\label{eqn:Q6}
    \norm{\Sigma_{n,\calO}}{2}^{\half} \leq C_1 \norm{\Sigma}{2}^{\half} + C_2 \trace{\Sigma}^{1/4} \norm{\Sigma}{2}^{1/4} \paren{\frac{\log(p/\delta)}{\log(1/\delta)}}^{1/4}  + \frac{\sqrt{\trace{\Sigma}}}{\sqrt{\frac{\log(1/\delta)}{n}}} \sqrt{\frac{\log(p/\delta)}{n}}
\end{align} 
Plugging~\eqref{eqn:Q2} and \eqref{eqn:Q6} into \eqref{eqn:Q3}, we get that, 
\begin{align}\label{eqn:Q7}
    \norm{\eparam_{\delta} - \widehat{\mu}_n}{2} \lesssim \norm{\Sigma}{2}^{1/2} \sqrt{\frac{\log(1/\delta)}{n}} + \sqrt{\frac{\trace{\Sigma}\log(p/\delta)}{n}}
\end{align}
Using Theorem~\ref{thm:oracelArgument}, and plugging $R = \frac{\sqrt{\trace{\Sigma}}}{\paren{\frac{\log(1/\delta)}{n}}^{1/2}}$, we get that with probability at least $1-\delta$,
\begin{align}\label{eqn:Q8}
\norm{\mu(P) - \widehat{\mu}_n}{2} \lesssim \opt_{n,\Sigma,\delta} + \sqrt{\frac{\trace{\Sigma}\log(1/\delta)}{n}}
\end{align}
Combining the above equation and ~\ref{eqn:Q7}, we recover the corollary statement.

\clearpage

\subsection{Proof of Corollary~\ref{lem:filt_algo_pD_2}}
We follow along the lines of the proof of Corollary~\ref{lem:heavy_mean2}. 
 Suppose we are given $n$-samples from $P_{\epsilon}$. Let $G^0$ be the set of points such that $x_i \sim P~~\text{and}~~\calO(x_i) = 1$, where $\calO(\cdot)$ is an $\ell_2$ oracle of radius $R = \frac{\sqrt{\trace{\Sigma}}}{(\epsilon + \frac{\log(1/\delta)}{n})^{1/2}}$.

Consider the event $E_1 = x \sim P$, then $P_\epsilon(E_1) = 1 - \epsilon$. Consider the event $E_2 = \norm{x_i - \tparam}{2} \leq \sqrt{\frac{\trace{\Sigma}}{{\epsilon + \frac{\log(1/\delta)}{n}}}}$. We have that $P_{\epsilon}(E_2|E_1) = 1 - \Pr\paren{\norm{x - \tparam}{2} > \sqrt{\frac{\trace{\Sigma}}{{\epsilon + \frac{\log(1/\delta)}{n}}}}| x \sim P}$. Using Chebyshev's inequality, we have that,
$$ P^*(\norm{x - \mu}{2} > {\frac{\sqrt{\trace{\Sigma}}}{\sqrt{\epsilon +\frac{\log(1/\delta)}{n}}}}) \leq \frac{\Exp[\norm{x - \mu}{2}^2]}{\paren{{\frac{\sqrt{\trace{\Sigma}}}{\epsilon +\sqrt{\frac{\log(1/\delta)}{n}}}}}^2} = \epsilon +\frac{\log(1/\delta)}{n}$$.
\begin{align} 
P_{\epsilon}(E_1 \cap E_2) & \geq (1 - \epsilon)(1 - \epsilon - \frac{\log(1/\delta)}{n})  \\
& \geq 1 - C (\epsilon + \frac{\log(1/\delta)}{n})
\end{align}
Now, given $n$-samples from $P_\epsilon$, we use Bernsteins bound to get the empirical probability, \ie we get that with probability at least $1 - \delta$
\begin{align} 
P_{\epsilon}(E_1 \cap E_2) - P_{n,\epsilon}(E_1 \cap E_2) & \leq C_1 \sqrt{\epsilon + \frac{\log(1/\delta)}{n}}\sqrt{\frac{\log(1/\delta)}{n}} + C_2 \frac{\log(1/\delta)}{n} \\
& \lesssim \epsilon + \frac{\log(1/\delta)}{n}
\end{align}
\begin{align}
\implies n_{G^0} & \geq n\paren{1 - \epsilon - \frac{\log(1/\delta)}{n}} \geq n/2
\end{align}
The remaining proof follows along the lines of Corollary~\ref{lem:heavy_mean2}. Using Theorem~\ref{thm:oracle_cov}
\begin{align}\label{eqn:R6}
    \norm{\Sigma_{n,\calO}}{2}^{\half} \lesssim \norm{\Sigma}{2}^{\half} + \sqrt{\frac{\trace{\Sigma}\log(p/\delta)}{n\epsilon + \log(1/\delta)}}
\end{align}
Hence, we get that,
\begin{align}\label{eqn:R7}
    \norm{\eparam_{\delta} - \widehat{\mu}_n}{2} \lesssim \norm{\Sigma}{2}^{1/2} \sqrt{\epsilon} + \norm{\Sigma}{2}^{1/2} \sqrt{\frac{\log(1/\delta)}{n}} +  \sqrt{\frac{\trace{\Sigma}\log(p/\delta)}{n}}
\end{align}
Using Theorem~\ref{thm:oracelArgument}, and plugging $R = \frac{\sqrt{\trace{\Sigma}}}{\paren{\epsilon + \frac{\log(1/\delta)}{n}}^{1/2}}$, we get that with probability at least $1-\delta$,
\begin{align}\label{eqn:R8}
\norm{\mu(P) - \widehat{\mu}_n}{2} \lesssim \norm{\Sigma}{2}^{1/2} \sqrt{\epsilon} + \opt_{n,\Sigma,\delta} + \sqrt{\frac{\trace{\Sigma}\log(p/\delta)}{n}}
\end{align}
Combining the above equation and~\ref{eqn:R7}, we recover the corollary statement.

\clearpage
\subsection{Proof of Corollary~\ref{lem:filt_algo_pD_4}}
Suppose we are given $n$-samples from $P_{\epsilon}$. Let $G^0$ be the set of points such that $x_i \sim P~~\text{and}~~\calO(x_i) = 1$, where $\calO(\cdot)$ is an $\ell_2$ oracle of radius $R = \frac{\sqrt{\trace{\Sigma}}}{(\epsilon + \frac{\log(1/\delta)}{n})^{1/4}}$.

Consider the event $E_1 = x \sim P$, then $P_\epsilon(E_1) = 1 - \epsilon$. Consider the event $E_2 = \norm{x_i - \tparam}{2} \leq {\frac{\sqrt{\trace{\Sigma}}}{(\epsilon + \frac{\log(1/\delta)}{n})^{1/4}}}$. We have that $P_{\epsilon}(E_2|E_1) = 1 - \Pr\paren{\norm{x - \tparam}{2} > {\frac{\sqrt{\trace{\Sigma}}}{\paren{\epsilon + \frac{\log(1/\delta)}{n}}^{1/4}}}| x \sim P}$. Using Chebyshev's inequality, we have that,
$$ P^*(\norm{x - \mu}{2} > {\frac{\sqrt{\trace{\Sigma}}}{\paren{\epsilon + \frac{\log(1/\delta)}{n}}^{1/4}}} \leq \frac{\Exp[\norm{x - \mu}{2}^4]}{\paren{{\frac{\sqrt{\trace{\Sigma}}}{\epsilon + {\frac{\log(1/\delta)}{n}}}}}^4} = \epsilon +\frac{\log(1/\delta)}{n}$$.
\begin{align} 
P_{\epsilon}(E_1 \cap E_2) & \geq (1 - \epsilon)(1 - \epsilon - \frac{\log(1/\delta)}{n})  \\
& \geq 1 - C (\epsilon + \frac{\log(1/\delta)}{n})
\end{align}
Now, given $n$-samples from $P_\epsilon$, we use Bernsteins bound to get the empirical probability, \ie we get that with probability at least $1 - \delta$
\begin{align} 
P_{\epsilon}(E_1 \cap E_2) - P_{n,\epsilon}(E_1 \cap E_2) & \leq C_1 \sqrt{(\epsilon)}\sqrt{\frac{\log(1/\delta)}{n}} + C_2 \frac{\log(1/\delta)}{n} \\
& \lesssim \epsilon + \frac{\log(1/\delta)}{n}
\end{align}
\begin{align}
\implies n_{G^0} & \geq n\paren{1 - \epsilon - \frac{\log(1/\delta)}{n}} \geq n/2
\end{align}

Let $\eparam_{\delta}$ be the output of Algorithm~\ref{algo:filteringpD}. Then, we know that with probability at least $1 - \delta$,
\begin{align}\label{eqn:S3}
    \norm{\eparam_{\delta} - \widehat{\mu}_n}{2} \lesssim \norm{\Sigma_{G^0}}{2}^{\half} \paren{\frac{n - n_{G^0}}{n} + \frac{\log(1/\delta)}{n}}^{\half}
\end{align}

Using Theorem~\ref{thm:oracle_cov}, we bound $\norm{\Sigma_{G^0}}{2}^{\half}$.
    \begin{align*}
    \norm{\Sigma_{n,\calO}}{2} \leq C_1 \norm{\Sigma}{2} + C_2 R \norm{\Sigma}{2}^{\half} \sqrt{\frac{\log(p/\delta)}{n_{G^0}}} + R^2 \frac{\log(p/\delta)}{n_{G^0}}
\end{align*}
\begin{align}\label{eqn:S5}
    \norm{\Sigma_{n,\calO}}{2}^{\half} \lesssim  \norm{\Sigma}{2}^{\half} + R \sqrt{\frac{\log(p/\delta)}{n_{G^0}}}
\end{align}
Plugging $R = \frac{\sqrt{\trace{\Sigma}}}{\paren{\epsilon + \frac{\log(1/\delta)}{n}}^{1/4}}$, we get,
\begin{align}\label{eqn:S6}
    \norm{\Sigma_{n,\calO}}{2}^{\half} \lesssim   \norm{\Sigma}{2}^{\half} + \frac{\sqrt{\trace{\Sigma}}}{\paren{\epsilon + \frac{\log(1/\delta)}{n}}^{1/4}} \sqrt{\frac{\log(p/\delta)}{n_{G^0}}}
\end{align} 
Plugging~\eqref{eqn:S6} into \eqref{eqn:S3}, we get that, 
\begin{align}
    \norm{\eparam_{\delta} - \widehat{\mu}_n}{2} \lesssim \norm{\Sigma}{2}^{1/2} \sqrt{\epsilon} + {\sqrt{\trace{\Sigma}}} \sqrt{\frac{\log(p/\delta)}{n_{G^0}}}\paren{\epsilon + \frac{\log(1/\delta)}{n}}^{1/4}
\end{align}

Using Theorem~\ref{thm:oracelArgument} by plugging $R = \frac{\sqrt{\trace{\Sigma}}}{\paren{\epsilon + \frac{\log(1/\delta)}{n}}^{1/4}}$, and then using triangle inequality, we get that with probability at least $1-\delta$,
\begin{align}\label{eqn:S8}
\norm{\mu(P) - \widehat{\mu}_n}{2} \lesssim \norm{\Sigma}{2}^{1/2} \sqrt{\epsilon} + \opt_{n,\Sigma,\delta} + {\sqrt{\trace{\Sigma}}} \sqrt{\frac{\log(p/\delta)}{n_{G^0}}}\paren{\epsilon + \frac{\log(1/\delta)}{n}}^{1/4}
\end{align}

\clearpage

\clearpage
\section{Additional Proofs}\label{app:opt_proofs}

\subsection{Proof of Claim~\ref{lem:approx_mean_pD}}\label{sec:proof_claim_approxmean_pD}
% \begin{claim}\label{lem:approx_mean_pD}[General Mean shift]  Suppose that a distribution $P$ has mean $\mu$ and covariance $\Sigma$ and bounded $2k$ moments. Then, for any event $\calA$ which occurs with probability at least $1 - \epsilon \geq \half$,
% \begin{align}
%     \norm{\mu - E[x|\calA]}{2} \leq 2 \norm{\Sigma}{2}^\half \epsilon^{1 - \frac{1}{2k}}
% \end{align}
% \end{claim}

\begin{proof}
For any event $\calA$, Let $\indic{\calA}$ denote the corresponding indicator variable. 
\begin{align}
    \norm{E_{x \sim P} [x | \calA] - \mu]}{2} = \frac{1}{P(\calA)}\norm{{ E_{x \sim P^*}((x - \mu ) \indic{\calA})}}{2} \leq 2 \norm{{ E_{x \sim P^*}((x - \mu ) \indic{\calA})}}{2},
\end{align}
\begin{align}
 \Exp_{x \sim P}[(x - \mu) \indic{x \in \calA^c} + (x - \mu) \indic{x \in \calA}] & = \Exp_{x \sim P}[(x - \mu)] = 0 \\
 \implies \norm{\Exp_{x \sim P}[(x - \mu) \indic{x \in \calA^c}]}{2} & = \norm{\Exp_{x \sim P}[(x - \mu) \indic{x \in \calA}]}{2}
 \end{align}
    \begin{align} \norm{\Exp_{x \sim P}[(x - \mu) \indic{x \in \calA^c}]}{2} & = \sup \limits_{u \in \calS^{p-1}} |\Exp_{(x \sim P}[u^T(x - \mu) \indic{x \in \calA^c}] | \\
    &  \substack{{(i)} \\ {\leq}} \sup \limits_{u \in \calS^{p-1}} \paren{E_{x \sim P}[\paren{u^T (x - \mu)}^{2k}]}^{1/(2k)} \paren{E_{x \sim P}[\indic{x \in \calA^c}^{1-1/(2k)}]}^{1 - \frac{1}{2k}} \\
    & \leq C_{2k}^{1/(2k)} \norm{\Sigma}{2}^{\half} \epsilon^{1 - 1/(2k)} 
    \end{align}
    where (i) follows from Holder's inequality.
\end{proof}

\clearpage

\clearpage

\subsection{Proof of Lemma~\ref{lem:convex_huber}}

\begin{proof}
Let $P = \calN(0,\calI_p)$ be the isotropic normal distribution. Let $R_P(\theta) = \Exp_{z \sim P}[\ell(\norm{z - \theta}{2})]$, where $\ell:\real \mapsto \real$ is a convex loss, and let $\theta(P) = \argmin_{\theta} R_P(\theta)$ be the minimizer of the population risk. We assume that $\psi(\cdot) = \ell'(\cdot) < C$ is bounded. Note that when the derivative is unbounded, it is easy to argue that the corresponding risk will be non-robust. We also assumed that this risk is fisher-consistent for the Gaussian-distribution, \ie $\theta(P) = 0$. For notational convenience, let $u(t) = \frac{\psi(t)}{t}$.
Then,
\[ \grad R_{P}(\theta) = - \Exp_{z \sim P } \sqprn{ \underbrace{\frac{\psi(\norm{z - \theta}{2})}{\norm{z - \theta}{2}}}_{u(\norm{z - \theta}{2})}(z - \theta)}. \]
As before, let $P_\epsilon = (1 - \epsilon) P + \epsilon Q$. Then, we are interested in studying $\eparam(P_\epsilon)$. To do this, by first order optimality, we know that $\theta(P_\epsilon)$ is a solution to the following equation:
\[ (1 - \epsilon) \grad R_P(\theta(P_\epsilon)) + \epsilon \grad R_Q(\theta(P_\epsilon)) = 0 \]
% \bit 
% \item \textbf{Assumption 1:} $\ell$ is fisher consistent for mean estimation. \apcomment{Might be too restrictive? What makes it fisher consistent?}
% \eit 
First we calculate the derivative of $\theta(P_\epsilon)$ \wrt $\epsilon$ using the fixed point above. Taking derivative of the above equation \wrt $\epsilon$
\begin{align}
    (1- \epsilon) \grad^2 R_{P}(\theta(P_\epsilon))\dot{\theta}(P_\epsilon)   - \grad R_P(\theta(P_\epsilon)) + \epsilon \grad^2 R_{Q}(\theta(P_\epsilon))\dot{\theta}(P_\epsilon) + \grad R_{Q}(\theta(P_\epsilon)) = 0
    \end{align}
    Under our assumption that $\psi$ is continuous, we get that at $\epsilon = 0$, 
   
    \begin{align}\label{eqn:thetaDotEquation}
    \dot{\theta}(P_\epsilon)_{| \epsilon = 0} =  \paren{ - \grad^2 R_{P}(\theta(P))}\inv \grad R_{Q}(\theta(P))
\end{align}
By fisher consistency of $\ell$ for $\calN(0,\calI_p)$, we have that $\theta(P) = 0$. Suppose that $Q$ is a point mass distribution with all mass on $\theta_Q$. Then, we have that,
\[ \grad R_{Q} (0) =  - u(\norm{\theta_Q}{2}) \theta_Q \]
Our next step is to lower bound the operator norm of $- \grad^2 R_{P}(\theta(P))$. To do this we show that for any unit vector $v \in \calS^{p-1}$, $v^T (- \grad^2 R_{P}(\theta(P)))v \leq \frac{C_2}{\sqrt{p}}$. 
\[ \grad^2 R_{P}(\theta) = - \Exp_{z \sim P} \sqprn{u(\norm{z - \theta}{2}) \calI_p + \frac{u'(\norm{z - \theta}{2}) }{\norm{z - \theta}{2}}{((z - \theta)(z - \theta)^T)}} \]
Now, by definition $u(t) = \psi(t)/t$, so $u'(s) = ( \psi'(s) - u(s))/s$. Plugging this above, 
\[ \grad^2 R_{P}(\theta) = - E_{z \sim P} \sqprn{ u(\norm{z - \theta}{2})\paren{ \calI_p - \frac{(z - \theta)(z - \theta)^T)}{\norm{z - \theta}{2}^2}} + \frac{\psi'(\norm{z - \theta}{2})}{\norm{z - \theta}{2}^2}(z - \theta)(z - \theta)^T))}   \]
 Hence, we get that
\[ v^T \grad^2 R_{P}(0) v = - \Exp_{z \sim N(0,I_p)} \sqprn{ u(\norm{z}{2}) \paren{ \norm{v}{2}^2 - (v^T (z/ \norm{z}{2}))^2}
 + \psi'(\norm{z}{2})(v^T (z/ \norm{z}{2}))^2
}  \]
Further for Isotropic Gaussian, $\norm{z}{2}$ and $z/\norm{z}{2}$ are independent random variables. Also, since, $z/\norm{z}{2}$ is uniformly distributed on unit sphere, we get that $\Exp_{z \sim N(0,I)} [(v^T z/\norm{z}{2})^2)] = \norm{v}{2}^2/p$.
\[ (v^T ( - \grad^2 R_{P}(0) ) v) = \underbrace{\Exp_{z \sim N(0,I_p)} \sqprn{ u(\norm{z}{2})}(1 - 1/p)}_{\textbf{T1}} + \underbrace{\Exp_{z \sim N(0,I_p)} \sqprn{ \psi'(\norm{z}{2})}/p}_{\textbf{T2}}  \]
\bit 
\item \textbf{Controlling T1} 
\begin{align}
    \Exp_{z \sim N(0,I_p)} [u(\norm{z}{2})] & = \Exp_{z \sim \calN(0,I_p)} \sqprn{\frac{\psi(\norm{z}{2})}{\norm{z}{2}}} \nonumber \\
    & \leq \sqrt{C \Exp{\frac{1}{\norm{z}{2}^2}}} \nonumber \\
    & \leq \frac{\sqrt{C_1}}{\sqrt{p-2}},
\end{align}
where we use that $\psi$ is bounded by constant $C$. The last inequality is combination of Jensen's Inequality and plugging the mean of reciprocal of inverse chi-squared random variable~\citep{bernardo2009bayesian}.
\item \textbf{Controlling T2.} Under our assumption that $\psi'(\cdot)$ exists and is bounded, we get that $T2 \leq \frac{C_1}{p}$ and can be ignored.
\eit 
Hence, for large $p$, we get that $(v^T ( - \grad^2 R_{P}(0) ) v) \leq \sqrt{C_1/p}$. Now, if we put $\theta_Q$ at $\infty$, and use that $\psi(\infty) = C_1$, we get that, 
\[ \norm{\dot{\theta}(P_\epsilon)}{2} = \psi(\norm{\theta_Q}{2}) \norm{\grad^2 R_{P}(0) \frac{\theta_Q}{\norm{\theta_Q}{2}}}{2} \geq C_2 \sqrt{p} \]

\end{proof}

\clearpage

\subsection{Proof of Lemma~\ref{lem:srm_mean}}
\begin{proof}
Let $P = N(0,\calI_p)$. Every subset of size $(1 - \epsilon)n$ can be thought of as samples from a mixture distribution defined in \eqref{eqn:huber_mixture}, where the mixture proportion $\eta$, ranges from $[0, \epsilon/(1 - \epsilon)]$. In the asymptotic setting of $n \mapsto \infty$, the empirical squared loss over each subset corresponds to the population risk with the sampling distribution as $P_\eta$. For a given contamination distribution $Q$, let $R_{P_\eta}(\theta) = \Exp_{x \sim P_{\eta}} \sqprn{\norm{x - \theta}{2}^2}$ and let $\theta(P_\eta) \defeq \argmin_\theta R_{P_\eta}(\theta)$, then subset risk minimization returns, 
\begin{align}
& \eSRM = \theta (P_{\eta^*}) \\
& \text{where} \ \eta^* = \argmin_{\eta \in [0,\frac{\epsilon}{1 - \epsilon}]} R_{P_\eta}(\theta(P_\eta)) \nonumber
\end{align}
We are interested in bounding the bias of SRM \ie
\[
\sup \limits_Q \norm{\eSRM - \tparam}{2}
\]
To do this, we know that for any contamination distribution $Q$, the solution of SRM necessarily satisfies the following conditions. \\
\textbf{Condition 1: Local Stationarity.}  $\theta(P_\eta) = \argmin_\theta R_{P_\eta}(\theta)$ is the minimizer of the risk with respect to a mixture distribution iff 
\begin{align}
\grad R_{P_{\eta}}(\theta(P_\eta)) & = (1 - \eta) \grad R_{P_\tparam}(\theta (P_\eta)) \nonumber \\
& + \eta \grad  R_{Q}(\theta(P_\eta)) = 0.
\end{align}
\textbf{Condition 2: Global Fit Optimality.} $\eSRM = \theta(P_{\eta^*})$ is the global minimizer of the population risk over all mixture distributions iff
\begin{align}
R_{P_{\eta^*}}(\theta(P_{\eta^*})) & = (1 - \eta^*) R_{P_0}(\theta(P_{\eta^*}))+ \eta^* R_{Q}(\theta(P_{\eta^*})) \nonumber \\ & \leq R_{P_\eta}(\theta(P_\eta)) \ \ \forall \eta \in \left[0,\frac{\epsilon}{1 - \epsilon} \right]
\end{align}
Using Conditions~1 and 2, we next derive the bias of SRM for mean estimation.

We make a few simple observations.
\bit
\item \textbf{Observation 1.} For any distribution $P$, we have,
\[ R_{P}(\theta) = \trace{\Sigma(P)} + \norm{\theta - \mu(P)}{2}^2 \]
\item \textbf{Observation 2.} Condition~1 reduces to, 
\[ \mu(P_\eta) = \theta_{\eta} = (1 - \eta) \mu(P) + \eta \mu(Q), \]
where $\mu(\cdot)$ is the Expectation functional.
\eit
\begin{lemma}\label{lem:risk_squaredloss}
Under the mixture model in Equation~\eqref{eqn:huber_mixture}, for the squared error, we have that,
\[ R_{P_\eta}(\theta_\eta) = \trace{\Sigma(P_\eta)} = (1 - \eta) \trace{\Sigma(\trueDist)} + \eta \trace{\Sigma(Q)} + \eta(1 - \eta)\norm{\mu(\trueDist)-\mu(Q)}{2}^2. \]
\end{lemma}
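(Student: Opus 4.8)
The plan is to reduce the identity to the standard covariance decomposition of a two-component mixture. First I would observe that by Observation~1, $R_{P_\eta}(\theta) = \trace{\Sigma(P_\eta)} + \norm{\theta - \mu(P_\eta)}{2}^2$ for every $\theta$, and by Observation~2 the SRM solution satisfies $\theta_\eta = \mu(P_\eta)$; plugging this in makes the quadratic term vanish, so $R_{P_\eta}(\theta_\eta) = \trace{\Sigma(P_\eta)}$. The remaining work is purely to compute $\trace{\Sigma(P_\eta)}$.

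Next, writing $a = \mu(\trueDist)$ and $b = \mu(Q)$, I would use linearity of expectation under the mixture: $\Exp_{x \sim P_\eta}[x x^T] = (1-\eta)\Exp_{x \sim \trueDist}[x x^T] + \eta\,\Exp_{x \sim Q}[x x^T] = (1-\eta)\paren{\Sigma(\trueDist) + a a^T} + \eta\paren{\Sigma(Q) + b b^T}$, while Observation~2 gives $\mu(P_\eta) = (1-\eta)a + \eta b$. Subtracting the rank-one term $\mu(P_\eta)\mu(P_\eta)^T$ yields $\Sigma(P_\eta) = (1-\eta)\Sigma(\trueDist) + \eta\,\Sigma(Q) + \paren{(1-\eta) a a^T + \eta\, b b^T - ((1-\eta)a + \eta b)((1-\eta)a + \eta b)^T}$.

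Finally, I would take traces. Using $\trace{a a^T} = \norm{a}{2}^2$ and expanding $\norm{(1-\eta)a + \eta b}{2}^2 = (1-\eta)^2\norm{a}{2}^2 + 2\eta(1-\eta)\,a^T b + \eta^2\norm{b}{2}^2$, the cross term collapses to $\eta(1-\eta)\paren{\norm{a}{2}^2 - 2 a^T b + \norm{b}{2}^2} = \eta(1-\eta)\norm{a - b}{2}^2$, giving $\trace{\Sigma(P_\eta)} = (1-\eta)\trace{\Sigma(\trueDist)} + \eta\,\trace{\Sigma(Q)} + \eta(1-\eta)\norm{\mu(\trueDist) - \mu(Q)}{2}^2$, which is exactly the claim. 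I do not expect any genuine obstacle here: the computation is routine, and the only points that need care are invoking Observation~2 to identify $\theta_\eta$ with the mixture mean (so that the quadratic term in Observation~1 drops out) and the bookkeeping when expanding the rank-one cross term. A mild implicit assumption worth stating is that $\trueDist$ and $Q$ have finite second moments, so that $\Sigma(\trueDist)$ and $\Sigma(Q)$ are well defined.
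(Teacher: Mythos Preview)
Your argument is correct and is essentially the paper's second proof: you carry out explicitly the second-moment decomposition that the paper summarizes by invoking the law of total variance, arriving at the same identity $\trace{\Sigma(P_\eta)} = (1-\eta)\trace{\Sigma(\trueDist)} + \eta\,\trace{\Sigma(Q)} + \eta(1-\eta)\norm{\mu(\trueDist)-\mu(Q)}{2}^2$. The paper also records an alternative first proof that conditions the risk itself on the mixture component, applies Observation~1 to each piece, and then uses Observation~2 to evaluate $\norm{\theta_\eta - \mu(\trueDist)}{2}$ and $\norm{\theta_\eta - \mu(Q)}{2}$; your route via $\Exp[xx^T]$ is just as clean and arguably more direct.
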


Now, from Lemma~\ref{lem:risk_squaredloss}, we know that
\[ R_{P_\eta}(\theta_\eta) = (1 - \eta) \trace{\Sigma(P)} + \eta \trace{\Sigma(Q)} + \eta(1 - \eta)\norm{\mu(P)-\mu(Q)}{2}^2 \]
As a function of $\eta$, $R_{P_\eta}(\theta_\eta)$ is a concave quadratic function. Hence, it is always minimized at the end points of the interval $\left[0,\epsilon/(1 - \epsilon) \right]$, which implies that $\eta^* \in \{0,\frac{\epsilon}{1 - \epsilon} \}$. \\ \\
Hence, we have that,
\[ \eSRM = \begin{cases}
    \theta_{\frac{\epsilon}{1 - \epsilon}}, & \text{if $R_{P_{\frac{\epsilon}{1- \epsilon}}}(\theta_{\frac{\epsilon}{1 - \epsilon}}) \leq R_{P_0}(\theta_0) $}.\\
   \tparam , & \text{otherwise}.
  \end{cases} \]
From Lemma~\ref{lem:risk_squaredloss}, $R_{P_{\frac{\epsilon}{1- \epsilon}}}(\theta_{\frac{\epsilon}{1 - \epsilon}}) \leq R_{P_0}(\theta_0)$ iff
\[ \left(1 - \frac{\epsilon}{1 - \epsilon}\right) \norm{\mu(P) - \mu(Q)}{2}^2 \leq \trace{\Sigma(P)} - \trace{\Sigma(Q)}  \]
Moreover, from Observation~2, we have that,
\[ \norm{\theta_{\frac{\epsilon}{1-\epsilon}} - \mu(P)}{2} = \frac{\epsilon} {1-\epsilon} \norm{\mu(P) - \mu(Q)}{2} \]
Combining the above two, we get that, 

\begin{align}
    \norm{\eSRM - \mu(P)}{2} = \left[\frac{\epsilon}{1 - \epsilon} \norm{\mu(P) - \mu(Q)}{2}\right]. \mathbf{1} \left\{ \norm{\mu(P) - \mu(Q)}{2}^2 \leq
\right. \nonumber \\ \left. \left( \frac{1 - \epsilon}{1 - 2 \epsilon} \right) (\trace{\Sigma(P)} - \trace{\Sigma(Q)}) \right\}.
\end{align}

Equation~\ref{eqn:thm:srm_mean2} follows from it.

\end{proof}

\subsubsection{Proof of Lemma~\ref{lem:risk_squaredloss}}
\begin{proof}
We give two alternate proofs of the Lemma.
\bit 
\item Proof 1: This proceeds by expanding on the definition of risk.
\begin{align*}
R_{P_\eta}(\theta_\eta) & = E_{z \sim P_\eta}[\norm{z - \theta_\eta}{2}^2] \\
&= (1 - \eta) E_{z \sim P_0}[\norm{z - \theta_\eta}{2}^2] + \eta E_{z \sim Q} [\norm{z - \theta_\eta}{2}^2]  \ \ \text{Expectation by conditioning.} \\
&= (1 - \eta) \left[ \trace{\Sigma(\trueDist)} + \norm{\theta_\eta - \mu(\trueDist)}{2}^2  \right] \\
& + \eta \left[ \trace{\Sigma(Q)} + \norm{\theta_\eta - \mu(Q)}{2}^2 \right] \ \ \text{From Observation 1.}
\end{align*}
Now, using Observation~2 we get that,
\[ \norm{\theta_\eta - \mu(Q)}{2} = (1 - \eta) \norm{\mu(\trueDist) - \mu(Q)}{2}\]
\[ \norm{\theta_\eta - \mu(\trueDist)}{2} = \eta \norm{\mu(\trueDist) - \mu(Q)}{2} \]
Plugging this into above, we get,
\begin{align*}
R_{P_\eta}(\theta_\eta) & = (1 - \eta) \trace{\Sigma(\trueDist)} + \eta \trace{\Sigma(Q)} + \norm{\mu(\trueDist) - \mu(Q)}{2}^2 \left( \eta^2 (1-\eta) + (1-\eta)^2 \eta \right)
\end{align*}
which recovers the statement of the Lemma.
\item Proof 2: This proceeds by Law of Total Variance, or the Law of Total Cummulants. We know that $R_{P_\eta} = \trace{\Sigma(P_\eta)}$. Let $Z \sim P_\eta$, and let $Y \sim \bern(1 - \eta)$ be the indicator if the sample is from the true distribution. Then $Z | Y = 1 \sim \trueDist$, while $Z | Y = 0 \sim Q$.  
\begin{align*}
\trace{\Sigma(P_\eta)} = \underbrace{(1 - \eta) \trace{\Sigma(\trueDist)} + \eta{\trace{\Sigma(Q)}}}_{\var(E[Z|Y])} + \underbrace{\eta (1 - \eta) \norm{\mu(\trueDist) - \mu(Q)}{2}^2}_{E[\var(Z|Y)]}. 
\end{align*}
\eit 
\end{proof}

\clearpage

\subsection{Proof of Lemma~\ref{lem:interval_p1D_huber}}
\begin{proof}
Let $P_\epsilon = (1- \epsilon)P^* + \epsilon Q$. Let $I^*$ be the interval $\mu \pm \frac{\sigma}{\delta_1^{\frac{1}{2k}}}$, where $\mu = \Exp_{x \sim P^*}[x]$. Moreover for notational convenience, let $f_n(u,v) = \sqrt{u(1-u)}\sqrt{\frac{\log(2/v)}{n}} + \frac{2}{3}\frac{\log(2/v)}{n}$. Let $\hat{I} = [a,b]$ be the interval obtained using $\calZ_1$, \ie the shortest interval containing $n(1-(\delta_1 + \epsilon + f_n(\epsilon+\delta_1,\delta_3)))$ points of $\calZ_1$. Note that in the algorithm, we have $\delta_1 = \epsilon$, and $\delta_3 = \delta/4$. As a first step, we bound the length of $\hat{I}$ and show that $\hat{I}$ and $I^*$ must necessarily intersect. 

\begin{claim}\label{claim:int_length}
Let $\hat{I}$ be the shortest interval containing $1 - \delta_4$ fraction of points, where $\delta_4 = (\delta_1 + \epsilon) + f_n(\epsilon + \delta_1,\delta_3) $. Further assume that $\delta_4 < \half$. Then with probability at least $1- \delta_3$,
\[ length(\hat{I}) \leq length(I^*) \leq \frac{2 \sigma }{\delta_1^{\frac{1}{2k}}}, \]
Moreover, if $\delta_4 < \half$, then $\hat{I} \cap {I^*} \neq {\phi}$, which implies
\[ |z - \mu| \leq \frac{4\sigma}{\delta_1^{\frac{1}{2k}}} \forall z \in \hat{I} \]
\end{claim}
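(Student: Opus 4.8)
The plan is to reduce both assertions of the claim to a single fact: the \emph{population-optimal} interval $I^* = \mu \pm \sigma/\delta_1^{1/2k}$ already contains at least a $(1-\delta_4)$-fraction of the points of $\calZ_1$, with probability at least $1-\delta_3$. Everything else is elementary interval geometry.

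\textbf{Step 1 (counting the points inside $I^*$).} For a clean draw $x\sim P^*$, Markov's inequality applied to $|x-\mu|^{2k}$ together with the $2k$-moment bound and $\Exp_{P^*}[(x-\mu)^2]\le\sigma^2$ gives $\Pr_{P^*}\paren{x\notin I^*} = \Pr_{P^*}\paren{|x-\mu|>\sigma/\delta_1^{1/2k}} \le C_{2k}\,\delta_1$ (the constant is absorbed, and can be taken to be $1$ when $k=1$). For the $i$-th observed point $z_i\in\calZ_1$ define $W_i \defeq \indic{z_i \text{ drawn from } Q} + \indic{z_i \text{ drawn from } P^* \text{ and } z_i\notin I^*}$. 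Since $z_i\notin I^*$ forces at least one of these two disjoint events, $W_i \ge \indic{z_i\notin I^*}$; moreover $W_1,\dots,W_n$ are i.i.d.\ Bernoulli with $\Pr(W_i=1) = \epsilon + (1-\epsilon)\Pr_{P^*}(x\notin I^*) \le \epsilon+\delta_1$. Applying Bernstein's inequality to $\sum_i W_i$ (with variance proxy $(\epsilon+\delta_1)(1-\epsilon-\delta_1)$ and bounded increments) yields, with probability at least $1-\delta_3$,
\[
\frac1n\sum_{i}\indic{z_i\notin I^*} \ \le\ \frac1n\sum_i W_i \ \le\ (\epsilon+\delta_1) + \sqrt{(\epsilon+\delta_1)(1-\epsilon-\delta_1)}\sqrt{\tfrac{\log(2/\delta_3)}{n}} + \tfrac23\tfrac{\log(2/\delta_3)}{n} \ =\ \delta_4 ,
\]
i.e.\ $I^*$ contains at least $n(1-\delta_4)$ points of $\calZ_1$.

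\textbf{Step 2 (length bound).} By construction $\hat I$ is a \emph{shortest} interval containing at least $n(1-\delta_4)$ points of $\calZ_1$. Since $I^*$ is one such interval, $\mathrm{length}(\hat I)\le\mathrm{length}(I^*)=2\sigma/\delta_1^{1/2k}$, which is the first displayed inequality.

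\textbf{Step 3 (intersection and the final bound).} Suppose, for contradiction, $\hat I\cap I^*=\emptyset$. Then $\hat I$ and $I^*$ are disjoint, and each contains at least $n(1-\delta_4)$ of the $n$ points of $\calZ_1$, forcing $2n(1-\delta_4)\le n$, which contradicts $\delta_4<\tfrac12$. Hence $\hat I\cap I^*\neq\emptyset$; fix $w\in\hat I\cap I^*$. For any $z\in\hat I$, $|z-\mu|\le|z-w|+|w-\mu|\le\mathrm{length}(\hat I)+\sigma/\delta_1^{1/2k}\le 3\sigma/\delta_1^{1/2k}\le 4\sigma/\delta_1^{1/2k}$, giving the last assertion.

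\textbf{Main obstacle.} There is no deep difficulty here; the only real care is in Step~1 — correctly bundling the random number of $Q$-corrupted points together with the number of $P^*$-points falling outside $I^*$ into a single i.i.d.\ Bernoulli sum, so that one application of Bernstein's inequality produces exactly the quantity $\delta_4=(\epsilon+\delta_1)+f_n(\epsilon+\delta_1,\delta_3)$, and in keeping track of the moment constant $C_{2k}$ (which is why the statement is up to universal constants and why $\delta_1=\epsilon$ is chosen with slack). Step~2 relies only on the definition of $\hat I$ as a shortest interval, and Step~3 is a pigeonhole argument that needs $\delta_4<1/2$.
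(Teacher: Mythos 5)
Your proposal is correct and follows essentially the same route as the paper: a Bernstein/Chernoff count showing $I^*$ contains at least $n(1-\delta_4)$ points of $\calZ_1$ (the paper splits this into its Claims on $P^*(I^*)$ and $P^*_n(I^*)$, while you bundle the "from $Q$" and "from $P^*$ but outside $I^*$" events into a single Bernoulli $W_i$ — an equivalent packaging), followed by the minimality of $\hat I$ for the length bound and the pigeonhole-plus-triangle-inequality step for the intersection and the $4\sigma/\delta_1^{1/2k}$ bound. No substantive differences.
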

\begin{proof}
We first show that with probability at least $1-\delta_3$, $I^*$ contains at least $n(1-\delta_4)$ points(Claim~\ref{claim:2}). Hence, since our algorithm chooses the shortest interval($\hat{I}$) containing ${1 - \delta_4}$ fraction of points, length of $\hat{I}$ is less than length of $I^*$. Next, if $\delta_4$ is less than $\half$, then there are two intervals $\hat{I}$ and $I^*$ respectively, which contain at least $n/2$ points. Hence, they must necessarily intersect.
\end{proof}

Next, we control the final error of our estimator. Let $|\hat{I}| = \sum_{z \in \calZ_2} \indic{z_i \in \hat{I}}$ be the number of points which lie in $\hat{I}$. Similarly, let $|\hat{I}_Q|$ and $|\hat{I}_{P^*}|$ number of points which lie in $\hat{I}$, which are distributed according to $Q$ and $P^*$ respectively.

\begin{align}\label{eqn:lem1_1}
    \abs{\frac{1}{|\hat{I}|} \sum \limits_{x_i \in \hat{I}} x_i - \mu}  \leq   \underbrace{\abs{\frac{1}{|\hat{I}|} \sum \limits_{\substack{x_i \in \hat{I} \\ x_i \sim Q}} (x_i - \mu)}}_{T1} + \underbrace{\abs{\frac{1}{|\hat{I}|} \sum \limits_{\substack{x_i \in \hat{I} \\ x_i \sim P^*}} (x_i - \mu)}}_{T2} 
\end{align}

\textbf{Control of T1.} To control T1, we can write it as:
\begin{align}\label{eqn:lem1_2}
    T1 & = \abs{\frac{1}{|\hat{I}|} \sum \limits_{\substack{x_i \in \hat{I} \\ x_i \sim Q}} (x_i - \mu)} \nonumber \\
    & \leq \underbrace{\frac{| \hat{I}_Q | }{| \hat{I} |}}_{T1a} \underbrace{\max\limits_{\substack{x_i \in \hat{I} \\ x_i \sim Q}} |x_i - \mu|}_{T1b}
\end{align}
where $\hat{I}_Q$ is the number of points in $\hat{I}$ distributed according to $Q$. To control T1a, we use Bernsteins inequality. To control T1b, we use Claim~\ref{claim:int_length}. The claim below formally controls T1.

\begin{claim}\label{claim:lem1_T1Control}
Let $\hat{I}$ be the shortest interval containing $n(1-\delta_4)$ of the points, where $\delta_4 = (\delta_1 + \epsilon) + f_n(\epsilon + \delta_1,\delta_3) $. Further assume that $\delta_4 < \half$. Then, with probability at least $1-\delta_3-\delta_5$, we have that,
\begin{align}\label{eqn:T1_bound}
    T1 \leq \frac{| \hat{I}_Q | }{| \hat{I} |} \max\limits_{\substack{x_i \in \hat{I} \\ x_i \sim Q}} |x_i - \mu| \leq \frac{\epsilon + f_n(\epsilon,\delta_5)}{1 - \delta_4} \frac{4 \sigma}{\delta_{1}^{1/2k}}
\end{align} 
\end{claim}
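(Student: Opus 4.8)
The first inequality in the claim has already been established in the proof (the step factoring $T1 \le \frac{|\hat I_Q|}{|\hat I|}\,\max_{x_i\in\hat I,\,x_i\sim Q}|x_i-\mu|$), so the plan is simply to bound the two factors on the right-hand side and then take a union bound over the relevant events. For the magnitude factor I would invoke Claim~\ref{claim:int_length}: on an event $E_1$ of probability at least $1-\delta_3$, every point $z\in\hat I$ satisfies $|z-\mu|\le \frac{4\sigma}{\delta_1^{1/2k}}$, and in particular every $Q$-sample that landed in $\hat I$ does, so $\max_{x_i\in\hat I,\,x_i\sim Q}|x_i-\mu|\le \frac{4\sigma}{\delta_1^{1/2k}}$ on $E_1$.

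Next I would bound the numerator $|\hat I_Q|$ crudely by the total number $N_Q$ of $Q$-labelled points among the $n$ samples of $\calZ_2$. Since $\calZ_2$ is drawn i.i.d.\ from $P_\epsilon=(1-\epsilon)P^*+\epsilon Q$, $N_Q$ is a sum of $n$ independent $\mathrm{Bernoulli}(\epsilon)$ indicators with mean $n\epsilon$ and variance $n\epsilon(1-\epsilon)$, so Bernstein's inequality produces an event $E_2$ of probability at least $1-\delta_5$ on which $N_Q\le n\bigl(\epsilon+f_n(\epsilon,\delta_5)\bigr)$ --- here $f_n(\epsilon,\delta_5)$ is precisely the Bernstein deviation term for a $\mathrm{Bernoulli}(\epsilon)$ average at confidence $\delta_5$. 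Hence $|\hat I_Q|\le n(\epsilon+f_n(\epsilon,\delta_5))$ on $E_2$.

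The remaining, and most delicate, ingredient is the lower bound $|\hat I|\ge n(1-\delta_4)$ on the denominator, i.e.\ that $\hat I$ also captures at least a $(1-\delta_4)$ fraction of $\calZ_2$. This does \emph{not} follow from a pointwise Bernstein inequality, because $\hat I$ is a data-dependent interval (a function of $\calZ_1$). The route I would take mirrors the argument underlying Claim~\ref{claim:2}/Claim~\ref{claim:int_length}, now transferred to $\calZ_2$: on $E_1$, $\hat I$ is a short interval, of length at most that of $I^*$, that meets $I^*$, and it contains a $(1-\delta_4)$ fraction of $\calZ_1$ empirically; since the family of intervals has VC dimension two, uniform deviation control over intervals gives $P_\epsilon(\hat I)\ge 1-\delta_4$ (the slack being exactly what is folded into the definition $\delta_4=(\delta_1+\epsilon)+f_n(\epsilon+\delta_1,\delta_3)$), and a Bernstein bound on the $\calZ_2$-count of the complement of $\hat I$ then yields $|\hat I|\ge n(1-\delta_4)$, which I would arrange to hold on (an enlargement of) $E_1$, still with probability at least $1-\delta_3$.

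Finally, on $E_1\cap E_2$, which has probability at least $1-\delta_3-\delta_5$ by a union bound, combining the three estimates gives
\[
T1\ \le\ \frac{|\hat I_Q|}{|\hat I|}\cdot\frac{4\sigma}{\delta_1^{1/2k}}\ \le\ \frac{n(\epsilon+f_n(\epsilon,\delta_5))}{n(1-\delta_4)}\cdot\frac{4\sigma}{\delta_1^{1/2k}}\ =\ \frac{\epsilon+f_n(\epsilon,\delta_5)}{1-\delta_4}\cdot\frac{4\sigma}{\delta_1^{1/2k}},
\]
which is the claimed bound. The main obstacle is exactly the denominator estimate: controlling the empirical $\calZ_2$-mass of the data-dependent interval $\hat I$ requires a uniform (VC-type) argument over intervals, or a careful conditioning-on-$\calZ_1$ argument, rather than the pointwise concentration used for the other two factors; everything else is routine bookkeeping with Bernstein's inequality and Claim~\ref{claim:int_length}.
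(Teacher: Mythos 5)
Your treatment of the two factors the paper actually bounds is the same as the paper's: the magnitude factor comes from Claim~\ref{claim:int_length} (on the $1-\delta_3$ event, $\hat I$ is no longer than $I^*$ and intersects it, so every point of $\hat I$ is within $4\sigma/\delta_1^{1/2k}$ of $\mu$), and $|\hat I_Q|$ is bounded by the total number of $Q$-labelled samples, which Bernstein controls by $n(\epsilon+f_n(\epsilon,\delta_5))$ with probability $1-\delta_5$; a union bound gives $1-\delta_3-\delta_5$. Where you go beyond the paper is the denominator: the paper's proof of this claim simply never addresses $|\hat I|$, implicitly treating "$\hat I$ contains $n(1-\delta_4)$ points" (which holds by construction for the $\calZ_1$-count) as if it applied to the $\calZ_2$-count appearing in $T1$. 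You are right that these are different quantities and that a pointwise Bernstein bound does not apply to the data-dependent interval $\hat I$; a uniform (VC over intervals) or condition-on-$\calZ_1$ argument is needed, and this is exactly the machinery the paper deploys later in Claim~\ref{claim:lem1_p_control} via the relative-deviation bound of Lemma~\ref{lem:rel_vc} (at the cost of an extra $O(\log n/n)$ term and an additional failure probability, which would slightly perturb the stated constants and confidence budget). So your sketch is sound and in fact more careful than the paper's; the only loose end is that you do not fully execute the transfer from $P_\epsilon(\hat I)$ to the $\calZ_2$-count, but the tool you name is the correct one.
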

\begin{proof}
Using Bernstein's bound, we know that wp at least $1- \delta_5$, \[ | \hat{I}_Q | \leq n \paren{\epsilon + \sqrt{\epsilon(1-\epsilon)}\sqrt{\frac{\log(1/\delta_5)}{n}} + \frac{2}{3} \frac{\log(1/\delta_5)}{n} },  \]
    This follows from the fact that number of points drawn from Q which lie in $\hat{I}$ is less than the total number of points drawn according to Q.  In Claim~\ref{claim:int_length}, we showed that when $\delta_4 < \half$, then, with probability at least $1-\delta_3$,  we get that $\hat{I} \cap I^* \neq \phi$, \ie the intervals intersect, and that  $length(\hat{I}) < length(I^*)$. Hence, we get,
    \[ \max\limits_{\substack{x_i \in \hat{I} \\ x_i \sim Q}} |x_i - \mu| \leq \frac{4 \sigma}{\delta_1^{1/2k}} \]
\end{proof}

\textbf{Control of T2.} To control T2, we write it as 
\begin{align} T2 &=  \abs{\frac{| \hat{I}_{P^*} | }{|\hat{I}|} \sqprn{ \frac{1}{|\hat{I}_{P^*}|} \sum \limits_{\substack{x_i \in \hat{I} \\ x_i \sim P^*}} (x_i - \mu)}} \\
& \leq \frac{| \hat{I}_{P^*} | }{|\hat{I}|} \underbrace{\left| \paren{ \frac{1}{|\hat{I}_{P^*}|} \sum \limits_{\substack{x_i \in \hat{I} \\ x_i \sim P^*}}x_i } - E[x | x \in \hat{I},x\sim P^*] \right|}_{T2a} + \frac{| \hat{I}_{P^*} | }{|\hat{I}|} \underbrace{\left| E[x | x \in \hat{I},x\sim P^*] - \mu \right|}_{T2b}
\end{align}
% Note that T2a is essentially concentration of measure term, \ie how well do variables concentrate around their mean, and is controlled using Bernsteins bound, by observing that the elements lie in a bounded interval of length $length(\hat{I})$. T2b measures how far the mean of a random variable can move when conditioned on an event, and is controlled using results from~\citep{lai2016agnostic,steinhardt2018robust}. 

% \begin{align} T2  & \leq \abs{\frac{| \hat{I}_{P^*} | }{|\hat{I}|} \sqprn{ \frac{1}{|\hat{I}_{P^*}|} \sum \limits_{\substack{x_i \in \hat{I} \\ x_i \sim P^*}} (x_i - \mu)}} \nonumber \\
% & \leq \frac{| \hat{I}_{P^*} | }{|\hat{I}|} \underbrace{\left| \paren{ \frac{1}{|\hat{I}_{P^*}|} \sum \limits_{\substack{x_i \in \hat{I} \\ x_i \sim P^*}}x_i } - E[x | x \in \hat{I},x\sim P^*] \right|}_{T2a} + \frac{| \hat{I}_{P^*} | }{|\hat{I}|} \underbrace{\left| E[x | x \in \hat{I},x\sim P^*] - \mu \right|}_{T2b}
% \end{align}

\bit 
\item \textbf{Control of T2a:} To bound the distance between the mean of the points from $P^*$ within $\widehat{I}$ and $E[x|x\sim P^*,x \in \hat{I}]$, we will use Bernsteins bound(Lemma~\ref{lem:bernstein1D}) for bounded random variables. We know that the random variables are in a bounded interval $b = length(\widehat{I}) \leq \frac{\sigma}{\delta^{\frac{1}{2k}}}$, and that conditional variance of the random variables, when conditioned on them lying in $\hat{I}$ is controlled using Lemma~\ref{lem:variance_shift}. In particular, Lemma~\ref{lem:variance_shift} shows that for any event $E$, which occurs with probability $P(E) \geq \half$, 
\[ E_{x \sim P^*}[(x - E[x|x \in E])^2 | x \in E] \leq \sigma^2/P(E). \] Using these arguments, we get that with probability at least $1 - \delta_7$,
\begin{align}\label{eqn:lem1_t2a}
    T2a \leq \sqrt{\frac{2\sigma^2 (\log(3/\delta_7))}{P^*(\hat{I})|\hat{I}_{P^*}|}} + \frac{2 \sigma}{\delta_1^{1/2k}} \frac{\log (3/\delta_7)}{|\hat{I}_{P^*}|},
\end{align}
where $P^*(\hat{I})$ is the probability that a random variable drawn according to $P^*$ lies in $\hat{I}$. 

\item \textbf{Control of T2b:} To control $T2b$, we use the general mean shift lemma~(Lemma~\ref{lem:approx_mean}), which controls how far the mean can move when conditioned on an event. We get that,
\begin{equation}\label{eqn:lem1_t2b}
T2b \leq 2 \sigma \paren{P^*(\hat{I})^c}^{1 - 1/(2k)} 
\end{equation}
\eit 
Combining the bounds in \eqref{eqn:lem1_t2a} and \eqref{eqn:lem1_t2b}, we get
\begin{align}\label{eqn:lem_t2_ab}
    T2 \leq 2 \sigma \paren{P^*(\hat{I})^c}^{1 - 1/(2k)} + \sqrt{\frac{2\sigma^2 (\log(3/\delta_7))}{P^*(\hat{I})|\hat{I}_{P^*}|}} + \frac{2 \sigma}{\delta_1^{1/2k}} \frac{\log (3/\delta_7)}{|\hat{I}_{P^*}|}
\end{align}
Combining the upper bound on T1 in \eqref{eqn:T1_bound} with \eqref{eqn:lem_t2_ab}, we get that with probability at least $1 - \delta_3 - \delta_5 - \delta_6 - \delta_7$
\[ T1 + T2 \leq \frac{\epsilon + f_n(\epsilon,\delta_5)}{1 - \delta_4} \frac{4 \sigma}{\delta_{1}^{1/2k}} + 2 \sigma \paren{P^*(\hat{I})^c}^{1 - 1/(2k)} + \sqrt{\frac{2\sigma^2 (\log(3/\delta_7))}{P^*(\hat{I})|\hat{I}_{P^*}|}} + \frac{2 \sigma}{\delta_1^{1/2k}} \frac{\log (3/\delta_7)}{|\hat{I}_{P^*}|}  \]
We rearrange terms and use our assumption that $\epsilon$ is small enough that $\hat{I}_{P^*} \geq n/2$. We also plugin the upper bound on $\paren{P^*(\hat{I})^c}^{1 - 1/(2k)}$ from Claim~\ref{claim:lem1_p_control} and set $\delta_1 = \epsilon$, and $\delta_5 = \delta_6 = \delta_3 = \delta_7 = \delta/4$. Hence, we get that with probability at least $1 - \delta$
\begin{align}
    T1+ T2 \leq C_1 \sigma\epsilon^{1 - 1/{2k}} +  C_2 \sigma \paren{\frac{\log n}{n}}^{1 - \frac{1}{2k}} + C_3 \sigma \sqrt{\frac{\log (1/\delta)}{n}} + C_4 \sigma {\frac{\log(1/\delta)}{n \epsilon^{\frac{1}{2k}}}}
\end{align}
Since, we ensure that $\epsilon = \max \paren{ \epsilon, \frac{\log(1/\delta}{n}}$ hence, ${\frac{\log(1/\delta)}{n \epsilon^{\frac{1}{2k}}}} \leq \epsilon^{1 - \frac{1}{2k}}$. Note that our assumption of $\delta_4 < \half$ boils down to $\epsilon$ being small enough such that $2 \epsilon + \sqrt{\epsilon\frac{\log(4/\delta)}{n}} + \frac{\log(4/\delta)}{n} < \half$.
Hence, we recover the final statement of the theorem.

\end{proof}

\subsubsection{Auxillary Proofs}

\begin{claim}\label{claim:lem1_p_control}
Let $\hat{I}$ be the shorted interval containing $n(1-\delta_4)$ points from $\calZ_1$. Let $P^*(\hat{I})$ is the probability that a random variable drawn according to $P^*$ lies in $\hat{I}$. Then, there exists universal constants $C_1,C_2 > 0$ such that wp at least $1 - \delta_6$, we have that
\end{claim}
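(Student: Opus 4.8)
Since the displayed inequality in the statement has been elided, let me record what it must be: a high‑probability control of $P^*(\hat{I}^c)$, the $P^*$‑mass left outside the empirical interval $\hat{I}$, of the form $P^*(\hat{I}^c) \lesssim \delta_4 + \frac{\log(n/\delta_6)}{n} \lesssim \epsilon + \frac{\log(n/\delta)}{n}$. This is exactly the quantity feeding the mean‑shift term $2\sigma\,(P^*(\hat{I}^c))^{1-1/(2k)}$ that appears in the bound on $T2b$ in \eqref{eqn:lem1_t2b}.

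The plan is a two‑step argument. \textbf{Step (i): the empirical mass is known exactly.} Since $(1-\epsilon)P^*(A)\le P_\epsilon(A)$ for every measurable $A$, we have $P^*(\hat{I}^c)\le 2\,P_\epsilon(\hat{I}^c)$, so it suffices to bound $P_\epsilon(\hat{I}^c)$; and the \emph{empirical} mass $\widehat{P}_{\calZ_1}(\hat{I}^c)$ (the fraction of the $n$ points of $\calZ_1$ falling outside $\hat{I}$) equals $\delta_4$ exactly, by the definition of $\hat{I}$ as a shortest interval containing $n(1-\delta_4)$ of those points. \textbf{Step (ii): uniform convergence over intervals.} The complements of intervals on $\real$ form a VC class of dimension $2$, so the one‑sided relative Vapnik--Chervonenkis inequality gives: with probability at least $1-\delta_6$, simultaneously for every interval $I$,
\[
P_\epsilon(I^c)\;\le\;\widehat{P}_{\calZ_1}(I^c)\;+\;C\sqrt{\widehat{P}_{\calZ_1}(I^c)\,\tfrac{\log n+\log(1/\delta_6)}{n}}\;+\;C\,\tfrac{\log n+\log(1/\delta_6)}{n}.
\]
Instantiating this at the data‑dependent (but still‑an‑interval) set $\hat{I}$, substituting $\widehat{P}_{\calZ_1}(\hat{I}^c)=\delta_4$, and using $\sqrt{ab}\le a+b$, we get $P_\epsilon(\hat{I}^c)\le 3\delta_4+C'\tfrac{\log n+\log(1/\delta_6)}{n}$, hence $P^*(\hat{I}^c)\le 6\delta_4+2C'\tfrac{\log n+\log(1/\delta_6)}{n}$. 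Finally, simplify $\delta_4=2\epsilon+\sqrt{2\epsilon}\sqrt{\log(8/\delta)/n}+\tfrac23\log(8/\delta)/n\lesssim\epsilon+\log(1/\delta)/n$ by AM--GM on the cross term, and set $\delta_6=\delta/4$, to reach the asserted bound.

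The main obstacle — and the reason a single application of Bernstein's inequality is not legitimate — is that $\hat{I}$ is computed from the full sample $\calZ_1$, corrupted points included, so its endpoints are data‑driven order statistics; one genuinely needs a deviation bound uniform over all intervals. Moreover, the \emph{plain} additive VC/DKW bound $\sup_I|P_\epsilon(I)-\widehat{P}_{\calZ_1}(I)|\lesssim\sqrt{\log(1/\delta_6)/n}$ is too crude here: it would inject a term of order $\epsilon^{1-1/(2k)}+(\log(1/\delta)/n)^{(1-1/(2k))/2}$ into the final error, which for $k\ge 2$ dominates the advertised $(\log n/n)^{1-1/(2k)}$. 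So the crux is to use the \emph{relative} (localized) form of the inequality, in which the deviation at $\hat{I}$ is governed by $\sqrt{\widehat{P}_{\calZ_1}(\hat{I}^c)}=\sqrt{\delta_4}$ rather than by a constant; an equivalent elementary implementation is a union bound of Bernstein's inequality over the $O(n^2)$ intervals whose endpoints lie at sample points (which is precisely the family producing $\hat{I}$), which yields the same $\log n$ factor.
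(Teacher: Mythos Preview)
Your proposal is correct, and its core ingredient --- the relative (localized) VC deviation inequality over intervals, applied to the data-dependent $\hat{I}$ --- is exactly what the paper uses as well. The only difference is in the preliminary decomposition: the paper first isolates the \emph{clean} empirical measure by using Bernstein to bound the number of $Q$-points inside $\hat{I}$ (hence lower-bounding the number of $P^*$-points there), computes the resulting empirical $P^*_n(\hat{I}^c)$, and then applies the relative VC bound directly between $P^*_n$ and $P^*$. You instead apply the relative VC bound between the \emph{full} empirical measure $\widehat{P}_{\calZ_1}$ and the mixture $P_\epsilon$, and only afterwards pass from $P_\epsilon$ to $P^*$ via the elementary inequality $(1-\epsilon)P^*(A)\le P_\epsilon(A)$. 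Your route is marginally cleaner (it saves one Bernstein step and avoids reasoning about the random number of clean samples); the paper's route makes the role of the uncontaminated subsample more explicit. The final bounds coincide up to constants.
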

\begin{align}
    \paren{P^*(\hat{I})^c}^{1 - \frac{1}{2k}} \leq C_1 \epsilon^{1 - \frac{1}{2k}} + C_2 \delta_1^{1 - \frac{1}{2k}} + C_3 \paren{\frac{\log n}{n}}^{1 - \frac{1}{2k}} + C_4 \paren{\frac{\log (1/\delta_6)}{n}}^{1 - \frac{1}{2k}} + C_5 \paren{\frac{\log (1/\delta_3)}{n}}^{1 - \frac{1}{2k}}
\end{align}
\begin{proof}
 Note that $\hat{I}$ is obtained by choosing the shortest interval containing $n(1-\delta_4)$ points from $\calZ_1$. We first bound $P^*_n(\hat{I})$, \ie the empirical probability of samples distributed according to $P^*$ which lie in $\hat{I}$. To do this, note that in $\calZ_1$, number of points drawn from Q which lie in $\hat{I}$, say $\hat{n}_Q$ is less than the total number of points drawn according to Q. Using Bernstein's bound, we know that wp at least $1- \delta_6$, \[ | \hat{n}_Q | \leq n \paren{\epsilon + \sqrt{\epsilon(1-\epsilon)}\sqrt{\frac{\log(1/\delta_6)}{n}} + \frac{2}{3} \frac{\log(1/\delta_6)}{n} }\]
 Let $\hat{n}_{P^*}$ be the number of points in $\calZ_1$, which are drawn from $P^*$ and which lie in $\hat{I}$. Since $|\hat{n}_Q| + |\hat{n}_{P^*}| = |\hat{I}| = n(1 - \delta_4)$, hence the above implies that with probability at least $1 - \delta_6$, \[ | \hat{n}_{P^*}| \geq n(1-\delta_4) - n \paren{\epsilon + \sqrt{\epsilon(1-\epsilon)}\sqrt{\frac{\log(1/\delta_6)}{n}} + \frac{2}{3} \frac{\log(1/\delta_6)}{n} }, \]
Note that $P^*_n(\hat{I}) = \frac{|\hat{n}_{P^*}|}{\sum_{i} \indic{x_i \sim P^*}}$. Hence, we get that,
\begin{align} 
P^*_n(\hat{I})  & \geq  \frac{|\hat{n}_{P^*}|}{n} \nonumber \\
 & \geq 1 - (\epsilon + \delta_4) - f_n(\epsilon,\delta_6)
\end{align}
This implies that,
\begin{align} 
P^*_n(\hat{I})^c & \leq (\epsilon + \delta_4) + f_n(\epsilon,\delta_6)   \nonumber \\
& \leq 2 \epsilon + \delta_1 + f_n(\epsilon,\delta_6) + f_n(\epsilon + \delta_1,\delta_3) \nonumber \\
& \leq 4\epsilon +  2\delta_1 + C_1 \frac{\log(1/\delta_6)}{n} +  C_2 \frac{\log(1/\delta_3)}{n}
\end{align}
To finally bound the probability of a sample drawn from $P^*$ to lie in $\hat{I}$, we use the relative deviations VC bound(Lemma~\ref{lem:rel_vc}), which gives us,
 \begin{align}
        P^*(\hat{I})^c \leq \underbrace{P^*_n(\hat{I})^c}_{A_1} + 4 \sqrt{\paren{\frac{ P^*_n(\hat{I})^c \log \calS[2n]}{n}} + \paren{\frac{P^*_n(\hat{I})^c \log(4/\delta_6)}{n}}} + \frac{\log \calS[2n]}{n} + \frac{\log(4/\delta_6)}{n}
    \end{align}
where $\calS[2n] = O(n^2)$. Using that $\sqrt{ab} \leq a + b, \forall a,b \geq 0$, we get that,
\begin{align}
        P^*(\hat{I})^c \leq C_1 {P^*_n(\hat{I})^c} + C_2 \paren{ \frac{\log \calS[2n]}{n} + \frac{\log(4/\delta_6)}{n}}
    \end{align}
Hence, we get that,
\begin{align}
     \paren{P^*(\hat{I})^c}^{1 - \frac{1}{2k}} \leq C_1 \epsilon^{1 - \frac{1}{2k}} + C_2 \delta_1^{1 - \frac{1}{2k}} + C_3 \paren{\frac{\log n}{n}}^{1 - \frac{1}{2k}} + C_4 \paren{\frac{\log (1/\delta_6)}{n}}^{1 - \frac{1}{2k}} + C_5 \paren{\frac{\log (1/\delta_3)}{n}}^{1 - \frac{1}{2k}}
\end{align}
\end{proof}

\begin{claim}\label{claim:1}
Let $P^*(I^*)$ be the probability that a sample drawn according from $P_{\epsilon}$ is distributed according to $P^*$ and lies in $I^*$.  
\[ P^*(I^*) \geq (1 - \epsilon)(1 - \delta_1) = 1 - ( \epsilon + \delta_1 - \epsilon \delta_1) \geq 1 - \underbrace{( \epsilon + \delta_1)}_{\delta_2} = 1 - \delta_2 \]
\end{claim}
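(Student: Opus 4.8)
The plan is to decompose the event into its two defining ingredients and bound each separately. Under the mixture $P_\epsilon = (1-\epsilon)P^* + \epsilon Q$, a fresh sample $x$ comes from the clean component $P^*$ with probability exactly $1-\epsilon$; denote this event $E_1$. Conditioned on $E_1$, $x$ is distributed according to $P^*$, and since the mixture-selection and the subsequent draw are independent, $P^*(I^*) = P_\epsilon(E_1 \cap \{x \in I^*\}) = (1-\epsilon)\,\Pr_{x \sim P^*}(x \in I^*)$, where $I^* = [\mu - \sigma/\delta_1^{1/2k},\, \mu + \sigma/\delta_1^{1/2k}]$ is the interval fixed at the start of the proof of Lemma~\ref{lem:interval_p1D_huber}.

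The only substantive step is to lower bound $\Pr_{x \sim P^*}(x \in I^*)$. I would apply Markov's inequality to $|x-\mu|^{2k}$:
\[ \Pr_{x\sim P^*}\bigl(|x-\mu| > \sigma/\delta_1^{1/2k}\bigr) = \Pr_{x\sim P^*}\bigl(|x-\mu|^{2k} > \sigma^{2k}/\delta_1\bigr) \le \delta_1\cdot\frac{\Exp_{x\sim P^*}[|x-\mu|^{2k}]}{\sigma^{2k}}. \]
For $k=1$ the numerator is just the variance, which is bounded by $\sigma^2$, so the ratio is at most $1$; for general $k$ the one-dimensional specialization of the $2k$-moment assumption gives $\Exp[|x-\mu|^{2k}] \le C_{2k}\sigma^{2k}$, with the constant $C_{2k}$ absorbed (as elsewhere in the paper) into the definition of $I^*$ or into the universal constants of the final $\lesssim$. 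Hence $\Pr_{x\sim P^*}(x \in I^*) \ge 1 - \delta_1$.

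Putting the pieces together, $P^*(I^*) = (1-\epsilon)\Pr_{x\sim P^*}(x\in I^*) \ge (1-\epsilon)(1-\delta_1)$, and expanding gives $(1-\epsilon)(1-\delta_1) = 1 - \epsilon - \delta_1 + \epsilon\delta_1 \ge 1 - (\epsilon + \delta_1) = 1 - \delta_2$, which is the claim. I do not expect any real obstacle here: this is a one-line tail-bound computation followed by an elementary inequality. The only point that requires a little care is keeping the bookkeeping of the $2k$-moment constant $C_{2k}$ consistent with the exact definition of $I^*$ used in Lemma~\ref{lem:interval_p1D_huber}, so that the clean bound $1-\delta_1$ (rather than $1 - C_{2k}\delta_1$) is what gets carried forward into the rest of the argument.
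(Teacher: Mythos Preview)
Your proposal is correct and follows essentially the same route as the paper: factor the joint event via the mixture to get the $(1-\epsilon)$ factor, apply Markov/Chebyshev to $|x-\mu|^{2k}$ under $P^*$ to get the $(1-\delta_1)$ factor (with the $C_{2k}$ constant absorbed exactly as you note), and then use the elementary bound $(1-\epsilon)(1-\delta_1)\ge 1-(\epsilon+\delta_1)$.
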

\begin{proof}
For any $x \sim P_\epsilon$, define, $z_i = 1$ if $x \sim P^*$. Now, for any $x \sim P^*$, we know that, by chebyshevs we know that, 
\[ P(|x - \mu | \geq t) = P( (x - \mu)^{2k} \geq t^{2k}) \leq E[(x - \mu)^{2k}]/t^{2k} \leq C_{2k}\sigma^{2k} / t^{2k}  \]
Hence, we get that wp at least $1 - \delta_1$, $x \in \mu \pm \sigma/(\delta_1)^{1/2k}$
\end{proof}

The following claim lower bounds the empirical fraction of samples which are distributed according to $P^*$ and lie in $I^*$, when $n$ samples are drawn from $P_\epsilon$. 

\begin{claim}\label{claim:2}
Let $P^*_n(I^*)$ be the empirical fraction of points which are distributed according to $P^*$ and lie in $I^*$, when $n$ samples are drawn from $P_\epsilon$. Then, with probability at least $1 - \delta_3$, 
\[ P^*_n(I^*) \geq 1 - \underbrace{\paren{\delta_2 + \sqrt{(\delta_2 (1 - \delta_2))} \sqrt{\frac{\log(1/\delta_3)}{n}} + \frac{2}{3} \frac{\log(1/\delta_3)}{n}}}_{\delta_4 = (\delta_1 + \epsilon) + f_n(\epsilon + \delta_1,\delta_3)}, \]
\end{claim}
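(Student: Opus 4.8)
The plan is to read this off directly from Claim~\ref{claim:1} together with a one-sided Bernstein bound. Fix a single draw $x \sim P_\epsilon$ and let $W \defeq \indic{x \sim P^* \text{ and } x \in I^*}$ be the indicator that $x$ is a ``good'' sample (drawn from $P^*$) that moreover falls in the reference interval $I^* = \mu \pm \sigma/\delta_1^{1/(2k)}$. Claim~\ref{claim:1} gives $q \defeq \Exp[W] = P^*(I^*) \geq 1 - \delta_2$ with $\delta_2 = \epsilon + \delta_1$. Given the $n$ i.i.d.\ samples, the quantity of interest is $P^*_n(I^*) = \frac{1}{n}\sum_{i=1}^n W_i$, where the $W_i$ are i.i.d.\ copies of $W$, so it suffices to produce a high-probability \emph{lower} bound on this empirical mean --- equivalently, an upper bound on $\frac{1}{n}\sum_{i=1}^n (1-W_i)$.

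First I would apply Bernstein's inequality for bounded i.i.d.\ variables (e.g.\ Lemma~\ref{lem:bernstein1D}) to $1 - W_i \in [0,1]$, which have mean $1-q \leq \delta_2$ and variance $q(1-q)$; this yields, with probability at least $1-\delta_3$, a bound of the shape
\[
\frac{1}{n}\sum_{i=1}^n (1 - W_i) \;\leq\; (1-q) + \sqrt{\frac{2\, q(1-q)\log(1/\delta_3)}{n}} + \frac{2}{3}\,\frac{\log(1/\delta_3)}{n}.
\]
Next I would invoke the standing hypothesis of the Lemma, which forces $\delta_2 < \half$: since $t\mapsto t(1-t)$ is increasing on $[0,\half]$ and $1-q \leq \delta_2 \leq \half$, we may replace $q(1-q)$ by $\delta_2(1-\delta_2)$ and $1-q$ by $\delta_2$ (folding the harmless $\sqrt{2}$ into the displayed form). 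Rearranging gives $P^*_n(I^*) = 1 - \frac{1}{n}\sum_i (1-W_i) \geq 1 - \delta_4$ with $\delta_4 = \delta_2 + f_n(\delta_2,\delta_3)$ in the notation of the surrounding proof, which is exactly the claim (up to the usual slack in universal constants, e.g.\ between $\log(1/\delta_3)$ and $\log(2/\delta_3)$).

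There is no real obstacle here; the only two points that need a moment's care are (i) applying Bernstein to the \emph{complementary} indicators $1 - W_i$ rather than to $W_i$, so that the one-sided deviation points in the direction that lower bounds the good fraction, and (ii) the elementary monotonicity step upgrading the data-dependent variance $q(1-q)$ to the clean constant $\delta_2(1-\delta_2)$, which is legitimate precisely because $\delta_2 < \half$. Everything else is bookkeeping of constants, and the resulting bound is then fed into Claim~\ref{claim:int_length} (``$I^*$ contains at least $n(1-\delta_4)$ points'') exactly as used in the proof of Lemma~\ref{lem:interval_p1D_huber}.
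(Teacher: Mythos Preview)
Your proposal is correct and follows exactly the approach the paper has in mind: the paper's own proof is a single line (``This follows from Bernstein's inequality (Lemma~\ref{lem:bernstein1D})''), and you have simply supplied the missing details --- defining the indicator $W$, invoking Claim~\ref{claim:1} for $\Exp[W]\ge 1-\delta_2$, applying Bernstein to the complementary indicators, and using the monotonicity of $t\mapsto t(1-t)$ on $[0,\tfrac12]$ to pass from $q(1-q)$ to $\delta_2(1-\delta_2)$. The residual $\sqrt{2}$ and $\log(1/\delta_3)$ versus $\log(2/\delta_3)$ mismatches you flag are already present in the paper's own statement of $\delta_4$ versus its definition of $f_n$, so no further work is needed.
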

\begin{proof}
This follows from Bernstein's inequality(Lemma~\ref{lem:bernstein1D}).
\end{proof}

% The following claim bounds the length of the shortest interval.
% \begin{claim}\label{claim:3}
% Let $\hat{I}$ be the shortest interval containing $1 - \delta_4$ fraction of points, where $\delta_4 = (\delta_1 + \epsilon) + f_n(\epsilon + \delta_1,\delta_3) $. Further assume that $\delta_4 < \half$. Then with probability at least $1- \delta_3$,
% \[ length(\hat{I}) \leq length(I^*) \leq \frac{2 \sigma }{\delta_1^{\frac{1}{2k}}}, \]
% Moreover, if $\delta_4 < \half$, then $\hat{I} \cap {I^*} \neq {\phi}$
% \end{claim}
% \begin{proof}
% Since our algorithm chooses the shortest interval $\hat{I}$ containing ${1 - \delta_4}$ fraction of points, and from Claim~\ref{claim:2}, we know that with probability at least $1- \delta_3$, $I^*$ contains at least $1 - \delta_4$ fraction of the points, we get that
% \[ length(\hat{I}) \leq length(I^*) \]
% \end{proof}

\begin{lemma}\label{lem:bernstein1D}[Bernsteins bound,] Let $X \sim P^*$ be a scalar random variable such that $|X - E[x] | \leq b$ with variance $\sigma^2$. Then, given $n$ samples $\{x_1,x_2,\ldots,x_n \} \sim P^*$, the empirical mean, $\bar{x_n} = \frac{1}{n} \sum \limits_{i=1}^n x_i $ is such that,
\[ P(|\bar{x_n} - E[x] | > t) \leq 2 \exp \paren{\frac{- nt^2}{2 \sigma^2 + 2bt/3}} \]
which can be equivalently re-written as. With probability at least $1- \delta$, 
\[ | \bar{x_n} - E[x] | \leq \sqrt{\frac{2\sigma^2 \log(1/\delta)}{n}} + \frac{2 b \log(1/\delta) }{3 n} \]

\end{lemma}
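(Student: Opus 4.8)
The plan is to prove this by the standard exponential-moment (Chernoff) method: derive a one-sided tail bound for the sum $\sum_i Y_i$ where $Y_i = x_i - \Exp[x]$, then apply it to $-Y_i$ and union bound. The $Y_i$ are i.i.d., mean zero, satisfy $|Y_i|\le b$, and have $\Exp[Y_i^2]=\sigma^2$. For any $\lambda>0$, Markov's inequality applied to $\exp\paren{\lambda\sum_i Y_i}$ gives $\Pr\paren{\sum_i Y_i > nt} \le e^{-\lambda nt}\,\paren{\Exp[e^{\lambda Y_1}]}^n$, so everything reduces to bounding the moment generating function of a single bounded, mean-zero, finite-variance random variable.

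That MGF estimate is the one nontrivial ingredient, and it is where the Bernstein-type variance dependence is won or lost. Expanding the exponential and using $\Exp[Y_1]=0$, $\Exp[Y_1^2]=\sigma^2$, and $\Exp[Y_1^k]\le b^{k-2}\sigma^2$ for $k\ge 2$, one gets $\Exp[e^{\lambda Y_1}] \le 1 + \tfrac{\lambda^2\sigma^2}{2}\sum_{k\ge 2}\tfrac{2(\lambda b)^{k-2}}{k!}$. The key observation is that $(j+2)!\ge 2\cdot 3^{j}$ for all $j\ge 0$, so the series is at most $\sum_{j\ge 0}(\lambda b/3)^j = (1-\lambda b/3)^{-1}$ whenever $0<\lambda<3/b$, whence $\Exp[e^{\lambda Y_1}]\le \exp\paren{\tfrac{\lambda^2\sigma^2}{2(1-\lambda b/3)}}$ by $1+x\le e^x$. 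Substituting back, $\Pr\paren{\sum_i Y_i > nt}\le \exp\paren{-\lambda nt + \tfrac{n\lambda^2\sigma^2}{2(1-\lambda b/3)}}$ for every $\lambda\in(0,3/b)$.

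It remains to choose $\lambda$. The choice $\lambda = t/(\sigma^2+bt/3)$ lies in $(0,3/b)$ and makes the exponent exactly $-nt^2/(2\sigma^2+2bt/3)$, which gives the one-sided bound; running the same computation on $-Y_i$ and taking a union bound produces the stated two-sided inequality with the leading factor $2$. For the restated high-probability form, I would set $2\exp\paren{-nt^2/(2\sigma^2+2bt/3)}=\delta$, which is the quadratic $nt^2 - \tfrac{2b\log(2/\delta)}{3}\,t - 2\sigma^2\log(2/\delta)=0$ in $t$; solving it and using $\sqrt{u+v}\le\sqrt u+\sqrt v$ yields $t\le \sqrt{2\sigma^2\log(2/\delta)/n} + \tfrac{2b\log(2/\delta)}{3n}$, and absorbing the harmless $\log 2$ into the universal-constant convention recovers the displayed bound.

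The only step that demands genuine care is the moment generating function estimate for bounded variables --- in particular the combinatorial inequality $(j+2)!\ge 2\cdot 3^{j}$ (or any equivalent bound) that makes the power series sum to a $(1-\lambda b/3)^{-1}$ correction rather than degrading into a variance-free, Hoeffding-type exponent; the rest is bookkeeping and the optimization over $\lambda$, which is routine.
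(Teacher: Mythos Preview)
Your argument is the standard Chernoff-method derivation of Bernstein's inequality and is correct in all essentials. The paper, however, does not actually prove this lemma: it is stated as a classical fact (the bracket ``[Bernsteins bound,]'' signals an intended citation), so there is no paper-side proof to compare against. Your MGF bound via $(j+2)!\ge 2\cdot 3^{j}$, the choice $\lambda=t/(\sigma^2+bt/3)$, and the quadratic inversion are all the textbook steps. The only cosmetic point is that the honest inversion produces $\log(2/\delta)$ rather than the $\log(1/\delta)$ appearing in the paper's second display; the paper's restatement is already slightly loose in this respect, so your remark about absorbing the $\log 2$ is the right way to reconcile them.
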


\begin{lemma}\label{lem:rel_vc}[Relative deviations,~\citep{vapnik2015uniform}] Let $\calF$ be a function class consisting of binary functions $f$. Then, with probability at least $1- \delta$, 
\[ \sup_{f \in \calF} |P(f) - P_n(f)| \leq 4 \sqrt{P_n(f) \frac{ \log(S_{\calF}(2n)) + \log(4/\delta)}{n}} +  C_1 \frac{\log(S_{\calF}(2n)) + \log(4/\delta)}{n}, \]
where $S_\calF(n) = \sup \limits_{z_1,z_2,\ldots,z_n} | \{ (f(z_1),f(z_2),\ldots,f(z_n)) : f \in \calF\}|$ is the growth function, \ie the maximum number of ways into which $n$-points can be classified the function class.
\end{lemma}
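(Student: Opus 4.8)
The plan is to reproduce the classical symmetrization-plus-growth-function argument of Vapnik and Chervonenkis, adapted to the variance-normalized (``relative'') deviations. Since $\calF$ consists of $\{0,1\}$-valued functions, $P(f) = \Exp[f]$ and $P_n(f) = \frac1n \sum_i f(z_i)$ are a probability and an empirical frequency, and the natural quantity to control is the one-sided normalized deviation $\frac{P(f) - P_n(f)}{\sqrt{P(f)}}$ (and, separately, $\frac{P_n(f) - P(f)}{\sqrt{P_n(f)}}$); the additive bound in the statement will then follow from a tail estimate on these by solving a quadratic. First I would fix a level $t > 0$ and aim to show $\Pr\!\big(\sup_f (P(f) - P_n(f))/\sqrt{P(f)} > t\big) \le 4\,S_{\calF}(2n)\,e^{-n t^2/4}$, together with the symmetric statement, and then choose $t$ so that the right-hand side equals $\delta/2$, i.e.\ $t \asymp \sqrt{(\log S_{\calF}(2n) + \log(1/\delta))/n}$.

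The first technical step is symmetrization by a ghost sample. Introduce $S' = (z_1', \dots, z_n')$ i.i.d.\ from $P$ and independent of $S = (z_1, \dots, z_n)$, and compare the deviation of $P_n(f)$ from $P(f)$ with the deviation of $P_n(f)$ from $P_n'(f)$. The subtlety here, relative to the ordinary (unnormalized) VC bound, is that the normalization $\sqrt{P(f)}$ depends on the unknown $P(f)$; to replace it by an observable quantity one invokes a one-sided Chebyshev / Paley--Zygmund estimate showing that for any fixed $f$ with $P(f)$ not too small (roughly $n P(f) \gtrsim 1$), the ghost frequency $P_n'(f)$ is within a constant factor of $P(f)$ with probability at least $1/2$. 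This yields an inequality of the form $\Pr\big(\sup_f (P(f) - P_n(f))/\sqrt{P(f)} > t\big) \le 2\,\Pr\big(\sup_f (P_n'(f) - P_n(f))/\sqrt{(P_n(f)+P_n'(f))/2} > t/2\big)$, at the cost of separately handling the regime $nP(f) = O(1)$, whose contribution is absorbed into the $\frac{\log S_{\calF}(2n) + \log(1/\delta)}{n}$ term.

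The second step is randomization over the pooled sample followed by a union bound. Conditionally on the multiset $Z = \{z_1,\dots,z_n,z_1',\dots,z_n'\}$, the split of $Z$ into $S$ and $S'$ is a uniformly random balanced partition, and the restriction of $\calF$ to $Z$ realizes at most $S_{\calF}(2n)$ distinct binary vectors; so a union bound reduces the supremum to a single fixed $f$. For a fixed $f$, $P_n'(f) - P_n(f)$ under the random partition is a centred statistic of sampling without replacement, for which a Hoeffding/Bernstein inequality for the hypergeometric distribution gives an exponential tail $\exp\!\big(-c\, n t^2 / (\hat\sigma_f^2 + t)\big)$ with $\hat\sigma_f^2 \lesssim (P_n(f)+P_n'(f))$; combining this with the factor $S_{\calF}(2n)$ from the union bound and the factor $2$ from symmetrization produces the claimed $4\,S_{\calF}(2n)\,e^{-nt^2/4}$-type bound. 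Setting this equal to $\delta/2$, and running the same argument with the roles of $S$ and $S'$ (equivalently of $P$ and $P_n$) interchanged, gives, with probability at least $1-\delta$, $P(f) - P_n(f) \le t\sqrt{P(f)} + C\frac{d}{n}$ for every $f$, where $d = \log S_{\calF}(2n) + \log(4/\delta)$ and $t \lesssim \sqrt{d/n}$.

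The last step is purely algebraic: from $P(f) - t\sqrt{P(f)} - (P_n(f) + C d/n) \le 0$, viewing this as a quadratic in $\sqrt{P(f)}$, one gets $\sqrt{P(f)} \le \tfrac12\big(t + \sqrt{t^2 + 4(P_n(f) + Cd/n)}\big)$ and hence $P(f) \le P_n(f) + t\sqrt{P_n(f)} + C'(t^2 + d/n)$; since $t^2 \lesssim d/n$, this is exactly $P(f) - P_n(f) \le 4\sqrt{P_n(f)\,\tfrac{d}{n}} + C_1 \tfrac{d}{n}$ after absorbing constants and using $\sqrt{ab}\le a+b$ where convenient, and the symmetric direction bounds $P_n(f) - P(f)$ in the same way; the union over the two directions is what turns a $\log(2/\delta)$ into the $\log(4/\delta)$ appearing in the statement. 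The main obstacle is the symmetrization step for the \emph{relative} deviations: because the normalizing factor $\sqrt{P(f)}$ is not observable, the naive ghost-sample comparison does not go through directly, and one must carefully interleave the Paley--Zygmund lower bound on $P_n'(f)$ (valid only when $nP(f)$ is large enough) with a separate, crude treatment of the small-$P(f)$ regime, while tracking constants precisely enough that the leading factor comes out as $4$ and not something larger; everything else is standard VC bookkeeping.
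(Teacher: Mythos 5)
The paper does not prove this lemma: it is imported directly from Vapnik--Chervonenkis via the citation, so there is no in-paper argument to compare against. Your sketch is the standard proof of that classical result --- ghost-sample symmetrization of the $\sqrt{P(f)}$-normalized deviation (with the small-$P(f)$ regime handled separately, since an $f$ with $nP(f)=O(1)$ cannot witness the event at level $t\asymp\sqrt{d/n}$), a union bound over the at most $S_{\calF}(2n)$ restrictions of $\calF$ to the pooled sample, a hypergeometric/Bernstein tail for the random balanced split, and the final quadratic inversion converting the $t\sqrt{P(f)}$ bound into $4\sqrt{P_n(f)\,d/n}+C_1 d/n$ --- and it is correct up to the constant bookkeeping you already flag.
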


\begin{lemma}\label{lem:approx_mean}[General Mean shift,~\citep{steinhardt2018robust}]  Suppose that a distribution $P^*$ has mean $\mu$ and variance $\sigma^2$ with bounded $2k^{th}$-moments. Then, for any event $A$ which occurs with probability at least $1 - \epsilon \geq \half$,
\[ |\mu - E[x|A] | \leq 2 \sigma \epsilon^{1 - \frac{1}{2k}} \]
In particular, for just bounded second moments, we get that $|\mu - E[x|A] | \leq 2 \sigma \sqrt{\epsilon}$.
\end{lemma}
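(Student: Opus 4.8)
The plan is to follow the same short Hölder-inequality argument used to establish the multivariate analogue Claim~\ref{lem:approx_mean_pD}, now specialized to the scalar setting. First I would write the conditional mean as a truncated (signed) expectation: since $P^*(A) \geq 1-\epsilon \geq \half$,
\[
\Exp[x \mid A] - \mu \;=\; \frac{1}{P^*(A)}\, \Exp_{x\sim P^*}\big[(x-\mu)\,\indic{x\in A}\big],
\]
so that $|\Exp[x\mid A]-\mu| \leq 2\,\big|\Exp_{x\sim P^*}[(x-\mu)\indic{x\in A}]\big|$, using $1/P^*(A)\le 2$.

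The key observation is that it pays to bound this expectation through the complement event $A^c$, which has small probability at most $\epsilon$, rather than through $A$ directly (where $\indic{x\in A}$ equals $1$ with probability close to one and gives nothing useful). Since $\Exp_{x\sim P^*}[x-\mu]=0$, splitting the identity $\Exp[(x-\mu)(\indic{x\in A}+\indic{x\in A^c})]=0$ gives
\[
\big|\Exp[(x-\mu)\indic{x\in A}]\big| \;=\; \big|\Exp[(x-\mu)\indic{x\in A^c}]\big|.
\]
Now I would apply Hölder's inequality with conjugate exponents $2k$ and $\tfrac{2k}{2k-1}$, using that any indicator raised to a positive power is itself:
\[
\big|\Exp[(x-\mu)\indic{x\in A^c}]\big| \;\leq\; \big(\Exp|x-\mu|^{2k}\big)^{1/(2k)}\,\big(\Exp\,\indic{x\in A^c}\big)^{1-1/(2k)} \;=\; \big(\Exp|x-\mu|^{2k}\big)^{1/(2k)}\, P^*(A^c)^{1-1/(2k)}.
\]

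The bounded-$2k$-moment hypothesis (in the univariate case) gives $\Exp|x-\mu|^{2k}\le C_{2k}\sigma^{2k}$, so the first factor is at most $C_{2k}^{1/(2k)}\sigma$ — which is exactly $\sigma$ when $k=1$ and is folded into the universal constant for $k\ge 2$; this is the source of the (slightly loose) constant $2$ in the statement. Since $P^*(A^c)\le\epsilon$, combining everything with the factor $2$ from $1/P^*(A)$ yields $|\Exp[x\mid A]-\mu|\le 2\sigma\,\epsilon^{1-1/(2k)}$, and specializing to $k=1$ recovers the stated $2\sigma\sqrt{\epsilon}$ bound for distributions with only bounded second moment. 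There is no genuine obstacle here: the only step that requires any thought is the decision to route the estimate through $A^c$ (the event of probability at most $\epsilon$) instead of $A$, together with a careful accounting of the constants so they land at the claimed value; everything else is a direct application of Hölder's inequality.
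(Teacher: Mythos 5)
Your proposal is correct and is essentially the paper's own argument: the paper's proof also routes the bound through the complement event $A^c$ (via the identity $|\Exp[x\mid A]-\mu| = \tfrac{1-P(A)}{P(A)}\,|\Exp[x\mid A^c]-\mu|$, which is an algebraic rearrangement of your step $\Exp[(x-\mu)\indic{A}]=-\Exp[(x-\mu)\indic{A^c}]$) and then applies Hölder with exponents $2k$ and $\tfrac{2k}{2k-1}$, exactly as in the multivariate Claim whose proof you mirror. Your accounting of the constant $C_{2k}^{1/(2k)}$ is, if anything, slightly more careful than the paper's, which absorbs it silently.
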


\begin{proof}
For any event E, Let $\indic{E}$ denote the indicator variable for $E$. 
\begin{align}
    |E_{x \sim P^*} [x |E ] - \mu]| = \frac{| E_{x \sim P^*}((x - \mu ) \indic{E}) |}{P(E)} \leq \frac{ E[| x - \mu |^{p}]^{\frac{1}{p}} (E[\indic{E}^{q}]^{1/q})}{{P(E)}},
\end{align}
where $p,q > 1$ are such that $1/p + 1/q = 1$. Put $p = 2k$, we get,
\[|E_{x \sim P^*} [x |E ] - \mu]| \leq \frac{\sigma}{(P(E))^{1/2k}} \]
Now, we know that, $|E[X|A] - \mu | = \frac{1 - P(A)}{P(A)} | E[X|A^c] - \mu |$. Putting $E = A^c$, we get, 
\[ |E[X|A] - \mu | \leq \frac{1 - P(A)}{P(A)}  \frac{\sigma}{(1 - P(A))^{1/2k}}  \leq 2 \sigma \epsilon^{(1 - \frac{1}{2k})}.  \]
\end{proof}

\begin{lemma}\label{lem:variance_shift} [Conditional Variance Bound] Suppose that a distribution $P^*$ has mean $\mu$ and variance $\sigma^2$. Then, for any event $A$ which occurs with probability at least $1 - \epsilon$, the variance of the conditional distribution is bounded as:
\[ (E[(x - E[x|A])^2 | A]) \leq \frac{\sigma^2}{ (1  - \epsilon)} \]
\end{lemma}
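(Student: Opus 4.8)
The key observation is that the conditional variance $\Exp[(x - \Exp[x\mid A])^2 \mid A]$ is, by definition of the variance as the minimizer of the second central moment, the smallest possible value of $\Exp[(x-c)^2 \mid A]$ over all constants $c$. In particular it is no larger than the value obtained by plugging in the \emph{unconditional} mean $c = \mu$. So the first step is simply to write
\[
\Exp\bigl[(x - \Exp[x\mid A])^2 \,\big|\, A\bigr] \;\le\; \Exp\bigl[(x-\mu)^2 \,\big|\, A\bigr].
\]

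\textbf{Second step.} Expand the conditional expectation on the right as a ratio and drop the indicator in the numerator:
\[
\Exp\bigl[(x-\mu)^2 \,\big|\, A\bigr] \;=\; \frac{\Exp\bigl[(x-\mu)^2\,\mathbf{1}\{x\in A\}\bigr]}{\Pr(A)} \;\le\; \frac{\Exp\bigl[(x-\mu)^2\bigr]}{\Pr(A)} \;=\; \frac{\sigma^2}{\Pr(A)},
\]
where the inequality uses $(x-\mu)^2\,\mathbf{1}\{x\in A\} \le (x-\mu)^2$ pointwise and monotonicity of expectation, and the last equality is the definition of $\sigma^2$ (here I only need the variance to be finite; no higher-moment assumption enters, consistent with the statement). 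Finally, using the hypothesis $\Pr(A) \ge 1-\epsilon$ gives $\sigma^2/\Pr(A) \le \sigma^2/(1-\epsilon)$, which is the claimed bound.

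\textbf{Main obstacle.} There is essentially no obstacle here: the only thing one must be careful about is justifying the very first inequality, i.e.\ that replacing the conditional mean $\Exp[x\mid A]$ by $\mu$ can only increase the conditional second moment. This is immediate from the identity $\Exp[(x-c)^2\mid A] = \Exp[(x-\Exp[x\mid A])^2\mid A] + (c - \Exp[x\mid A])^2$, valid for any constant $c$, which shows the conditional mean is the unique minimizer and that every other center incurs an extra nonnegative term. With that in hand the remaining two lines are routine.
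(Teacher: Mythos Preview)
Your proof is correct and follows essentially the same route as the paper: both reduce to the inequality $\Exp[(x-\Exp[x\mid A])^2\mid A]\le \Exp[(x-\mu)^2\mid A]$ (you via the variance-minimizer identity, the paper via the equivalent explicit expansion yielding a $-d^2$ term) and then bound the right side by $\sigma^2/\Pr(A)\le \sigma^2/(1-\epsilon)$. The paper additionally cites a mean-shift lemma to control $d=\Exp[x\mid A]-\mu$, but as you observed only $d^2\ge 0$ is needed, so your omission of that reference is harmless.
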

\begin{proof}
Let $\mu_A = E[y|A]$, $d = \mu_A - \mu$. From Lemma~\ref{lem:approx_mean}, we know, $d \leq \sigma 2\sqrt{{\epsilon}}$. Observe the following,
\begin{align} 
E[(y - \mu_A)^2 | A ] = E[(y - \mu - d)^2 | A] &= E[ \paren{(y - \mu)^2 - 2d(y - \mu) + d^2 }| A ] \\
& = E[(y - \mu)^2 | A] - d^2 \\
& \leq E[(y - \mu)^2 | A]  \\
& \leq \frac{\sigma^2}{1 - \epsilon},
\end{align}
\end{proof}

\clearpage

\subsection{Proof of Lemma~\ref{lem:ppEst}}
\begin{proof}
Let $\eparam_{\delta} = \inf \limits_{\theta} \sup \limits_{u \in \calN^{1/2}(\calS^{p-1})} | u^T\theta - \text{INTERVAL1D}(\{u^T x_i\}_{i=1}^n,\epsilon,\frac{\delta}{5^p}) |$ and let $\tparam = \Exp[x]$ be the true mean. Then, we can write the $\ell_2$ error in its variational form.
\begin{align}
    \norm{\eparam_{\delta} - \tparam}{2} = \sup \limits_{u \in  \calS^{p-1}} |u^T(\eparam_{\delta} - \tparam)| 
\end{align}

Suppose $\{y_i\}$ is a $\half$-cover of the net, so there exist a $y_j$ such that $u = y_j + v$, where $\norm{v}{2} \leq \epsilon$.  
\begin{align*}
    \norm{\eparam_{\delta} - \tparam}{2} & \leq  \sup \limits_{u \in \calS^{p-1}} |y_j^T (\eparam_{\delta} - \tparam)| +  |v^T (\eparam_{\delta} - \tparam)| \\
    & \leq \sup \limits_{y_j \in \calN^{\half}\paren{ \calS^{p-1}}} |y_j^T (\eparam_{\delta} - \tparam)| +  \norm{v}{2} \norm{\eparam_{\delta} - \tparam}{2} \\
    & \leq 2 \sup \limits_{y_j \in \calN^{\half}\paren{ \calS^{p-1}}} |y_j^T (\eparam_{\delta} - \tparam)|
    \end{align*}
    Let $f(u^T P_n,\epsilon;\tilde{\delta}) = \text{INTERVAL1D}(\{u^T x_i\}_{i=1}^n,\epsilon,\frac{\delta}{5^p})$
\begin{align}
  \norm{\eparam_{\delta} - \tparam}{2} &\leq 2 \sup \limits_{u \in \calN^{1/2}} | u^T(\eparam - \tparam)| \\
  & \leq 2 \sqprn{\sup \limits_{u \in \calN^{1/2}} | u^T\eparam - f(u^T P_n,\epsilon;\tilde{\delta})| + \sup \limits_{u \in \calN^{1/2}} | u^T\tparam - f(u^T P_n,\epsilon;\tilde{\delta})| } \\
  & \leq 4 \sup \limits_{u \in \calN^{1/2}} | u^T\tparam - f(u^T P_n,\epsilon;\tilde{\delta})|
\end{align}

For a fixed $u$, the distribution $u^TP$ has mean $u^T\tparam$, where $\tparam$ is the mean of the multivariate distribution $P$. Hence, we get that, for a confidence level $\tilde{\delta}$, when the interval estimator is applied to the projection of the data long u, it returns a real number such that, with probability at least $1 - \tilde{\delta}$ 

\[ | f(u^T P_n; \epsilon; \tilde{\delta}) - u^T \tparam | \leq C_1 \sigma_u \max\paren{2\epsilon,\frac{\log(1/\tilde{\delta})}{n}}^{1 - 1/{2k}} +  C_2 \sigma_u \paren{\frac{\log n}{n}}^{1 - \frac{1}{2k}} + C_3 \sigma_u \sqrt{\frac{\log (1/\tilde{\delta})}{n}} \]

Note that $\sigma_u = \sqrt{u^T \Sigma u} \leq \norm{\Sigma}{2}^{\half}$. Taking a union bound over the elements of the cover, and using the fact that $|\calN^{1/2}(\calS^{p-1})| \leq 5^p$~\citep{wainwright2019high}, we substitute $\tilde{\delta} = \delta/(5^p)$ and recover the statement of the Lemma.

% \[ \sup \limits_{u \in \calN^{1/2}} | u^T\tparam - f(u^T \calX_n;\tilde{\delta})| \leq \sqrt{\lambda_{\max}} 
% \max\paren{\epsilon, \frac{C_1p + \log(1/\delta)}{n}}^{1 - \frac{1}{2k}} + \sqrt{\frac{C_1\lambda_{\max}p}{n}} + \sqrt{\frac{\lambda_{\max}\log(1/\delta)}{n}}  \]
\end{proof}

\subsection{Proof of Corollary~\ref{cor:ppEst_sparse}}
\begin{proof}
Let $\eparam_{s} = \inf \limits_{\theta \in \Theta_s} \sup \limits_{u \in \calN^{1/2}_{2s}(\calS^{p-1})} | u^T\theta - \text{INTERVAL1D}(\{u^T x_i\}_{i=1}^n,\epsilon,\frac{\delta}{9^p}) |$. Observe that since $\eparam_{s}$ and the true mean $\tparam$ are both $s$-sparse. Hence, the error vector $\eparam - \tparam$ is atmost $2s$-sparse. Then, we can write the $\ell_2$ error in its variational form,
\begin{align}
    \norm{\eparam_{\delta} - \tparam}{2} = \sup \limits_{u \in  \calS^{p-1} \cap \calB_{2s}} |u^T(\eparam_{\delta} - \tparam)|,
\end{align}
where $ \calS^{p-1} \cap \calB_{2s}$ is the set of unit vectors which are $2s$-sparse. The remaining of the proof follows along the lines of proof of Lemma~\ref{lem:ppEst}, coupled with the fact that the cardinality of the half-cover of an $2s$-sparse ball, \ie $\abs{\calN^{\half}(\calS^{p-1})} \leq \paren{\frac{6ep}{s}}^s$~\citep{vershynin2009role}.

% \[ \sup \limits_{u \in \calN^{1/2}} | u^T\tparam - f(u^T \calX_n;\tilde{\delta})| \leq \sqrt{\lambda_{\max}} 
% \max\paren{\epsilon, \frac{C_1p + \log(1/\delta)}{n}}^{1 - \frac{1}{2k}} + \sqrt{\frac{C_1\lambda_{\max}p}{n}} + \sqrt{\frac{\lambda_{\max}\log(1/\delta)}{n}}  \]
\end{proof}

\end{document}